\documentclass{article}

\usepackage{graphicx}
\usepackage{subcaption}
\usepackage{booktabs} 
\usepackage{multirow}
\usepackage[table]{xcolor}
\usepackage{colortbl}
\usepackage{tcolorbox}
\usepackage{wrapfig} 
\usepackage{hyperref}

\usepackage[main,final]{neurips_2026}
\makeatletter
\@ifundefined{@trackname}{\newcommand{\@trackname}{\@neuripsordinal\ Conference on Neural Information Processing Systems (NeurIPS \@neuripsyear).}}{}
\makeatother

\usepackage{amsmath}
\usepackage{amssymb}
\usepackage{mathtools}
\usepackage{amsthm}

\usepackage[capitalize,noabbrev]{cleveref}

\usepackage{tikz}
\usepackage{enumitem}
\DeclareMathOperator{\Real}{Re}

\usetikzlibrary{arrows.meta}
\usetikzlibrary{shapes.geometric, arrows, positioning, decorations.pathmorphing}
\usetikzlibrary{patterns}

\DeclareMathOperator{\diag}{diag} 
\DeclareMathOperator{\Spec}{Spec} 
\DeclareMathOperator{\vecop}{vec} 
\DeclareMathOperator{\rank}{rank}
\DeclareMathOperator{\Erank}{ERank}

\providecommand{\R}{\mathbb{R}}

\newcommand{\mat}[1]{\mathbf{#1}} 
\newcommand{\vect}[1]{\mathbf{#1}}
\newcommand{\code}[1]{\texttt{#1}}

\definecolor{myblue}{rgb}{0.2,0.4,0.8}
\definecolor{myorange}{rgb}{0.2,0.7,0.3}
\definecolor{mygreen}{rgb}{0.9,0.1,0.1}
\definecolor{rowgray}{gray}{0.96}

\definecolor{mypurple}{rgb}{0.2,0.4,0.8}
\definecolor{myred}{rgb}{0.2,0.7,0.3}
\definecolor{myteal}{rgb}{0.9,0.1,0.1}

\theoremstyle{plain}
\newtheorem{theorem}{Theorem}[section]
\newtheorem{proposition}[theorem]{Proposition}
\newtheorem{lemma}[theorem]{Lemma}
\newtheorem{corollary}[theorem]{Corollary}
\theoremstyle{definition}

\newtheorem{assumption}[theorem]{Assumption}
\theoremstyle{remark}

\usepackage[disable,textsize=tiny]{todonotes}
\usepackage{microtype}

\definecolor{newred}{rgb}{0.85,0.0,0.0}

\title{Drive vs.\ Decay: On the Training Dynamics of\\
Joint-Embedding Predictive Architectures}

\author{%
  Jos\'e Lucas De Melo Costa \quad Seong Woo Ahn \quad Fabrice Popineau \\ \textbf{Arpad Rimmel \quad Bich-Li\^en Doan} \\[3pt]
  Universit\'e Paris-Saclay, CNRS, CentraleSup\'elec, LISN, Orsay, France \\
  \texttt{jose.lucas-de-melo-costa@universite-paris-saclay.fr}
}

\begin{document}

\maketitle

\begin{abstract}
Joint-Embedding Predictive Architectures (JEPAs) are prone to representation collapse, typically mitigated through empirical heuristics. We develop an early-training stability theory that unifies these heuristics. Linearising the coupled JEPA gradient flow around the trivial fixed point reveals two competing effects: a driving force ($\gamma$) and a decay effect ($\sigma$). Under approximate spectral decoupling, a per-mode stability ratio $\mu_i = \gamma_i / \sigma_i$ factorises into independent data-side and predictor-side terms and the count of unstable modes tracks the rank of representations that can emerge. The framework predicts a phase boundary, which we confirm empirically across more than 800 Tabular-JEPA configurations. It also unifies predictor scaling, masking ratio, and EMA as distinct mechanisms for shifting $\mu$. Guided by this analysis, we introduce ResidualPred, a transformer predictor whose attention is biased toward the identity at initialisation; it improves both effective rank and downstream accuracy on tabular benchmarks and in I-JEPA pretraining on CIFAR-10, CIFAR-100, STL-10, and ImageNet. Our framework connects empirical collapse-avoidance heuristics to an explicit dynamical picture, yielding theory-driven stabilizers. Code is available at \url{https://github.com/jose-melo/drive-vs-decay}.
\end{abstract}

\section{Introduction}
\label{sec:introduction}

\begin{wrapfigure}[17]{r}{0.38\linewidth}
\vspace{-20pt}
\centering
\includegraphics[width=\linewidth]{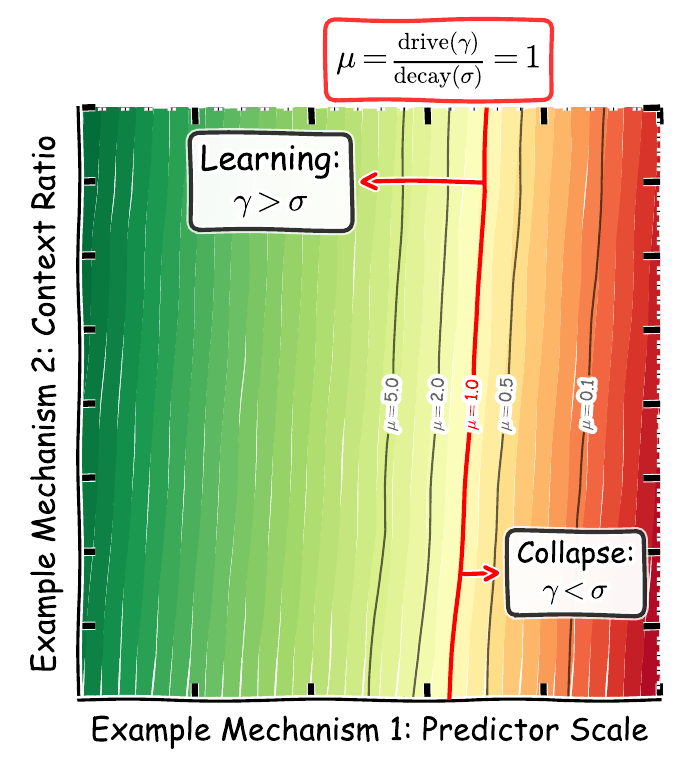}
\vspace{-25pt}
\caption{\small\textbf{Phase boundary.} Learning where the drive wins ($\mu=\gamma/\sigma>1$), collapse where the decay wins ($\mu<1$).}
\label{fig:main_figure}
\end{wrapfigure}
Self-Supervised Learning (SSL) has emerged as a powerful approach for representation learning, enabling models to learn meaningful features from vast amounts of unlabeled data \citep{shwartzzivCompressNotCompress2024}. Among SSL techniques, non-contrastive methods have gained popularity as they avoid the need for explicit negative sampling \citep{grillBootstrapYourOwn2020, bardesVICRegVarianceInvarianceCovarianceRegularization2022, chaoningzhangHowDoesSimSiam2022} and Joint Embedding Predictive Architectures (JEPAs) are a notable variant within this family, operating by predicting the latent representation of a target view of the input (e.g., a masked portion) from the representation of a context view (e.g., the remaining portion) \citep{assranSelfSupervisedLearningImages2023,feiAJEPAJointEmbeddingPredictive2024, dongBrainJEPABrainDynamics2024}
Despite the empirical success of JEPAs, a key concern is the potential for representational collapse, where the learned embeddings converge to a trivial solution or low-rank subspace, rendering the representations low-informative for the downstream task.

How can we prevent representational collapse when the collapsed solution can also minimize the very mean-squared prediction objective we optimize? Most existing answers have been empirical: the asymmetric predictor, a stop-gradient on the target branch, and slowly evolving targets via an exponential moving average (EMA) \cite{assranSelfSupervisedLearningImages2023}. A growing line of work has begun to formalize these stabilizers and to propose alternatives, including carefully designed predictor initializations \cite{bagatellaTDJEPALatentpredictiveRepresentations2025}, frozen-teacher formulations \cite{liRethinkingJEPAComputeEfficient2025}, and auxiliary regularizations that enforce embedding diversity. These solutions are largely justified by empirical heuristics, underscoring the need for a theory of JEPA training dynamics. Existing theoretical analyses have primarily focused on architectures augmented with explicit diversity-inducing regularizers \cite{balestrieroLeJEPAProvableScalable2025, yuWhyHowAuxiliary2025}. In parallel, the seminal study of \citet{tianUnderstandingSelfsupervisedLearning2021} dissected several anti-collapse mechanisms in Siamese non-contrastive frameworks, offering key dynamical insights. Yet, a substantial gap remains: we still lack a complete characterization of the original JEPA objective and under realistic update rules.

Therefore, this work proposes a theoretical framework for the training dynamics of JEPAs that yields principled mechanisms for avoiding early-stage collapse. Our analysis reveals that representation learning is governed by the spectral competition between two operators: a \emph{driving} force, induced by view construction (e.g., masking) and predictor-target alignment, and a \emph{decay} force, induced by predictor capacity and the data covariance structure (Figure~\ref{fig:main_figure}). This perspective unifies several empirical observations in the JEPA literature, and it suggests new architectural and initialization strategies that improve stability. We validate the theory through a large-scale study of $800$ Tabular-JEPA variants on standard tabular benchmarks, and we further demonstrate its relevance beyond the linear regime using transformer-based encoders. Our main contributions are:

\begin{itemize}

    \item \textbf{A drive--decay stability theory for JEPAs.} 
    We show that collapse stability is governed by a per-mode ratio $\mu_i=\gamma_i/\sigma_i$. The threshold $\mu>1$ predicts escape from collapse, while the number of super-critical modes yields a rank budget for the emerging representation.

    \item \textbf{A unified account of collapse controls.}
    Across more than $800$ configurations, we show that predictor scale and context masking act as control knobs for  $\mu$, while EMA acts as a timescale parameter. Together, these mechanisms explain transitions in effective rank and downstream performance. In the linear model, seven collapse-prevention mechanisms reduce to doses on $\mu$.

    \item \textbf{A theory-guided transformer predictor.}
    We introduce \texttt{ResidualPred}, an identity-biased transformer predictor with diagonal attention bias. It improves effective rank and linear-probe accuracy over standard predictors on brittle tabular settings and I-JEPA image benchmarks.
\end{itemize}

\section{Related Work}
\label{sec:related_work}

\paragraph{JEPAs and collapse-avoidance techniques.}
Joint Embedding Predictive Architectures (JEPAs) are self-supervised models that learn by predicting high-level representations rather than reconstructing raw inputs \citep{assranSelfSupervisedLearningImages2023, bardhanHEPJEPAFoundationModel2025, dongBrainJEPABrainDynamics2024, skenderiGraphlevelRepresentationLearning2025}. A standard JEPA contains a \emph{context encoder}, a \emph{target encoder}, and a \emph{predictor}: the context encoder processes a masked view of the input, the target encoder encodes a target view, and the predictor is trained to match the target representation. To avoid trivial solutions, where all inputs are mapped to the same representation, JEPAs usually rely on anti-collapse mechanisms such as EMA target updates \citep{grillBootstrapYourOwn2020}, variance--covariance regularization \citep{moConnectingJointEmbeddingPredictive, drozdovVideoRepresentationLearning2024}, frozen teachers \citep{liRethinkingJEPAComputeEfficient2025}, or distributional constraints such as SIGReg \citep{balestrieroLeJEPAProvableScalable2025}. Recent probabilistic and theoretical perspectives further connect these regularizers to embedding-distribution control and density estimation \citep{huangVJEPAVariationalJoint2026, balestrieroGaussianEmbeddingsHow2025}. Thus, while the JEPA principle is simple, its practical success depends critically on mechanisms that prevent representational collapse.

\paragraph{Training Dynamics of Non-Contrastive SSL}
Understanding how non-contrastive SSL methods, including JEPAs, avoid collapse without explicit negative pairs remains an active research topic. \citet{tianUnderstandingSelfsupervisedLearning2021} studied SimSiam-like dynamics with linear predictors, showing that the interaction between the predictor and stop-gradient prevents gradient alignment toward collapse, acting as an implicit regularizer. \citet{chaoningzhangHowDoesSimSiam2022} extended this view through a unified gradient-based analysis, identifying the gradient components that preserve representation diversity across non-contrastive methods. Related work further highlights predictor networks as implicit variance regularizers \citep{halvagalImplicitVarianceRegularization2023}, reads EMA and stop-gradient as orthonormalisation of a linear predictor \citep{richemondEdgeOrthogonality2023}, while JEPA-specific analyses suggest that these architectures may also benefit from an inherent learning bias against collapse \citep{littwinHowJEPAAvoids2024}.

\paragraph{Identity-biased initialisation.}
Biasing residual blocks toward the identity at initialisation is well established outside JEPAs, notably in Fixup \citep{zhangFixupInitializationResidual2018}, ReZero \citep{bachlechnerReZeroAllYou2021}, SkipInit \citep{deBatchNormalizationBiases} and IDInit \citep{panIDInitUniversalStable2025}. In these works, identity-biased initialisation is mainly used to stabilise signal propagation in deep networks. This work studies it in a different setting and for a different purpose. Rather than focusing on signal propagation, we examine how such initialisation affects the training dynamics of joint-embedding predictive models and their tendency to collapse. Closer still, mimetic initialisation \citep{trockmanMimeticInitializationSelfAttention2023} sets the query--key product of self-attention near the identity, and \citet{wenMechanismPredictionHead2022} analyse how a trainable, possibly identity-initialised prediction head shapes what non-contrastive training learns; we claim no architectural novelty for identity-biasing attention, and none of these works contains a per-mode stability ratio defined through the predictor at the JEPA fixed point, a supercriticality count, or a rank budget. We are closest in spirit to TD-JEPA \citep{bagatellaTDJEPALatentpredictiveRepresentations2025}, which also connects predictor initialisation to collapse prevention, but in a TD-RL setting; this work provides a more general approach derived from the new drive-decay framework.

\section{Drive vs. Decay: Theoretical Analysis of Coupled JEPA Dynamics}
\label{sec:theory}

We present a theoretical analysis of the Joint Embedding Predictive Architecture (JEPA) training dynamics. Unlike previous approaches that assume a static relationship between encoders, we model the system as a coupled dynamical system governed by the interaction between the gradient descent updates and the Exponential Moving Average (EMA) mechanism.

\subsection{Linear Model: Coupled Dynamical System Formulation}

Let the input data be represented by a vector $\vect{x} \in \mathbb{R}^{N}$, where $N$ is the input dimensionality.
The context encoder $f_{\theta}$ is represented by a weight matrix $\mat{W} \in \mathbb{R}^{d \times N}$.
The target encoder $f_{\bar{\theta}}$ is represented by $\mat{W}_a \in \mathbb{R}^{d \times N}$. The predictor $g_{\phi}$ is represented by $\mat{W}_p \in \mathbb{R}^{d \times d}$. 

\begin{wrapfigure}[9]{r}{0.5\linewidth}
    \vspace{-8pt}
    \centering
    \resizebox{\linewidth}{!}{\begin{tikzpicture}[font=\small,scale=0.9,every node/.style={scale=0.9}]
	\node at (0, 0) (x) {$\mathbf{x}$};
	\draw[->, thick] (x) -- ++(1, 0) -- ++(0, 1) -- ++(1, 0);
	\draw[->, thick] (x) -- ++(1, 0) -- ++(0, -1) -- ++(1, 0);
    
	\node[draw, circle] at (2.5, 1) (m1) {$m_1$};
	\node at (3.2, 1.2) {$\mathbf{x}_1$};
    
	\node[draw, rectangle, minimum width=1cm, minimum height=0.7cm] at (4, 1) (f_theta) {$f_\theta$};
	\draw[->, thick] (m1) -- (f_theta);
	\draw[->, thick] (f_theta) -- ++(1.5, 0);
    
	\node[draw, rectangle, minimum width=1cm, minimum height=0.7cm] at (6, 1) (g_phi) {$g_\phi$};
	\draw[->, thick] (g_phi) -- ++(1, 0);
    
	\node[draw, rectangle, minimum width=1cm, minimum height=0.7cm] at (2.55, -1) (f_target) {$f_{\bar{\theta}}$};
	\draw[->, thick] (f_target) -- ++(1.5, 0);
    
	\node[draw, circle] at (4.5, -1) (m2) {$m_2$};
	\draw[->, thick] (m2) -- ++(2.5, 0);
    
	\draw[<->, dashed, thick] (7.2, 1) to[out=-45, in=45] node[right] {$\mathcal{L}$ loss} (7.2, -1);
\end{tikzpicture}}
    \vspace{-6pt}
    \caption{\textbf{Linear JEPA architecture.} }
    \label{fig:jepa_overview}
\end{wrapfigure}
Masking operations define the context and target views. A context mask yields $\vect{x}_1 \in \mathbb{R}^{N}$, which serves as input to the context encoder. This masking operation can be conceptualized as zeroing out elements of $\vect{x}$ by an element-wise multiplication with a random binary feature mask $\vect{m}_1$. The target encoder receives the original input $\vect{x}$ and a target feature mask $\vect{m}_2 \in \{0, 1\}^d$ is applied to the output of the target encoder, as presented in Figure \ref{fig:jepa_overview}. We further consider $\mat{M_2} = \text{diag}(\vect{m_2})$, with $m_2^i \sim \text{Bernoulli}(p_{tgt})$, where now we assume that the selected target features are masked for each row.

The JEPA objective is to minimize the expected squared Euclidean distance between the predicted representation and the masked target representation:
\begin{equation} \label{eq:loss_linear_expected}
\mathcal{L}(\mat{W}, \mat{W}_p; \mat{W}_a) = \frac{1}{2} \mathbb{E} \left[ \| \mat{W}_p \mat{W} \vect{x}_1 - \texttt{sg}(\mat{M}_2 \mat{W}_a \vect{x}) \|_2^2 \right]
\end{equation}
where \texttt{sg} is an operation that prevents the backpropagation of gradients through its argument. In the context of this loss function, it means that the term $\mat{M}_2 \mat{W}_a \vect{x}$ is treated as a fixed target or a constant when computing the gradients for updating the parameters $\mat{W}$ and $\mat{W}_p$. Note that $\mat{W}_a$ is typically updated offline, not directly by minimizing this loss.

The system evolves according to continuous-time dynamics where $\mat{W}$ and $\mat{W}_p$ follow the negative gradient of $\mathcal{L}$, while $\mat{W}_a$ follows the context weights via EMA. The Lemma \ref{lemma:coupled_dynamics} states their relationship (we refer the reader to Appendix \ref{app:detailed_derivation} for the details).

\begin{lemma}[Coupled JEPA Dynamics] \label{lemma:coupled_dynamics}
Let $\beta > 0$ represent the EMA update speed. The state evolution is governed by:
\begin{equation}
\dot{\mat{W}} = - \mat{W}_p^\top \mat{W}_p \mat{W} \mat{\Sigma}_{11} + \mat{W}_p^\top \bar{\mat{M}}_2 \mat{W}_a \mat{\Sigma}_{X1}, \qquad\quad \dot{\mat{W}}_a = \beta (\mat{W} - \mat{W}_a), \label{eq:dW_coupled}
\end{equation}
where $\mat{\Sigma}_{11} = \mathbb{E}[\vect{x}_1 \vect{x}_1^\top]$ and $\mat{\Sigma}_{X1} = \mathbb{E}[\vect{x} \vect{x}_1^\top]$ are data covariance matrices. For the stability analysis $\mat{W}_p$ is taken as given: the framework governs encoder-side escape given the predictor's state, and Lemma~\ref{lemma:joint} below says what that costs when the predictor is trained jointly.
\end{lemma}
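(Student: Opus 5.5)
The plan is to obtain the first equation in \eqref{eq:dW_coupled} as the negative gradient of the expected loss \eqref{eq:loss_linear_expected} with respect to $\mat{W}$, holding $\mat{W}_a$ fixed because of \texttt{sg}. The second equation will come from the continuous-time limit of the EMA update. I first expand the integrand as
\[
\tfrac12\|\mat{W}_p\mat{W}\vect{x}_1\|^2 - \vect{x}_1^\top\mat{W}^\top\mat{W}_p^\top\mat{M}_2\mat{W}_a\vect{x} + \text{const},
\]
where the constant collects every term depending only on the stopped target. Next I differentiate with the matrix identities $\partial_{\mat{W}}\tfrac12\|\mat{A}\mat{W}\vect{v}\|^2=\mat{A}^\top\mat{A}\mat{W}\vect{v}\vect{v}^\top$ and $\partial_{\mat{W}}\,\vect{v}^\top\mat{W}^\top\vect{u}=\vect{u}\vect{v}^\top$. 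This gives the per-sample gradient
\[
\mat{W}_p^\top\mat{W}_p\mat{W}\vect{x}_1\vect{x}_1^\top-\mat{W}_p^\top\mat{M}_2\mat{W}_a\vect{x}\vect{x}_1^\top .
\]

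The step needing care is taking the expectation, because $\mat{M}_2$ is random and multiplies $\vect{x}\vect{x}_1^\top$. I will assume, as the model intends, that the target mask $\vect{m}_2$ is drawn independently of $\vect{x}$ and of the context mask $\vect{m}_1$. Under that assumption the expectation factorises:
\[
\mathbb{E}[\mat{M}_2\mat{W}_a\vect{x}\vect{x}_1^\top]=\bar{\mat{M}}_2\mat{W}_a\mathbb{E}[\vect{x}\vect{x}_1^\top]=\bar{\mat{M}}_2\mat{W}_a\mat{\Sigma}_{X1},
\]
with $\bar{\mat{M}}_2=\mathbb{E}[\mat{M}_2]=p_{tgt}\mat{I}$ (or the diagonal of Bernoulli means in general). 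The first term averages to $\mat{W}_p^\top\mat{W}_p\mat{W}\mat{\Sigma}_{11}$ by linearity. Gradient and expectation can be exchanged because the integrand is quadratic and $\vect{x}$ is assumed to have finite second moments. Setting $\dot{\mat{W}}=-\nabla_{\mat{W}}\mathcal{L}$ then yields the stated flow.

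For the target encoder, the discrete EMA rule is $\mat{W}_a^{(t+1)}=\tau\mat{W}_a^{(t)}+(1-\tau)\mat{W}^{(t)}$. With learning rate $\eta$, I set $1-\tau=\beta\eta$ and let $\eta\to0$. The difference quotient then converges to $\dot{\mat{W}}_a=\beta(\mat{W}-\mat{W}_a)$.

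Since the statement treats $\mat{W}_p$ as given, no equation for $\dot{\mat{W}}_p$ is needed here. The main obstacle is only the modelling assumption behind the mask expectation, namely the independence of $\mat{M}_2$ from the data. I will state it explicitly and note that without it $\bar{\mat{M}}_2\mat{\Sigma}_{X1}$ would have to be replaced by a mask-weighted cross-covariance.
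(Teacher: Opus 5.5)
Your proposal is correct and follows essentially the same route as the paper's derivation in Appendix~\ref{app:detailed_derivation}. Both compute the per-sample gradient $\mat{W}_p^\top(\mat{W}_p\mat{W}\vect{x}_1-\mat{M}_2\mat{W}_a\vect{x})\vect{x}_1^\top$ with the target frozen by \texttt{sg}, take expectations to obtain $\mat{\Sigma}_{11}$ and $\bar{\mat{M}}_2\mat{W}_a\mat{\Sigma}_{X1}$, and pass the EMA update to its continuous-time limit. Two points where you are slightly more careful than the paper: you state the independence of $\mat{M}_2$ from $(\vect{x},\vect{m}_1)$ explicitly, which the main derivation uses silently and only the adiabatic appendix spells out, and your scaling $1-\tau=\beta\eta$ replaces the paper's plain identification of $\beta$ with $1-m$.
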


\subsection{Jacobian Stability Analysis}

To analyze representational collapse, we examine the stability of the trivial fixed point $\mat{W} = \mat{0}, \mat{W}_a = \mat{0}$. We define the full state vector of the system as $\vect{z} = [ \vect{vec}(\mat{W})^\top, \vect{vec}(\mat{W}_a)^\top ]^\top \in \mathbb{R}^{2dN}$.

The linearized dynamics around the origin are given by $\dot{\vect{z}} = \mathcal{J}\vect{z}$, where $\mathcal{J}$ is the Jacobian matrix:

\vspace{-5pt}
\begin{equation} \label{eq:full_jacobian}
\mathcal{J} = 
\begin{bmatrix}
-(\mat{\Sigma}_{11} \otimes \mat{W}_p^\top \mat{W}_p) & (\mat{\Sigma}_{X1}^\top \otimes \mat{W}_p^\top \bar{\mat{M}}_2) \\
\beta \mat{I}_{dN} & -\beta \mat{I}_{dN}
\end{bmatrix}
\end{equation}

The spectral properties of $\mathcal{J}$ dictate the system's fate: if all eigenvalues satisfy $\text{Re}(\lambda) < 0$, the origin is a stable attractor, leading to model collapse. Conversely, learning requires an unstable origin ($\exists \lambda : \text{Re}(\lambda) > 0$) to enable the growth of non-trivial representations.

\begin{assumption}[Approximate spectral decoupling]
\label{ass:spectral_decoupling}
For analysis, we work in a basis where the \emph{data-side} covariances and the \emph{feature-side} predictor operators are (approximately) jointly diagonalizable. Concretely, we assume
$\mat{\Sigma}_{11}$ and $\mat{\Sigma}_{X1}$ share an eigenbasis, and $\mat{P}\coloneqq \mat{W}_p^\top\mat{W}_p$ and $\mat{Q}\coloneqq \mat{W}_p^\top\bar{\mat{M}}_2$ share an eigenbasis. 
This lets the coupled dynamics decompose into independent scalar modes.
\end{assumption}

The Assumption \ref{ass:spectral_decoupling} is mild in the regimes we study. On the \emph{data side}, when the context view $\vect{x}_1$ is produced by mild masking, the cross-covariance $\mat{\Sigma}_{X1}$ is typically close to a scaled version of $\mat{\Sigma}_{11}$; under common preprocessing (normalization), covariances are often near-diagonal. On the \emph{feature side}, $\mat{P}=\mat{W}_p^\top\mat{W}_p$ and $\mat{Q}=\mat{W}_p^\top\bar{\mat{M}}_2$ are close to commuting when $\bar{\mat{M}}_2$ is diagonal and $\mat{W}_p$ is near-orthogonal at initialization or kept so by regularization. 

By exploiting the block structure (see Appendix \ref{app:detailed_derivation} for step-by-step derivation), we find that:

\vspace{-15pt}
\begin{align}
 \gamma_i & \in \text{Spec}((\mat{\Sigma}_{X1}^\top \otimes \mat{W}_p^\top \bar{\mat{M}}_2) & \textbf{driving operator} \\ 
 \sigma_i & \in \text{Spec}(\mat{\Sigma}_{11} \otimes \mat{W}_p^\top \mat{W}_p) & \textbf{decay operator} \\  
 \mu_i & \coloneqq \frac{\gamma_i}{\sigma_i} & \textbf{stability variable}
\end{align}
\vspace{-10pt}

As a result, each spatial eigenmode $i$ evolves according to the characteristic equation $\lambda^2 + (\beta + \sigma_i)\lambda + \beta\sigma_i(1-\mu_i) = 0$ ,
which fully determines the local stability of the trivial solution along that mode. This form exposes two qualitatively distinct behaviors. First, when $\mu_i > 1$, the constant term becomes negative, yielding a positive real eigenvalue. The origin is then unstable and the corresponding mode exhibits exponential growth, corresponding to the learning regime. In general, the driving operator need not be symmetric, and its eigenvalues may be complex. A detailed discussion of these conditions and alternative real-valued formulations based on singular values is provided in Appendix~\ref{app:driving_spectrum_reality}. Appendix~\ref{app:nonsymmetric_stability} gives certificates without symmetry or decoupling: $\kappa(\mat{\Sigma}_{11})\,\kappa(\mat{W}_p^\top\mat{W}_p)\,\tilde\mu_{11} < 1$ forces collapse, a determinant condition forces escape, and under Assumption~\ref{ass:spectral_decoupling} both reduce to the per-mode criterion below.

\begin{tcolorbox}[
    colback=gray!10,
    colframe=gray!40,
    arc=2mm,
    boxrule=1pt,
]
JEPA dynamics are governed by the spectral ratio $\gamma/\sigma$ between drive and decay.
\begin{itemize}
    \item \textbf{Collapse Regime ($\mu \ll 1$):} Decay dominates. The trivial fixed point is a \textit{stable attractor}.
    \item \textbf{Learning Regime ($\mu \gg 1$):} Driving force dominates. The trivial solution becomes \textit{unstable}, facilitating the learning of non-trivial features.
\end{itemize}
\end{tcolorbox}

Under Assumption~\ref{ass:spectral_decoupling}, the ratio factorizes as
\begin{equation}
\mu_{k\ell} \;=\; \Big(\frac{c_k}{a_k}\Big)\cdot\Big(\frac{q_\ell}{p_\ell}\Big),
\label{eq:mu-factor}
\end{equation}
where $a_k$, $c_k$ are the eigenvalues of $\mat{\Sigma}_{11}$ and $\mat{\Sigma}_{X1}$ along data mode $k$, and $p_\ell$, $q_\ell$ are the eigenvalues of $\mat{W}_p^\top\mat{W}_p$ and $\mat{W}_p^\top\bar{\mat{M}}_2$ along predictor mode $\ell$. The two factors are independent: $c_k/a_k$ is a \emph{cross-view retention coefficient} governed by data structure and masking, while $q_\ell/p_\ell$ is a \emph{predictor alignment ratio} governed by the predictor geometry.

\subsection{Rank budget and joint training}
\label{sec:rank-control}\label{sec:joint}

The factorisation~\eqref{eq:mu-factor} promotes the scalar boundary $\mu{>}1$ to a count of unstable directions. Let $\mathcal U=\{(k,\ell):\mu_{k\ell}{>}1\}$. The proof and the loose-count corollary are in Appendix~\ref{app:rank-control-proof}.

\begin{corollary}[Super-criticality count bounds the post-escape rank]
\label{cor:rank-control}
Under the exact form of Assumption~\ref{ass:spectral_decoupling} with orthonormal eigenbases and real spectra (so that $\mat{\Sigma}_{X1}$ and $\mat{W}_p^\top\bar{\mat{M}}_2$ are symmetric) and every $a_k p_\ell > 0$, the linearised JEPA flow leaves the trivial fixed point along exactly the modes in $\mathcal U$ where the initial state has a non-zero unstable component, and the growing component $\mat W_+(t)$ of $\mat W(t) = \mat W_+(t) + \mat W_-(t)$ obeys
\begin{equation}
\rank\!\big(\mat W_+(t)\big)\;\le\; r_{\mathit{tight}}\;:=\;\min\!\big(|\pi_\ell\mathcal U|,\; |\pi_k\mathcal U|,\; d,\; N\big) \qquad \text{for all } t \ge 0,
\label{eq:r-tight}
\end{equation}
where $\pi_\ell\mathcal U=\{\ell:\exists k,(k,\ell)\in\mathcal U\}$ and similarly for $\pi_k$. If no mode has $\mu_{k\ell} = 1$ or is defective, then $\sigma_{r_{\mathit{tight}}+1}(\mat W(t)) \le \bar K Z_0 e^{-\eta t}$ with $\eta$ the slowest stable rate and $\bar K$, $Z_0$ computable at initialisation, so for any threshold $\varepsilon > 0$ at most $r_{\mathit{tight}}$ singular values of $\mat W(t)$ exceed $\varepsilon$ once $t \ge \eta^{-1}\log(\bar K Z_0/\varepsilon)$; and, in the escape regime ($\mathcal U \neq \emptyset$ with a non-zero component along some escaping mode), past an explicit initial time $\Erank(\mat W(t)) \le r_{\mathit{tight}} + \delta(t)$ with $\delta(t) \to 0$ exponentially.
\end{corollary}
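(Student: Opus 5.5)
We will work throughout in the joint eigenbasis provided by the exact form of Assumption~\ref{ass:spectral_decoupling}. Let $\{\vect u_k\}_{k=1}^N$ be the common orthonormal eigenbasis of $\mat\Sigma_{11},\mat\Sigma_{X1}$, with eigenvalues $a_k,c_k$. Let $\{\vect v_\ell\}_{\ell=1}^d$ be the common orthonormal eigenbasis of $\mat P=\mat W_p^\top\mat W_p$ and $\mat Q=\mat W_p^\top\bar{\mat M}_2$, with eigenvalues $p_\ell,q_\ell$.

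Expand $\mat W(t)=\sum_{k,\ell} w_{k\ell}(t)\,\vect v_\ell\vect u_k^\top$ and $\mat W_a(t)=\sum_{k,\ell} \bar w_{k\ell}(t)\,\vect v_\ell\vect u_k^\top$. Substituting into Lemma~\ref{lemma:coupled_dynamics} (linear in $\mat W,\mat W_a$ once $\mat W_p$ is fixed) yields independent $2\times2$ systems
\[
\dot w_{k\ell}=-\sigma_{k\ell}w_{k\ell}+\gamma_{k\ell}\bar w_{k\ell},\qquad \dot{\bar w}_{k\ell}=\beta(w_{k\ell}-\bar w_{k\ell}),
\]
with $\sigma_{k\ell}=a_kp_\ell>0$ and $\gamma_{k\ell}=c_kq_\ell$. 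The characteristic polynomial is $\lambda^2+(\beta+\sigma)\lambda+\beta\sigma(1-\mu)$, whose discriminant $(\beta-\sigma)^2+4\beta\sigma\mu$ is nonnegative whenever $\mu\ge0$.

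The first claim is the sign analysis of the roots. For $\mu>1$ there is exactly one positive root $\lambda_+$. For $\mu<1$ both roots have negative real part: the sum is negative and the product positive, which also covers $\mu<0$ with complex roots. Each mode therefore splits into a growing part and a decaying part via its eigenvector projections. Define $\mat W_+(t)=\sum_{(k,\ell)\in\mathcal U} w^+_{k\ell}(t)\,\vect v_\ell\vect u_k^\top$, where $w^+_{k\ell}$ is the component along the $\lambda_+$ eigenvector. This component is nonzero exactly when the initial state projects nontrivially, which gives the ``exactly the modes in $\mathcal U$'' statement.

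For the rank bound, $\mat W_+(t)=\mat V_{\mathcal U}\,\mat C(t)\,\mat U_{\mathcal U}^\top$ has column space inside $\mathrm{span}\{\vect v_\ell:\ell\in\pi_\ell\mathcal U\}$ and row space inside $\mathrm{span}\{\vect u_k:k\in\pi_k\mathcal U\}$. Hence its rank is at most $\min(|\pi_\ell\mathcal U|,|\pi_k\mathcal U|)$, and trivially at most $\min(d,N)$, giving \eqref{eq:r-tight} for all $t$.

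For the singular-value statement, write $\mat W=\mat W_++\mat W_-$. By Weyl's inequality, $\sigma_{r+1}(\mat W)\le\sigma_{r+1}(\mat W_+)+\|\mat W_-\|_2=\|\mat W_-\|_2$ with $r=r_{\mathit{tight}}$. Every stable mode is non-defective and has $\mu\neq1$, so each $2\times2$ block is diagonalisable with rate at least $\eta:=\min_{(k,\ell)}\{-\Real\lambda\}$ over stable roots. The neutral root $\lambda=0$ at $\mu=1$ is excluded. Then $|w^-_{k\ell}(t)|\le K_{k\ell}e^{-\eta t}\|z_{k\ell}(0)\|$, where $K_{k\ell}$ is the condition number of the block's eigenvector matrix. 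Bounding $\|\mat W_-\|_2\le\|\mat W_-\|_F$ and using orthonormality of the bases gives $\sigma_{r+1}(\mat W(t))\le\bar K Z_0e^{-\eta t}$ with $\bar K=\max K_{k\ell}$ and $Z_0=\|\vect z(0)\|$. The threshold time follows by solving $\bar KZ_0e^{-\eta t}\le\varepsilon$.

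The effective-rank claim is where I expect the main work, because $\Erank$ (the entropy of normalised singular values) is not Lipschitz near degenerate spectra. The plan is as follows.
\begin{enumerate}
\item In the escape regime, fix an escaping mode with nonzero component. Then $\|\mat W_+(t)\|\ge c\,e^{\lambda_+t}$ grows, while $\|\mat W_-\|$ decays, so past an explicit time $t_0$ the tail mass $s=\sum_{i>r}\sigma_i/\sum_i\sigma_i$ is $O(e^{-(\lambda_++\eta)t})$.
\item Split the entropy into head and tail. The head distribution lives on at most $r$ atoms, so after renormalisation it contributes entropy at most $\log r$.
\item The perturbation from the tail is bounded by the grouping identity $H\le H_{\mathrm{head}}+h(s)+s\log(\min(d,N))$, where $h$ is the binary entropy.
\item Exponentiating gives $\Erank\le r\,e^{h(s)+s\log\min(d,N)}=r+\delta(t)$, with $\delta(t)=O(s\log(1/s))\to0$ exponentially.
\end{enumerate}
The delicate point is making $t_0$ explicit: it must be chosen so that the $\mat W_+$ lower bound dominates, using the nonzero escaping amplitude and $\bar KZ_0$.
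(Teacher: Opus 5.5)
Your proposal is correct and follows the paper's proof essentially step for step: joint diagonalisation into planar $2\times2$ blocks, the sign analysis of the per-mode quadratic, the column/row-space bound on $\mat{W}_+(t)$, Weyl's inequality combined with a mode-wise exponential bound on $\mat{W}_-(t)$, and, for $\Erank$, a head lower bound from a non-zero escaping amplitude together with head/tail entropy grouping. The only differences are cosmetic and do not affect validity. You take $K_{k\ell}$ to be an eigenvector-matrix condition number, as the paper itself does with $\kappa_\star$ in Theorem~\ref{thm:erank-budget}, rather than $1+2\|\mat{N}_{k\ell}\|_2/\sqrt{|\Delta_{k\ell}|}$, and your grouping bound (dropping the factor $1-s$ and using $\log\min(d,N)$ for the tail) is slightly cruder than the paper's.
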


The bound concerns the growing component, not the generically full algebraic rank of $\mat W(t)$; in the linear model it passes to the effective rank of the embeddings $\mat Z = \mat W \mat X$, the quantity every experiment tracks (Appendix~\ref{app:embedding-transfer}).

A JEPA trains the predictor with the encoder, at one learning rate in this paper's recipes; near collapse the fixed-predictor analysis also describes that system.

\begin{lemma}[Exact local criterion under joint training] \label{lemma:joint}
The predictor gradient is quadratic in the encoder state and vanishes at $\mat W = \mat W_a = \mat 0$, so the collapsed set is a manifold of equilibria of the full $(\mat W, \mat W_a, \mat W_p)$ gradient flow, whose Jacobian there is Eq.~\eqref{eq:full_jacobian} on top of a zero block at any ratio of learning rates. Under the conditions of Appendix~\ref{app:joint}, the predictor moves by at most $C\delta^2$ while the encoder state grows to norm $\delta$, and if every mode with real roots has margin $|\mu_{k\ell} - 1| > C'\delta^2$ the Jacobian keeps exactly $|\mathcal U|$ unstable eigenvalues throughout, each within $O(\delta^2)$ of an escape rate computed at initialisation.
\end{lemma}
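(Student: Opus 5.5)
We first write the predictor gradient explicitly from the loss \eqref{eq:loss_linear_expected}:
\[
\dot{\mat W}_p = -\eta_p\big(\mat W_p \mat W \mat\Sigma_{11}\mat W^\top - \bar{\mat M}_2 \mat W_a \mat\Sigma_{X1}\mat W^\top\big),
\]
where $\eta_p>0$ is the learning-rate ratio. This expression is bilinear in $(\mat W,\mat W_a)$. It therefore vanishes on $\{\mat W=\mat W_a=\mat 0\}$ for every $\mat W_p$, which gives the manifold of equilibria. Differentiating at such a point, the $\mat W_p$-row of the Jacobian is zero, since every term carries a factor $\mat W$. The $\mat W_p$-column of the encoder rows is also zero, since $\partial\dot{\mat W}/\partial \mat W_p$ is linear in $(\mat W,\mat W_a)$. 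The full Jacobian is thus $\mathcal J\oplus \mat 0_{d^2}$, with $\mathcal J$ as in Eq.~\eqref{eq:full_jacobian}, independently of $\eta_p$. This settles the first claim.

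For the quantitative part, we treat $\mat W_p$ as a slow variable. Let $\mat z=(\mat W,\mat W_a)$ and fix a neighbourhood $\|\mat W_p-\mat W_p(0)\|\le r$ on which $\|\mathcal J(\mat W_p)\|$ is bounded. The nonlinearity in $\dot{\mat z}$ is exactly $\mathcal J(\mat W_p)\mat z$, with $\mat W_p$ frozen, and $\|\dot{\mat W}_p\|\le \eta_p L\|\mat z\|^2$. Under the conditions of Appendix~\ref{app:joint}, take these to be growth at most exponential at the leading rate $\lambda_{\max}>0$, started from small $\|\mat z_0\|$. Integrating up to the exit time $T_\delta$ at which $\|\mat z\|=\delta$ then gives
\[
\|\mat W_p(T_\delta)-\mat W_p(0)\|\le \eta_p L\int_0^{T_\delta}\|\mat z\|^2\,dt \lesssim \frac{\eta_p L}{2\lambda_{\max}}\,\delta^2 =: C\delta^2 .
\]
The geometric growth makes the integral dominated by its endpoint. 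A Grönwall/bootstrap argument closes this: assume the drift is at most $2C\delta^2$, deduce that the $\mat z$-growth rate stays near $\lambda_{\max}$, and recover the drift bound $C\delta^2$.

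Next we transfer the eigenvalue count. By the per-mode reduction under Assumption~\ref{ass:spectral_decoupling}, each mode has characteristic polynomial $\lambda^2+(\beta+\sigma)\lambda+\beta\sigma(1-\mu)$. Perturbing $\mat W_p$ by $O(\delta^2)$ moves $\sigma_{k\ell}$ and $\gamma_{k\ell}$, hence $\mu_{k\ell}$, by at most $C''\delta^2$, using $a_kp_\ell>0$ to control the quotient. A real root crosses zero only when $\mu=1$. The margin $|\mu_{k\ell}-1|>C'\delta^2$ with $C'\ge C''$ therefore forbids sign changes of the constant term, so the number of positive real roots is preserved. Complex-root modes have real part $-(\beta+\sigma)/2<0$ and stay stable under small perturbation.

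The main obstacle is that a general perturbation of $\mat W_p$ breaks exact decoupling, so the per-mode formula no longer applies verbatim. For this we would instead use continuity of the spectrum of the full $\mathcal J$: the unstable eigenvalues are separated from the imaginary axis by a gap proportional to the margin, and Bauer--Fike or Kato perturbation, assuming $\mathcal J$ is diagonalisable with no defective modes, moves each eigenvalue by $O(\kappa\,\delta^2)$. This yields exactly $|\mathcal U|$ unstable eigenvalues, each within $O(\delta^2)$ of its initial escape rate $\lambda_+=\tfrac12\big(-(\beta+\sigma)+\sqrt{(\beta+\sigma)^2-4\beta\sigma(1-\mu)}\big)$.
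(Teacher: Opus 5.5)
Your first part is correct and is the paper's Lemma~\ref{lem:equilibria}: the predictor gradient is bilinear in $(\mat W,\mat W_a)$, so the collapsed set is a manifold of equilibria with linearisation $\mathcal J\oplus\mat 0_{d^2}$ for every $\eta_p$. Your last step is the route of Corollary~\ref{cor:margin}: Bauer--Fike with the eigenvector condition number $\kappa$, continuity in $t$, and a conversion of the margin into a distance from the imaginary axis.

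\textbf{The gap is the drift bound.} From the hypothesis you adopt, $\|\vect{z}(t)\|\le\|\vect{z}_0\|e^{\lambda_{\max}t}$, the estimate $\int_0^{T_\delta}\|\vect{z}\|^2\,dt\lesssim\delta^2/(2\lambda_{\max})$ does not follow. The integral is dominated by its endpoint only if $\|\vect{z}\|$ actually rides its exponential envelope up to $T_\delta$, and that is a \emph{lower} bound on growth. An upper bound alone, split at the time $s^*$ where the envelope reaches $\delta$, gives $\delta^2/(2\lambda_{\max})+\delta^2(T_\delta-s^*)_+$, and nothing in your argument controls $T_\delta-s^*$.

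This fails already for the flow with $\mat W_p$ held at $\mat W_p^0$. If $\vect{z}_0$ lies on a second unstable mode of rate $\lambda_{\max}/m$, escape takes $m$ times longer and the integral is about $m\,\delta^2/(2\lambda_{\max})$. So no constant depending only on $\eta_p$, $L$ and $\lambda_{\max}$ can work.

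The missing ingredient, supplied in Theorem~\ref{thm:drift}, has three parts:
\begin{enumerate}
\item the alignment $\alpha=\|\mat{\Pi}_*\vect{y}(0)\|/\|\vect{y}(0)\|>0$ of the initialisation with the leading eigendirection;
\item a Duhamel lower bound on that component (Step~4 there), proving escape by $\bar T=\lambda_*^{-1}\log\bigl(4\kappa\delta/(\alpha\varepsilon)\bigr)$;
\item consequently, a constant $C=\eta_pC_p(1+L_\alpha)/\lambda_*$ that carries $L_\alpha=\log(4\kappa^2/\alpha)$ and blows up as $\alpha\to 0$.
\end{enumerate}

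\textbf{Your bootstrap also does not close for free.} You argue ``drift at most $2C\delta^2$, so the growth rate stays near $\lambda_{\max}$''. But the Jacobian perturbation, of size $c=O(\kappa L_E\delta^2)$ in the eigenbasis, enters Gr\"onwall as a factor $e^{ct}$. It acts over an escape window whose length grows like $\log(1/\varepsilon)$, and the lower bound on the leading component survives only if $e^{c\bar T}-1\le\alpha/2$. That is the paper's condition (C2). It is used together with (C0), $\kappa\varepsilon\le\delta$, and (C1), $2\rho_\infty\le w_0$, which keeps $\|\mat W_p\|_2\le 2w_0$ so that the constants $C_p$ and $L_E$ remain valid. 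In particular, a smaller initialisation is not uniformly better, as ``started from small $\|\vect{z}_0\|$'' suggests: at fixed $\delta$, $\varepsilon$ may be exponentially small in $1/\delta^2$ but not arbitrarily small.

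\textbf{A smaller point in the count.} The margin must also keep the near-zero root of \emph{subcritical} real modes ($\mu_{k\ell}$ just below $1$) away from the axis, not only the unstable roots. Otherwise a stable eigenvalue could cross into the right half plane and change the count.
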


Quasi-staticity of the predictor near the collapsed set is therefore a consequence of the initialisation scale rather than of a learning-rate choice (Appendix~\ref{app:joint}). How the predictor adapts after escape is measured in Section~\ref{sec:residual_predictor:scale}.

\section{Methodology}
\label{sec:methodology}

In order to validate our theoretical results, we start by evaluating the linear Tabular-JEPA on real world tabular benchmarks to verify our assumptions and explain the expected result. Then, we proceed by evaluating the standard JEPA models on both tabular and image datasets, verifying the generalization of our approach. Code for all experiments is available at \url{https://github.com/jose-melo/drive-vs-decay}.

\paragraph{Datasets and Training Protocol.} Eight datasets were used: three tabular datasets (\texttt{aloi}, \texttt{helena}, \texttt{jannis}) and five image datasets (\texttt{CIFAR-10}, \texttt{CIFAR-100}, \texttt{STL-10}, \texttt{ImageNet-100}, and \texttt{ImageNet-1k}). The tabular datasets were stratified into $80\%$ train, $10\%$ validation, and $10\%$ test splits; continuous features are scaled with min--max normalization, categorical columns are label-encoded. Image datasets followed the training protocol of \citet{assranSelfSupervisedLearningImages2023}, except that the ImageNet runs use a matched-budget protocol, at $128$ pixels unless stated (Appendix~\ref{app:image_scale}). We refer the reader to Appendix~\ref{app:training_details} for details. For the original T-JEPA and I-JEPA models, the training protocol followed the recommendations for each model. Considering the Tabular-JEPA variants, the optimization uses AdamW with learning rate $10^{-3}$, weight decay $10^{-5}$, cosine annealing to $10^{-5}$ over $40$ epochs, and gradient clipping at norm $1.0$. Hyperparameters were found using Bayesian optimization search.

\paragraph{Models.}
For the evaluation, we considered several variants of Joint-Embedding Predictive Architectures (JEPAs). We include \textbf{T-JEPA} \cite{thimonierTJEPAAugmentationFreeSelfSupervised2024}, which adapts the JEPA framework to tabular data. In addition, we evaluate a family of \textbf{Tabular-JEPA variants} inspired by T-JEPA but using linear encoders and a scaled predictor. These variants are specifically introduced to isolate and demonstrate the different stabilization and collapse mechanisms studied under the proposed drive-decay framework with a simpler architecture. For image experiments, we use \textbf{I-JEPA} \cite{assranSelfSupervisedLearningImages2023} with a ViT-S/4 encoder on CIFAR-10/100, ViT-S/8 on STL-10 and ViT-S/16 on ImageNet. Downstream accuracy is evaluated using a linear probe; the ImageNet runs add a parameter-free nearest-neighbour classifier and an attentive probe, with exact permutation tests over seeds (Appendix~\ref{app:image_scale}).

\paragraph{Evaluation Metrics} We evaluate the spectral diversity of learned embeddings with the effective rank to directly quantify whether representations concentrate in a low-dimensional subspace. Let $\mat{Z} \in \mathbb{R}^{n \times d}$ be a matrix of embeddings with singular values $\{\sigma_i\}_{i=1}^r$. Defining the normalized spectrum $p_i \;=\; \frac{\sigma_i}{\sum_{j=1}^r \sigma_j}$, we measure intrinsic dimensionality via the \emph{effective rank} (we refer the reader to the Appendix \ref{app:evaluation_metrics} for more details): $\mathrm{ERank}(\mat{Z})
\;=\;
\exp\!\left(
- \sum_{i=1}^r p_i \log p_i
\right)$.

\section{Controlling the Training Dynamics: Example of Mechanisms}
\label{sec:practical_control}

The theoretical condition for preventing collapse depends on the spectral interplay between the predictor geometry and the data statistics. Based on the definition of $\mu_i = \gamma_i / \sigma_i$, we identify three primary mechanisms that allow practical control over the training regime.

\paragraph{Mechanism 1: Predictor scaling (Inverse proportionality)}
\begin{wrapfigure}[18]{r}{0.45\linewidth}
\vspace{-1.6\baselineskip}
\centering
\includegraphics[width=\linewidth]{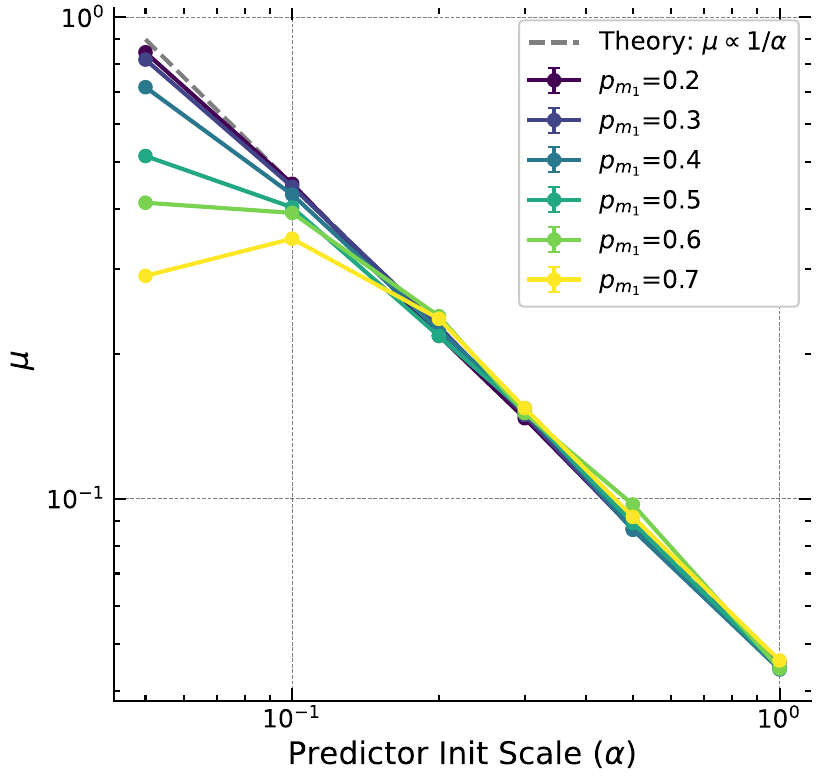}
\caption{\small\textbf{Inverse $\mu(\alpha)$ scaling.} Empirical $\mu$ vs.\ predictor scale $\alpha$ for $p_{m_1}\!\in\!\{0.2,\dots,0.7\}$.}
\label{fig:mu_predict_scale}
\vspace{-0.6\baselineskip}
\end{wrapfigure}
The most direct method to assure $\mu > 1$ is via the initialization scale of the predictor weights $\mat{W}_p$. Consider the scaling of the decay and driving operators with respect to a scalar scaling factor $\alpha$ applied to $\mat{W}_p$ (i.e., $\mat{W}_p \leftarrow \alpha \mat{W}_p$), the parameter $\mu$ scales inversely with the predictor norm:
\begin{equation}
\mu(\alpha) = \frac{\gamma(\alpha)}{\sigma(\alpha)} \propto \frac{\alpha}{\alpha^2} = \frac{1}{\alpha}
\end{equation}
To guarantee the supercritical regime ($\mu \gg 1$), one must initialize the predictor with a sufficiently small variance. Large initialization values increase the quadratic penalty (self-regularization) faster than the linear driving force, forcing $\mu \ll 1$ and causing collapse.

Figure~\ref{fig:mu_predict_scale} provides a direct empirical validation of the theoretical scaling. For small and moderate context masking ratios ($p_{m_1}\leq 0.4$), the measured $\mu$ closely tracks the dashed theoretical curve across more than an order of magnitude in $\alpha$, confirming the dominant role of predictor-induced contraction in setting the stability margin. As the masking ratio increases ($p_{m_1}\geq 0.5$), the empirical curves systematically shift downward and deviate from the ideal inverse law. This effect is expected from the theory, as presented next.

\paragraph{Mechanism 2: View Correlation (Masking Strategy)}

The second critical control lever for $\mu$ is the correlation structure between the context view $\vect{x}_1$ and the target input $\vect{x}$, governed by the context ratio $p_{m_1} \in [0, 1)$. Recall that the driving force $\gamma_i$ is derived from the spectrum of the cross-covariance $\mat{\Sigma}_{X1}$, while the decay force $\sigma_i$ is driven by the context auto-covariance $\mat{\Sigma}_{11}$.

When $\vect{x}_1$ is formed by feature-wise masking $\vect{x}_1 = \vect{m}_1 \odot \vect{x}$ with keep probability $p_{m_1} \coloneqq \mathbb{P}(m_{1j}=1)$, both operators vary with $p_{m_1}$. In expectation, the cross-covariance scales as a single factor while the context auto-covariance does not,
\begin{equation}
\mat{\Sigma}_{X1} = \mathbb{E}[\vect{x}(\vect{m}_1\odot \vect{x})^\top]
\approx p_{m_1}\,\mat{\Sigma}_{XX},
\qquad
\mat{\Sigma}_{11}=\mathbb{E}[(\vect{m}_1\odot \vect{x})(\vect{m}_1\odot \vect{x})^\top],
\end{equation}
since diagonal entries scale as $p_{m_1}$ but off-diagonal entries scale as $p_{m_1}^2$ (since $m_{1i}m_{1j}$ appears for $i\neq j$). Therefore, the spectral ratio $\mu_i=\gamma_i/\sigma_i$ depends on how the data covariance is structured.

\paragraph{Mechanism 3: EMA as a timescale}
The EMA decay rate $\beta$ is often viewed as a heuristic stabilizer. Our dynamical analysis provides a precise spectral interpretation. In the characteristic equation, $\beta$ multiplies the constant term without changing its sign, so for a real drive spectrum it cannot decide whether a mode escapes; it acts as a \emph{timescale separator}: for $\mu_i \ll 1$, smaller $\beta$ slows collapse, while for $\mu_i \gg 1$, it throttles feature growth. Prior analyses, notably \cite{tianUnderstandingSelfsupervisedLearning2021}, often assume that the target weights $\mat{W}_a$ remain proportional to the encoder weights $\mat{W}$. That proportional-target assumption is the $\beta\to\infty$ limit of the coupled analysis, in which $\mat{W}_a$ tracks $\mat{W}$ almost instantaneously; the resulting adiabatic stability condition, its critical coupling $\tau_{\text{crit}}$, the bifurcation it produces and the region of better downstream performance it delimits are derived in Appendix~\ref{app:adiabatic}.

\begin{figure*}[t]
    \centering
    \includegraphics[width=\linewidth]{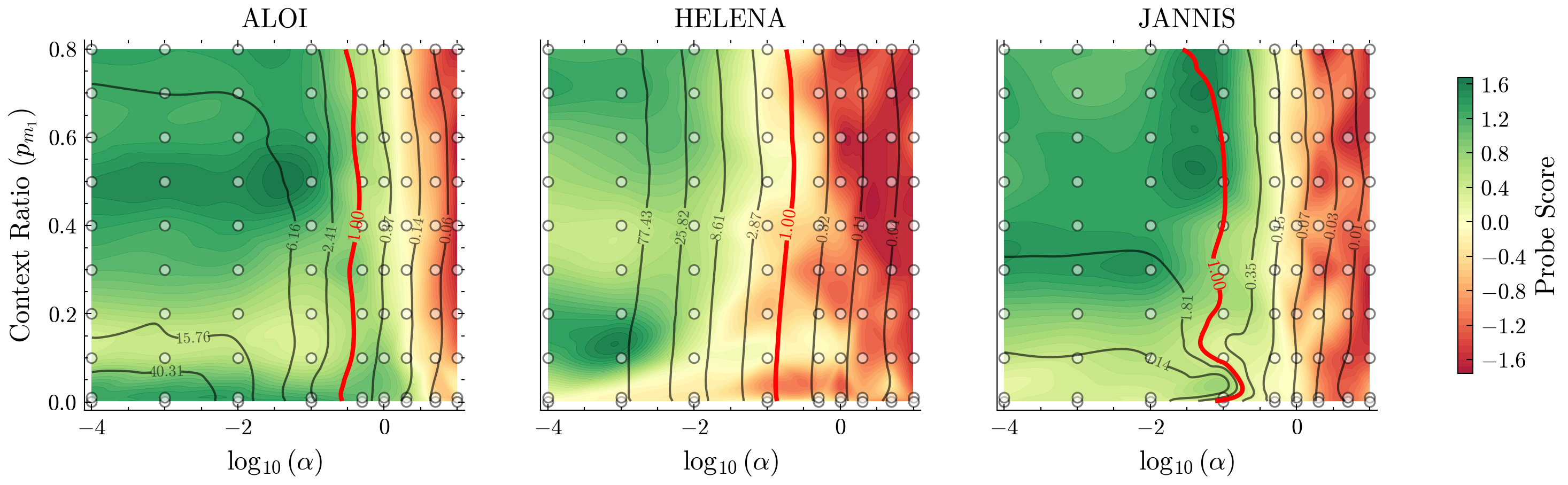}
    \caption{\small\textbf{Empirical phase diagram of Tabular JEPA stability predicted by the drive-decay framework.}
    Each panel reports a two–dimensional sweep over the \emph{predictor scale} $\log_{10}(\alpha)$  and the \emph{context masking ratio} $p_{m_1}$, for three tabular datasets.
    Colored heatmaps show the downstream normalized \emph{linear probe score}. The thick red curve highlights the critical boundary $\mu=1$, which our theory identifies as the transition between a \emph{learning} regime  and a \emph{collapsing} regime.}
    \label{fig:contours_phase_diagram}
    \vspace{-6pt}
\end{figure*}

\begin{table}[htb]
\centering
\small
\setlength{\tabcolsep}{3.5pt}
\renewcommand{\arraystretch}{1.02}

\caption{Aggregated results for Tabular--JEPA variants (aggregated over ALOI, Helena, and Jannis) and T-JEPA (Jannis). For Tabular--JEPA, only test accuracy is normalized within each dataset before aggregation. Metrics are reported as mean values over 5 runs.}
\label{tab:jepa_grid_results_merged}

\resizebox{\linewidth}{!}{%
\begin{tabular}{llc||cccc cccc cc}
\toprule
\multirow{2}{*}{\textbf{Model}} &
\multirow{2}{*}{\textbf{Context ratio}} &
\multirow{2}{*}{\textbf{Predictor scale}} &
\multicolumn{4}{c}{\textbf{Epoch 0}} &
\multicolumn{4}{c}{\textbf{Last epoch}} &
\multirow{2}{*}{\textbf{Final loss}} &
\multirow{2}{*}{\textbf{Test Acc.}} \\
\cmidrule(lr){4-7}\cmidrule(lr){8-11}
& & &
$\mu_{25}$ & $\mu_{50}$ & $\mu_{75}$ & $\mathrm{Erank}$ &
$\mu_{25}$ & $\mu_{50}$ & $\mu_{75}$ & $\mathrm{Erank}$ &
& \\
\midrule

\multirow{9}{*}{Tabular--JEPA variants}
& low    & \multirow{3}{*}{low}    & $1.49$ & $2.46$ & $3.92$ & $11.05$ & $0.76$ & \cellcolor{myblue!33}$2.71$ & $7.91$ & \cellcolor{mygreen!13}$4.56$ & $0.0180$ & \cellcolor{myorange!20}$0.509$ \\
& medium &                         & $1.54$ & $2.71$ & $4.17$ & $12.69$ & $0.58$ & \cellcolor{myblue!20}$1.64$ & $4.62$ & \cellcolor{mygreen!61}$14.49$ & $0.1798$ & \cellcolor{myorange!51}$0.737$ \\
& high   &                         & $0.96$ & $1.76$ & $3.17$ & $16.18$ & $0.42$ & \cellcolor{myblue!20}$1.63$ & $7.39$ & \cellcolor{mygreen!70}$16.26$ & $0.0984$ & \cellcolor{myorange!70}$0.872$ \\
\cmidrule(lr){2-12}
& low    & \multirow{3}{*}{medium} & $0.53$ & $0.88$ & $1.82$ & $9.93$ & $0.41$ & \cellcolor{myblue!10}$0.78$ & $4.44$ & \cellcolor{mygreen!10}$3.92$ & $0.0149$ & \cellcolor{myorange!10}$0.433$ \\
& medium &                         & $0.67$ & $1.14$ & $1.93$ & $9.97$ & $0.42$ & \cellcolor{myblue!14}$1.12$ & $7.46$ & \cellcolor{mygreen!19}$5.92$ & $0.0248$ & \cellcolor{myorange!33}$0.603$ \\
& high   &                         & $0.50$ & $0.96$ & $1.84$ & $10.63$ & $0.49$ & \cellcolor{myblue!50}$4.06$ & $36.63$ & \cellcolor{mygreen!60}$14.30$ & $0.0690$ & \cellcolor{myorange!58}$0.789$ \\
\cmidrule(lr){2-12}
& low    & \multirow{3}{*}{high}   & $0.26$ & $0.46$ & $1.12$ & $8.58$ & $0.19$ & \cellcolor{myblue!12}$0.96$ & $6.26$ & \cellcolor{mygreen!10}$4.06$ & $0.0173$ & \cellcolor{myorange!14}$0.465$ \\
& medium &                         & $0.28$ & $0.54$ & $1.15$ & $8.21$ & $0.21$ & \cellcolor{myblue!21}$1.67$ & $7.80$ & \cellcolor{mygreen!13}$4.64$ & $0.0173$ & \cellcolor{myorange!20}$0.507$ \\
& high   &                         & $0.29$ & $0.53$ & $1.05$ & $8.24$ & $0.33$ & \cellcolor{myblue!70}$5.62$ & $37.08$ & \cellcolor{mygreen!19}$5.92$ & $0.0307$ & \cellcolor{myorange!30}$0.586$ \\
\midrule

\multirow{9}{*}{T-JEPA}
& low    & \multirow{3}{*}{low}    & $0.09$ & $0.21$ & $0.55$ & $7.67$ & $0.09$ & \cellcolor{mypurple!20}$0.20$ & $0.53$ & \cellcolor{myteal!42}$17.10$ & $0.2556$ & \cellcolor{myred!57}$0.673$ \\
& medium &                         & $0.20$ & $0.46$ & $1.37$ & $6.94$ & $0.19$ & \cellcolor{mypurple!35}$0.44$ & $1.26$ & \cellcolor{myteal!64}$25.27$ & $0.1691$ & \cellcolor{myred!70}$0.729$ \\
& high   &                         & $0.39$ & $0.89$ & $2.40$ & $7.39$ & $0.40$ & \cellcolor{mypurple!70}$0.95$ & $2.56$ & \cellcolor{myteal!70}$27.48$ & $0.1266$ & \cellcolor{myred!69}$0.725$ \\
\cmidrule(lr){2-12}
& low    & \multirow{3}{*}{medium} & $0.05$ & $0.11$ & $0.32$ & $6.89$ & $0.05$ & \cellcolor{mypurple!13}$0.10$ & $0.29$ & \cellcolor{myteal!10}$4.65$ & $5.07e-07$ & \cellcolor{myred!10}$0.464$ \\
& medium &                         & $0.09$ & $0.20$ & $0.49$ & $7.35$ & $0.11$ & \cellcolor{mypurple!23}$0.25$ & $0.62$ & \cellcolor{myteal!10}$4.53$ & $6.32e-07$ & \cellcolor{myred!10}$0.464$ \\
& high   &                         & $0.22$ & $0.49$ & $1.44$ & $6.90$ & $0.25$ & \cellcolor{mypurple!45}$0.58$ & $1.61$ & \cellcolor{myteal!11}$4.94$ & $1.84e-04$ & \cellcolor{myred!10}$0.464$ \\
\cmidrule(lr){2-12}
& low    & \multirow{3}{*}{high}   & $0.02$ & $0.05$ & $0.12$ & $7.30$ & $0.02$ & \cellcolor{mypurple!10}$0.05$ & $0.11$ & \cellcolor{myteal!10}$4.90$ & $6.52e-06$ & \cellcolor{myred!10}$0.464$ \\
& medium &                         & $0.05$ & $0.12$ & $0.36$ & $6.65$ & $0.05$ & \cellcolor{mypurple!14}$0.12$ & $0.36$ & \cellcolor{myteal!10}$4.67$ & $5.54e-07$ & \cellcolor{myred!10}$0.464$ \\
\bottomrule
\end{tabular}%
}
\end{table}

\paragraph{Gathering Mechanisms: Empirical Phase Transitions.}
Figure~\ref{fig:contours_phase_diagram} illustrates a phase transition linking downstream performance to the $\mu$ stability coefficient. This relationship is further demonstrated by the results in Table~\ref{tab:jepa_grid_results_merged}, which reports mean values over five random seeds, aggregated by context-ratio and predictor-scale subgroups. 

Moving along the predictor scale axis, we observe a sharp degradation in test accuracy as the scale $\alpha$ increases.  For instance, in the \textit{Tabular-JEPA} variants with high context ratios, increasing the predictor scale results in a drop in normalized test accuracy from $0.872$ to $0.586$. This aligns with the theoretical scaling $\mu(\alpha) \propto 1/\alpha$. The role of the context masking ratio $p_{m_1}$ is also critical. Low context ratios, corresponding to higher masking, reduce the correlation between the context and target views, weakening the cross-covariance contribution to the growth factor $\gamma$ and subsequently lowering $\mu$. Table~\ref{tab:jepa_grid_results_merged} shows that for \textit{T-JEPA}, configurations with low context ratios and medium-to-high predictor scales exhibit median $\mu$ values near zero, $\mu_{50} \in [0.05, 0.11]$, and collapsed rank, $\mathrm{Erank} \approx 4.6$, leading to a failure to learn meaningful representations, with Test Acc. $\approx 0.464$. We refer the reader to Appendix~\ref{app:full_results} for a detailed analysis.

\paragraph{Empirical validation of the rank-control corollary.}
\begin{wrapfigure}{r}{0.42\linewidth}
\vspace{-10pt}
\centering
\includegraphics[width=\linewidth]{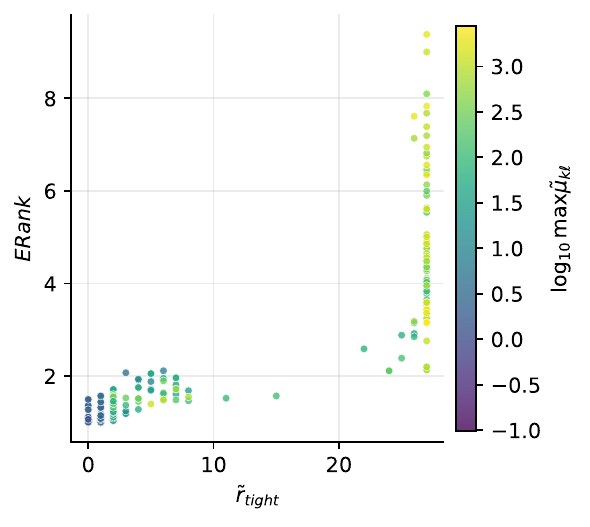}
\vspace{-16pt}
\caption{\small $\Erank$ vs $\tilde r_{\mathit{tight}}$ on \texttt{helena} (one point per run); colour is $\log_{10}\max_{k\ell}\tilde\mu_{k\ell}$.}
\label{fig:rank-control-scatter}
\vspace{6pt}
\small
\setlength{\tabcolsep}{4pt}
\renewcommand{\arraystretch}{1.05}
\begin{tabular}{lc}
\toprule
\textbf{Dataset} & $\rho(\Erank,\tilde r_{\mathit{tight}})$\\
\midrule
\texttt{helena}  & $+0.906$\\
\texttt{aloi}    & $+0.905$\\
\texttt{jannis}  & $+0.619$\\
\bottomrule
\end{tabular}
\captionof{table}{\small \textbf{Correlation per dataset.} The weaker \texttt{jannis} value comes with a near-degenerate top spectrum (Appendix~\ref{app:assumptions:tabular}).}
\label{tab:rank-control}
\vspace{-18pt}
\end{wrapfigure}
The metric of interest is $\tilde r_{\mathit{tight}}$ (Corollary~\ref{cor:rank-control}), computed via the singular-value form $\tilde\mu_{k\ell}$. For more details on the limits of the Assumption \ref{ass:spectral_decoupling} and effects of singular-value calculations, we refer the reader to Appendix~\ref{app:assumptions} and \ref{app:rank-control-signed-vs-abs}. The count of super-critical modes is converted into a discrete rank budget. Across $810$ runs, this budget tracks the representation ERank with $\rho={+}0.91$ on \texttt{helena} (Figure~\ref{fig:rank-control-scatter}).

The rank-budget view refines the scalar criterion $\mu>1$: rather than only predicting escape from collapse, it estimates how many rank-one directions are available after escape. Table~\ref{tab:rank-control} shows that the prediction is not specific to a single dataset: the correlation between $\tilde r_{\mathit{tight}}$ and the observed effective rank exceeds $0.9$ on \texttt{helena} and \texttt{aloi}, and remains positive, though weaker, on \texttt{jannis}. The measured data-side commutator on \texttt{jannis} is no larger than on \texttt{helena}, which is predicted well; what distinguishes \texttt{jannis} is a near-degenerate top spectrum, where mode identities mix under a small violation while the count survives, the signature that Appendix~\ref{app:assumptions:tabular} measures. Together with the $\mu_{50}$ phase diagnostics and Figure~\ref{fig:contours_phase_diagram}, these results support the role of $\mu$ both as a stability threshold and as a quantitative proxy for the number of active rank-one learning directions.

\paragraph{Other mechanisms as doses on $\mu$.}
The three mechanisms above are not special cases. Appendix~\ref{app:mechanisms} takes seven anti-collapse devices from the literature and writes each as a term of the linear model whose effect on $\mu$ is derived; Table~\ref{tab:unification-main} shows the five beyond predictor scaling and EMA: encoder weight decay raises the decay and the loss-side regularisers SIGReg and VICReg lower its predictor factor, mimetic initialisation moves the predictor factors, and a frozen identity predictor fixes them, so that escape becomes a condition on the data spectrum. For each mechanism the number of modes that escape is predicted from the initialisation, and we verified the prediction numerically by integrating the linear model with the mechanism added.

\begin{table}[t]
\centering\small
\setlength{\tabcolsep}{3pt}
\renewcommand{\arraystretch}{1.0}
\caption{\textbf{Five further anti-collapse mechanisms as doses on $\mu$}, derived in Appendix~\ref{app:mechanisms}. $p$ is the context keep probability and $\{s_k\}$ the spectrum of the unit-diagonal data covariance.}
\label{tab:unification-main}
\begin{tabular}{@{}lll@{}}
\toprule
\textbf{Mechanism} & \textbf{Derived dose} & \textbf{Mode escapes iff} \\
\midrule
Weight decay & additive floor: $\sigma_i + \eta_{wd}$ & $\mu_i > 1 + \eta_{wd}/\sigma_i$ \\
SIGReg & lowers the predictor factor: $p_\ell - \rho$ & $\mu > 1 - \rho/p_\ell$ \\
VICReg hinge & lowers it by $\tilde\lambda$, its small-weight dose & $\mu > 1 - \tilde\lambda/p_\ell$ \\
Mimetic initialisation & $q_\ell/p_\ell$ dosed once, washed out at rate $\zeta$ & integrated growth clears $\log 10^3$ \\
Identity predictor & $\mat P, \mat Q$ fixed; a data-spectrum condition & $s_k > (1-p)/(p_{tgt}-p)$, $p_{tgt} > p$ \\
\bottomrule
\end{tabular}
\end{table}

\section{\texttt{ResidualPred}: The Residual Transformer Predictor}
\label{sec:residual_predictor}

Large predictor weights can make decay dominate drive ($\mu \ll 1$). We avoid this by parametrizing the predictor as a residual correction to the identity, so early dynamics are driven by feature alignment rather than arbitrary initialization scale.

\subsection{Theoretical Motivation}

\begin{table*}[t]
\centering\footnotesize
\setlength{\tabcolsep}{2.4pt}
\renewcommand{\arraystretch}{1.08}
\caption{\textbf{ResidualPred across tabular and image datasets.} Linear-probe accuracy (\%). More ImageNet results: Appendix~\ref{app:image_scale}, Figure~\ref{fig:scale}.}
\label{tab:respred-tabular-image}
\begin{tabular}{@{}lcc|lcccc@{}}
\toprule
\multicolumn{3}{c|}{\textbf{Tabular brittle regime}} & \multicolumn{5}{c}{\textbf{Image}} \\
\cmidrule(lr){1-3}\cmidrule(lr){4-8}
\textbf{Variant} & \textbf{Jannis} & \textbf{Helena} & \textbf{Variant} & \textbf{CIFAR-10} & \textbf{CIFAR-100} & \textbf{STL-10} & \textbf{IN-100} \\
\midrule
T-JEPA & $43.4 \pm 1.7$ & $\phantom{0}8.6 \pm 1.1$ & I-JEPA & $64.55 \pm 1.71$ & $36.83 \pm 1.49$ & $71.25 \pm 1.14$ & $37.09 \pm 0.68$ \\
\rowcolor{rowgray}
+\,\code{ResPred} & $\mathbf{46.9 \pm 3.5}$ & $\mathbf{21.3 \pm 3.0}$ & +\,\code{ResPred} & $\mathbf{68.61 \pm 0.59}$ & $\mathbf{40.85 \pm 0.66}$ & $\mathbf{75.31 \pm 0.78}$ & $\mathbf{42.73 \pm 0.41}$ \\
\bottomrule
\end{tabular}
\end{table*}

We parametrize the predictor as a residual map, $\mat{W}_p \;=\; \mat{I} + \Delta \mat{W}_p, \Delta \mat{W}_p \approx \mat{0}\ \text{at init.}$
Hence the predictor Gram matrix satisfies $\mat{W}_p^\top \mat{W}_p = \mat{I} + \mathcal{O}(\|\Delta \mat{W}_p\|)$, preventing an arbitrary predictor-induced contraction at initialization. In our spectral characterization, with $\mat{W}_p \approx \mat{I}$, the initialization regime becomes \emph{view-limited}: $\sigma_i$ is set by the masked-view energy $\mat{\Sigma}_{11}$ and $\gamma_i$ by the view alignment $\mat{\Sigma}_{X1}$, rather than by predictor scale. To extend this identity-preserving strategy to the general transformer architecture \cite{vaswaniAttentionAllYou2017}, we must ensure the predictor block acts as an identity map in \textit{token space}. Standard self-attention initialization typically yields uniform maps ($\mat{A} \approx \frac{1}{N}\mat{1}\mat{1}^\top$), which averages out spatial signals and collapses $\mu$.
We propose \texttt{ResidualPred}, which enforces $\mat{A} \approx \mat{I}$ via a Diagonal Attention Bias. To prevent the internal attention mechanism from diffusing gradients uniformly, we add a diagonal logit bias: $\text{Attention}(\mat{Q}, \mat{K}, \mat{V}) = \text{Softmax}\left( \frac{\mat{Q}\mat{K}^\top}{\sqrt{d}} + \beta_{init} \cdot \mat{I}_{token} \right) \mat{V}$
where $\beta_{init} \gg 1$ (e.g., $10.0$, see Appendix~\ref{app:beta_choice}). This forces the softmax output to approximate the identity matrix, ensuring that token $i$ attends strictly to itself at the start of training.

\begin{figure}[t]
  \centering
  \begin{subfigure}[b]{0.32\linewidth}
    \centering
    \footnotesize
    \setlength{\tabcolsep}{1.8pt}
    \renewcommand{\arraystretch}{1.10}
    \begin{tabular}{lcccc}
    \toprule
    & \multicolumn{2}{c}{\textbf{T-JEPA}} & \multicolumn{2}{c}{\textbf{+\,\code{ResPred}}} \\
    \cmidrule(lr){2-3}\cmidrule(lr){4-5}
    \textbf{Metric} & no & yes & no & yes \\
    \midrule
    ERank$_{0}$       & $7.1$  & $6.8$  & $6.8$  & $6.9$  \\
    ERank             & $4.2$  & $4.3$  & $\mathbf{12.5}$ & $\mathbf{7.2}$ \\
    Loss              & $0.000$ & $0.048$ & $0.457$ & $0.029$ \\
    \rowcolor{rowgray}
    Test (\%)         & $28.2$ & $27.2$ & $\mathbf{43.3}$ & $\mathbf{35.1}$ \\
    \bottomrule
    \end{tabular}
    \caption{\small
Aggregated training diagnostics for T-JEPA on \texttt{jannis} and \texttt{helena}, averaged over context ratios and predictor scales. Columns separate non-brittle (\textbf{no}) and brittle (\textbf{yes}). ERank$_{0}$ is the ERank at epoch 0.
}
    \label{tab:respred-brittle-aggregate}
  \end{subfigure}\hfill
  \begin{subfigure}[b]{0.32\linewidth}
    \centering
    \includegraphics[width=\linewidth]{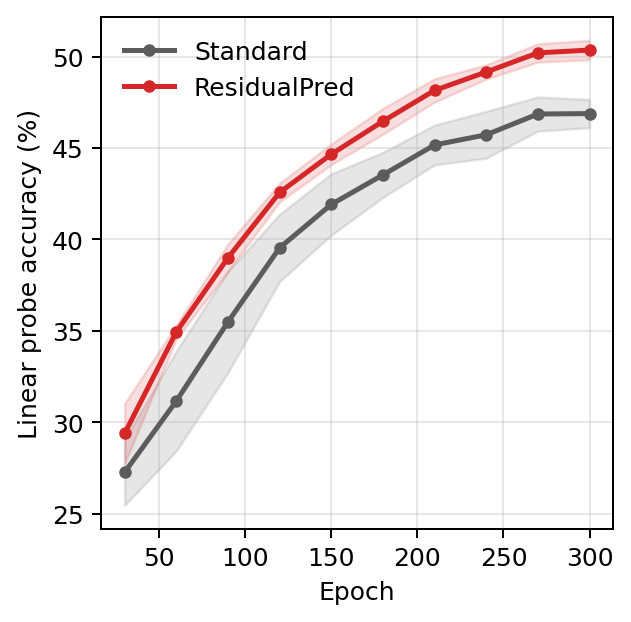}
    \caption{\small I-JEPA on CIFAR-100, 300 epochs: linear-probe accuracy.}
    \label{fig:cifar100_300ep_residualpred_vs_standard}
  \end{subfigure}\hfill
  \begin{subfigure}[b]{0.32\linewidth}
    \centering
    \includegraphics[width=\linewidth]{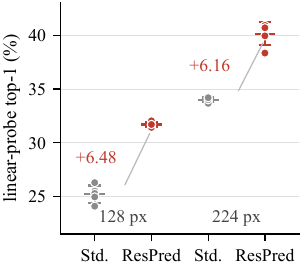}
    \caption{\small ImageNet-1k pilot at a matched budget: final linear-probe top-1 per seed.}
    \label{fig:scale}
  \end{subfigure}
  \caption{\small\textbf{\code{ResidualPred} -- aggregated brittle-vs-non-brittle diagnostics, image-domain training trajectory, and the ImageNet-1k pilot.} Panel (a) carries the general tabular claim: the predictors start matched and Standard finishes below its own initialisation in both regimes. Panel (c): ViT-S/16, predictor depth $6$, $30$ epochs, five seeds each, mean $\pm$ s.d.; every \code{ResidualPred} seed lies above every standard seed at both resolutions.}
  \label{fig:respred_3block}
\vspace{-4pt}
\end{figure}

\subsection{\texttt{ResidualPred} Results}
\label{sec:residual_predictor:scale}

\code{ResidualPred} improves accuracy in every setting of Table~\ref{tab:respred-tabular-image}, most on Helena. In brittle ($\mu<1$) and non-brittle configurations, Standard T-JEPA collapses while \code{ResidualPred} increases rank (Figure~\ref{fig:respred_3block}(a)), and on \texttt{helena} keeps it above Standard's throughout training (Figure~\ref{fig:erank_trajectories_helena_avg}). 

\begin{wraptable}{r}{0.47\linewidth}
\vspace{-12pt}
\centering\footnotesize
\setlength{\tabcolsep}{3pt}
\renewcommand{\arraystretch}{1.05}
\begin{tabular}{@{}lcc@{}}
\toprule
& \textbf{Linear (\%)} & \textbf{Effective rank} \\
\midrule
I-JEPA & $25.22 \pm 0.80$ & $107.2 \pm 7.7$ \\
+\,SIGReg & $29.19 \pm 0.54$ & $185.6 \pm 5.6$ \\
+\,\code{ResPred} & $31.69 \pm 0.22$ & $188.5 \pm 3.3$ \\
+\,\code{ResPred}\,+\,SIGReg & $\mathbf{32.23 \pm 0.54}$ & $\mathbf{206.7 \pm 1.0}$ \\
\bottomrule
\end{tabular}
\caption{\small\textbf{Against SIGReg on the ImageNet-1k pilot.} Mean over five seeds.}
\label{tab:loss-side}
\vspace{-8pt}
\end{wraptable}
Final training loss is therefore a poor diagnostic: near-zero MSE can accompany degenerate embeddings. On CIFAR-100, \code{ResidualPred} leads throughout 300 epochs (Figure~\ref{fig:cifar100_300ep_residualpred_vs_standard}). Its gain holds on ImageNet-100 at predictor depth $2$ and on ViT-B/16, and on a thousand-class ImageNet-1k pilot at a matched budget, at $128$ and $224$ pixels (Figure~\ref{fig:scale}); tripling the schedule to $90$ epochs attenuates the margin (Appendix~\ref{app:image_scale}).

The criterion is proven in the linear regime, while its extension to images is an empirical prediction whose mediating variable is measured after predictor equilibration, not at initialisation. At initialisation, a whole-block estimator does not separate the predictors; the target-aligned drive after a few predictor-only steps is higher for \code{ResidualPred} in $9$ of $9$ matched comparisons, evidence for the first link of the mechanism only (Appendix~\ref{app:equilibration}). Frozen at initialisation and denied learning, the \code{ResidualPred} predictor collapses the representation and loses the entire gain (Appendix~\ref{app:image_scale:controls}). Effective rank on images is necessary for accuracy without being sufficient.

\code{ResidualPred} beats SIGReg \citep{balestrieroLeJEPAProvableScalable2025} and the two compose (Table~\ref{tab:loss-side}, Appendix~\ref{app:image_scale:comparison}). Mimetic initialisation \citep{trockmanMimeticInitializationSelfAttention2023} gains $1.6$ points with seed overlap and trails \code{ResidualPred} by $4.9$. \code{ResidualPred} has one hyperparameter, unchanged everywhere, and no loss term or per-step cost.

\section{Discussion}
\label{sec:discussion}

These results support the drive--decay view of JEPA stability: representation learning is governed by a spectral competition between an alignment-driven force $\gamma$ and a predictor-induced decay force $\sigma$. This single mechanism explains why collapse can occur despite loss minimization and unifies several empirical anti-collapse heuristics. The phase diagrams show a sharp transition aligned with the theoretical boundary: when $\mu \ll 1$, representations collapse to low rank and downstream performance degrades. Viewed through this lens, existing methods in the literature can be interpreted as indirect ways of manipulating $\mu$. For example, explicit regularization terms that enforce embedding diversity \citep{balestrieroLeJEPAProvableScalable2025} effectively modify the Jacobian spectrum, shifting eigenvalues away from the stable collapsed regime and variance--covariance regularisers \citep{bardesVICRegVarianceInvarianceCovarianceRegularization2022} act as an explicit modification of the decay operator $\sigma$ in our framework. \texttt{ResidualPred} provides a structural alternative: by initializing the predictor near the identity, it lets view alignment dominate early training, allowing informative modes to grow before nonlinear effects take over.

Table~\ref{tab:unification-main} makes that reinterpretation precise for seven published mechanisms, each a derived dose on $\mu$ whose predicted escaping count the linear model reproduces. DirectPred, IsoLoss, the rate equation of \citet{halvagalImplicitVarianceRegularization2023} and predictor orthonormalisation are recovered as special cases.

\paragraph{Limitations.}
Our theory is derived in a linear setting and relies on the spectral-decoupling approximation of Assumption~\ref{ass:spectral_decoupling}, whose violation we measure (Appendix~\ref{app:assumptions:tabular}); on \texttt{jannis}, whose top spectrum is near-degenerate, per-mode identification degrades while the count-level prediction survives. The criterion is proven in the linear regime, while its extension to images is an empirical prediction whose mediating variable is measured after predictor equilibration, not at initialisation (Section~\ref{sec:residual_predictor:scale}, Appendix~\ref{app:equilibration}); the quantitative relation between the initialisation-time budget and the effective rank is claimed on tabular data only. The framework governs encoder-side escape given the predictor's state, whose equilibration the frozen-predictor control tests. On images, effective rank is necessary for accuracy without being sufficient (Section~\ref{sec:residual_predictor:scale}). At deeply subcritical tabular settings, \texttt{ResidualPred} widens the escape basin without guaranteeing a large rank budget: escape there is partial and seed-dependent (Appendix~\ref{app:full_results_tjepa_respred}). \texttt{ResidualPred} extends the identity-bias principle to attention by analogy rather than formal derivation. Finally, $\mu=1$ is a local initialisation criterion, and the ImageNet-1k results are matched-budget pilots; the official full-resolution, full-schedule I-JEPA recipe is untested.

\section{Conclusion}
\label{sec:conclusion}

This work establishes a dynamical systems perspective on JEPAs, identifying the spectral ratio $\mu = \gamma / \sigma$ as the critical coefficient governing the trade-off between representation growth and collapse. Guided by this theory, we introduced \texttt{ResidualPred}, an identity-biased predictor that structurally enforces non-trivial learning dynamics by tying initial stability to view correlation. Beyond JEPAs, the drive-decay framework suggests a broader viewpoint on non-contrastive self-supervised learning: collapse is governed by an explicit spectral competition between alignment-driven growth and predictor-induced decay. Framing stability in these terms clarifies the role of EMA, stop-gradient, and predictor design as mechanisms for inducing timescale separation and amplifying informative modes. This perspective opens the door to theory-driven architecture design, replacing ad hoc collapse-avoidance heuristics with analyzable structural controls.

\section*{Acknowledgements and Disclosures}
This work was performed using HPC resources from the ``M\'e\-so\-cen\-tre'' computing center of CentraleSup\'elec and \'Ecole Normale Sup\'erieure Paris-Saclay supported by CNRS and R\'egion \^Ile-de-France (\url{http://mesocentre.centralesupelec.fr/}). This project was provided with computing AI and storage resources by GENCI at IDRIS thanks to the grant 2026-AD011011828R6 on the supercomputer Jean Zay's H100 partition. This research benefited from the support of the Chair ``Artificial Intelligence applied to the banking domain and to global health'' led by CentraleSup\'elec and sponsored by Lusis, and from a CIFRE fellowship. Competing interests: the first author is employed by Lusis under a CIFRE convention; the authors declare no other financial relationships that could be perceived as influencing this work.

\bibliography{references}
\bibliographystyle{plainnat}

\appendix
\onecolumn
\section{Detailed Derivation of Dynamics and Stability}
\label{app:detailed_derivation}

This appendix provides a comprehensive derivation of the coupled dynamical system, the vectorization of the matrix differential equations, and the spectral analysis that leads to the stability conditions presented in the main text.

\subsection{Derivation of the Coupled Gradient Dynamics}
We begin with the JEPA loss function defined in Eq. (\ref{eq:loss_linear_expected}). The objective is to minimize the expected squared error between the predictor output and the target projection:
\begin{equation}
\mathcal{L} = \frac{1}{2} \mathbb{E} \left[ \| \mat{W}_p \mat{W} \vect{x}_1 - \texttt{sg}(\mat{M}_2 \mat{W}_a \vect{x}) \|_2^2 \right]
\end{equation}

To determine the training dynamics of the context encoder $\mat{W}$, we compute the gradient $\nabla_{\mat{W}} \mathcal{L}$. Let $\vect{e} = \mat{W}_p \mat{W} \vect{x}_1 - \mat{M}_2 \mat{W}_a \vect{x}$ be the error vector. The gradient of the squared norm $\frac{1}{2}\|\vect{e}\|_2^2$ with respect to $\mat{W}$ is given by chain rule:
\begin{equation}
\nabla_{\mat{W}} \left( \frac{1}{2}\|\vect{e}\|_2^2 \right) = \mat{W}_p^\top \vect{e} \vect{x}_1^\top
\end{equation}
Note that the term $\mat{M}_2 \mat{W}_a \vect{x}$ is treated as a constant due to the stop-gradient ($\texttt{sg}$) operator. Taking the expectation over the data distribution:
\begin{align}
\nabla_{\mat{W}} \mathcal{L} &= \mathbb{E} \left[ \mat{W}_p^\top (\mat{W}_p \mat{W} \vect{x}_1 - \mat{M}_2 \mat{W}_a \vect{x}) \vect{x}_1^\top \right] \\
&= \mat{W}_p^\top \mat{W}_p \mat{W} \mathbb{E}[\vect{x}_1 \vect{x}_1^\top] - \mat{W}_p^\top \bar{\mat{M}}_2 \mat{W}_a \mathbb{E}[\vect{x} \vect{x}_1^\top]
\end{align}
Substituting the covariance matrices $\mat{\Sigma}_{11} = \mathbb{E}[\vect{x}_1 \vect{x}_1^\top]$ and $\mat{\Sigma}_{X1} = \mathbb{E}[\vect{x} \vect{x}_1^\top]$ yields the gradient flow $\dot{\mat{W}} \propto -\nabla_{\mat{W}} \mathcal{L}$:
\begin{equation} \label{eq:app_dW}
\dot{\mat{W}} = - \mat{W}_p^\top \mat{W}_p \mat{W} \mat{\Sigma}_{11} + \mat{W}_p^\top \bar{\mat{M}}_2 \mat{W}_a \mat{\Sigma}_{X1}
\end{equation}

Simultaneously, the target encoder $\mat{W}_a$ is updated via an Exponential Moving Average (EMA) of the context encoder $\mat{W}$. In discrete time with momentum $m$, this is $\mat{W}_a^{(t+1)} = m \mat{W}_a^{(t)} + (1-m) \mat{W}^{(t)}$. In the continuous-time limit, taking the update step size to zero, this becomes the differential equation:
\begin{equation} \label{eq:app_dWa}
\dot{\mat{W}}_a = \beta (\mat{W} - \mat{W}_a)
\end{equation}
where $\beta$ is to the update speed $(1-m)$.

\subsection{Vectorization and Jacobian Construction}
To analyze the stability of the zero solution ($\mat{W}=\mat{0}, \mat{W}_a=\mat{0}$), we must represent the system as a standard linear ODE $\dot{\vect{z}} = \mathcal{J}\vect{z}$. We define the state vector $\vect{z}$ by stacking the vectorized weight matrices:
\begin{equation}
\vect{z} = \begin{bmatrix} \vect{vec}(\mat{W}) \\ \vect{vec}(\mat{W}_a) \end{bmatrix} \in \mathbb{R}^{2dN}
\end{equation}

We utilize the fundamental property of the Kronecker product: for any matrices $\mat{A}, \mat{X}, \mat{B}$ of compatible dimensions,
\begin{equation}
\vect{vec}(\mat{A}\mat{X}\mat{B}) = (\mat{B}^\top \otimes \mat{A})\vect{vec}(\mat{X})
\end{equation}

\textbf{Step 1: Vectorizing $\dot{\mat{W}}$ (Eq. \ref{eq:app_dW})}
\begin{itemize}
    \item \textit{First term:} $\mat{W}_p^\top \mat{W}_p \mat{W} \mat{\Sigma}_{11}$.
    Here $\mat{A} = \mat{W}_p^\top \mat{W}_p$, $\mat{X} = \mat{W}$, and $\mat{B} = \mat{\Sigma}_{11}$.
    Using the identity: $\vect{vec}(\mat{W}_p^\top \mat{W}_p \mat{W} \mat{\Sigma}_{11}) = (\mat{\Sigma}_{11}^\top \otimes \mat{W}_p^\top \mat{W}_p) \vect{vec}(\mat{W})$.
    Since $\mat{\Sigma}_{11}$ is symmetric, $\mat{\Sigma}_{11}^\top = \mat{\Sigma}_{11}$.
    
    \item \textit{Second term:} $\mat{W}_p^\top \bar{\mat{M}}_2 \mat{W}_a \mat{\Sigma}_{X1}$.
    Here $\mat{A} = \mat{W}_p^\top \bar{\mat{M}}_2$, $\mat{X} = \mat{W}_a$, and $\mat{B} = \mat{\Sigma}_{X1}$.
    Using the identity: $\vect{vec}(\mat{W}_p^\top \bar{\mat{M}}_2 \mat{W}_a \mat{\Sigma}_{X1}) = (\mat{\Sigma}_{X1}^\top \otimes \mat{W}_p^\top \bar{\mat{M}}_2) \vect{vec}(\mat{W}_a)$.
\end{itemize}
Thus, the vectorized equation for $\mat{W}$ is:
\begin{equation}
\vect{vec}(\dot{\mat{W}}) = -(\mat{\Sigma}_{11} \otimes \mat{W}_p^\top \mat{W}_p)\vect{vec}(\mat{W}) + (\mat{\Sigma}_{X1}^\top \otimes \mat{W}_p^\top \bar{\mat{M}}_2)\vect{vec}(\mat{W}_a)
\end{equation}

\textbf{Step 2: Vectorizing $\dot{\mat{W}}_a$ (Eq. \ref{eq:app_dWa})}
The EMA equation is linear and acts element-wise.
\begin{equation}
\vect{vec}(\dot{\mat{W}}_a) = \beta \vect{vec}(\mat{W}) - \beta \vect{vec}(\mat{W}_a)
\end{equation}
In Kronecker notation (where $\mat{I}_{dN} = \mat{I}_N \otimes \mat{I}_d$), this is:
\begin{equation}
\vect{vec}(\dot{\mat{W}}_a) = (\beta \mat{I}_{dN}) \vect{vec}(\mat{W}) - (\beta \mat{I}_{dN}) \vect{vec}(\mat{W}_a)
\end{equation}

\textbf{Step 3: Constructing the Block Jacobian}
Combining these results into the matrix form $\dot{\vect{z}} = \mathcal{J}\vect{z}$:
\begin{equation}
\begin{bmatrix} \vect{vec}(\dot{\mat{W}}) \\ \vect{vec}(\dot{\mat{W}}_a) \end{bmatrix} = 
\begin{bmatrix} 
-(\mat{\Sigma}_{11} \otimes \mat{W}_p^\top \mat{W}_p) & (\mat{\Sigma}_{X1}^\top \otimes \mat{W}_p^\top \bar{\mat{M}}_2) \\
\beta \mat{I}_{dN} & -\beta \mat{I}_{dN}
\end{bmatrix}
\begin{bmatrix} \vect{vec}(\mat{W}) \\ \vect{vec}(\mat{W}_a) \end{bmatrix}
\end{equation}
This confirms Eq. (\ref{eq:full_jacobian}) in the main text.

\subsection{Characteristic Equation and Eigenvalue Analysis}
To determine stability, we solve for the eigenvalues $\lambda$ of $\mathcal{J}$ using the characteristic equation $\det(\mathcal{J} - \lambda \mat{I}) = 0$.
\begin{equation}
\det \begin{pmatrix}
-(\mat{\Sigma}_{11} \otimes \mat{W}_p^\top \mat{W}_p) - \lambda \mat{I} & (\mat{\Sigma}_{X1}^\top \otimes \mat{W}_p^\top \bar{\mat{M}}_2) \\
\beta \mat{I} & -\beta \mat{I} - \lambda \mat{I}
\end{pmatrix} = 0
\end{equation}

We simplify this by assuming the system can be decomposed into eigenmodes. Let us assume we are working in a basis where the spatial operators are diagonal.
Let $\sigma_i$ be an eigenvalue of the "decay" operator $(\mat{\Sigma}_{11} \otimes \mat{W}_p^\top \mat{W}_p)$.
Let $\gamma_i$ be the corresponding eigenvalue of the "driving" operator $(\mat{\Sigma}_{X1}^\top \otimes \mat{W}_p^\top \bar{\mat{M}}_2)$.
For a specific mode $i$, the determinant of the $2 \times 2$ block becomes:
\begin{equation}
\det \begin{pmatrix}
-\sigma_i - \lambda & \gamma_i \\
\beta & -(\beta + \lambda)
\end{pmatrix} = 0
\end{equation}
Expanding the determinant $(ad - bc)$:
\begin{align}
(-\sigma_i - \lambda)( -(\beta + \lambda) ) - (\beta)(\gamma_i) &= 0 \\
(\lambda + \sigma_i)(\lambda + \beta) - \beta \gamma_i &= 0 \\
\lambda^2 + (\beta + \sigma_i)\lambda + \beta\sigma_i - \beta\gamma_i &= 0
\end{align}

To relate this to the paper's main theorem, we define $\mu_i$ as the ratio of driving force to decay force for that mode: $\mu_i = \gamma_i / \sigma_i$. Substituting $\gamma_i = \mu_i \sigma_i$:
\begin{equation}
\lambda^2 + (\beta + \sigma_i)\lambda + \beta\sigma_i(1 - \mu_i) = 0
\end{equation}

\subsection{Reality and Positivity of the Driving Spectrum}
\label{app:driving_spectrum_reality}

Our stability analysis defines the driving and decay spectral quantities as
\begin{equation}
\gamma_i \in \mathrm{Spec}\!\left( \mat{\Sigma}_{X1}^\top \otimes \mat{W}_p^\top \bar{\mat{M}}_2 \right), 
\qquad
\sigma_i \in \mathrm{Spec}\!\left( \mat{\Sigma}_{11} \otimes \mat{W}_p^\top \mat{W}_p \right),
\qquad
\mu_i = \gamma_i / \sigma_i .
\end{equation}
By construction, $\mat{\Sigma}_{11}$ is positive semidefinite and $\mat{W}_p^\top \mat{W}_p$ is positive semidefinite, which implies $\sigma_i \ge 0$ and $\sigma_i \in \mathbb{R}$.

In contrast, the driving operator $\mat{\Sigma}_{X1}^\top \otimes \mat{W}_p^\top \bar{\mat{M}}_2$ is not symmetric in general. The cross-covariance $\mat{\Sigma}_{X1}$ need not be symmetric (it is under the feature-wise masking model of Appendix~\ref{app:assumptions}, where $\mat{\Sigma}_{X1} = p_{m_1}\mat{\Sigma}_{XX}$), and the matrix $\mat{W}_p^\top \bar{\mat{M}}_2$ is not symmetric unless additional structural constraints are imposed. As a result, $\gamma_i$ may be negative or complex in general. Consequently, the scalar stability ratio $\mu_i = \gamma_i / \sigma_i$ is not guaranteed to be real or ordered, and the threshold condition $\mu_i > 1$ is not, in full generality, a well-defined universal stability boundary.

\paragraph{When the scalar criterion is exact.}
The per-mode criterion $\mu_i > 1$ used in the main text is exact when the following two conditions hold together:

\begin{enumerate}
    \item the driving operator is symmetric, that is, $\mat{\Sigma}_{X1}$ and $\mat{W}_p^\top \bar{\mat{M}}_2$ are both symmetric, so that the Kronecker product is symmetric and every $\gamma_i$ is real;

    \item the exact form of Assumption~\ref{ass:spectral_decoupling} holds with orthonormal eigenbases, so that the coupled dynamics split into independent modes and each $\gamma_i$ is paired with the $\sigma_i$ of the same mode.
\end{enumerate}
Neither condition suffices alone. A symmetric driving operator whose eigenbasis differs from that of the decay operator has real eigenvalues, but the flow does not split into modes, and pairing the $k$-th largest $\gamma$ with the $k$-th largest $\sigma$ can miscount the unstable directions. Conversely, a shared eigenbasis does not make the drive real. With $\bar{\mat{M}}_2 = p_{tgt}\mat{I}$ and a scaled rotation $\mat{W}_p = s\,\mat{R}_\theta$, the matrix $\mat{W}_p^\top\mat{W}_p = s^2\mat{I}$ commutes with $\mat{W}_p^\top\bar{\mat{M}}_2 = p_{tgt}\,s\,\mat{R}_\theta^\top$, so Assumption~\ref{ass:spectral_decoupling} holds exactly, yet $q_\ell = p_{tgt}\,s\,e^{\pm \mathrm{i}\theta}$ is complex. Under the masking model of Appendix~\ref{app:assumptions}, where $\mat{\Sigma}_{X1}$ is symmetric and $\bar{\mat{M}}_2 = p_{tgt}\mat{I}$, the two conditions amount to a symmetric predictor $\mat{W}_p$ together with commuting $\mathrm{diag}(\mat{\Sigma}_{XX})$ and $\mat{\Sigma}_{XX}$.

When the modes decouple but a driving eigenvalue $\gamma_i$ is complex, the characteristic equation above still governs mode $i$, and it has a root with positive real part if and only if
\begin{equation}
\Real\gamma_i + \frac{\beta\,(\operatorname{Im}\gamma_i)^2}{(\beta+\sigma_i)^2} \;>\; \sigma_i ,
\label{eq:complex-drive}
\end{equation}
which reduces to $\gamma_i > \sigma_i$, that is $\mu_i > 1$, for a real drive. To see this, write $b = \beta + \sigma_i$ and $\Delta = b^2 - 4\beta(\sigma_i - \gamma_i)$. The roots are $\frac{1}{2}(-b \pm \sqrt{\Delta})$, so the larger real part is positive exactly when $\Real\sqrt{\Delta} > b$, which is $|\Delta| > b^2 + 4\beta(\sigma_i - \Real\gamma_i)$. If the right-hand side is negative, then $\Real\gamma_i > \sigma_i$ and both conditions hold; otherwise squaring gives $\beta(\operatorname{Im}\gamma_i)^2 > b^2(\sigma_i - \Real\gamma_i)$, which is Eq.~\eqref{eq:complex-drive}. Since $\beta\sigma_i/(\beta+\sigma_i)^2 \le 1/4$, it follows for $\sigma_i > 0$ that $\Real\mu_i > 1$ is sufficient for escape and $|\mu_i| > 1$ is necessary, while neither is the criterion. Without shared eigenbases, Appendix~\ref{app:nonsymmetric_stability} gives certificates that need neither condition.

\paragraph{Alternative real-valued formulation.}
A fully real and nonnegative driving strength can be obtained by replacing eigenvalues with \emph{singular values} (which are well-defined for any matrix). For each pair of indices $(k,\ell)$, define
\begin{equation}
\tilde{\gamma}_{k\ell}
\;:=\;
\sigma^{(k)}\!\bigl(\mat{\Sigma}_{X1}\bigr)
\,\cdot\,
\sigma^{(\ell)}\!\bigl(\mat{W}_p^\top \bar{\mat{M}}_2\bigr),
\qquad
\tilde{\mu}_{k\ell} \;:=\; \frac{\tilde{\gamma}_{k\ell}}{a_k\, p_\ell},
\label{eq:tilde-mu-app}
\end{equation}
where $\sigma^{(k)}(\mat A)$ denotes the $k$-th singular value of $\mat A$ in non-increasing order, $a_k = \sigma^{(k)}(\mat{\Sigma}_{11})$, and $p_\ell = \sigma^{(\ell)}(\mat{W}_p^\top \mat{W}_p)$. By construction $\tilde{\gamma}_{k\ell}, \tilde{\mu}_{k\ell} \in \mathbb{R}_{\ge 0}$ and the threshold condition $\tilde{\mu}_{k\ell} > 1$ is rigorously well-defined regardless of the symmetry of $\mat{\Sigma}_{X1}$ or $\mat{W}_p^\top \bar{\mat{M}}_2$. When both operators are symmetric and share the orthonormal eigenbases of $\mat{\Sigma}_{11}$ and $\mat{W}_p^\top \mat{W}_p$, their singular values are the $|c_k|$ and $|q_\ell|$, but Eq.~\eqref{eq:tilde-mu-app} sorts them independently of the $a_k$ and $p_\ell$. It therefore agrees with the eigenvector-paired absolute-value form $(|c_k|/a_k)\cdot(|q_\ell|/p_\ell)$, which the rank budget of App.~\ref{app:rank-control-proof} uses, only when the two orderings coincide. For example, $\mat{\Sigma}_{11} = \mathrm{diag}(2,1)$ and $\mat{\Sigma}_{X1} = \mathrm{diag}(0.5,1.5)$ with a scalar predictor give one supercritical mode in the paired form, whose flow escapes, and none in Eq.~\eqref{eq:tilde-mu-app}.

\subsection{Stability of the full nonsymmetric block Jacobian}
\label{app:nonsymmetric_stability}

The discussion above states what the scalar criterion needs and stops there. This section proves what the operator class admits without any symmetry or decoupling assumption, and shows how the statements collapse to the per-mode criterion of Section~\ref{sec:theory} when Assumption~\ref{ass:spectral_decoupling} holds exactly. Throughout, $n := dN$,
\begin{equation}
\mat{S} := \mat{\Sigma}_{11} \otimes \mat{W}_p^\top \mat{W}_p, \qquad
\mat{G} := \mat{\Sigma}_{X1}^\top \otimes \mat{W}_p^\top \bar{\mat{M}}_2, \qquad
\mathcal{J} = \begin{pmatrix} -\mat{S} & \mat{G} \\ \beta \mat{I}_n & -\beta \mat{I}_n \end{pmatrix},
\label{eq:SG}
\end{equation}
so that $\mat{S}$ is real symmetric positive semi-definite, a Kronecker product of two such factors, while $\mat{G}$ is a general real matrix. We write $\alpha(\mathcal{J}) := \max\{\Real \lambda : \lambda \in \Spec(\mathcal{J})\}$ for the spectral abscissa, so that the linearised flow collapses when $\alpha(\mathcal{J}) < 0$ and escapes when $\alpha(\mathcal{J}) > 0$, and $r(\mat{G}) := \max_{\|w\| = 1} |w^\ast \mat{G} w|$ for the numerical radius. All statements are local at the collapsed fixed point, within the scope of Lemma~\ref{lemma:coupled_dynamics}; they say nothing about non-trivial equilibria elsewhere.

\begin{theorem}[Quadratic-pencil reduction; no assumptions]
\label{thm:pencil}
Let $Q(\lambda) := \lambda^2 \mat{I}_n + \lambda(\mat{S} + \beta \mat{I}_n) + \beta(\mat{S} - \mat{G})$. Then, as an identity of polynomials in $\lambda$,
$\det(\lambda \mat{I}_{2n} - \mathcal{J}) = \det Q(\lambda)$,
so $\Spec(\mathcal{J})$, with algebraic multiplicities, is the solution set of the quadratic eigenvalue problem $\det Q(\lambda) = 0$. Moreover: (i) for $\lambda \neq -\beta$, pencil eigenpairs $(\lambda, w)$ with $Q(\lambda) w = 0$, $w \neq 0$, correspond bijectively to eigenpairs $(\lambda, [w; \tfrac{\beta}{\lambda + \beta} w])$ of $\mathcal{J}$; (ii) $-\beta \in \Spec(\mathcal{J})$ iff $\mat{G}$ is singular; (iii) $\det \mathcal{J} = \beta^n \det(\mat{S} - \mat{G})$, hence $0 \in \Spec(\mathcal{J})$ iff $\det(\mat{S} - \mat{G}) = 0$.
\end{theorem}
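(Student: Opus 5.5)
The plan is a Schur-complement computation on the lower-right block of $\lambda\mat{I}_{2n}-\mathcal{J}$, which is the scalar multiple $(\lambda+\beta)\mat{I}_n$ and so commutes with everything. Write
\[
\lambda\mat{I}_{2n}-\mathcal{J}=\begin{pmatrix}\lambda\mat{I}_n+\mat{S} & -\mat{G}\\ -\beta\mat{I}_n & (\lambda+\beta)\mat{I}_n\end{pmatrix}.
\]
Since the two bottom blocks commute, the standard commuting-block determinant formula $\det\begin{pmatrix}A&B\\C&D\end{pmatrix}=\det(AD-BC)$ (valid when $C$ and $D$ commute) applies. It gives $\det\big((\lambda\mat{I}_n+\mat{S})(\lambda+\beta)-\beta\mat{G}\big)=\det Q(\lambda)$ after expanding. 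To avoid citing that formula, I would first work for $\lambda\neq-\beta$ and use $\det\begin{pmatrix}A&B\\C&D\end{pmatrix}=\det D\,\det(A-BD^{-1}C)$. This yields $(\lambda+\beta)^n\det\big(\lambda\mat{I}+\mat{S}-\tfrac{\beta}{\lambda+\beta}\mat{G}\big)=\det Q(\lambda)$. Both sides are polynomials in $\lambda$ agreeing on a cofinite set, so they agree identically. This gives the main identity and hence the statement about algebraic multiplicities.

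For (i), I would take $\lambda\neq-\beta$ and write an eigenvector of $\mathcal{J}$ as $[u;v]$. The second block row gives $\beta u-\beta v=\lambda v$, so $v=\tfrac{\beta}{\lambda+\beta}u$. Substituting into the first row, $-\mat{S}u+\tfrac{\beta}{\lambda+\beta}\mat{G}u=\lambda u$, and multiplying by $\lambda+\beta$ gives $Q(\lambda)u=0$. The converse is the same computation read backwards. The map $u\mapsto[u;\tfrac{\beta}{\lambda+\beta}u]$ is linear and injective, which gives the bijection between eigenpairs.

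For (ii), evaluate the identity at $\lambda=-\beta$. There $Q(-\beta)=\beta^2\mat{I}-\beta\mat{S}-\beta^2\mat{I}+\beta\mat{S}-\beta\mat{G}=-\beta\mat{G}$, so $\det(-\beta\mat{I}-\mathcal{J})=(-\beta)^n\det\mat{G}$. Since $\beta>0$, this vanishes iff $\mat{G}$ is singular. For (iii), set $\lambda=0$. Then $\det(-\mathcal{J})=\det Q(0)=\beta^n\det(\mat{S}-\mat{G})$, and since $\det(-\mathcal{J})=(-1)^{2n}\det\mathcal{J}=\det\mathcal{J}$, we get $\det\mathcal{J}=\beta^n\det(\mat{S}-\mat{G})$ and the zero-eigenvalue criterion.

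The only delicate point is the determinant identity at $\lambda=-\beta$, where $D$ is singular. The polynomial-continuation argument (or the commuting-block lemma) handles it, and this is also why (i) excludes $\lambda=-\beta$ while (ii) is obtained separately by direct evaluation.
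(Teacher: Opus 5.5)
Your proposal is correct and follows the paper's proof essentially step for step. Both use the Schur complement on the invertible block $(\lambda+\beta)\mat{I}_n$ for $\lambda\neq-\beta$, extend by polynomial identity, eliminate the second block row for (i), and evaluate at $\lambda=0$ with $\det(-\mathcal{J})=\det\mathcal{J}$ for (iii). The only cosmetic difference is in (ii): you evaluate the determinant identity at $\lambda=-\beta$, while the paper reads (ii) off the eigenvector equations ($w=0$, $\mat{G}w_a=0$) and notes $Q(-\beta)=-\beta\mat{G}$ as a consistency check; both are valid.
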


\begin{theorem}[Unconditional abscissa control and collapse certificate]
\label{thm:collapse_certificate}
(i) Let $\lambda \in \Spec(\mathcal{J})$ have $\Real \lambda \ge 0$, with eigenvector $[w; w_a]$ normalised so that $\|w\| = 1$. Then
\begin{equation}
\Real \lambda \;\le\; |w^\ast \mat{G} w| - w^\ast \mat{S} w \;\le\; r(\mat{G}) - \lambda_{\min}(\mat{S}) \;\le\; \sigma_{\max}(\mat{G}) - \lambda_{\min}(\mat{S}).
\end{equation}
(ii) Hence $\alpha(\mathcal{J}) \le \max\{0,\, \sigma_{\max}(\mat{G}) - \lambda_{\min}(\mat{S})\}$, and by the Kronecker structure $\sigma_{\max}(\mat{G}) = \sigma^{(1)}(\mat{\Sigma}_{X1})\, \sigma^{(1)}(\mat{W}_p^\top \bar{\mat{M}}_2) = \tilde\gamma_{11}$ and $\lambda_{\min}(\mat{S}) = a_N p_d$, in the notation of Eq.~\eqref{eq:tilde-mu-app}.
(iii) (Collapse certificate.) If $\sigma_{\max}(\mat{G}) < \lambda_{\min}(\mat{S})$, equivalently, when $\lambda_{\min}(\mat{S}) > 0$, if
\begin{equation}
\check\mu \;:=\; \frac{\sigma_{\max}(\mat{G})}{\lambda_{\min}(\mat{S})} \;=\; \frac{\tilde\gamma_{11}}{a_N p_d} \;=\; \kappa(\mat{\Sigma}_{11})\, \kappa(\mat{W}_p^\top \mat{W}_p)\, \tilde\mu_{11} \;<\; 1,
\label{eq:mucheck}
\end{equation}
then $\alpha(\mathcal{J}) < 0$: the origin is exponentially stable for the linearised flow, with no symmetry, normality or decoupling assumption. Moreover $\tilde\mu_{k\ell} \le \check\mu$ for every $(k,\ell)$, so the certificate is built from the singular values that define the proxy of Eq.~\eqref{eq:tilde-mu-app} and dominates every per-mode $\tilde\mu_{k\ell}$.
(iv) (Explicit rate.) For any $c \in [0, \beta)$ with $\beta \sigma_{\max}(\mat{G})/(\beta - c) - \lambda_{\min}(\mat{S}) < -c$, one has $\alpha(\mathcal{J}) < -c$.
(v) (Non-necessity.) The converse of (iii) is false: $\check\mu \ge 1$ does not imply instability. With $\mat{S} = \mat{I}_n$ and $\mat{G}$ whose sole non-zero entry is $G_{12} = 100$, so that $\check\mu = 100$, one has $\det Q(\lambda) = ((\lambda + 1)(\lambda + \beta))^n$, hence $\Spec(\mathcal{J}) = \{-1, -\beta\}$ and $\alpha(\mathcal{J}) < 0$. Singular values alone cannot decide the stability of $\mathcal{J}$; the gap is a non-normality effect.
\end{theorem}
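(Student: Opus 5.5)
The proof runs through the quadratic pencil of Theorem~\ref{thm:pencil}. Let $\lambda \in \Spec(\mathcal{J})$ with $\Real\lambda \ge 0$ and eigenvector $[w; w_a]$. Then $\lambda \neq -\beta$, because $\Real(-\beta) < 0$. Theorem~\ref{thm:pencil}(i) therefore gives $w_a = \tfrac{\beta}{\lambda+\beta} w$ and $w \neq 0$, so we may normalise $\|w\| = 1$ and obtain $Q(\lambda) w = 0$. Multiplying on the left by $w^\ast$ gives the scalar equation
\begin{equation*}
\lambda^2 + \lambda(s+\beta) + \beta(s - g) = 0, \qquad s := w^\ast \mat{S} w \ge 0, \quad g := w^\ast \mat{G} w \in \mathbb{C}.
\end{equation*}
Equivalently, reading off the first block row, $(\lambda + s) = \tfrac{\beta}{\lambda+\beta} g$.

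\textbf{Part (i).} Take moduli in $(\lambda + s)(\lambda + \beta) = \beta g$. Since $\Real\lambda \ge 0$, we have $|\lambda + \beta| \ge \Real\lambda + \beta \ge \beta$. Hence $|\lambda + s| \le |g|$, and therefore $\Real\lambda + s \le |\lambda + s| \le |g|$. This is the first inequality, $\Real\lambda \le |w^\ast \mat{G} w| - w^\ast \mat{S} w$. The next two follow from $|w^\ast \mat{G} w| \le r(\mat{G}) \le \sigma_{\max}(\mat{G})$ and $w^\ast \mat{S} w \ge \lambda_{\min}(\mat{S})$.

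\textbf{Part (ii).} Any eigenvalue with negative real part already lies below $0$, and part (i) bounds the rest, so $\alpha(\mathcal{J}) \le \max\{0, \sigma_{\max}(\mat{G}) - \lambda_{\min}(\mat{S})\}$. The Kronecker identities come from two standard facts. The singular values of $A \otimes B$ are the pairwise products of those of $A$ and $B$, and transposition does not change singular values; this gives $\sigma_{\max}(\mat{G}) = \tilde\gamma_{11}$. The eigenvalues of a Kronecker product of PSD matrices are the pairwise products of their eigenvalues; this gives $\lambda_{\min}(\mat{S}) = a_N p_d$.

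\textbf{Part (iii).} This is the one step needing care: part (i) only controls eigenvalues with $\Real\lambda \ge 0$, so it does not by itself give a strict negative abscissa. Suppose $\sigma_{\max}(\mat{G}) < \lambda_{\min}(\mat{S})$ and that some eigenvalue has $\Real\lambda \ge 0$. Part (i) then yields $\Real\lambda \le \sigma_{\max}(\mat{G}) - \lambda_{\min}(\mat{S}) < 0$, a contradiction. Hence every eigenvalue has $\Real\lambda < 0$, and since the spectrum is finite, $\alpha(\mathcal{J}) < 0$.

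The three forms of $\check\mu$ agree by algebra: $\kappa(\mat{\Sigma}_{11}) = a_1/a_N$, $\kappa(\mat{W}_p^\top\mat{W}_p) = p_1/p_d$, and $\tilde\mu_{11} = \tilde\gamma_{11}/(a_1 p_1)$. For domination, note $\tilde\gamma_{k\ell} \le \tilde\gamma_{11}$ and $a_k p_\ell \ge a_N p_d$, so $\tilde\mu_{k\ell} \le \check\mu$.

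\textbf{Part (iv).} Repeat the argument of part (i) for an eigenvalue with $\Real\lambda \ge -c$, where $c < \beta$. Now $|\lambda + \beta| \ge \beta - c > 0$, so $|\lambda + s| \le \beta|g|/(\beta - c)$. This gives
\begin{equation*}
\Real\lambda \le \beta\,\sigma_{\max}(\mat{G})/(\beta - c) - \lambda_{\min}(\mat{S}) < -c,
\end{equation*}
which contradicts $\Real\lambda \ge -c$. Hence $\alpha(\mathcal{J}) < -c$.

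\textbf{Part (v).} Take $\mat{G}$ strictly upper triangular (its only nonzero entry is $G_{12}$) and $\mat{S} = \mat{I}_n$. Then $Q(\lambda)$ is upper triangular with every diagonal entry equal to $\lambda^2 + (1+\beta)\lambda + \beta = (\lambda+1)(\lambda+\beta)$. Its determinant is therefore the product of these diagonal entries, and $\det Q(\lambda) = ((\lambda+1)(\lambda+\beta))^n$. Theorem~\ref{thm:pencil} transfers this to $\Spec(\mathcal{J}) = \{-1, -\beta\}$, while $\sigma_{\max}(\mat{G}) = 100$ gives $\check\mu = 100$. One should also realise such a $\mat{G}$ as a Kronecker product of admissible factors, e.g.\ $\mat{\Sigma}_{X1}^\top = E_{12}$ and $\mat{W}_p^\top \bar{\mat{M}}_2 = 100\,\mat{I}$. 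As an abstract operator-class statement about $(\mat{S}, \mat{G})$ with $\mat{S}$ PSD, though, the counterexample already suffices.
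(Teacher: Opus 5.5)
Your proposal is correct and follows the paper's proof essentially step for step. It uses the same elimination $w_a = \tfrac{\beta}{\lambda+\beta}w$, the same modulus/real-part estimate via $|\lambda+\beta|\ge\beta$ (resp.\ $\ge\beta-c$ in (iv)), the same contradiction argument for (iii)--(iv), and the same upper-triangular pencil computation for (v). The one small slip is in your optional realisability aside: for $d\ge 2$, $E_{12}\otimes 100\,\mat{I}_d$ has $d$ non-zero entries rather than a single one. It is still strictly upper triangular, so your computation applies unchanged; the single-entry $\mat{G}$ itself factors as $E_{11}\otimes 100\,E_{12}$ in the paper's $\mathrm{vec}$ convention.
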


\begin{theorem}[Escape certificates]
\label{thm:escape_certificate}
(i) (General $\mat{G}$.) If $\det(\mat{S} - \mat{G}) < 0$, then $\mathcal{J}$ has a real eigenvalue $\lambda^\ast > 0$ and the origin is exponentially unstable.
(ii) (Symmetric drive: a sharp trichotomy.) Assume $\mat{G} = \mat{G}^\top$, which holds under sufficient condition~1 of Appendix~\ref{app:driving_spectrum_reality} and in particular under the exact form of Assumption~\ref{ass:spectral_decoupling} with orthonormal shared bases. Call a unit $w \in \R^n$ admissible if $\delta_w := -w^\top(\mat{S} - \mat{G}) w > 0$. Then:
(a) $\mat{S} - \mat{G} \succ 0$ iff $\alpha(\mathcal{J}) < 0$;
(b) $\lambda_{\min}(\mat{S} - \mat{G}) = 0$ implies $\alpha(\mathcal{J}) = 0$;
(c) $\lambda_{\min}(\mat{S} - \mat{G}) < 0$ implies that $\mathcal{J}$ has a real eigenvalue $\lambda^\ast > 0$, and for every admissible $w$ some real eigenvalue of $\mathcal{J}$ lies in $[\lambda_+(w), \Lambda]$, where
\begin{equation}
\begin{aligned}
\lambda_+(w) &:= \tfrac12\Bigl[\sqrt{(w^\top \mat{S} w + \beta)^2 + 4\beta\delta_w} - (w^\top \mat{S} w + \beta)\Bigr] > 0,\\
\Lambda &:= \tfrac12\bigl(-\beta + \sqrt{\beta^2 + 4\beta\|\mat{S} - \mat{G}\|_2}\bigr);
\end{aligned}
\end{equation}
every real non-negative eigenvalue of $\mathcal{J}$ is at most $\Lambda$, and $\alpha(\mathcal{J}) \ge \sup_w \lambda_+(w)$ over admissible $w$.
In particular, for symmetric drive, $\alpha(\mathcal{J}) > 0$ iff $\lambda_{\min}(\mat{S} - \mat{G}) < 0$ iff some direction has $w^\top \mat{G} w > w^\top \mat{S} w$: drive beats decay as quadratic forms. This is a complete stability statement with no decoupling assumption. Under symmetry, (ii) refines (i), since an even number of negative eigenvalues of $\mat{S} - \mat{G}$ has positive determinant and is detected only by (ii).
\end{theorem}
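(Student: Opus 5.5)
The plan is to work entirely through the quadratic-pencil reduction of Theorem~\ref{thm:pencil}. That theorem gives $\det(\lambda\mat I_{2n}-\mathcal J)=\det Q(\lambda)$ and $\det Q(0)=\beta^n\det(\mat S-\mat G)$, so every spectral statement becomes a statement about the $n\times n$ pencil $Q(\lambda)=\lambda^2\mat I_n+\lambda(\mat S+\beta\mat I_n)+\beta(\mat S-\mat G)$.

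\textbf{Part (i).} For real $\lambda$, $Q(\lambda)$ is a real matrix, so $p(\lambda):=\det Q(\lambda)$ is a real polynomial of degree $2n$ with leading coefficient $1$. By hypothesis $p(0)=\beta^n\det(\mat S-\mat G)<0$. Since $p(\lambda)\to+\infty$ as $\lambda\to+\infty$, the intermediate value theorem gives a root $\lambda^\ast>0$. By Theorem~\ref{thm:pencil} this root is a real eigenvalue of $\mathcal J$, so the origin is exponentially unstable. No symmetry is needed for this step.

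\textbf{Part (ii), cases (a) and (b).} Assume now $\mat G=\mat G^\top$. To exclude eigenvalues in the closed right half-plane, take any eigenvalue $\lambda$ of $\mathcal J$ with $\Real\lambda\ge 0$. Then $\lambda\neq-\beta$, so part (i) of Theorem~\ref{thm:pencil} yields a unit vector $w\in\mathbb C^n$ with $Q(\lambda)w=0$. Contracting with $w^\ast$ gives the scalar equation
\begin{equation*}
\lambda^2+\lambda(s+\beta)+\beta g=0,\qquad s=w^\ast\mat S w\ge 0,\quad g=w^\ast(\mat S-\mat G)w\in\R .
\end{equation*}
Here $g$ is real because $\mat S-\mat G$ is real symmetric, hence Hermitian. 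A real monic quadratic with coefficients $s+\beta>0$ and $\beta g$ has both roots in the open left half-plane when $g>0$, and roots with $\Real\le 0$ when $g\ge 0$.

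If $\mat S-\mat G\succ0$, then $g>0$, which contradicts $\Real\lambda\ge0$. Hence $\alpha(\mathcal J)<0$. If $\lambda_{\min}(\mat S-\mat G)=0$, then $g\ge0$ forces $\Real\lambda\le0$. Moreover $\det(\mat S-\mat G)=0$, so $0\in\Spec(\mathcal J)$ by Theorem~\ref{thm:pencil}(iii). Together these give $\alpha(\mathcal J)=0$, which is (b). The converse direction of (a) then follows from (b) and (c), since in both of those cases $\alpha(\mathcal J)\ge0$.

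\textbf{Part (ii), case (c) and the bracket.} For real $\lambda\ge0$, $Q(\lambda)$ is real symmetric, so $f(\lambda):=\lambda_{\min}(Q(\lambda))$ is continuous. Three facts pin down a zero of $f$.
\begin{itemize}
\item For an admissible $w$, $w^\top Q(\lambda)w=\lambda^2+\lambda(w^\top\mat S w+\beta)-\beta\delta_w$. This is negative exactly for $0\le\lambda<\lambda_+(w)$, so $f<0$ on $[0,\lambda_+(w))$, and by continuity $f(\lambda_+(w))\le0$.
\item For $\lambda>\Lambda$, using $\mat S\succeq0$,
\begin{equation*}
Q(\lambda)\succeq\bigl(\lambda^2+\lambda\beta-\beta\|\mat S-\mat G\|_2\bigr)\mat I_n\succ0 ,
\end{equation*}
because $\Lambda$ is the positive root of $\lambda^2+\beta\lambda-\beta\|\mat S-\mat G\|_2$. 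Hence $f>0$ on $(\Lambda,\infty)$.
\item By the previous point, no real nonnegative eigenvalue of $\mathcal J$ exceeds $\Lambda$.
\end{itemize}
The intermediate value theorem then gives a zero of $f$ in $[\lambda_+(w),\Lambda]$. At that zero $\det Q=0$, so it is a real eigenvalue of $\mathcal J$. Taking the supremum over admissible $w$ yields $\alpha(\mathcal J)\ge\sup_w\lambda_+(w)$.

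Admissible $w$ exist exactly when $\lambda_{\min}(\mat S-\mat G)<0$ (take an eigenvector of the smallest eigenvalue). This gives the closing equivalences. The remark about an even number of negative eigenvalues is immediate: in that case the sign of $\det(\mat S-\mat G)$ is positive, so (i) does not apply while (ii)(c) still does.

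The step needing most care is the exclusion of complex eigenvalues in (a) and (b). The plan handles it through the Rayleigh contraction of the pencil with a complex eigenvector. What makes this work is that symmetry of $\mat G$ keeps $g$ real, so that the scalar quadratic has real coefficients and its roots can be located directly.
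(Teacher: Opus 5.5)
Your proposal is correct and follows the paper's proof essentially step for step: the intermediate value theorem on the monic polynomial $\det Q(\lambda)$ with $\det Q(0)=\beta^n\det(\mat{S}-\mat{G})<0$ for (i), the Rayleigh contraction $w^\ast Q(\lambda)w=0$ (real coefficients under symmetry) for (a) and (b), and continuity of $\lambda_{\min}(Q(\lambda))$ bracketed between $\lambda_+(w)$ and $\Lambda$ for (c). The only cosmetic difference is in (a) and (b): you restrict to eigenvalues with $\Real\lambda\ge 0$ so that you can invoke Theorem~\ref{thm:pencil}(i), whereas the paper takes a null vector of $Q(\lambda)$ for any root of $\det Q$ directly.
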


\begin{corollary}[Exact Assumption~\ref{ass:spectral_decoupling} recovers the per-mode criterion]
\label{cor:pencil_reduction}
Assume Assumption~\ref{ass:spectral_decoupling} exactly, in the form of Step~1 of Appendix~\ref{app:rank-control-proof}: orthogonal $\mat{U}$, $\mat{V}$ with $\mat{\Sigma}_{11} = \mat{U}\diag(a_k)\mat{U}^\top$, $\mat{\Sigma}_{X1} = \mat{U}\diag(c_k)\mat{U}^\top$, $\mat{W}_p^\top\mat{W}_p = \mat{V}\diag(p_\ell)\mat{V}^\top$, $\mat{W}_p^\top\bar{\mat{M}}_2 = \mat{V}\diag(q_\ell)\mat{V}^\top$. This forces $\mat{G} = \mat{G}^\top$ and $[\mat{S}, \mat{G}] = 0$. With $\sigma_{k\ell} := a_k p_\ell$ and $\gamma_{k\ell} := c_k q_\ell$:
\begin{enumerate}
\item $\det(\lambda \mat{I}_{2n} - \mathcal{J}) = \prod_{k,\ell}\bigl[\lambda^2 + (\beta + \sigma_{k\ell})\lambda + \beta(\sigma_{k\ell} - \gamma_{k\ell})\bigr]$, the per-mode characteristic polynomial of Section~\ref{sec:theory}, with constant term $\beta\sigma_{k\ell}(1 - \mu_{k\ell})$ whenever $\sigma_{k\ell} > 0$.
\item $\lambda_{\min}(\mat{S} - \mat{G}) = \min_{k\ell}(\sigma_{k\ell} - \gamma_{k\ell})$, so Theorem~\ref{thm:escape_certificate}(ii) reduces mode by mode to the paper's criterion: $\mathcal{J}$ is Hurwitz iff $\sigma_{k\ell} > \gamma_{k\ell}$ for all modes, i.e.\ $\max_{k\ell}\mu_{k\ell} < 1$ when all $\sigma_{k\ell} > 0$; escape iff $\gamma_{k\ell} > \sigma_{k\ell}$ for some mode. Taking $w = \vect{u}_k \otimes \vect{v}_\ell$ gives $\lambda_+(w) = \lambda^+_{k\ell}$, the per-mode escape rate, and
\begin{equation}
\alpha(\mathcal{J}) \;=\; \max_{\{(k,\ell)\,:\,\gamma_{k\ell} > \sigma_{k\ell}\}} \lambda^+_{k\ell}
\end{equation}
whenever a supercritical mode exists: the variational lower bound is attained with equality. The index set is written in the division-free form so that degenerate escaping modes, $\sigma_{k\ell} = 0 < \gamma_{k\ell}$, for which $\mu_{k\ell}$ is undefined, are included.
\item $\tilde\mu_{k\ell} \le \check\mu = \kappa(\mat{\Sigma}_{11})\kappa(\mat{W}_p^\top\mat{W}_p)\,\tilde\mu_{11}$ for all $(k,\ell)$: the unconditional certificate is conservative relative to the per-mode threshold by exactly the condition numbers of the decay factors, and tightens to the $\tilde\mu$ criterion as those factors become well conditioned.
\item (Adiabatic tightness.) $\max_{\|u\|=1}(|u^\ast \mat{G} u| - u^\ast \mat{S} u) = \max_{k\ell}(|\gamma_{k\ell}| - \sigma_{k\ell})$, and on a supercritical mode $\lambda^+_{k\ell} \uparrow \gamma_{k\ell} - \sigma_{k\ell}$ monotonically as $\beta \to \infty$. Hence the bound of Theorem~\ref{thm:collapse_certificate}(i) is asymptotically attained in the adiabatic limit provided the maximiser of $|\gamma_{k\ell}| - \sigma_{k\ell}$ has $\gamma_{k\ell} > 0$, in particular whenever all $c_k q_\ell \ge 0$; with negative drive eigenvalues it can be strictly conservative for every $\beta$ (modes $(\sigma,\gamma) = (1,-5)$ and $(2, 0.5)$ give the bound $4$ while $\mathcal{J}$ is Hurwitz for every $\beta$).
\end{enumerate}
\end{corollary}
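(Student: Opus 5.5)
The plan is to work entirely in the shared orthonormal basis and then apply the pencil reduction (Theorem~\ref{thm:pencil}) and the symmetric-drive trichotomy (Theorem~\ref{thm:escape_certificate}(ii)), so that each item becomes a statement about diagonal matrices.

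\emph{Diagonalising the operators.} Set $\mat{O} := \mat{U}\otimes\mat{V}$, which is orthogonal. The mixed-product rule gives
\[
\mat{S} = \mat{O}\,\bigl(\diag(a_k)\otimes\diag(p_\ell)\bigr)\,\mat{O}^\top, \qquad \mat{G} = \mat{O}\,\bigl(\diag(c_k)\otimes\diag(q_\ell)\bigr)\,\mat{O}^\top .
\]
Here we use $\mat{\Sigma}_{X1}^\top = \mat{\Sigma}_{X1}$, which follows from its symmetric spectral form. Both operators are therefore diagonal in the orthonormal basis $\vect{u}_k\otimes\vect{v}_\ell$, with entries $\sigma_{k\ell}$ and $\gamma_{k\ell}$. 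This immediately gives $\mat{G}=\mat{G}^\top$ and $[\mat{S},\mat{G}]=0$.

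\emph{Item 1.} Substitute into $Q(\lambda)$ from Theorem~\ref{thm:pencil}. Conjugated by $\mat{O}$, $Q(\lambda)$ is diagonal with entries $\lambda^2+(\beta+\sigma_{k\ell})\lambda+\beta(\sigma_{k\ell}-\gamma_{k\ell})$. Taking the determinant gives the product formula. Factoring $\sigma_{k\ell}$ out of the constant term when $\sigma_{k\ell}>0$ gives $\beta\sigma_{k\ell}(1-\mu_{k\ell})$.

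\emph{Item 2.} The matrix $\mat{S}-\mat{G}$ is diagonal in the same basis, so its smallest eigenvalue is $\min_{k\ell}(\sigma_{k\ell}-\gamma_{k\ell})$. Theorem~\ref{thm:escape_certificate}(ii)(a) and (c) then give both the Hurwitz characterisation and the escape characterisation.

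For the equality $\alpha(\mathcal{J}) = \max \lambda^+_{k\ell}$, I would argue directly from the factorisation in Item 1. Each quadratic factor has a positive sum of roots' negatives ($\beta+\sigma_{k\ell}>0$). When the constant term is negative, the largest root is $\lambda^+_{k\ell}>0$. When the constant term is nonnegative, both roots have nonpositive real part. Hence the abscissa is the largest $\lambda^+_{k\ell}$ over supercritical modes.

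Plugging $w=\vect{u}_k\otimes\vect{v}_\ell$ into $\lambda_+(w)$ gives $w^\top\mat{S}w=\sigma_{k\ell}$ and $\delta_w=\gamma_{k\ell}-\sigma_{k\ell}$. The formula then reproduces the larger root of the quadratic, which shows the variational bound is attained.

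Degenerate modes with $\sigma_{k\ell}=0<\gamma_{k\ell}$ are handled by the same quadratic, which is why the index set is written without division.

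\emph{Item 3.} This is the bound $\tilde\gamma_{k\ell}\le\tilde\gamma_{11}$ and $a_kp_\ell\ge a_Np_d$, already noted in Theorem~\ref{thm:collapse_certificate}(iii). The identity $\check\mu=\kappa\kappa\tilde\mu_{11}$ follows by multiplying and dividing by $a_1p_1$.

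\emph{Item 4.} For the numerical-range maximum, note that $\mat{G}$ and $\mat{S}$ are simultaneously diagonal. Writing $u=\sum\xi_{k\ell}(\vect{u}_k\otimes\vect{v}_\ell)$ with weights $|\xi_{k\ell}|^2$ summing to one, the triangle inequality gives
\[
|u^\ast\mat{G}u|-u^\ast\mat{S}u \le \sum|\xi|^2\bigl(|\gamma_{k\ell}|-\sigma_{k\ell}\bigr) \le \max_{k\ell}\bigl(|\gamma_{k\ell}|-\sigma_{k\ell}\bigr),
\]
and a basis vector attains the maximum.

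For monotonicity, define $f(\beta)=\lambda^+_{k\ell}$ via $\lambda^2+(\beta+\sigma)\lambda-\beta\delta=0$ with $\delta=\gamma-\sigma>0$. Implicit differentiation gives $f'=(\delta-\lambda)/(2\lambda+\beta+\sigma)$. From the equation, $\lambda(\lambda+\sigma)=\beta(\delta-\lambda)$, so $\lambda<\delta$ and $f'>0$. Letting $\beta\to\infty$ in $\lambda=\beta\delta/(\lambda+\beta+\sigma)$ shows $\lambda\to\delta$.

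Consequently, if the maximiser has $\gamma>0$, the bound $|\gamma|-\sigma$ equals $\delta$ and is approached in the limit.

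For the counterexample, take modes $(\sigma,\gamma)=(1,-5)$ and $(2,0.5)$. Both constant terms $\beta(\sigma-\gamma)$ are positive, so every factor is Hurwitz for all $\beta$, while $\max(|\gamma|-\sigma)=4$.

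The only delicate point is the abscissa equality in Item 2. Stating it needs the care above about nonpositive roots in the subcritical modes, and about the root at $0$ when $\sigma_{k\ell}=\gamma_{k\ell}$, which does not affect the maximum because a supercritical mode exists.
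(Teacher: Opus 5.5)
Your proposal is correct and follows the paper's proof essentially step for step. The shared steps are: simultaneous diagonalisation by $\mat{U}\otimes\mat{V}$; the pencil factorisation for Item~1; reduction of Theorem~\ref{thm:escape_certificate}(ii) to a diagonal $\mat{S}-\mat{G}$, with the abscissa read off the per-mode quadratics; Item~3 taken from Theorem~\ref{thm:collapse_certificate}(iii); and the triangle-inequality, implicit-differentiation and limit arguments for Item~4, where your rearrangement $\lambda(\lambda+\sigma)=\beta(\delta-\lambda)$ is the same fact as the paper's $F(\gamma-\sigma,\beta)=(\gamma-\sigma)\gamma>0$. One small precision: the ``only if'' half of the escape equivalence needs case (b) of the trichotomy as well as (a), but your direct root-location argument from the factorisation already covers it.
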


Together, the three theorems bracket $\alpha(\mathcal{J})$ between computable singular-value and quadratic-form quantities, and the bracket closes to the exact per-mode answer as Assumption~\ref{ass:spectral_decoupling} becomes exact.

\subsubsection*{Proofs}

\begin{proof}[Proof of Theorem~\ref{thm:pencil}]
Fix $\lambda \in \mathbb{C}$ with $\lambda \neq -\beta$ and write $\lambda \mat{I}_{2n} - \mathcal{J} = \begin{psmallmatrix} \lambda\mat{I} + \mat{S} & -\mat{G} \\ -\beta\mat{I} & (\lambda + \beta)\mat{I} \end{psmallmatrix}$. The lower-right block is invertible, so the Schur determinant formula \citep{hornMatrixAnalysis2013} gives
$\det(\lambda \mat{I}_{2n} - \mathcal{J}) = (\lambda + \beta)^n \det\bigl(\lambda\mat{I} + \mat{S} - \tfrac{\beta}{\lambda+\beta}\mat{G}\bigr) = \det\bigl((\lambda+\beta)(\lambda\mat{I} + \mat{S}) - \beta\mat{G}\bigr) = \det Q(\lambda)$,
using $(\lambda+\beta)(\lambda\mat{I} + \mat{S}) = \lambda^2\mat{I} + \lambda\mat{S} + \beta\lambda\mat{I} + \beta\mat{S}$. Both sides are polynomials in $\lambda$ agreeing on $\mathbb{C} \setminus \{-\beta\}$, hence identically, and both are monic of degree $2n$, so the spectrum with multiplicities is the root set of $\det Q$.
(i) Let $\mathcal{J}z = \lambda z$ with $z = [w; w_a] \neq 0$ and $\lambda \neq -\beta$. The second block row reads $\beta w = (\lambda + \beta) w_a$, so $w_a = \tfrac{\beta}{\lambda+\beta} w$ and $w \neq 0$; substituting into the first row and multiplying by $(\lambda + \beta)$ gives $Q(\lambda) w = 0$. The steps reverse.
(ii) If $\lambda = -\beta$, the second row forces $w = 0$ and the first reduces to $\mat{G} w_a = 0$; consistently $Q(-\beta) = -\beta\mat{G}$.
(iii) Setting $\lambda = 0$: $\det(-\mathcal{J}) = \det Q(0) = \beta^n \det(\mat{S} - \mat{G})$, and $\det(-\mathcal{J}) = \det\mathcal{J}$ because $2n$ is even.
\end{proof}

\begin{proof}[Proof of Theorem~\ref{thm:collapse_certificate}]
(i) Let $\lambda \in \Spec(\mathcal{J})$ with $\Real\lambda \ge 0$ and eigenvector $[w; w_a]$. Since $\Real(\lambda + \beta) \ge \beta > 0$, $\lambda \neq -\beta$, so $w_a = \tfrac{\beta}{\lambda+\beta}w$ with $w \neq 0$; normalise $\|w\| = 1$. The inner product of the first block row with $w$ gives $\lambda = -w^\ast \mat{S} w + \tfrac{\beta}{\lambda+\beta} w^\ast \mat{G} w$. For real symmetric $\mat{S}$, $w^\ast \mat{S} w \in \R$ and $w^\ast \mat{S} w \ge \lambda_{\min}(\mat{S}) \ge 0$ by the Rayleigh bound; also $|\lambda + \beta| \ge \beta$, so $|\beta/(\lambda+\beta)| \le 1$. Taking real parts, $\Real\lambda \le -w^\ast \mat{S} w + |w^\ast \mat{G} w|$, and the chain follows from $|w^\ast \mat{G} w| \le r(\mat{G}) \le \sigma_{\max}(\mat{G})$.
(ii) If $\alpha(\mathcal{J}) \ge 0$ the maximising eigenvalue satisfies (i). Singular values of $\mat{A} \otimes \mat{B}$ are the products $\sigma_i(\mat{A})\sigma_j(\mat{B})$ and eigenvalues of a Kronecker product of symmetric matrices are the products of eigenvalues \citep{hornTopicsMatrixAnalysis1991}, and transposition preserves singular values; hence $\sigma_{\max}(\mat{G}) = \tilde\gamma_{11}$ and, both factors of $\mat{S}$ being positive semi-definite, $\lambda_{\min}(\mat{S}) = a_N p_d$.
(iii) If $\sigma_{\max}(\mat{G}) < \lambda_{\min}(\mat{S})$, an eigenvalue with $\Real\lambda \ge 0$ would satisfy $0 \le \Real\lambda < 0$; the spectrum is finite, so $\alpha(\mathcal{J}) < 0$ strictly. The identity $\check\mu = \kappa(\mat{\Sigma}_{11})\kappa(\mat{W}_p^\top\mat{W}_p)\tilde\mu_{11}$ follows from (ii) and $\tilde\mu_{11} = \tilde\gamma_{11}/(a_1 p_1)$. For the domination, $\sigma^{(k)}(\mat{\Sigma}_{X1}) \le \sigma^{(1)}(\mat{\Sigma}_{X1})$, $\sigma^{(\ell)}(\mat{W}_p^\top\bar{\mat{M}}_2) \le \sigma^{(1)}(\mat{W}_p^\top\bar{\mat{M}}_2)$, $a_k \ge a_N$ and $p_\ell \ge p_d$, hence $\tilde\mu_{k\ell} \le \tilde\gamma_{11}/(a_N p_d) = \check\mu$.
(iv) If some eigenvalue had $\Real\lambda \ge -c$, then $\Real(\lambda+\beta) \ge \beta - c > 0$, the elimination of (i) is valid with $|\beta/(\lambda+\beta)| \le \beta/(\beta - c)$, and the real-part estimate gives $\Real\lambda \le -\lambda_{\min}(\mat{S}) + \tfrac{\beta}{\beta-c}\sigma_{\max}(\mat{G}) < -c$, a contradiction.
(v) With $\mat{S} = \mat{I}_n$ and $\mat{G}$ as stated, strictly upper triangular with a single non-zero entry, $Q(\lambda)$ is upper triangular with diagonal entries $\lambda^2 + (1+\beta)\lambda + \beta = (\lambda+1)(\lambda+\beta)$, so $\det Q(\lambda) = ((\lambda+1)(\lambda+\beta))^n$ and $\Spec(\mathcal{J}) = \{-1, -\beta\}$ by Theorem~\ref{thm:pencil}, while $\check\mu = |G_{12}|$ is arbitrary.
\end{proof}

\begin{proof}[Proof of Theorem~\ref{thm:escape_certificate}]
(i) $p(\lambda) := \det(\lambda\mat{I}_{2n} - \mathcal{J})$ is a monic real polynomial with $p(0) = \beta^n\det(\mat{S} - \mat{G}) < 0$ by Theorem~\ref{thm:pencil}; since $p(\lambda) \to +\infty$ along the positive reals, the intermediate value theorem gives a real root $\lambda^\ast > 0$.
(ii) Assume $\mat{G} = \mat{G}^\top$, so that the coefficient matrices of $Q$ are symmetric. \emph{Case (a).} Let $\lambda \in \Spec(\mathcal{J})$; by Theorem~\ref{thm:pencil} there is $x \neq 0$ with $Q(\lambda)x = 0$, hence $\lambda^2 m_x + \lambda c + k = 0$ with $m_x := \|x\|^2 > 0$, $c := x^\ast(\mat{S} + \beta\mat{I})x \ge \beta m_x > 0$ and $k := \beta x^\ast(\mat{S} - \mat{G})x > 0$, the last two real because the forms are Hermitian. A real quadratic with positive coefficients has both roots in the open left half-plane, so $\alpha(\mathcal{J}) < 0$; the converse is Cases (b) and (c). \emph{Case (b).} The same computation with $k \ge 0$ gives $\Real\lambda \le 0$ for every eigenvalue, and $\det(\mat{S} - \mat{G}) = 0$ puts $0$ in the spectrum by Theorem~\ref{thm:pencil}(iii). \emph{Case (c).} Let $h(\lambda) := \lambda_{\min}(Q(\lambda))$ on $[0,\infty)$, continuous because $\lambda_{\min}$ is $1$-Lipschitz in the spectral norm by Weyl's inequality. For an admissible unit $w$ the scalar quadratic $f_w(\lambda) := w^\top Q(\lambda) w = \lambda^2 + (w^\top\mat{S}w + \beta)\lambda - \beta\delta_w$ has exactly one positive root, $\lambda_+(w)$, and the variational characterisation gives $h(\lambda_+(w)) \le f_w(\lambda_+(w)) = 0$. For any unit $v$ and $\lambda \ge 0$, $v^\top Q(\lambda) v \ge \lambda^2 + \beta\lambda - \beta\|\mat{S} - \mat{G}\|_2$, so $h(\lambda) > 0$ for $\lambda > \Lambda$: every real non-negative eigenvalue lies in $[0, \Lambda]$, and the intermediate value theorem gives a real eigenvalue in $[\lambda_+(w), \Lambda]$. The three cases are exhaustive and exclusive in $\lambda_{\min}(\mat{S} - \mat{G})$. For the refinement, $\det(\mat{S} - \mat{G}) < 0$ iff the number of negative eigenvalues is odd, which forces $\lambda_{\min}(\mat{S} - \mat{G}) < 0$.
\end{proof}

\begin{proof}[Proof of Corollary~\ref{cor:pencil_reduction}]
(1) Under the hypothesis $\mat{\Sigma}_{X1}$ and $\mat{W}_p^\top\bar{\mat{M}}_2$ are symmetric, so $\mat{G}$ is symmetric; with $\mat{\Pi} := \mat{U} \otimes \mat{V}$ the mixed-product rule gives $\mat{S} = \mat{\Pi}\diag(\sigma_{k\ell})\mat{\Pi}^\top$ and $\mat{G} = \mat{\Pi}\diag(\gamma_{k\ell})\mat{\Pi}^\top$, hence $\mat{\Pi}^\top Q(\lambda)\mat{\Pi}$ is diagonal with the per-mode quadratics and $\det Q$ factors; Theorem~\ref{thm:pencil} identifies the factorisation with the spectrum. (2) $\mat{S} - \mat{G}$ is diagonal in the joint basis, so $\lambda_{\min}(\mat{S} - \mat{G}) = \min_{k\ell}(\sigma_{k\ell} - \gamma_{k\ell})$ and Theorem~\ref{thm:escape_certificate}(ii) gives the mode-wise equivalence; $w = \vect{u}_k \otimes \vect{v}_\ell$ has $w^\top\mat{S}w = \sigma_{k\ell}$ and $\delta_w = \gamma_{k\ell} - \sigma_{k\ell}$, so $\lambda_+(w)$ is the positive root of the mode's quadratic. Since the spectrum is exactly the union of per-mode roots, and each quadratic with $b = \beta + \sigma_{k\ell} > 0$ has both roots with $\Real\lambda \le 0$ when its constant term is non-negative and a unique positive root otherwise, the maximum over supercritical modes is attained. (3) is proved inside Theorem~\ref{thm:collapse_certificate}(iii). (4) In joint coordinates $|w^\ast\mat{G}w| - w^\ast\mat{S}w \le \sum_{k\ell}(|\gamma_{k\ell}| - \sigma_{k\ell})|\hat w_{k\ell}|^2 \le \max_{k\ell}(|\gamma_{k\ell}| - \sigma_{k\ell})$, with equality at the maximising mode. For a supercritical mode, $F(\lambda,\beta) := \lambda^2 + (\beta+\sigma)\lambda - \beta(\gamma - \sigma)$ satisfies $F(\gamma - \sigma, \beta) = (\gamma-\sigma)\gamma > 0$, so $\lambda^+ < \gamma - \sigma$; implicit differentiation gives $d\lambda^+/d\beta > 0$; and dividing $F = 0$ by $\beta$ and letting $\beta \to \infty$ gives $\lambda^+ \to \gamma - \sigma$. A mode whose $|\gamma_{k\ell}| - \sigma_{k\ell}$ is maximal but has $\gamma_{k\ell} < 0$ is stable and contributes nothing to the abscissa, which is why the sign qualifier is needed.
\end{proof}

\section{Spectral-Decoupling Regime: Analytical and Empirical Evidence}
\label{app:assumptions}

We complement the discussion of Assumption~\ref{ass:spectral_decoupling} with a closed-form algebraic identity and a quantitative measurement of the resulting commutators on real data. The picture that emerges is a clean separation between a \emph{masking-side factor} that we control through the context ratio $p_{m_1}$ and a \emph{data-side factor} that depends on the spectral geometry of each benchmark. Throughout, we keep the discussion constructive: the identity below makes the regime of validity \emph{operationally falsifiable}, and the measurements show that all three datasets used in the main text sit comfortably inside it.

\subsection{Commutator definition and what it controls}
\label{app:assumptions:commutators}

Assumption~\ref{ass:spectral_decoupling} requires that the data-side covariances $\mat{\Sigma}_{11}$, $\mat{\Sigma}_{X1}$ and the predictor-side operators $\mat{P} := \mat{W}_p^\top \mat{W}_p$, $\mat{Q} := \mat{W}_p^\top \bar{\mat{M}}_2$ admit (approximately) a common eigenbasis on each side. The natural quantitative proxy is the relative Frobenius commutator,
\begin{equation}
C_\Sigma \;:=\; \frac{\bigl\| \mat{\Sigma}_{11} \mat{\Sigma}_{X1} - \mat{\Sigma}_{X1} \mat{\Sigma}_{11} \bigr\|_F}{\| \mat{\Sigma}_{11} \|_F \, \| \mat{\Sigma}_{X1} \|_F}\,, \qquad
C_P \;:=\; \frac{\| \mat{P}\mat{Q} - \mat{Q}\mat{P} \|_F}{\| \mat{P} \|_F \, \| \mat{Q} \|_F}\,,
\label{eq:commutators}
\end{equation}
which vanish exactly when joint diagonalisation holds and grow continuously as the assumption tightens or relaxes. The scalar $\mu$-prediction is exact when $C_\Sigma = C_P = 0$; the per-mode rank-control corollary degrades smoothly in $C_\Sigma + C_P$ and is recovered as soon as either side becomes small.

\subsection{Closed-form factorisation of the data-side commutator}
\label{app:assumptions:closedform}

Under the masking model used in the paper, $\vect{x}_1 = \vect{m}_1 \odot \vect{x}$ with $m_{1j} \overset{\text{iid}}{\sim} \mathrm{Bernoulli}(p_{m_1})$, the data covariances admit the explicit decomposition
\begin{equation}
\mat{\Sigma}_{X1} \;=\; p_{m_1}\, \mat{\Sigma}_{XX}\,, \qquad
\mat{\Sigma}_{11} \;=\; p_{m_1}\, \mathrm{diag}(\mat{\Sigma}_{XX}) \,+\, p_{m_1}^2\, \mathrm{offdiag}(\mat{\Sigma}_{XX})\,,
\label{eq:masking_cov}
\end{equation}
where $\mathrm{diag}(\cdot)$ extracts the diagonal and $\mathrm{offdiag}(\mat A) := \mat A - \mathrm{diag}(\mat A)$. Substituting~\eqref{eq:masking_cov} into the data-side commutator and using $[\mat{\Sigma}_{XX}, \mat{\Sigma}_{XX}] = \mat{0}$ gives the closed-form identity
\begin{equation}
\boxed{\;\bigl[ \mat{\Sigma}_{11}, \mat{\Sigma}_{X1} \bigr] \;=\; p_{m_1}^2\,(1 - p_{m_1}) \;\bigl[\, \mathrm{diag}(\mat{\Sigma}_{XX}),\, \mat{\Sigma}_{XX} \,\bigr]\;}
\label{eq:closed_form_commutator}
\end{equation}
The masking-side and data-side dependences factorise. Two observations follow immediately.

\paragraph{Masking factor is mild at the operating point.}
The scalar $f(p) := p^2(1-p)$ is maximised at $p^\star = 2/3$ with $f(p^\star) = 4/27 \approx 0.148$, and decays for $p \to 0$ and $p \to 1$. At the main image operating point $p_{m_1} = 0.85$ used in the I-JEPA experiments, $f(0.85) = 0.108$, i.e.\ already below $3/4$ of the worst case; at $p_{m_1} = 0.95$ it falls to $0.045$. The masking-side multiplier therefore never amplifies the data-side commutator and, in the limit of mild masking, drives it to zero independently of $\mat{\Sigma}_{XX}$.

\paragraph{Data factor is a property of $\mat{\Sigma}_{XX}$ alone.}
The remaining factor $\mat{D} := \bigl[\, \mathrm{diag}(\mat{\Sigma}_{XX}),\, \mat{\Sigma}_{XX} \,\bigr]$ depends only on how far the input covariance is from being diagonal in the canonical (feature-token) basis. It is invariant under rescaling of $\mat{\Sigma}_{XX}$, which means that the relative commutator $C_\Sigma$ in~\eqref{eq:commutators} is determined by the \emph{shape} of the spectrum, not its overall magnitude.

\subsection{Empirical commutators across spectra}
\label{app:assumptions:empirical}

Using the closed-form identity~\eqref{eq:closed_form_commutator}, we evaluate $C_\Sigma$ on five spectra spanning the regimes the framework needs to cover: an isotropic baseline ($\mat{\Sigma}_{XX} = \mat{I}$), a diagonal power-law (canonical-basis-aligned), a `cifar-like' synthetic with random rotation and a power-law spectrum, a strongly rotated rank-decay spectrum, and the empirical patch covariance of CIFAR-10 ($n = 128{,}000$ patches of size $4\times 4 \times 3$, channel-standardised; $D = 48$). Sweeping $p_{m_1}$ from $0.05$ to $0.99$ in $50$ steps yields Figure~\ref{fig:commutator_vs_p}.

\begin{figure}[t]
    \centering
    \includegraphics[width=0.92\linewidth]{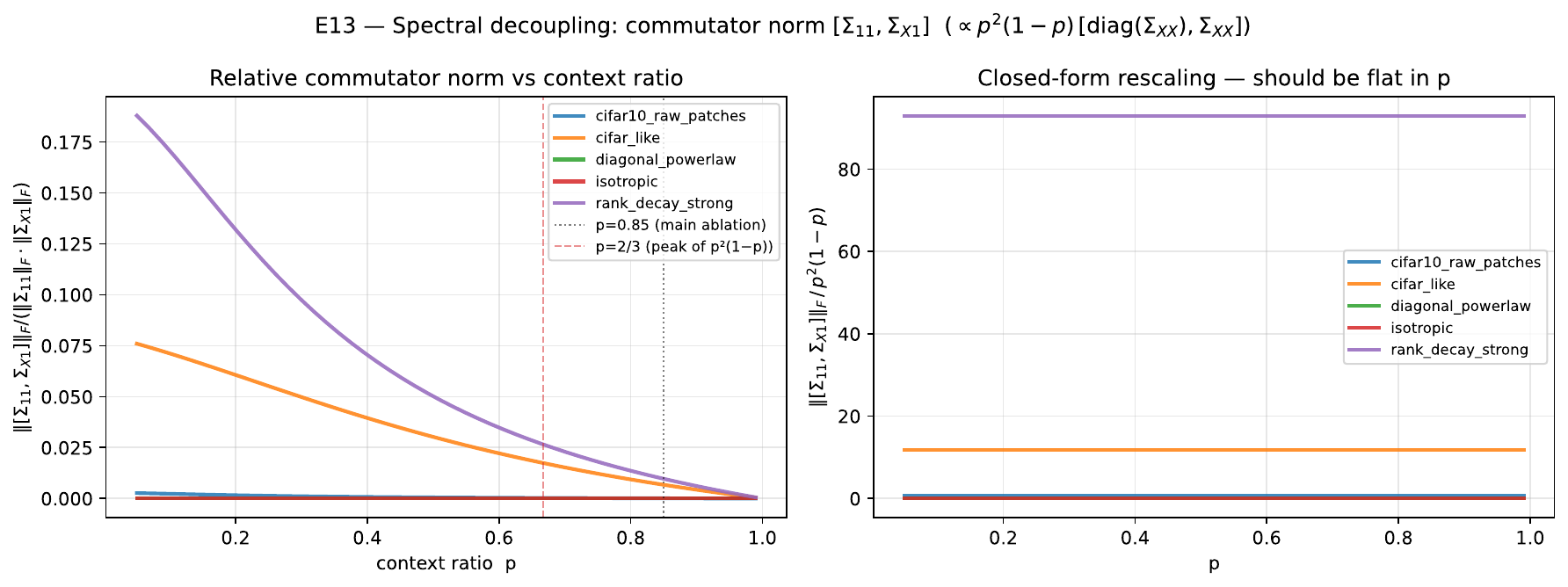}
    \caption{\small\textbf{Relative data-side commutator $C_\Sigma$ as a function of context ratio $p_{m_1}$.} \textit{Left}: raw $C_\Sigma$. \textit{Right}: $\| [\mat{\Sigma}_{11}, \mat{\Sigma}_{X1}] \|_F / \bigl(p_{m_1}^2(1-p_{m_1})\bigr)$ -- flat in $p_{m_1}$ (residual $< 10^{-12}$ relative), confirming the closed-form identity~\eqref{eq:closed_form_commutator}. The CIFAR-10 raw-patch curve sits well below the synthetic benchmarks at all operating points.}
    \label{fig:commutator_vs_p}
\end{figure}

\begin{table}[t]
\centering
\small
\setlength{\tabcolsep}{6pt}
\caption{\small\textbf{Relative data-side commutator $C_\Sigma$ across spectra at $p_{m_1} = 0.85$, and the maximum over $p_{m_1} \in [0.05, 0.99]$.} The masking factor $f(0.85) = 0.108$. Real CIFAR-10 raw-patch covariance gives $C_\Sigma$ on the order of $10^{-4}$, two to three orders of magnitude tighter than rotated power-law surrogates.}
\label{tab:commutator_spectra}
\begin{tabular}{lcc}
\toprule
$\mat{\Sigma}_{XX}$ & $C_\Sigma$ at $p_{m_1}=0.85$ & $\max_{p_{m_1}} C_\Sigma$ \\
\midrule
isotropic ($\mat{I}$)                & $0.0$        & $0.0$        \\
diagonal power-law                   & $0.0$        & $0.0$        \\
CIFAR-10 raw patches ($n{=}128{,}000$, $D{=}48$) & $\mathbf{9.7 \times 10^{-5}}$ & $\mathbf{2.7 \times 10^{-3}}$ \\
synthetic rotated power-law          & $6.7 \times 10^{-3}$ & $7.6 \times 10^{-2}$ \\
synthetic strong rank-decay          & $9.7 \times 10^{-3}$ & $1.9 \times 10^{-1}$ \\
\bottomrule
\end{tabular}
\end{table}

Three points stand out from Table~\ref{tab:commutator_spectra} and Figure~\ref{fig:commutator_vs_p}.
\begin{enumerate}
    \item \textbf{Real-image regime is well inside the decoupled limit.} On real CIFAR-10 raw patches at the main operating point, $C_\Sigma \approx 10^{-4}$. Pixel-level correlation structure aligns the dominant directions of $\mat{\Sigma}_{XX}$ with the canonical basis closely enough that $\mat{D}$ is small in absolute terms, even though the off-diagonal energy of $\mat{\Sigma}_{XX}$ itself is substantial (the off-diagonal-to-total Frobenius ratio is $0.98$). Assumption~\ref{ass:spectral_decoupling} therefore holds at the $10^{-4}$ level on this benchmark, much tighter than typical `approximate' assumptions in linear analyses.
    \item \textbf{Synthetic surrogates trace the boundary of the regime.} Rotated power-law covariances give $C_\Sigma$ in the $10^{-3}$--$10^{-2}$ band, defining a soft envelope above which the per-mode story would need to be re-cast in the eigenbasis of $\mat{D}$ rather than the canonical one.
    \item \textbf{The masking knob always works in our favour.} Because of the $p_{m_1}^2(1 - p_{m_1})$ factor, increasing the context retention probability towards $1$ drives $C_\Sigma$ to zero at a quadratic rate -- regardless of $\mat{\Sigma}_{XX}$. The high-ctx regime studied in the main paper is therefore the regime in which the assumption is sharpest.
\end{enumerate}

\subsection{Tabular benchmarks: \texttt{aloi}, \texttt{helena}, \texttt{jannis}}
\label{app:assumptions:tabular}

The same closed-form identity applies feature-wise to the three tabular benchmarks, so the assumption can be checked before training; we measure the violation on the exact features the sweep trains on, after the min--max preprocessing used throughout the paper.

The commutator alone cannot single out \texttt{jannis}: \texttt{helena} carries a near-identical violation and is predicted well. What distinguishes \texttt{jannis} is spectral degeneracy: its minimum relative gap among the ten largest eigenvalues of $\mat{\Sigma}_{XX}$ is roughly fifteen times smaller than \texttt{helena}'s. This predicts the observed signature, in which per-mode identification degrades while the count-level prediction survives: on \texttt{jannis} the per-mode correlation falls to $+0.619$, against $+0.906$ on \texttt{helena} and $+0.905$ on \texttt{aloi}, while the signed and loose-count correlations stay at $+0.648$ and $+0.697$ with positive confidence intervals (Table~\ref{tab:rank-control-full}). With three datasets we state this as a measured, mechanism-consistent hypothesis and not as a fitted law.

\paragraph{Incorporating the interaction.} Write the linearised generator as $\mathcal{J} = \mat{D} + \epsilon\mat{R}$, with $\mat{D}$ decoupled and the commutator of Eq.~\eqref{eq:commutators} as the measurable small parameter $\epsilon$. First-order terms shift each $\mu_i$ through the diagonal of $\mat{R}$, and modes mix through inverse-eigengap-weighted off-diagonals. Near-degenerate blocks are therefore where interaction first affects mode identity, which is the \texttt{jannis} case. Low-dimensional projectors with multiple views, as in \citet{ghoshHarnessingSmallProjectors2024}, act on exactly that quantity, since they limit how many modes are simultaneously active and thereby shrink the effective cross-talk; their setting is projector-based vision SSL rather than JEPA, so we take the mechanism and not the system. We do not claim the coupled derivation here.

\section{Full proof of the rank-control corollary (Corollary~\ref{cor:rank-control})}
\label{app:rank-control-proof}

We prove the corollary in five steps; Appendix~\ref{app:finite_time_rank} gives the finite-time statements.

\subsection{Step 1 -- Joint diagonalisation in the shared basis}

Let $\mat U=[\vect u_1\,\cdots\,\vect u_N]\in\R^{N\times N}$ and $\mat V=[\vect v_1\,\cdots\,\vect v_d]\in\R^{d\times d}$ be orthonormal eigenbases as in the exact form of Assumption~\ref{ass:spectral_decoupling}, taken with real spectra as in Corollary~\ref{cor:rank-control} (equivalently, $\mat\Sigma_{X1}$ and $\mat W_p^\top\bar{\mat M}_2$ symmetric), so that
\[
\begin{aligned}
\mat\Sigma_{11}&=\mat U\,\mathrm{diag}(a_k)\,\mat U^\top,&\quad
\mat\Sigma_{X1}&=\mat U\,\mathrm{diag}(c_k)\,\mat U^\top,\\
\mat W_p^\top\mat W_p&=\mat V\,\mathrm{diag}(p_\ell)\,\mat V^\top,&\quad
\mat W_p^\top\bar{\mat M}_2&=\mat V\,\mathrm{diag}(q_\ell)\,\mat V^\top.
\end{aligned}
\]
Define $\widetilde{\mat W}:=\mat V^\top\mat W\mat U$ and $\widetilde{\mat W}_a:=\mat V^\top\mat W_a\mat U$. Substituting in the linearised flow (Eq.~\eqref{eq:dW_coupled}),
\begin{align}
\dot{\widetilde{\mat W}} &= -\mathrm{diag}(p_\ell)\,\widetilde{\mat W}\,\mathrm{diag}(a_k) + \mathrm{diag}(q_\ell)\,\widetilde{\mat W}_a\,\mathrm{diag}(c_k), \label{eq:rotW}\\
\dot{\widetilde{\mat W}}_a &= \beta(\widetilde{\mat W}-\widetilde{\mat W}_a). \label{eq:rotWa}
\end{align}
The $(\ell,k)$ component of \eqref{eq:rotW}--\eqref{eq:rotWa} depends only on the $(\ell,k)$ entries. Writing $w_{\ell k}:=[\widetilde{\mat W}]_{\ell k}$ and $w^a_{\ell k}:=[\widetilde{\mat W}_a]_{\ell k}$,
\begin{equation}
\label{eq:scalar-mode}
\begin{bmatrix}\dot w_{\ell k}\\ \dot w^a_{\ell k}\end{bmatrix}
= \underbrace{\begin{bmatrix}-a_k p_\ell & c_k q_\ell\\ \beta & -\beta\end{bmatrix}}_{=:\,\mat J_{k\ell}}
\begin{bmatrix}w_{\ell k}\\ w^a_{\ell k}\end{bmatrix},
\end{equation}
so the $2dN$-dimensional flow splits into $dN$ independent planar flows.

\subsection{Step 2 -- The $2\times 2$ spectrum and saddle condition}

The characteristic polynomial of $\mat J_{k\ell}$ is $\lambda^2 + (\beta + a_k p_\ell)\lambda + \beta a_k p_\ell(1-\mu_{k\ell}) = 0$, with $\mu_{k\ell}:=(c_k q_\ell)/(a_k p_\ell)$, and its discriminant $\Delta_{k\ell} = (\beta - a_k p_\ell)^2 + 4\beta c_k q_\ell$ is non-negative whenever $c_k q_\ell\ge 0$. The eigenvalues are $\lambda^\pm_{k\ell} = \tfrac{1}{2}\big(-(\beta+a_k p_\ell)\pm\sqrt{\Delta_{k\ell}}\big)$, with product $\beta\sigma_{k\ell}(1-\mu_{k\ell})$, so
\[
\mu_{k\ell}>1 \iff \lambda^+_{k\ell}\lambda^-_{k\ell}<0 \iff \mat J_{k\ell}\text{ is a saddle.}
\]
Super-critical modes have $\lambda^+_{k\ell}>0>\lambda^-_{k\ell}$; for sub-critical modes both eigenvalues have negative real parts.

\subsection{Step 3 -- Mode-wise unstable manifolds}

For a saddle block the unstable eigenvector is $\vect e^+_{k\ell}=(c_k q_\ell,\,\lambda^+_{k\ell}+a_k p_\ell)^\top$, and $\vect e^-_{k\ell}$ is its analogue with $\lambda^-_{k\ell}$. Every solution is
\[
\begin{bmatrix}w_{\ell k}(t)\\ w^a_{\ell k}(t)\end{bmatrix}
= \alpha^+_{k\ell}e^{\lambda^+_{k\ell}t}\vect e^+_{k\ell} + \alpha^-_{k\ell}e^{\lambda^-_{k\ell}t}\vect e^-_{k\ell},
\]
so $w_{\ell k}(t) \simeq \alpha^+_{k\ell}(c_k q_\ell)\,e^{\lambda^+_{k\ell}t}$ for $(k,\ell)\in\mathcal U$ when $\alpha^+_{k\ell}\neq 0$, and $w_{\ell k}(t)\to 0$ for sub-critical modes.

\subsection{Step 4 -- Reassembly into the original basis}

Since $\mat W(t) = \mat V\widetilde{\mat W}(t)\mat U^\top = \sum_{\ell,k} w_{\ell k}(t)\vect v_\ell\vect u_k^\top$, Step~3 gives
\begin{equation}
\mat W(t) \simeq \sum_{(k,\ell)\in\mathcal U} a_{k\ell}\,e^{\lambda^+_{k\ell}t}\,\vect v_\ell\vect u_k^\top,
\qquad a_{k\ell}=\alpha^+_{k\ell}c_k q_\ell.
\label{eq:reassembly}
\end{equation}

\subsection{Step 5 -- The rank bound (unstable-component statement)}

Split $\mat W(t) = \mat W^+(t) + \mat W^-(t)$ along the unstable and stable invariant subspaces of the block-diagonal Jacobian. Eq.~\eqref{eq:reassembly} is exact for $\mat W^+(t)$ (Lemma~\ref{lem:saddle_split}), and $\mat W^-(t)$ decays at the slowest stable rate $\eta$ of Eq.~\eqref{eq:eta_slowest}. For a generic initialisation $\mat W^-(t)\neq 0$, so the algebraic rank of $\mat W(t)$ is generically full at every finite $t$; the corollary therefore bounds $\mat W^+(t)$.

Grouping the terms of $\mat W^+(t)$ by predictor index writes it as a sum of $|\pi_\ell\mathcal U|$ rank-one matrices $\vect v_\ell\vect r_\ell(t)^\top$, where $\pi_\ell(\mathcal U)=\{\ell:\exists k,(k,\ell)\in\mathcal U\}$; grouping by data index gives $|\pi_k\mathcal U|$ terms. Hence
\begin{equation}
\rank(\mat W^+(t))\;\le\;r_{\mathrm{tight}}\;:=\;\min\big(|\pi_\ell\mathcal U|,\,|\pi_k\mathcal U|,\,d,\,N\big)\;\le\;\min(|\mathcal U|,d,N),
\end{equation}
which is Eq.~\eqref{eq:r-tight} of the main text. The experiments measure $\Erank$ of the embeddings rather than an algebraic rank; Theorems~\ref{thm:erank_finite_time} and~\ref{thm:erank-budget} give the corresponding finite-time bounds for $\mat W(t)$ and for the embeddings. \hfill$\square$

\subsection{Finite-time restatement of the rank budget}
\label{app:finite_time_rank}

With $\mat{W}_p$ held fixed, the flow of Eq.~\eqref{eq:dW_coupled} is linear in $(\mat{W}, \mat{W}_a)$, so the statements below hold for all $t$. The assumptions are: (A1) the fixed-predictor flow with $\beta > 0$; (A2) the exact form of Assumption~\ref{ass:spectral_decoupling} with orthogonal $\mat{U}$, $\mat{V}$ and real spectra, as in Step~1; (A3) non-degeneracy, namely $a_k > 0$ and $p_\ell > 0$, $\mu_{k\ell} \neq 1$ and $\Delta_{k\ell} \neq 0$ for every mode, where $\Delta_{k\ell} = (\beta - \sigma_{k\ell})^2 + 4\beta\gamma_{k\ell}$ (the last two fail only on a measure-zero set). The approximate version of the assumption, with small commutators, is not covered here.

\paragraph{Notation.} $m := \min(d,N)$; in this section $r := r_{\mathit{tight}}$ of Eq.~\eqref{eq:r-tight}, not the loose count $|\mathcal U|$ of Step~5; $\operatorname{rank}_\varepsilon(\mat{M})$ counts the singular values of $\mat{M}$ strictly above $\varepsilon$; $\Erank$ is the effective rank of Appendix~\ref{app:evaluation_metrics}. The decay rate is
\begin{equation}
\eta := \min\{|\Real\lambda| : \lambda \in \Spec(\mathcal{J}),\ \Real\lambda < 0\} > 0,
\label{eq:eta_slowest}
\end{equation}
the slowest stable rate: per mode, $|\Real\lambda^+_{k\ell}|$ on subcritical modes and $|\lambda^-_{k\ell}|$ on supercritical ones. The transient-growth constants are
\begin{equation}
\mat{N}_{k\ell} := \mat{J}_{k\ell} - \tfrac12\operatorname{tr}(\mat{J}_{k\ell})\mat{I}_2, \qquad
K_{k\ell} := 1 + \frac{2\|\mat{N}_{k\ell}\|_2}{\sqrt{|\Delta_{k\ell}|}}, \qquad
\bar K := \max\bigl(1, \max_{k,\ell} K_{k\ell}\bigr),
\end{equation}
with $Z_0 := (\|\mat{W}(0)\|_F^2 + \|\mat{W}_a(0)\|_F^2)^{1/2}$, $w_{k\ell}(0) := \vect{v}_\ell^\top\mat{W}(0)\vect{u}_k$, $y_{k\ell}(0) := \vect{v}_\ell^\top\mat{W}_a(0)\vect{u}_k$, and, for $(k,\ell) \in \mathcal{U}$, the amplitude $a_{k\ell}$ of the expansion of Step~4, written in initialisation-computable form,
\begin{equation}
a_{k\ell} := \frac{(\beta + \lambda^+_{k\ell})\bigl[\beta\,w_{k\ell}(0) - (\beta + \lambda^-_{k\ell})\,y_{k\ell}(0)\bigr]}{\beta\,(\lambda^+_{k\ell} - \lambda^-_{k\ell})}.
\label{eq:alphaplus}
\end{equation}
With $\mat{W}(t) = \mat{W}_+(t) + \mat{W}_-(t)$ the split of Step~5, Lemma~\ref{lem:saddle_split} below gives
\begin{equation}
\mat{W}_+(t) = \sum_{(k,\ell)\in\mathcal{U}} a_{k\ell}\, e^{\lambda^+_{k\ell}t}\, \vect{v}_\ell\vect{u}_k^\top \qquad \text{for all } t.
\label{eq:Wplus_exact}
\end{equation}

\begin{theorem}[Exact rank budget, all times]
\label{thm:rank_all_times}
Under (A1)--(A3), for every $t \ge 0$, $\rank\mat{W}_+(t) \le r_{\mathit{tight}}$. If some axis projection ($\pi_\ell$ or $\pi_k$) is injective on $\mathcal{U}$ and the corresponding amplitudes $a_{k\ell}$ are all non-zero, the bound is attained for all $t \ge 0$.
\end{theorem}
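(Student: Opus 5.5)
The plan is to work entirely from the exact expansion \eqref{eq:Wplus_exact}, which Lemma~\ref{lem:saddle_split} supplies under (A1)--(A3). In the rotated coordinates of Step~1, $\widetilde{\mat W}_+(t) := \mat V^\top \mat W_+(t)\mat U$ is the $d\times N$ matrix whose $(\ell,k)$ entry is $a_{k\ell}e^{\lambda^+_{k\ell}t}$ when $(k,\ell)\in\mathcal U$, and zero otherwise. Because $\mat U,\mat V$ are orthogonal, $\rank\mat W_+(t)=\rank\widetilde{\mat W}_+(t)$. The whole question therefore reduces to a statement about the support pattern of a matrix supported on $\mathcal U$, with entries that are explicit exponentials.

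For the upper bound, the nonzero rows of $\widetilde{\mat W}_+(t)$ lie among the indices $\pi_\ell\mathcal U$, and the nonzero columns lie among $\pi_k\mathcal U$. Row rank is at most the number of nonzero rows, and column rank is at most the number of nonzero columns. Together with the trivial bounds $d$ and $N$, this gives $\rank\widetilde{\mat W}_+(t)\le r_{\mathit{tight}}$ for every $t$. The argument uses nothing about $t\ge0$ or about the signs of the rates, which is why the bound holds for all times. This is Step~5 made exact by invoking the lemma in place of the asymptotic ``$\simeq$''.

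For attainment, suppose by symmetry that $\pi_\ell$ is injective on $\mathcal U$. Then each row $\ell\in\pi_\ell\mathcal U$ carries exactly one nonzero entry, located in column $k(\ell)$, and that entry is $a_{k(\ell)\ell}e^{\lambda^+t}\neq 0$ by the hypothesis on the amplitudes. The columns $k(\ell)$ may coincide, and this is the step I expect to need care. If two rows share a column, the rows are parallel and the rank drops below $|\pi_\ell\mathcal U|$. In that case, however, the rank equals the number of distinct columns used, which is $|\pi_k\mathcal U|$. In general the rank is exactly the number of distinct nonzero columns, because the rows with distinct supports are linearly independent and the rows sharing a column are scalar multiples of one another. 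So the rank equals $|\pi_k\mathcal U|$. Injectivity of $\pi_\ell$ also gives $|\pi_k\mathcal U|\le|\mathcal U|=|\pi_\ell\mathcal U|$, and both counts are automatically at most $N$ and $d$ respectively. Hence $|\pi_k\mathcal U|=r_{\mathit{tight}}$, and the bound is attained. The case where $\pi_k$ is injective is the transpose of this argument.

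The only genuinely delicate point is therefore this combinatorial identification of the rank with the minimum of the two projection counts under injectivity. I would handle it by exhibiting an explicit $r_{\mathit{tight}}\times r_{\mathit{tight}}$ nonsingular submatrix. Choose one representative row for each distinct column; the resulting submatrix is, after permutation, diagonal with nonzero exponential entries. Its invertibility for every $t$ follows immediately, since $e^{\lambda t}\neq0$.
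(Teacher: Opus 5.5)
Your proof is correct and follows the paper's argument. The upper bound comes from the row and column supports of $\mat{W}_+(t)$, which the paper phrases as column and row spaces lying in $\mathrm{span}\{\vect{v}_\ell\}_{\ell\in\pi_\ell\mathcal{U}}$ and $\mathrm{span}\{\vect{u}_k\}_{k\in\pi_k\mathcal{U}}$; attainment under injectivity of $\pi_\ell$ comes from identifying the rank with $|\pi_k\mathcal{U}| = r_{\mathit{tight}}$. Your handling of coinciding columns $k(\ell)$, via a diagonal $r_{\mathit{tight}}\times r_{\mathit{tight}}$ submatrix, is slightly more careful than the paper's: its proof calls the $g_\ell(t)$ multiples of \emph{distinct} orthonormal vectors, which injectivity of $\pi_\ell$ alone does not guarantee, although its final count $|\pi_k\mathcal{U}|$ is right.
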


\begin{theorem}[Finite-time thresholded rank]
\label{thm:thresholded_rank}
Under (A1)--(A3), for all $t \ge 0$,
$\sigma_{r+1}(\mat{W}(t)) \le \|\mat{W}_-(t)\|_F \le \bar K Z_0 e^{-\eta t}$,
and consequently, for every threshold $\varepsilon > 0$, $\operatorname{rank}_\varepsilon(\mat{W}(t)) \le r_{\mathit{tight}}$ for all $t \ge T(\varepsilon) := \max\{0, \eta^{-1}\log(\bar K Z_0/\varepsilon)\}$.
\end{theorem}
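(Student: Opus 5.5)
The plan is to work entirely in the rotated coordinates of Step~1. There $\mat{W}(t) = \mat{V}\widetilde{\mat{W}}(t)\mat{U}^\top$ with $\mat{U},\mat{V}$ orthogonal, so singular values and Frobenius norms are unchanged by the rotation. I first invoke the exact split $\mat{W}(t)=\mat{W}_+(t)+\mat{W}_-(t)$ of Eq.~\eqref{eq:Wplus_exact}, so that $\mat{W}_+(t)$ is built only from the supercritical modes. By Theorem~\ref{thm:rank_all_times}, $\rank\mat{W}_+(t)\le r$ for every $t\ge 0$. The first inequality $\sigma_{r+1}(\mat{W}(t))\le\|\mat{W}_-(t)\|_F$ then follows from Weyl's perturbation inequality for singular values, $\sigma_{r+1}(\mat{A}+\mat{B})\le\sigma_{r+1}(\mat{A})+\sigma_1(\mat{B})$. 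Here $\sigma_{r+1}(\mat{W}_+)=0$ and $\sigma_1(\mat{W}_-)\le\|\mat{W}_-\|_F$. Equivalently, Eckart--Young gives that the best rank-$r$ approximation error dominates $\sigma_{r+1}$.

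The main work is the decay bound $\|\mat{W}_-(t)\|_F\le\bar K Z_0 e^{-\eta t}$. Because the flow decouples into the planar systems \eqref{eq:scalar-mode}, $\|\mat{W}_-(t)\|_F^2=\sum_{k,\ell}|w^-_{\ell k}(t)|^2$, where $w^-_{\ell k}$ is the stable component of each mode. For a subcritical mode this is the whole planar state $e^{t\mat{J}_{k\ell}}z_{k\ell}(0)$. For a supercritical mode it is $\Pi^-_{k\ell}e^{t\mat{J}_{k\ell}}z_{k\ell}(0)=e^{\lambda^-_{k\ell}t}\Pi^-_{k\ell}z_{k\ell}(0)$, with $\Pi^-$ the spectral projector. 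In both cases I bound a $2\times2$ matrix exponential. I write $\mat{J}_{k\ell}=\tfrac12\operatorname{tr}(\mat{J}_{k\ell})\mat{I}+\mat{N}_{k\ell}$, where $\mat{N}_{k\ell}^2=\tfrac14\Delta_{k\ell}\mat{I}$ because $\mat{N}$ is traceless. This gives the closed form
\[
e^{t\mat{N}}=\cosh(\tfrac12\sqrt{\Delta}\,t)\,\mat{I}+\frac{2\sinh(\tfrac12\sqrt{\Delta}\,t)}{\sqrt{\Delta}}\,\mat{N},
\]
with the trigonometric analogue when $\Delta<0$.

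For subcritical modes I combine the scalar prefactor $e^{\frac12\operatorname{tr}\mat{J}\,t}$ with the hyperbolic terms. Their growth is at most $e^{\frac12\Real\sqrt{\Delta}\,t}$, so they combine to $e^{\Real\lambda^+t}\le e^{-\eta t}$, with coefficient at most $1+2\|\mat{N}\|_2/\sqrt{|\Delta|}=K_{k\ell}$. This uses $\cosh x,\ \sinh x\le e^{x}$, and $|\cos|,|\sin|\le1$ when $\Delta<0$. For supercritical modes the projector is $\Pi^-=\tfrac12\mat{I}-\mat{N}/\sqrt{\Delta}$, so $\|\Pi^-\|_2\le\tfrac12+\|\mat{N}\|_2/\sqrt{\Delta}\le K_{k\ell}$, and $e^{\lambda^-t}\le e^{-\eta t}$ by the definition \eqref{eq:eta_slowest}. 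Summing squares over modes gives $\|\mat{W}_-(t)\|_F\le\bar K e^{-\eta t}\bigl(\sum\|z_{k\ell}(0)\|^2\bigr)^{1/2}=\bar K Z_0e^{-\eta t}$. The last equality holds because orthogonal rotation preserves $\|\mat{W}(0)\|_F^2+\|\mat{W}_a(0)\|_F^2$. The non-degeneracy hypothesis (A3) ($\Delta\ne0$, $\mu\ne1$) guarantees that the constants are finite, that no eigenvalue sits at zero, and that $\eta>0$.

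The consequence is then immediate. For $t\ge T(\varepsilon)$ we have $\bar KZ_0e^{-\eta t}\le\varepsilon$, hence $\sigma_{r+1}(\mat{W}(t))\le\varepsilon$, and so at most $r$ singular values strictly exceed $\varepsilon$. The $\max\{0,\cdot\}$ in $T(\varepsilon)$ covers the case where $\bar KZ_0\le\varepsilon$ already at $t=0$. I expect the main obstacle to be the uniform transient constant: checking carefully that a single bound $K_{k\ell}$ covers the real-root and complex-root subcritical cases and the supercritical projector at once, without picking up hidden factors of 2 from the hyperbolic terms.
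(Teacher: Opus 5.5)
Your proposal is correct and follows the paper's proof essentially step for step. The shared steps are the split $\mat{W}=\mat{W}_++\mat{W}_-$ with $\rank\mat{W}_+\le r$, the perturbation inequality $\sigma_{r+1}(\mat{A}+\mat{B})\le\sigma_{r+1}(\mat{A})+\sigma_1(\mat{B})$, the traceless-part bound of Lemma~\ref{lem:planar_bound} on subcritical modes, the orthogonal-invariance summation giving $\bar K Z_0 e^{-\eta t}$, and the threshold argument; the only deviation is that on supercritical modes you bound the stable part through the spectral projector $\Pi^-=\tfrac12\mat{I}-\mat{N}_{k\ell}/\sqrt{\Delta_{k\ell}}$, with $\|\Pi^-\|_2\le\tfrac12K_{k\ell}$, where the paper applies Cauchy--Schwarz to the explicit coefficient $a^-_{k\ell}$ of Lemma~\ref{lem:saddle_split}, which is equally valid and slightly sharper, so the factor-of-$2$ concern does not arise.
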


\begin{theorem}[Finite-time effective-rank bound]
\label{thm:erank_finite_time}
Assume (A1)--(A3), $\mathcal{U} \neq \emptyset$, and some $(k^\ast,\ell^\ast) \in \mathcal{U}$ with $A_0 := |a_{k^\ast\ell^\ast}| > 0$, which holds almost surely for an absolutely continuous initialisation; write $\lambda_\ast := \lambda^+_{k^\ast\ell^\ast}$. If $r = m$ then $\Erank(\mat{W}(t)) \le r$ trivially. Otherwise define
\begin{equation}
\begin{aligned}
\bar\theta(t) &:= \frac{2(m-r)\bar K Z_0}{A_0}\,e^{-(\lambda_\ast + \eta)t}, \qquad
T_0 := \max\Bigl\{0, \frac{1}{\lambda_\ast + \eta}\log\frac{4(m-r)\bar K Z_0}{A_0}\Bigr\},\\
\varphi(x) &:= x\Bigl(1 + \log\frac{1}{x} + \log_+\frac{m-r}{r}\Bigr).
\end{aligned}
\end{equation}
Then for all $t \ge T_0$, $\Erank(\mat{W}(t)) \le r_{\mathit{tight}}\, e^{\varphi(\bar\theta(t))}$, and whenever additionally $\varphi(\bar\theta(t)) \le 1$,
\begin{equation}
\Erank(\mat{W}(t)) \le r_{\mathit{tight}} + \delta(t), \qquad \delta(t) := (e-1)\,r\,\varphi(\bar\theta(t)) = O\bigl((1+t)e^{-(\lambda_\ast + \eta)t}\bigr) \to 0.
\end{equation}
\end{theorem}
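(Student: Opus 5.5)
The plan is to control the singular values of $\mat W(t)=\mat W_+(t)+\mat W_-(t)$ through the split of Step~5. Theorem~\ref{thm:rank_all_times} gives $\rank\mat W_+(t)\le r$, and Theorem~\ref{thm:thresholded_rank} gives $\|\mat W_-(t)\|_F\le \bar K Z_0 e^{-\eta t}$. Weyl's inequality then yields two facts. First, $\sigma_j(\mat W(t))\le \|\mat W_-(t)\|_2$ for $j>r$. Second, $\sigma_1(\mat W(t))\ge \sigma_1(\mat W_+(t))-\|\mat W_-(t)\|_2$.

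For the lower bound on the top singular value I use the exact expansion \eqref{eq:Wplus_exact}. In the rotated coordinates, the $(\ell^\ast,k^\ast)$ entry of $\mat V^\top\mat W_+\mat U$ is exactly $a_{k^\ast\ell^\ast}e^{\lambda_\ast t}$. Hence $\sigma_1(\mat W_+(t))\ge A_0 e^{\lambda_\ast t}$.

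Now set $\epsilon(t):=\sum_{j>r}\sigma_j(\mat W(t))\le (m-r)\bar K Z_0 e^{-\eta t}$ and $S:=\sum_j \sigma_j\ge \sigma_1$. For $t\ge T_0$ we have $\|\mat W_-\|\le \tfrac14 A_0e^{\lambda_\ast t}$, so $\sigma_1\ge \tfrac12 A_0 e^{\lambda_\ast t}$. The tail mass therefore satisfies $x:=\epsilon/S\le \bar\theta(t)\le 1/2$. The choice of $T_0$ is made exactly so that $\bar\theta(T_0)\le 1/2$, up to the factor conventions in the definitions.

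The core step is an entropy-splitting inequality. I write the normalised spectrum $p$ as a mixture with weight $1-x$ on the head (the top $r$ values) and weight $x$ on the tail. Then
\[
H(p)=h(x)+(1-x)H(\text{head})+x\,H(\text{tail}),
\]
where $h$ is the binary entropy. Bounding the pieces gives $H(\text{head})\le\log r$ and $H(\text{tail})\le\log(m-r)$, and $h(x)\le x(1+\log(1/x))$. Combining,
\[
H(p)\le \log r + x\bigl(1+\log\tfrac1x+\log\tfrac{m-r}{r}\bigr).
\]
When $m-r<r$ the last logarithm is negative, which is why the statement uses $\log_+$. This yields $\Erank\le r\,e^{\varphi(x)}$.

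It remains to replace $x$ by $\bar\theta(t)$. The function $\varphi$ is increasing on $(0,1/e]$, so I either check $\bar\theta\le 1/e$ on the relevant range or tighten $T_0$ slightly. The additive form then follows from $e^{y}\le 1+(e-1)y$ for $y\in[0,1]$. The rate $O((1+t)e^{-(\lambda_\ast+\eta)t})$ comes from $\log(1/\bar\theta)$ being linear in $t$.

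I expect the main obstacle to be the bookkeeping of constants. The Weyl step and the entropy decomposition are routine, but the constants must match the statement's $\bar\theta$ and $T_0$ exactly. In particular, I have to decide whether the singular-value normalisation uses $\sigma_1$ or the full head sum, and whether the factor $2$ in $\bar\theta$ absorbs the $\tfrac12$ loss in $\sigma_1$. The monotonicity range of $\varphi$ also needs to be confirmed. I will try the crude version first, $S\ge\sigma_1\ge\tfrac12A_0e^{\lambda_\ast t}$ with $\epsilon\le(m-r)\bar KZ_0e^{-\eta t}$, and check that this reproduces $\bar\theta$ exactly.
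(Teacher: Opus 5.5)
Your proposal is correct and follows the paper's proof step for step. The head bound from the $(\ell^\ast,k^\ast)$ coordinate, the tail bound $(m-r)\bar K Z_0e^{-\eta t}$ via Weyl and Theorem~\ref{thm:thresholded_rank}, the choice of $T_0$ giving $\bar\theta\le\tfrac12$, the chain-rule grouping of the entropy, the envelope $h(x)\le x(1+\log\frac1x)$ and the chord $e^y\le 1+(e-1)y$ all appear there in the same order, and your crude normalisation $S\ge\sigma_1\ge\tfrac12A_0e^{\lambda_\ast t}$ reproduces $\bar\theta$ exactly.

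The only slip is your worry about where $\varphi$ is monotone. In fact $\varphi'(x)=\log\frac1x+\log_+\frac{m-r}{r}\ge 0$ on all of $(0,1]$, because the linear term $x$ cancels the $-1$ coming from $x\log\frac1x$. So $x\le\bar\theta\le\tfrac12$ already gives $\varphi(x)\le\varphi(\bar\theta)$. You therefore need neither a check $\bar\theta\le 1/e$, which fails just after $T_0>0$ where $\bar\theta=\tfrac12$, nor a tightened $T_0$. The zero-tail case $x=0$ is handled directly by $\Erank\le r$.
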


\begin{proposition}[Transfer to the singular-value form]
\label{prop:transfer_tilde}
With the eigenvector-paired $\tilde\mu_{k\ell} = (|c_k|/a_k)(|q_\ell|/p_\ell)$ one has $\mu_{k\ell} \le \tilde\mu_{k\ell}$, hence $\mathcal{U} \subseteq \widetilde{\mathcal{U}}$ and $r_{\mathit{tight}} \le \tilde r_{\mathit{tight}}$. Theorems~\ref{thm:rank_all_times}--\ref{thm:erank_finite_time} hold with the leading $r_{\mathit{tight}}$ replaced by $\tilde r_{\mathit{tight}}$, all other constants still computed from the signed spectra and the signed $r_{\mathit{tight}}$. Substituting $\tilde r$ inside $\varphi$, $\delta$ or $T_0$ is not proved.
\end{proposition}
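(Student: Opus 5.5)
The plan is a monotonicity argument: the proposition should follow from the theorems it cites plus the fact that every quantity involved is monotone in the index set. I would proceed in three steps.

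First, compare the two ratios mode by mode. Under (A3) we have $a_k>0$ and $p_\ell>0$, so $\sigma_{k\ell}=a_kp_\ell>0$. Then
\[
\mu_{k\ell}=\frac{c_kq_\ell}{a_kp_\ell}\le\frac{|c_kq_\ell|}{a_kp_\ell}=\frac{|c_k|}{a_k}\cdot\frac{|q_\ell|}{p_\ell}=\tilde\mu_{k\ell},
\]
using only $x\le|x|$ and $|c_kq_\ell|=|c_k||q_\ell|$. Hence $\mu_{k\ell}>1$ implies $\tilde\mu_{k\ell}>1$, which gives $\mathcal U\subseteq\widetilde{\mathcal U}$. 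The equality case is exactly $c_kq_\ell\ge0$, so the inclusion can be strict only on modes with negative drive; by Step~2 those modes are stable.

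Second, pass to the rank budget. The projections are monotone under inclusion: $\pi_\ell\mathcal U\subseteq\pi_\ell\widetilde{\mathcal U}$ and $\pi_k\mathcal U\subseteq\pi_k\widetilde{\mathcal U}$. Taking the minimum with the same constants $d$ and $N$ then gives $r_{\mathit{tight}}\le\tilde r_{\mathit{tight}}$.

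Third, transfer each theorem. In every case I keep all constants, meaning $\bar K$, $Z_0$, $\eta$, $A_0$, $\lambda_\ast$, $\varphi$, $\bar\theta$, $T_0$ and $\delta$, computed from the signed spectra and the signed $r=r_{\mathit{tight}}$, and I weaken only the leading budget.
\begin{itemize}
\item Theorem~\ref{thm:rank_all_times}: we get $\rank\mat W_+(t)\le r_{\mathit{tight}}\le\tilde r_{\mathit{tight}}$. Only the upper bound transfers. The attainment clause does not, because $\tilde r_{\mathit{tight}}$ may exceed the true rank.
\item Theorem~\ref{thm:thresholded_rank}: singular values are non-increasing, so $\sigma_{\tilde r+1}(\mat W(t))\le\sigma_{r+1}(\mat W(t))\le\bar KZ_0e^{-\eta t}$. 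Likewise $\operatorname{rank}_\varepsilon(\mat W(t))\le r\le\tilde r$ for $t\ge T(\varepsilon)$, with the same $T(\varepsilon)$.
\item Theorem~\ref{thm:erank_finite_time}: for $t\ge T_0$ (the signed $T_0$) we have $\Erank(\mat W(t))\le r\,e^{\varphi(\bar\theta(t))}\le\tilde r\,e^{\varphi(\bar\theta(t))}$. When additionally $\varphi\le1$, we have $\Erank\le r+\delta(t)\le\tilde r+\delta(t)$, with the signed $\delta$. The trivial case $r=m$ gives $\Erank\le m=r\le\tilde r$, since $\tilde r\le m$ as well.
\end{itemize}

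I expect the only delicate point to be bookkeeping rather than analysis, namely explaining why the substitution is confined to the leading term. The constants depend on $r$ non-monotonically: $\varphi$ contains $\log_+((m-r)/r)$, and $\bar\theta$ and $T_0$ carry the factor $m-r$. Replacing $r$ by $\tilde r$ inside them would require a separate argument, for instance re-deriving Theorem~\ref{thm:erank_finite_time} with $\widetilde{\mathcal U}$ as the retained set. That is not possible as stated, because modes in $\widetilde{\mathcal U}\setminus\mathcal U$ decay and carry no amplitude of the form in Eq.~\eqref{eq:alphaplus}. This is why the statement restricts the transfer to the leading $r_{\mathit{tight}}$, and I would state this caveat explicitly rather than attempt the stronger claim.
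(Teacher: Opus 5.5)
Your argument is correct and follows the paper's own proof: $x\le|x|$ with $a_kp_\ell>0$ gives $\mu_{k\ell}\le\tilde\mu_{k\ell}$, nesting of the projections gives $r_{\mathit{tight}}\le\tilde r_{\mathit{tight}}$, and each theorem transfers as an upper bound monotone in the leading $r$; your observation that the attainment clause of Theorem~\ref{thm:rank_all_times} does not transfer is a correct refinement that the paper's one-line proof passes over. You should drop your closing claim that the inner substitution is impossible, because the statement only says it is not proved, and in fact it is easy: for any integer $r'$ with $r\le r'<m$ one has $\sum_{i>r'}\sigma_i(\mat W(t))\le(m-r')\,\sigma_{r+1}(\mat W(t))\le(m-r')\bar K Z_0e^{-\eta t}$, the head bound uses only the escaping mode $(k^\ast,\ell^\ast)$, and rerunning the entropy grouping at split point $r'$ gives Theorem~\ref{thm:erank_finite_time} with $r'$ in every position, so no amplitudes on $\widetilde{\mathcal U}\setminus\mathcal U$ are needed.
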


\subsubsection*{Proofs}

\begin{lemma}[Mode decoupling and classification]
\label{lem:mode_classification}
Under (A1)--(A3), the coordinates $w_{k\ell} = \vect{v}_\ell^\top\mat{W}\vect{u}_k$ and $y_{k\ell} = \vect{v}_\ell^\top\mat{W}_a\vect{u}_k$ decouple the flow into $dN$ planar systems $\dot\zeta_{k\ell} = \mat{J}_{k\ell}\zeta_{k\ell}$ with $\mat{J}_{k\ell}$ as in Eq.~\eqref{eq:scalar-mode}, and $\|\mat{W}\|_F^2 = \sum_{k\ell} w_{k\ell}^2$. If $\mu_{k\ell} > 1$ then $\Delta_{k\ell} > (\beta + \sigma_{k\ell})^2 > 0$ and $\lambda^+_{k\ell} > 0 > -(\beta + \sigma_{k\ell}) > \lambda^-_{k\ell}$. If $\mu_{k\ell} < 1$ then both eigenvalues have negative real part; they are real iff $\Delta_{k\ell} \ge 0$, and the slow stable rate is $|\Real\lambda^+_{k\ell}|$. Hence $\eta > 0$ and $\mathcal{J}$ is hyperbolic.
\end{lemma}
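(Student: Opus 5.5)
The plan is to obtain the lemma in three short moves: change coordinates to decouple, compute each $2\times 2$ spectrum, and classify the modes by the sign of the constant term.

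\emph{Decoupling.} Under (A2), set $\widetilde{\mat W}=\mat V^\top\mat W\mat U$ and $\widetilde{\mat W}_a=\mat V^\top\mat W_a\mat U$, exactly as in Step~1. Substituting the diagonal forms of $\mat\Sigma_{11}$, $\mat\Sigma_{X1}$, $\mat P$ and $\mat Q$ into Eq.~\eqref{eq:dW_coupled} yields Eqs.~\eqref{eq:rotW}--\eqref{eq:rotWa}. In these equations each entry $(\ell,k)$ depends only on itself, which gives the planar systems $\dot\zeta_{k\ell}=\mat J_{k\ell}\zeta_{k\ell}$. Since $\mat U$ and $\mat V$ are orthogonal, the Frobenius norm is invariant, so $\|\mat W\|_F^2=\|\widetilde{\mat W}\|_F^2=\sum w_{k\ell}^2$. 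Equivalently, conjugating $\mathcal J$ by $\mat I_2\otimes(\mat U\otimes\mat V)$ makes it block diagonal up to a permutation. Consequently $\Spec(\mathcal J)$ is the union of the $\Spec(\mat J_{k\ell})$.

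\emph{Per-mode spectrum.} Write $\sigma=\sigma_{k\ell}=a_kp_\ell>0$ and $\gamma=c_kq_\ell$, so $\gamma=\mu\sigma$. The characteristic polynomial has trace term $\beta+\sigma$ and determinant $\beta(\sigma-\gamma)$. Its discriminant is $(\beta+\sigma)^2-4\beta(\sigma-\gamma)=(\beta-\sigma)^2+4\beta\gamma$.

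If $\mu>1$, then $\sigma-\gamma<0$, so $\Delta=(\beta+\sigma)^2+4\beta(\gamma-\sigma)>(\beta+\sigma)^2>0$. Hence $\sqrt\Delta>\beta+\sigma$, which gives $\lambda^+>0$ and $\lambda^-<-(\beta+\sigma)$.

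If $\mu<1$, the trace coefficient $\beta+\sigma$ and the determinant $\beta\sigma(1-\mu)$ are both positive. By Routh--Hurwitz for real quadratics, both roots have negative real part. The roots are real iff $\Delta\ge0$. In the real case the slow root is $\lambda^+$, the one nearer zero. In the complex case both roots share the real part $-(\beta+\sigma)/2=\Real\lambda^+$. Either way the slow rate is $|\Real\lambda^+|$.

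\emph{Hyperbolicity.} Assumption (A3) excludes $\mu=1$, so no mode has a zero eigenvalue, and every other root has nonzero real part by the case analysis above. Therefore the finite set $\{|\Real\lambda|:\Real\lambda<0\}$ has a positive minimum, so $\eta>0$, and $\mathcal J$ has no eigenvalues on the imaginary axis, i.e.\ it is hyperbolic.

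The only step needing care is the decoupling: it must be checked that the real-symmetric form of (A2) makes $\mat\Sigma_{X1}^\top=\mat\Sigma_{X1}$. Otherwise the drive term in Eq.~\eqref{eq:rotW} would carry $\mat U^\top\mat\Sigma_{X1}^\top\mat U$ and the factor would not be $c_k$. I expect this to be routine.
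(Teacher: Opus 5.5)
Your proposal is correct and follows essentially the same route as the paper: rotate by the orthonormal bases to split the flow into the $2\times 2$ blocks $\mat J_{k\ell}$, then use the negative constant term together with $\Delta_{k\ell} > (\beta+\sigma_{k\ell})^2$ when $\mu_{k\ell}>1$, the positive coefficients $\beta+\sigma_{k\ell}$ and $\beta\sigma_{k\ell}(1-\mu_{k\ell})$ when $\mu_{k\ell}<1$ (your Routh--Hurwitz step, the paper's root sum and product), and (A3) to keep every eigenvalue off the imaginary axis. The transpose concern you raise at the end does not arise: $\mat\Sigma_{X1}$ enters the matrix form of Eq.~\eqref{eq:dW_coupled} untransposed, the transpose appearing only after vectorisation, and (A2) writes it as $\mat U\diag(c_k)\mat U^\top$, which is symmetric in any case.
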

\begin{proof}
The matrices $\vect{v}_\ell\vect{u}_k^\top$ are Frobenius-orthonormal, and substituting $\mat{W} = \sum w_{k\ell}\vect{v}_\ell\vect{u}_k^\top$ into the flow gives $\dot w_{k\ell} = -\sigma_{k\ell}w_{k\ell} + \gamma_{k\ell}y_{k\ell}$, $\dot y_{k\ell} = \beta(w_{k\ell} - y_{k\ell})$. For $\mu > 1$ the constant term of the characteristic polynomial is negative, so the roots are real with opposite signs, and $\Delta = (\beta + \sigma)^2 + 4\beta(\gamma - \sigma) > (\beta+\sigma)^2$ gives $\lambda^- < -(\beta + \sigma)$. For $\mu < 1$ the root sum is $-(\beta + \sigma) < 0$ and the product is positive: both real negative, or a complex pair with real part $-\tfrac12(\beta + \sigma)$. By (A3) no eigenvalue lies on the imaginary axis.
\end{proof}

\begin{lemma}[Planar semigroup bound]
\label{lem:planar_bound}
Let $\mat{M} \in \R^{2\times2}$ have eigenvalues $\lambda_1 \neq \lambda_2$, spectral abscissa $a$, and $\mat{N} := \mat{M} - \tfrac12\operatorname{tr}(\mat{M})\mat{I}$. Then $\|e^{t\mat{M}}\|_2 \le (1 + 2\|\mat{N}\|_2/|\lambda_1 - \lambda_2|)\,e^{at}$ for all $t \ge 0$.
\end{lemma}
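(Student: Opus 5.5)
The plan is to reduce $\mat{M}$ to its traceless part and use the closed form of the exponential of a traceless $2\times 2$ matrix. Write $\tau := \tfrac12\operatorname{tr}(\mat{M})$, so that $\mat{M} = \tau\mat{I} + \mat{N}$ and $e^{t\mat{M}} = e^{\tau t}e^{t\mat{N}}$, because $\tau\mat{I}$ commutes with $\mat{N}$. The eigenvalues of $\mat{N}$ are $\pm\omega$ with $\omega := \tfrac12(\lambda_1 - \lambda_2)$. By Cayley--Hamilton, since $\mat{N}$ is traceless, $\mat{N}^2 = -\det(\mat{N})\,\mat{I} = \omega^2\mat{I}$. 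Summing the power series splits it into even and odd terms:
\begin{equation}
e^{t\mat{N}} = \cosh(\omega t)\,\mat{I} + \frac{\sinh(\omega t)}{\omega}\,\mat{N}.
\end{equation}
This formula holds for complex $\omega \neq 0$, and $\omega \neq 0$ is exactly the hypothesis $\lambda_1 \neq \lambda_2$.

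The key step is to bound both scalar coefficients by $e^{(\Real\omega)\, t}$, where the sign of $\omega$ is chosen so that $\Real\omega \ge 0$. This choice gives $\tau + \Real\omega = a$, the spectral abscissa. For $t \ge 0$ we have $|\cosh(\omega t)| \le \tfrac12\bigl(|e^{\omega t}| + |e^{-\omega t}|\bigr) \le e^{(\Real\omega)\, t}$, and likewise $|\sinh(\omega t)| \le e^{(\Real\omega)\, t}$. Combining these,
\begin{equation}
\|e^{t\mat{M}}\|_2 \le e^{\tau t}\Bigl(|\cosh\omega t| + \frac{|\sinh\omega t|}{|\omega|}\,\|\mat{N}\|_2\Bigr) \le \Bigl(1 + \frac{\|\mat{N}\|_2}{|\omega|}\Bigr) e^{(\tau + \Real\omega)\, t}.
\end{equation}
Substituting $|\omega| = \tfrac12|\lambda_1 - \lambda_2|$ gives the constant $1 + 2\|\mat{N}\|_2/|\lambda_1 - \lambda_2|$, and $\tau + \Real\omega = a$ gives the rate. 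This is the stated bound.

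The main point requiring care is the complex-conjugate case, where $\omega$ is purely imaginary. There $\cosh$ and $\sinh$ become $\cos$ and $\sin$, with modulus at most $1 = e^{0}$, and $a = \tau$, which is consistent with the general bound. I therefore handle both the real and the complex case uniformly through the exponential-modulus inequality, rather than splitting into cases.

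For the application in the paper, note that $\lambda_1 - \lambda_2 = \pm\sqrt{\Delta_{k\ell}}$. This identifies the constant with $K_{k\ell}$, which is what Theorem~\ref{thm:thresholded_rank} uses.
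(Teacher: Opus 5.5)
Your proof is correct and follows the paper's argument: you split off $\tfrac12\operatorname{tr}(\mat{M})\mat{I}$, use $\mat{N}^2=\omega^2\mat{I}$ to obtain the $\cosh$/$\sinh$ closed form for $e^{t\mat{N}}$, and bound the two scalar coefficients. The only difference is cosmetic: the paper treats the real and complex-conjugate cases separately (using the slightly sharper $\sinh(\delta t)\le\tfrac12 e^{\delta t}$ in the real case), whereas your sign choice $\Real\omega\ge 0$ handles both cases at once through $|e^{\pm\omega t}|=e^{\pm(\Real\omega)t}$.
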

\begin{proof}
$\mat{N}$ is traceless with eigenvalues $\pm\delta$, $\delta := \tfrac12(\lambda_1 - \lambda_2) \neq 0$, so $\mat{N}^2 = \delta^2\mat{I}$ and $e^{t\mat{N}} = \cosh(\delta t)\mat{I} + \tfrac{\sinh(\delta t)}{\delta}\mat{N}$, with $e^{t\mat{M}} = e^{\frac12\operatorname{tr}(\mat{M})t}e^{t\mat{N}}$. In the real case $a = \tfrac12\operatorname{tr}(\mat{M}) + \delta$ and $\cosh(\delta t) \le e^{\delta t}$, $\sinh(\delta t) \le \tfrac12 e^{\delta t}$; in the complex case $\delta = i\omega$, $a = \tfrac12\operatorname{tr}(\mat{M})$, $|\cos\omega t| \le 1$ and $|\sin(\omega t)/\omega| \le 2/|\lambda_1 - \lambda_2|$.
\end{proof}
For $\mat{M} = \mat{J}_{k\ell}$, $|\lambda_1 - \lambda_2| = \sqrt{|\Delta_{k\ell}|}$ and the constant is $K_{k\ell}$.

\begin{lemma}[Saddle splitting with initialisation-computable coefficients]
\label{lem:saddle_split}
For $(k,\ell) \in \mathcal{U}$, the eigenvectors of $\mat{J}_{k\ell}$ are $e_\pm = (1, b_\pm)^\top$ with $b_\pm := \beta/(\beta + \lambda^\pm)$, well defined and independent because $\chi(-\beta) = (\beta + \lambda^+)(\beta + \lambda^-) = -\beta\gamma < 0$ and $b_- - b_+ = -\sqrt{\Delta}/\gamma \neq 0$. Every solution splits uniquely as $\zeta(t) = a\,e^{\lambda^+t}e_+ + a^-e^{\lambda^-t}e_-$ with $a = (b_-w(0) - y(0))/(b_- - b_+)$ equal to Eq.~\eqref{eq:alphaplus}, $a^- = (y(0) - b_+w(0))/(b_- - b_+)$, and $|a^-| \le \sqrt2\,(\gamma/\sqrt\Delta)\,\|\zeta(0)\| \le K_{k\ell}\|\zeta(0)\|$; since the first component of $e_+$ is one, $a$ is the amplitude of the $\mat{W}$-coordinate along the unstable eigenline.
\end{lemma}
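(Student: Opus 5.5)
We work entirely inside one supercritical planar block $\mat J_{k\ell}$ of Lemma~\ref{lem:mode_classification}, writing $\sigma=\sigma_{k\ell}$, $\gamma=\gamma_{k\ell}$, $\lambda^\pm=\lambda^\pm_{k\ell}$. The approach is a direct eigenvector computation for this $2\times 2$ matrix, followed by solving a $2\times2$ linear system for the coordinates of $\zeta(0)$ in the eigenbasis.

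\textbf{Eigenvectors.} The second row of $(\mat J_{k\ell}-\lambda\mat I)e=0$ reads $\beta e_1=(\beta+\lambda)e_2$. So whenever $\lambda\neq-\beta$, the eigenvector can be normalised as $e=(1,\beta/(\beta+\lambda))^\top$. To see that $\lambda^\pm\neq-\beta$, evaluate the characteristic polynomial at $-\beta$:
\[
\chi(-\beta)=\beta^2-(\beta+\sigma)\beta+\beta(\sigma-\gamma)=-\beta\gamma .
\]
Since $\mu>1$ and $\sigma>0$ give $\gamma>0$, this value is strictly negative. Hence $-\beta$ lies strictly between the two roots; this also recovers $\lambda^-<-\beta<\lambda^+$, consistent with Lemma~\ref{lem:mode_classification}. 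The factorisation $\chi(-\beta)=(\beta+\lambda^+)(\beta+\lambda^-)$ holds because $\chi$ is monic.

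\textbf{Gap between the eigenlines.} Compute
\[
b_--b_+=\frac{\beta(\lambda^+-\lambda^-)}{(\beta+\lambda^+)(\beta+\lambda^-)}=\frac{\beta\sqrt\Delta}{-\beta\gamma}=-\frac{\sqrt\Delta}{\gamma}\neq 0 ,
\]
so $e_+$ and $e_-$ are independent. Uniqueness of the splitting $\zeta(t)=a\,e^{\lambda^+t}e_++a^-e^{\lambda^-t}e_-$ then follows from linearity of the flow and the fact that $\{e_+,e_-\}$ is a basis.

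\textbf{Coordinates of $\zeta(0)$.} Solve $a+a^-=w(0)$ and $ab_++a^-b_-=y(0)$ by Cramer's rule, which gives the stated formulas for $a$ and $a^-$. To match Eq.~\eqref{eq:alphaplus}, multiply the numerator and denominator of $a=(b_-w(0)-y(0))/(b_--b_+)$ by $(\beta+\lambda^-)(\beta+\lambda^+)/\beta$. The numerator becomes $(\beta+\lambda^+)[\beta w(0)-(\beta+\lambda^-)y(0)]/\beta$, and the denominator becomes $\lambda^+-\lambda^-$. This is routine algebra.

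\textbf{Bound on $a^-$.} This step needs the most care. Write
\[
|a^-|=\frac{\gamma}{\sqrt\Delta}\,|y(0)-b_+w(0)|.
\]
Since $\lambda^+>0$, we have $0<b_+<1$, so by Cauchy--Schwarz $|y(0)-b_+w(0)|\le\sqrt{1+b_+^2}\,\|\zeta(0)\|\le\sqrt2\,\|\zeta(0)\|$. It remains to show $\sqrt2\,\gamma/\sqrt\Delta\le K_{k\ell}=1+2\|\mat N\|_2/\sqrt\Delta$, which reduces to $\sqrt2\,\gamma\le\sqrt\Delta+2\|\mat N\|_2$.

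The traceless part is
\[
\mat N=\begin{pmatrix}(\beta-\sigma)/2&\gamma\\ \beta&-(\beta-\sigma)/2\end{pmatrix},
\]
so $\|\mat N\|_2\ge|N_{12}|=\gamma$ (the largest entry bounds the spectral norm). Therefore $2\|\mat N\|_2\ge2\gamma\ge\sqrt2\,\gamma$, and the inequality holds.

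\textbf{Interpretation of $a$.} The first component of $e_+$ equals one. Hence $a\,e^{\lambda^+t}$ is precisely the $\mat W$-coordinate carried along the unstable eigenline.
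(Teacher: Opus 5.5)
Your proposal is correct and follows the paper's proof essentially step for step. The shared steps are the second-row eigenvector computation, $\chi(-\beta)=-\beta\gamma=(\beta+\lambda^+)(\beta+\lambda^-)$ giving $b_--b_+=-\sqrt\Delta/\gamma$, inversion of the $2\times 2$ coordinate system, and Cauchy--Schwarz with $b_+\in(0,1)$ together with $\gamma=|N_{12}|\le\|\mat N_{k\ell}\|_2$ for the bound on $a^-$. One wording slip: your parenthetical should say that the spectral norm dominates every entry, not that the largest entry bounds the spectral norm, although the inequality you actually use, $\|\mat N\|_2\ge|N_{12}|$, is the right one.
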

\begin{proof}
The second row of $(\mat{J}_{k\ell} - \lambda\mat{I})e = 0$ reads $\beta w = (\beta + \lambda)y$, giving $e_\pm$. Evaluating the monic characteristic polynomial at $-\beta$ gives $\chi(-\beta) = -\beta\sigma\mu = -\beta\gamma < 0$ on $\mathcal{U}$, and also $\chi(-\beta) = (\beta + \lambda^+)(\beta + \lambda^-)$, whence $b_- - b_+ = \beta(\lambda^+ - \lambda^-)/((\beta+\lambda^+)(\beta+\lambda^-)) = -\sqrt\Delta/\gamma$. Inverting the $2\times2$ system of eigenvector coordinates gives the coefficients; Cauchy--Schwarz gives $|a^-| \le \sqrt{1 + b_+^2}\,\|\zeta(0)\|\,\gamma/\sqrt\Delta$, with $b_+ \in (0,1)$ since $\lambda^+ > 0$, and $\sqrt2\gamma \le 2\|\mat{N}_{k\ell}\|_2$ because $\gamma$ is an entry of $\mat{N}_{k\ell}$. The two forms of $a$ agree after multiplying numerator and denominator by $(\beta + \lambda^-)$ and using $-\gamma/(\beta + \lambda^-) = (\beta + \lambda^+)/\beta$; with the eigenvector normalisation of Step~3 this amplitude reads $\alpha^+_{k\ell}c_kq_\ell$, which is the $a_{k\ell}$ of Step~4.
\end{proof}

With $\mat{W}_- := \mat{W} - \mat{W}_+$, mode-wise $w^-_{k\ell}(t) = a^-_{k\ell}e^{\lambda^-_{k\ell}t}$ for $(k,\ell) \in \mathcal{U}$ and $w^-_{k\ell}(t) = w_{k\ell}(t)$ otherwise.

\begin{proof}[Proof of Theorem~\ref{thm:rank_all_times}]
Every term of $\mat{W}_+(t)$ has column space in $\operatorname{span}\{\vect{v}_\ell : \ell \in \pi_\ell\mathcal{U}\}$ and row space in $\operatorname{span}\{\vect{u}_k : k \in \pi_k\mathcal{U}\}$, so $\rank\mat{W}_+(t) \le \min(|\pi_\ell\mathcal{U}|, |\pi_k\mathcal{U}|) \le \min(d,N)$ for every $t$. If $\pi_\ell$ is injective on $\mathcal{U}$, each $\ell$ occurs in exactly one pair $(k(\ell),\ell)$ and $\mat{W}_+(t) = \sum_\ell \vect{v}_\ell g_\ell(t)^\top$ with $g_\ell(t) = a_{k(\ell)\ell}e^{\lambda^+t}\vect{u}_{k(\ell)}$; the $\vect{v}_\ell$ are orthonormal and the $g_\ell(t)$ are non-zero multiples of distinct orthonormal vectors at every $t$, since exponentials never vanish, so the rank equals $|\pi_k\mathcal{U}| = r_{\mathit{tight}}$ for all $t$. Symmetrically for $\pi_k$.
\end{proof}

\begin{proof}[Proof of Theorem~\ref{thm:thresholded_rank}]
For $(k,\ell) \notin \mathcal{U}$, Lemma~\ref{lem:mode_classification} places the spectral abscissa of $\mat{J}_{k\ell}$ at $-\eta_{k\ell}$ with $\eta_{k\ell} \ge \eta$, and Lemma~\ref{lem:planar_bound} gives $|w^-_{k\ell}(t)| \le \bar K e^{-\eta t}\|\zeta_{k\ell}(0)\|$; for $(k,\ell) \in \mathcal{U}$, Lemma~\ref{lem:saddle_split} gives $|w^-_{k\ell}(t)| = |a^-_{k\ell}|e^{\lambda^-_{k\ell}t} \le K_{k\ell}\|\zeta_{k\ell}(0)\|e^{-\eta t}$ since $|\lambda^-_{k\ell}| \ge \eta$. The $w^-_{k\ell}$ are the entries of $\mat{V}^\top\mat{W}_-\mat{U}$ with orthogonal factors, so $\|\mat{W}_-(t)\|_F^2 = \sum_{k\ell}|w^-_{k\ell}(t)|^2 \le \bar K^2 Z_0^2 e^{-2\eta t}$. If $r = m$ the singular-value claim is trivial. Otherwise the singular-value perturbation inequality $\sigma_{i+j-1}(\mat{A} + \mat{B}) \le \sigma_i(\mat{A}) + \sigma_j(\mat{B})$ \citep{hornTopicsMatrixAnalysis1991} with $i = r+1$, $j = 1$, $\mat{A} = \mat{W}_+(t)$, $\mat{B} = \mat{W}_-(t)$ and $\sigma_{r+1}(\mat{W}_+(t)) = 0$ (Theorem~\ref{thm:rank_all_times}) gives $\sigma_{r+1}(\mat{W}(t)) \le \|\mat{W}_-(t)\|_2 \le \|\mat{W}_-(t)\|_F$. For $t \ge T(\varepsilon)$, $\sigma_{r+1}(\mat{W}(t)) \le \varepsilon$, and singular values are non-increasing, so at most $r$ of them exceed $\varepsilon$.
\end{proof}

\begin{proof}[Proof of Theorem~\ref{thm:erank_finite_time}]
Assume $r < m$ and write $\varrho(t) := \bar K Z_0 e^{-\eta t}$ and $\sigma_i := \sigma_i(\mat{W}(t))$. \emph{Head.} Since $\vect{u}_{k^\ast}$ and $\vect{v}_{\ell^\ast}$ are unit vectors, $\sigma_1 \ge |\vect{v}_{\ell^\ast}^\top\mat{W}(t)\vect{u}_{k^\ast}| \ge A_0 e^{\lambda_\ast t} - \varrho(t)$ by Lemma~\ref{lem:saddle_split}; for $t \ge T_0$, $\bar\theta(t) \le \tfrac12$ gives $\varrho(t) \le \tfrac14 A_0 e^{\lambda_\ast t}$, hence $\sigma_1 \ge \tfrac12 A_0 e^{\lambda_\ast t} > 0$. \emph{Tail.} By Theorem~\ref{thm:thresholded_rank}, $\sum_{i > r}\sigma_i \le (m-r)\varrho(t)$, so the tail mass $\theta(t) := \sum_{i>r}\sigma_i/\sum_i\sigma_i$ satisfies $\theta(t) \le \bar\theta(t) \le \tfrac12$. \emph{Entropy grouping.} If $\theta = 0$ then $\Erank \le r$ directly. Otherwise group the normalised spectrum into head and tail blocks of masses $(1-\theta, \theta)$; the chain rule for entropy and the maximum-entropy bound on each block give $H(p) \le H_b(\theta) + (1-\theta)\log r + \theta\log(m-r)$, so $\Erank \le r\,e^{u(\theta)}$ with $u(\theta) := H_b(\theta) + \theta\log\frac{m-r}{r}$ \citep{coverElementsInformationTheory2006}. \emph{Envelope.} On $(0,1]$, $-(1-\theta)\log(1-\theta) \le \theta$ gives $H_b(\theta) \le \theta(1 - \log\theta)$ and $u \le \varphi$; $\varphi'(x) = \log\frac1x + \log_+\frac{m-r}{r} \ge 0$ on $(0,1]$, so $\varphi$ is non-decreasing and $u(\theta) \le \varphi(\bar\theta(t))$. \emph{Chord.} $e^x \le 1 + (e-1)x$ on $[0,1]$, which gives the additive form when $\varphi(\bar\theta(t)) \le 1$. \emph{Rate.} With $\bar\theta = Ae^{-bt}$, $b = \lambda_\ast + \eta$, $\delta(t) = (e-1)rAe^{-bt}(1 + bt - \log A + \log_+\frac{m-r}{r})$, which is $O((1+t)e^{-bt})$.
\end{proof}

\begin{proof}[Proof of Proposition~\ref{prop:transfer_tilde}]
$\mu_{k\ell} = c_kq_\ell/(a_kp_\ell) \le |c_k||q_\ell|/(a_kp_\ell) = \tilde\mu_{k\ell}$, so $\mathcal{U} \subseteq \widetilde{\mathcal{U}}$ and the projections are nested, giving $r_{\mathit{tight}} \le \tilde r_{\mathit{tight}}$. All three theorems are upper bounds monotone in the leading $r$, so they hold with $\tilde r_{\mathit{tight}}$ substituted in that position only.
\end{proof}

These statements concern the linear flow with a fixed predictor, in which the fastest escaping mode eventually dominates and the effective rank relaxes toward one (Appendix~\ref{app:image_scale:controls}); $r_{\mathit{tight}}$ is therefore an upper bound and not a prediction of $\Erank$. The case of a trained predictor and a non-linear encoder is measured in Appendix~\ref{app:image_scale}.

\subsection{Singular-value formulation and per-dataset results}
\label{app:rank-control-empirical}

\subsubsection*{Setup}
The sweep is a $3{\times}9{\times}10{\times}3$ grid (datasets $\in\{$\texttt{aloi},\texttt{helena},\texttt{jannis}$\}$; $9$ predictor scales $\alpha\in\{10^{-4},10^{-3},10^{-2},10^{-1},0.5,1,2,5,10\}$; $10$ context ratios $p_{m_1}\in\{10^{-3},10^{-2},0.1,0.2,\ldots,0.8\}$; $3$ seeds), $810$ cells, $804$ completed. Linear T-JEPA, $40$ epochs, AdamW, embed\_dim $d{=}64$, no residual predictor; final-checkpoint values reported.

\subsubsection*{Singular-value formulation $\tilde\mu_{k\ell}$}
\label{app:tilde-mu}

The ratio $\mu_{k\ell}=(c_k/a_k)(q_\ell/p_\ell)$ needs $\mat\Sigma_{X1}$ and $\mat W_p^\top\bar{\mat M}_2$ to have real spectra; in general $\mu_{k\ell}$ may be negative or complex. The eigenvector-paired absolute-value form
\begin{equation}
\tilde\mu_{k\ell}\;:=\;\frac{|c_k|}{a_k}\cdot\frac{|q_\ell|}{p_\ell}\;\ge 0
\label{eq:tilde-mu}
\end{equation}
is real and non-negative and equals $\mu_{k\ell}$ when $c_k, q_\ell>0$. With $\widetilde{\mathcal U}=\{(k,\ell):\tilde\mu_{k\ell}{>}1\}$,
\begin{equation}
\tilde r_{\mathit{tight}}\;:=\;\min\!\big(|\pi_\ell\widetilde{\mathcal U}|,\;|\pi_k\widetilde{\mathcal U}|,\;d,\;N\big).
\end{equation}
Since $\mu_{k\ell} \le \tilde\mu_{k\ell}$, $r_{\mathit{tight}} \le \tilde r_{\mathit{tight}}$, and the bounds of Cor.~\ref{cor:rank-control} hold with $\tilde r_{\mathit{tight}}$ in the leading position (Proposition~\ref{prop:transfer_tilde}).

\begin{table}[h]
\centering
\small
\caption{Per-dataset Spearman $\rho(\Erank,\cdot)$ for the absolute-value and signed formulations and the loose counts, with $1000$-resample bootstrap $95\%$ CIs.}
\label{tab:rank-control-full}
\begin{tabular}{lccc}
\toprule
& \texttt{aloi} ($n{=}264$) & \texttt{helena} ($n{=}270$) & \texttt{jannis} ($n{=}270$) \\
\midrule
$\tilde r_{\mathit{tight}}$ \,(abs., Cor.~\ref{cor:rank-control}) & $\mathbf{+0.905}$ $[.87,.93]$ & $\mathbf{+0.906}$ $[.88,.93]$ & $+0.619$ $[.54,.70]$ \\
$\tilde r_{\mathit{loose}}=|\widetilde{\mathcal U}|$ (abs.)        & $+0.766$ $[.71,.81]$ & $+0.897$ $[.86,.92]$ & $+0.574$ $[.49,.65]$ \\
$r_{\mathit{tight}}$ (signed)                                       & $+0.672$ $[.61,.72]$ & $+0.908$ $[.88,.93]$ & $+0.648$ $[.59,.70]$ \\
$r_{\mathit{loose}}=|\mathcal U|$ (signed)                          & $+0.772$ $[.72,.81]$ & $+0.897$ $[.87,.92]$ & $+0.697$ $[.65,.73]$ \\
$\log_{10}\!\max_{k\ell}\tilde\mu_{k\ell}$ (scalar)                & $+0.493$ & $+0.844$ & $+0.470$ \\
\bottomrule
\end{tabular}
\end{table}

\subsection{Signed versus absolute-value formulation}
\label{app:rank-control-signed-vs-abs}

The two formulations coincide when $c_k$ and $q_\ell$ are positive on every active mode. Otherwise near-zero eigenvalues can change sign across nearby configurations, which makes the signed count unstable, while $\tilde\mu_{k\ell}$ stays defined and bounds it from above (Proposition~\ref{prop:transfer_tilde}). On \texttt{helena}, whose smallest gap among the ten leading eigenvalues is about fifteen times that of \texttt{jannis} (Appendix~\ref{app:assumptions:tabular}), the two agree: $\rho(\Erank, \tilde r_{\mathit{tight}}) = +0.906$ and $\rho(\Erank, r_{\mathit{tight}}) = +0.908$ (Figure~\ref{fig:helena-abs-vs-signed}). On \texttt{aloi} the absolute-value form keeps $\rho = +0.905$ where the signed form drops to $+0.672$; on \texttt{jannis} both soften ($+0.62$ to $+0.65$). We therefore report $\tilde\mu_{k\ell}$ in the main text.

\begin{figure}[h]
\centering
\includegraphics[width=\linewidth]{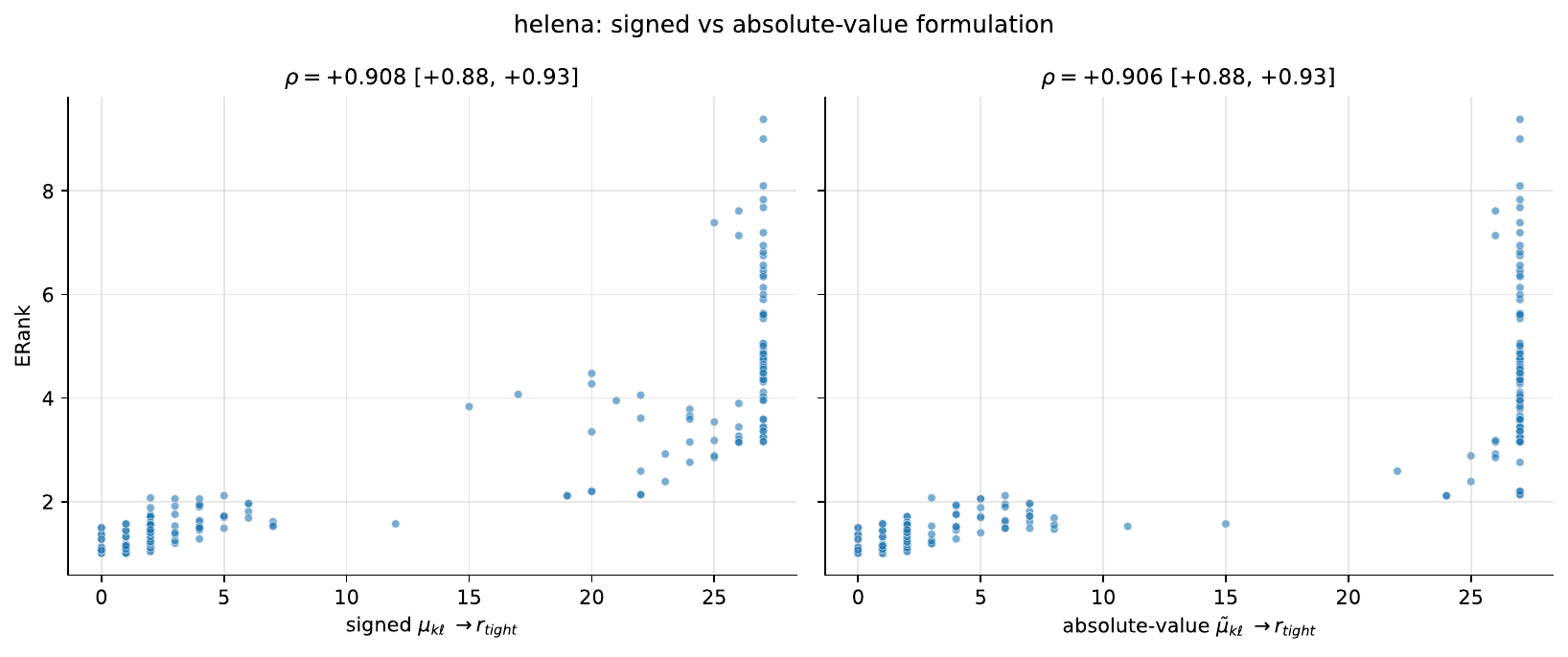}
\caption{\textbf{On \texttt{helena} the two formulations agree.} The signed $\mu_{k\ell}$ and the absolute-value $\tilde\mu_{k\ell}$ give essentially identical scatter against $\Erank$.}
\label{fig:helena-abs-vs-signed}
\end{figure}

\subsection{Transfer of the budget to the embedding spectrum}
\label{app:embedding-transfer}

The experiments read the effective rank of embeddings. This section transfers the budget of
Corollary~\ref{cor:rank-control} from $\mat{W}$ to $\mat{Z}$ in the linear model, under the assumptions of
Appendix~\ref{app:finite_time_rank}. For non-linear encoders the transfer is an empirical prediction,
tested in Section~\ref{sec:residual_predictor} and Appendix~\ref{app:image_scale}.

\subsubsection{The covariance as a readout}

Write $\vect{z} = [\mathrm{vec}(\mat{W})^\top, \mathrm{vec}(\mat{W}_a)^\top]^\top$ for the state and
$\mat{C}(t) = \mat{W}(t)\mat{\Sigma}\mat{W}(t)^\top$ for the embedding covariance under a fixed
positive semi-definite second moment $\mat{\Sigma}$, the population $\mat{\Sigma}_{XX}$ or the
empirical second moment of a fixed evaluation set.

\begin{proposition}[Closure one level up]
\label{prop:lyapunov}
Differentiating $\mat{C}$ along Eq.~\eqref{eq:dW_coupled} gives
\begin{equation}
\dot{\mat{C}} = -\bigl(\mat{P}\mat{W}\mat{\Sigma}_{11}\mat{\Sigma}\mat{W}^\top
  + \mat{W}\mat{\Sigma}\mat{\Sigma}_{11}\mat{W}^\top\mat{P}\bigr)
  + \bigl(\mat{Q}\mat{W}_a\mat{\Sigma}_{X1}\mat{\Sigma}\mat{W}^\top
  + \mat{W}\mat{\Sigma}\mat{\Sigma}_{X1}^\top\mat{W}_a^\top\mat{Q}^\top\bigr),
\end{equation}
which is not a function of $\mat{C}$ alone. The joint second moment
$\mat{\Xi} := \vect{z}\vect{z}^\top$ is closed,
\begin{equation}
\dot{\mat{\Xi}} = \mathcal{J}\mat{\Xi} + \mat{\Xi}\mathcal{J}^\top,
\qquad \mat{\Xi}(t) = e^{\mathcal{J}t}\mat{\Xi}(0)e^{\mathcal{J}^\top t},
\label{eq:dlyap}
\end{equation}
and $\mat{C}(t)$ is the fixed linear readout
$\mat{C}(t) = \mathrm{Tr}_N\bigl[(\mat{\Sigma}\otimes\mat{I}_d)\,\mat{\Xi}_{WW}(t)\bigr]$.
\end{proposition}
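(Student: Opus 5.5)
The proposition makes three claims: the formula for $\dot{\mat C}$ and its non-closure, the closed equation for $\mat\Xi$, and the readout identity for $\mat C$. I would take them in that order, working only with the fixed-predictor flow of Lemma~\ref{lemma:coupled_dynamics}, where $\mat P=\mat W_p^\top\mat W_p$ and $\mat Q=\mat W_p^\top\bar{\mat M}_2$ are constant. Since $\mat\Sigma$ is a fixed positive semi-definite matrix, the product rule gives $\dot{\mat C}=\dot{\mat W}\mat\Sigma\mat W^\top+\mat W\mat\Sigma\dot{\mat W}^\top$. Substituting Eq.~\eqref{eq:dW_coupled} and using the symmetry of $\mat P$, $\mat\Sigma_{11}$ and $\mat\Sigma$ to transpose the second summand produces the four displayed terms. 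I would check each transpose explicitly; for example, $(\mat P\mat W\mat\Sigma_{11})^\top=\mat\Sigma_{11}\mat W^\top\mat P$, so that term becomes $\mat W\mat\Sigma\mat\Sigma_{11}\mat W^\top\mat P$.

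For non-closure, the cleanest argument is a counterexample, and I would build it in two parts.
\begin{itemize}
\item The drive terms involve $\mat W_a$, which $\mat C$ does not determine. Take two states with the same $\mat W$ (hence the same $\mat C$) but different $\mat W_a$. The $\mat Q$-terms then differ whenever $\mat Q\neq\mat 0$ and $\mat\Sigma_{X1}\mat\Sigma\neq\mat 0$.
\item Even with $\mat W_a$ fixed, the decay terms are not functions of $\mat C$. Replace $\mat W$ by $\mat W\mat O$ with $\mat O$ orthogonal in the $\mat\Sigma$-metric, i.e.\ $\mat O\mat\Sigma\mat O^\top=\mat\Sigma$. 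This preserves $\mat C$ but changes $\mat W\mat\Sigma_{11}\mat\Sigma\mat W^\top$ unless $\mat\Sigma_{11}$ commutes with such $\mat O$.
\end{itemize}
Stating one explicit $2\times2$ instance of either part suffices.

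For the closure of $\mat\Xi$, I would vectorise exactly as in Appendix~\ref{app:detailed_derivation}. This gives $\dot{\vect z}=\mathcal J\vect z$ with $\mathcal J$ the constant matrix of Eq.~\eqref{eq:full_jacobian}; the flow is exactly linear because $\mat W_p$ is fixed. Differentiating $\vect z\vect z^\top$ then yields $\mathcal J\mat\Xi+\mat\Xi\mathcal J^\top$. The solution $\mat\Xi(t)=e^{\mathcal Jt}\mat\Xi(0)e^{\mathcal J^\top t}$ follows either from $\vect z(t)=e^{\mathcal Jt}\vect z(0)$ directly, or from uniqueness for the linear matrix ODE (differentiate the candidate and compare).

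For the readout, write $\mat W=\sum_{j=1}^N \vect w_j\vect e_j^\top$, where $\vect w_j$ is the $j$-th column, so that $\vecop(\mat W)$ stacks these columns. Then $\mat C=\sum_{i,j}\Sigma_{ij}\vect w_i\vect w_j^\top$. Meanwhile $\mat\Xi_{WW}=\vecop(\mat W)\vecop(\mat W)^\top$ has $(i,j)$ block $\vect w_i\vect w_j^\top$, and $(\mat\Sigma\otimes\mat I_d)$ has $(i,j)$ block $\Sigma_{ij}\mat I_d$. The partial trace over the $N$ factor of the product is therefore $\sum_{i,j}\Sigma_{ji}\vect w_i\vect w_j^\top$, which equals $\mat C$ because $\mat\Sigma$ is symmetric.

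The main obstacle is bookkeeping rather than depth. I must fix a convention for $\mathrm{Tr}_N$ (summing the diagonal $d\times d$ blocks over the $N$ index) consistent with the column-stacking $\vecop$ used throughout the paper, and make sure the index order $\Sigma_{ij}$ versus $\Sigma_{ji}$ is absorbed by symmetry. I would also state explicitly that "closed" here means the fixed-predictor linear flow, consistent with the scope of Appendix~\ref{app:finite_time_rank}.
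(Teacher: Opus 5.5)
Your proposal is correct and follows the paper's proof step for step. The product rule gives $\dot{\mat C}$, vectorisation to $\dot{\vect z}=\mathcal J\vect z$ gives the Lyapunov equation, and the block identity $[\mat\Xi_{WW}]_{ij}=\vect w_i\vect w_j^\top$ gives the partial-trace readout. Your counterexample with the same $\mat W$ and a different $\mat W_a$ makes rigorous the paper's one-line remark that $\mat W\mat\Sigma_{11}\mat\Sigma\mat W^\top$ and $\mat W_a\mat\Sigma_{X1}\mat\Sigma\mat W^\top$ are not functions of $\mat C$.

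Two small tightenings. In the first bullet, choose $\mat W$ with $\mat\Sigma_{X1}\mat\Sigma\mat W^\top\neq\mat 0$, since every term of $\dot{\mat C}$ vanishes at $\mat W=\mat 0$. In the second bullet, take $\mat W_a=\mat 0$: otherwise the drive terms also change under $\mat W\mapsto\mat W\mat O$, so a change in the decay terms alone does not show that the total $\dot{\mat C}$ changes.
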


\begin{proof}
The first display is the product rule applied to $\mat{C} = \mat{W}\mat{\Sigma}\mat{W}^\top$ with
Eq.~\eqref{eq:dW_coupled} substituted. It involves $\mat{W}\mat{\Sigma}_{11}\mat{\Sigma}\mat{W}^\top$ and
$\mat{W}_a\mat{\Sigma}_{X1}\mat{\Sigma}\mat{W}^\top$, neither of which is a function of
$\mat{W}\mat{\Sigma}\mat{W}^\top$. Vectorising with
$\mathrm{vec}(\mat{A}\mat{X}\mat{B}) = (\mat{B}^\top\otimes\mat{A})\mathrm{vec}(\mat{X})$ turns the
coupled flow into $\dot{\vect{z}} = \mathcal{J}\vect{z}$, whence
$\dot{\mat{\Xi}} = \mathcal{J}\mat{\Xi} + \mat{\Xi}\mathcal{J}^\top$. For the readout, the $(m,n)$ block of
$\mat{\Xi}_{WW}$ is $(\mat{W}\vect{e}_m)(\mat{W}\vect{e}_n)^\top$, so
$\sum_{m,n}\Sigma_{mn}(\mat{W}\vect{e}_m)(\mat{W}\vect{e}_n)^\top = \mat{W}\mat{\Sigma}\mat{W}^\top = \mat{C}$.
\end{proof}

\subsubsection{Growth at twice the rate}

The generator $\mathcal{L}:\mat{X}\mapsto\mathcal{J}\mat{X} + \mat{X}\mathcal{J}^\top$ has spectrum
$\{\lambda_i+\lambda_j : \lambda_i,\lambda_j\in\Spec(\mathcal{J})\}$. When $\mathcal{J}$ is diagonalisable,
as under the hypotheses of Theorem~\ref{thm:erank-budget}, the outer products of its eigenvectors form an
eigenbasis of $\mathcal{L}$, and the top growth rate is $2\max\mathrm{Re}\,\Spec(\mathcal{J})$.

Per mode, a super-critical pair evolves as $\vect{z}_{k\ell}(t) = \alpha_{k\ell}e^{\lambda^+t}\vect{\xi}^+_{k\ell} + \cdots$,
where the $\mat{W}$-component of $\vect{\xi}^+_{k\ell}$ is one, so
$\widetilde{W}_{\ell k}(t)^2 \sim \alpha_{k\ell}^2 e^{2\lambda^+_{k\ell}t}$ for a generic initialisation
($\alpha_{k\ell}\ne 0$). If $\mat{\Sigma} = \mat{U}\mathrm{diag}(s_k)\mat{U}^\top$ with $s_k > 0$, then
$(\mat{V}^\top\mat{C}\mat{V})_{\ell\ell} = \sum_k s_k\widetilde{W}_{\ell k}^2$, so the embedding variance
along $\vect{v}_\ell$ grows at rate $2\max\{\lambda^+_{k\ell} : \mu_{k\ell} > 1\}$ if and only if some $k$
has $\mu_{k\ell} > 1$, and decays otherwise: the collapsed covariance is unstable exactly when
$\mathcal{U} \neq \emptyset$. This per-mode form needs $\mat{\Sigma}$ to share the data eigenbasis, which a
generic empirical second moment does not; the transfer below does not need it.

\subsubsection{The spectral transfer, two-sided}

\begin{lemma}[Weights to embeddings]
\label{lem:transfer}
For any $\mat{W}\in\R^{d\times N}$ and positive semi-definite $\mat{\Sigma}$,
$\sigma_i(\mat{W}\mat{\Sigma}^{1/2}) \le \sigma_{\max}(\mat{\Sigma})^{1/2}\sigma_i(\mat{W})$ for every
$i$; if $\mat{\Sigma}\succ 0$ then also
$\sigma_i(\mat{W}\mat{\Sigma}^{1/2}) \ge \sigma_{\min}(\mat{\Sigma})^{1/2}\sigma_i(\mat{W})$. Hence, for
$\mat{\Sigma}\succ 0$, $\rank(\mat{W}\mat{\Sigma}^{1/2}) = \rank(\mat{W})$, the eigenvalues of $\mat{C}$ are sandwiched
between $\lambda_{\min}(\mat{\Sigma})\sigma_i(\mat{W})^2$ and
$\lambda_{\max}(\mat{\Sigma})\sigma_i(\mat{W})^2$, and for every $\varepsilon>0$ the thresholded ranks
obey
$\rank_{\varepsilon/\sigma_{\min}(\mat{\Sigma})^{1/2}}(\mat{W}) \le
 \rank_\varepsilon(\mat{W}\mat{\Sigma}^{1/2}) \le
 \rank_{\varepsilon/\sigma_{\max}(\mat{\Sigma})^{1/2}}(\mat{W})$.
\end{lemma}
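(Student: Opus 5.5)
The plan is to derive everything from the multiplicative singular-value inequalities $\sigma_i(\mat{A}\mat{B}) \le \sigma_i(\mat{A})\,\sigma_{\max}(\mat{B})$ and, for invertible $\mat{B}$, $\sigma_i(\mat{A}\mat{B}) \ge \sigma_i(\mat{A})\,\sigma_{\min}(\mat{B})$ \citep{hornTopicsMatrixAnalysis1991}. These follow from the Courant--Fischer min--max characterisation $\sigma_i(\mat{M}) = \max_{\dim S = i}\min_{x\in S,\|x\|=1}\|\mat{M}^\top x\|$. Concretely, we apply it to $(\mat{W}\mat{B})^\top = \mat{B}^\top\mat{W}^\top$ and use $\sigma_{\min}(\mat{B})\|y\| \le \|\mat{B}^\top y\| \le \sigma_{\max}(\mat{B})\|y\|$ pointwise. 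We take $\mat{B} = \mat{\Sigma}^{1/2}$, which is symmetric positive semi-definite with $\sigma_{\max}(\mat{\Sigma}^{1/2}) = \sigma_{\max}(\mat{\Sigma})^{1/2}$ and, when $\mat{\Sigma}\succ 0$, $\sigma_{\min}(\mat{\Sigma}^{1/2}) = \sigma_{\min}(\mat{\Sigma})^{1/2} > 0$.

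The upper bound needs no invertibility. For each unit $x$, $\|\mat{\Sigma}^{1/2}\mat{W}^\top x\| \le \sigma_{\max}(\mat{\Sigma})^{1/2}\|\mat{W}^\top x\|$. Taking min over unit $x$ in a subspace $S$ and then max over $i$-dimensional $S$ preserves the inequality, which gives $\sigma_i(\mat{W}\mat{\Sigma}^{1/2}) \le \sigma_{\max}(\mat{\Sigma})^{1/2}\sigma_i(\mat{W})$. The lower bound is the same argument with the reverse pointwise inequality $\|\mat{\Sigma}^{1/2}y\| \ge \sigma_{\min}(\mat{\Sigma})^{1/2}\|y\|$, valid for $\mat{\Sigma}\succ 0$.

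The corollaries are then bookkeeping:
\begin{itemize}
\item \emph{Rank.} With both constants strictly positive, $\sigma_i(\mat{W}\mat{\Sigma}^{1/2}) = 0$ iff $\sigma_i(\mat{W}) = 0$, so the two ranks agree.
\item \emph{Covariance eigenvalues.} $\mat{C} = (\mat{W}\mat{\Sigma}^{1/2})(\mat{W}\mat{\Sigma}^{1/2})^\top$, so $\lambda_i(\mat{C}) = \sigma_i(\mat{W}\mat{\Sigma}^{1/2})^2$. Squaring the two-sided bound gives the sandwich, using $\lambda_{\min/\max}(\mat{\Sigma}) = \sigma_{\min/\max}(\mat{\Sigma})$ for a positive semi-definite matrix.
\item \emph{Thresholded ranks.} Since singular values are ordered, $\rank_\varepsilon(\mat{M}) = \#\{i : \sigma_i(\mat{M}) > \varepsilon\}$. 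If $\sigma_i(\mat{W}) > \varepsilon/\sigma_{\min}(\mat{\Sigma})^{1/2}$, the lower bound gives $\sigma_i(\mat{W}\mat{\Sigma}^{1/2}) > \varepsilon$, which is the left inclusion. If $\sigma_i(\mat{W}\mat{\Sigma}^{1/2}) > \varepsilon$, the upper bound gives $\sigma_i(\mat{W}) > \varepsilon/\sigma_{\max}(\mat{\Sigma})^{1/2}$, which is the right inclusion. In the degenerate case $\sigma_{\max}(\mat{\Sigma}) = 0$ the middle quantity is zero and the right-hand side is read as $\rank_\infty = 0$.
\end{itemize}

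The only step requiring any care is stating the min--max characterisation in a form that handles rectangular $d\times N$ matrices with $i$ up to $\min(d,N)$. Working with the transpose $\mat{W}^\top$ acting on $\R^d$ keeps the dimension indexing consistent on both sides. Alternatively, one can cite the product inequality $\sigma_{i+j-1}(\mat{A}\mat{B}) \le \sigma_i(\mat{A})\sigma_j(\mat{B})$ with $j=1$ directly, and obtain the lower bound by applying it to $\mat{W} = (\mat{W}\mat{\Sigma}^{1/2})\mat{\Sigma}^{-1/2}$.
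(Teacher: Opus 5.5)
Your proposal is correct and is essentially the paper's proof. The paper cites $\sigma_{i+j-1}(\mat{A}\mat{B})\le\sigma_i(\mat{A})\sigma_j(\mat{B})$ at $j=1$ for the upper bound, applies it to $\mat{W}=(\mat{W}\mat{\Sigma}^{1/2})\mat{\Sigma}^{-1/2}$ for the lower bound (your closing alternative), and reads the consequences off index by index. Your Courant--Fischer argument just proves that $j=1$ case inline, since your pointwise bound $\|\mat{\Sigma}^{1/2}y\|\ge\sigma_{\min}(\mat{\Sigma})^{1/2}\|y\|$ is equivalent to $\sigma_{\max}(\mat{\Sigma}^{-1/2})=\sigma_{\min}(\mat{\Sigma})^{-1/2}$, so it adds self-containment but nothing else.
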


\begin{proof}
The upper bound is the singular-value product inequality
$\sigma_{i+j-1}(\mat{A}\mat{B})\le\sigma_i(\mat{A})\sigma_j(\mat{B})$ at $j=1$ \citep{hornTopicsMatrixAnalysis1991}. The lower bound applies
the same inequality to $\mat{W} = (\mat{W}\mat{\Sigma}^{1/2})\mat{\Sigma}^{-1/2}$. The
three consequences follow index by index.
\end{proof}

The two constants differ by $\kappa(\mat{\Sigma})^{1/2}$, with
$\kappa(\mat{A}) := \sigma_{\max}(\mat{A})/\sigma_{\min}(\mat{A})$. This ratio enters only the head-to-tail
ratio of the theorem below, never $r_{\mathit{tight}}$.

\subsubsection{The budget, at the embedding level}

Let $m := \min(d,N)$, write $r := r_{\mathit{tight}}$ with $1 \le r < m$, let $q(t)$ be the fraction
of the singular-value mass of $\mat{W}(t)\mat{\Sigma}^{1/2}$ carried by the indices beyond $r$, and
set $\psi(q) := q\bigl[\log(1/q) + 1 + \log(m-r)\bigr]$.

\begin{lemma}[Entropy grouping]
\label{lem:entropy-grouping}
Let $s_1 \ge \dots \ge s_m \ge 0$, not all zero, $p_i = s_i/\sum_j s_j$, and
$q := \sum_{i>r} p_i \in [0,1)$ for some $1 \le r < m$. Then
$\Erank = \exp\bigl(-\sum_i p_i\log p_i\bigr) \le r\,e^{\psi(q)}$, and $\psi$ is increasing on
$(0,1)$ with $\psi(q)\to 0$ as $q \downarrow 0$.
\end{lemma}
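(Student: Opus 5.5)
The plan is to reuse the entropy-grouping argument already given in the proof of Theorem~\ref{thm:erank_finite_time}. We split the normalised distribution $p$ into a head block $\{1,\dots,r\}$ of mass $1-q$ and a tail block $\{r+1,\dots,m\}$ of mass $q$. If $q=0$ the tail is empty, $p$ lives on at most $r$ atoms, and $\Erank\le r = r\,e^{\psi(0)}$ with the convention $\psi(0)=0$. So we may take $q\in(0,1)$; note $q<1$ is automatic, because the $s_i$ are non-increasing and not all zero, so $s_1>0$.

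For $q\in(0,1)$, the chain rule for entropy \citep{coverElementsInformationTheory2006} gives
\[
H(p)=H_b(q)+(1-q)H(p^{\mathrm{head}})+q\,H(p^{\mathrm{tail}}).
\]
Here $p^{\mathrm{head}}$ and $p^{\mathrm{tail}}$ are the conditional distributions on the two blocks. By maximum entropy, $H(p^{\mathrm{head}})\le\log r$ and $H(p^{\mathrm{tail}})\le\log(m-r)$. Hence
\[
H(p)\le \log r + H_b(q) + q\bigl(\log(m-r)-\log r\bigr)\le \log r + H_b(q)+q\log(m-r),
\]
where the last step drops $-q\log r\le 0$ since $r\ge1$. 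This cruder bookkeeping is what makes $\log(m-r)$ rather than $\log\frac{m-r}{r}$ appear in $\psi$.

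It then suffices to bound the binary entropy as $H_b(q)\le q(1+\log(1/q))$. This follows from $-(1-q)\log(1-q)\le q$, which in turn follows from $\log(1-q)\ge -q/(1-q)$. Substituting gives $H(p)\le\log r+\psi(q)$, and exponentiating gives $\Erank\le r\,e^{\psi(q)}$.

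For monotonicity we compute $\psi'(q)=\log(1/q)+\log(m-r)$. We have $\log(1/q)>0$ on $(0,1)$ and $\log(m-r)\ge 0$ because $m-r\ge1$, so $\psi'>0$ and $\psi$ is increasing on $(0,1)$. The limit $\psi(q)\to0$ as $q\downarrow0$ follows from $q\log(1/q)\to0$.

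I expect no real obstacle. The only care needed is the degenerate cases: $q=0$, handled by the convention above, and a tail containing zero masses, which is harmless since $0\log0=0$ and the maximum-entropy bound still holds. One should also confirm that $\log(m-r)\ge0$, so that dropping the $-q\log r$ term keeps the bound valid.
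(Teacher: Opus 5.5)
Your proposal is correct and follows the paper's proof step for step: the head/tail chain rule, the uniform bounds $\log r$ and $\log(m-r)$ on the two blocks, $-(1-q)\log(1-q)\le q$ for the binary entropy, dropping $-q\log r$ via $r\ge1$, and $\psi'(q)=\log\bigl((m-r)/q\bigr)>0$; your explicit handling of $q=0$ and of why $q<1$ is a small addition the paper leaves implicit. One slip in your closing remark: dropping $-q\log r$ needs only $\log r\ge 0$, whereas $\log(m-r)\ge 0$ is what the monotonicity step uses.
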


\begin{proof}
Group the indices into the head $\{1,\dots,r\}$, of mass $1-q$, and the tail, of mass $q$. The
chain rule for entropy gives $H(p) = h(q) + (1-q)H_{\mathrm{head}} + qH_{\mathrm{tail}}$, with $h$ the
binary entropy and $H_{\mathrm{head}}, H_{\mathrm{tail}}$ the entropies of the renormalised head and
tail. Bounding each by the uniform value gives $H(p) \le h(q) + (1-q)\log r + q\log(m-r)$, hence
$\Erank \le r\exp\bigl(h(q) + q\log\tfrac{m-r}{r}\bigr)$. Since $\log(1+x)\le x$ at
$x = q/(1-q)$, $-(1-q)\log(1-q) \le q$, so $h(q) \le q\log(1/q) + q$; and
$q\log\tfrac{m-r}{r} \le q\log(m-r)$ because $r \ge 1$. Finally
$\psi'(q) = \log\bigl((m-r)/q\bigr) > 0$ for $q < 1 \le m-r$.
\end{proof}

\begin{theorem}[Effective rank obeys the budget past an explicit time]
\label{thm:erank-budget}
Under the exact form of Assumption~\ref{ass:spectral_decoupling}, in the escape regime $\mathcal{U} \ne \emptyset$ with a
generic initialisation, with no critical or defective mode ($\mu_{k\ell}\ne 1$ and
$\Delta_{k\ell}\ne 0$ for all $(k,\ell)$), and with $\mat{\Sigma}\succ 0$ and $r < m$, the constants
\[
D := \frac{2(m-r)\,\kappa(\mat{\Sigma})^{1/2}\,\kappa_\star\,\|\vect{z}(0)\|}{\alpha_\star},
\qquad
T_1 := \max\bigl(T_0, \log(2D)/(\lambda^+_{\max}+\eta)\bigr),
\]
computable at initialisation, with the conditioning $\kappa_\star$ of the mode blocks, the
amplitude $\alpha_\star$ of the initial state along the fastest escaping mode and the time $T_0$
defined in the proof, are such that
\begin{equation}
q(t) \le D\,e^{-(\lambda^+_{\max}+\eta)t} \quad (t \ge T_0)
\qquad\text{and}\qquad
\Erank\bigl(\mat{W}(t)\mat{\Sigma}^{1/2}\bigr) \le r\, e^{\psi(q(t))} \quad (t \ge T_1),
\end{equation}
so $\limsup_{t\to\infty}\Erank \le r_{\mathit{tight}}$, with excess
$\psi(q(t)) \le C'(1+t)e^{-(\lambda^+_{\max}+\eta)t}$.
\end{theorem}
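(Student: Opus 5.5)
The plan follows the proof of Theorem~\ref{thm:erank_finite_time}, with Lemma~\ref{lem:transfer} inserted to move from $\mat W(t)$ to $\mat W(t)\mat\Sigma^{1/2}$. First, I split $\mat W(t) = \mat W_+(t) + \mat W_-(t)$ as in Lemma~\ref{lem:saddle_split} and Eq.~\eqref{eq:Wplus_exact}. Then I right-multiply by $\mat\Sigma^{1/2}$, so that $\mat W\mat\Sigma^{1/2} = \mat W_+\mat\Sigma^{1/2} + \mat W_-\mat\Sigma^{1/2}$. By Theorem~\ref{thm:rank_all_times}, the first term has rank at most $r$, since right multiplication cannot increase rank. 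The second has Frobenius norm at most $\sigma_{\max}(\mat\Sigma)^{1/2}\bar K Z_0 e^{-\eta t}$, by Theorem~\ref{thm:thresholded_rank} and Lemma~\ref{lem:transfer}. I take $\kappa_\star := \bar K$ and $\|\vect z(0)\| = Z_0$. The Weyl-type inequality $\sigma_{i+j-1}(\mat A+\mat B)\le\sigma_i(\mat A)+\sigma_j(\mat B)$ then bounds every singular value of index beyond $r$ by that quantity. This gives a tail mass $\sum_{i>r}\sigma_i(\mat W\mat\Sigma^{1/2}) \le (m-r)\lambda_{\max}(\mat\Sigma)^{1/2}\bar K Z_0 e^{-\eta t}$.

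Second, the head. I pick the fastest escaping mode $(k^\ast,\ell^\ast)$ with $\lambda^+_{\max} = \lambda^+_{k^\ast\ell^\ast}$. Genericity of the initialisation gives a nonzero amplitude $\alpha_\star := |a_{k^\ast\ell^\ast}|$ from Eq.~\eqref{eq:alphaplus}; its vanishing set is a proper linear subspace. I then lower-bound $\sigma_1(\mat W\mat\Sigma^{1/2}) \ge \lambda_{\min}(\mat\Sigma)^{1/2}\sigma_1(\mat W)$ by the lower half of Lemma~\ref{lem:transfer}. Next, $\sigma_1(\mat W)\ge |\vect v_{\ell^\ast}^\top\mat W\vect u_{k^\ast}| \ge \alpha_\star e^{\lambda^+_{\max}t} - \bar K Z_0 e^{-\eta t}$. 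I define $T_0$ as the first time the error term is at most half the leading term, namely $T_0 := \max\{0,\log(2\bar K Z_0/\alpha_\star)/(\lambda^+_{\max}+\eta)\}$. For $t\ge T_0$ this yields $\sigma_1 \ge \tfrac12\lambda_{\min}(\mat\Sigma)^{1/2}\alpha_\star e^{\lambda^+_{\max}t}$. Dividing the tail bound by this head bound gives $q(t)\le 2(m-r)\kappa(\mat\Sigma)^{1/2}\bar K Z_0\alpha_\star^{-1}e^{-(\lambda^+_{\max}+\eta)t} = D e^{-(\lambda^+_{\max}+\eta)t}$. Here I use that the total mass is at least $\sigma_1$. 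This is the only place the ratio $\kappa(\mat\Sigma)^{1/2}$ enters, and it does not touch $r$.

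Third, I apply Lemma~\ref{lem:entropy-grouping} to the singular values of $\mat W(t)\mat\Sigma^{1/2}$. That lemma needs $q<1$ and benefits from $q$ being small. Since $\psi$ is increasing, the condition $t\ge T_1$, with $T_1 := \max(T_0,\log(2D)/(\lambda^+_{\max}+\eta))$, ensures $q(t)\le D e^{-(\lambda^+_{\max}+\eta)t} \le \tfrac12$. The bound $\Erank\le r e^{\psi(q(t))}$ then follows, and $\psi(q)\le\psi(De^{-bt})$ with $b=\lambda^+_{\max}+\eta$. Expanding $\psi(De^{-bt}) = De^{-bt}[bt - \log D + 1 + \log(m-r)]$ gives the $C'(1+t)e^{-bt}$ excess and hence the $\limsup$.

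The main obstacle is the head bound. Its tension is that Lemma~\ref{lem:transfer} requires $\mat\Sigma\succ0$ and that the escaping amplitude be nonzero. I also have to check carefully that the $\mat W$-coordinate of the unstable eigenvector is normalised to one, as in Lemma~\ref{lem:saddle_split}, so that $\alpha_\star$ is exactly $|a_{k^\ast\ell^\ast}|$. Everything else reuses Theorems~\ref{thm:rank_all_times}--\ref{thm:erank_finite_time} essentially verbatim.
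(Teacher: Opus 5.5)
Your proposal is correct and matches the paper's proof step for step. The steps are: the stable/unstable split, the Weyl bound $\varsigma_{r+1}\le\sigma_{\max}(\mat{\Sigma})^{1/2}\|\mat{W}_-(t)\|_2$ for the tail, the lower bound of Lemma~\ref{lem:transfer} combined with the fastest-mode coordinate for the head, the same closed-form $T_0$, and Lemma~\ref{lem:entropy-grouping} with monotonicity of $\psi$ once $t\ge T_1$. The only difference is that you bound $\|\mat{W}_-(t)\|_F$ with the transient-growth constant $\bar K$ of Theorem~\ref{thm:thresholded_rank}, while the paper uses $\kappa_\star=\max_{k\ell}\kappa(\mat{T}_{k\ell})$, the conditioning of unit-norm eigenvector matrices; either constant yields a valid initialisation-computable $D$.
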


\begin{proof}
Recall from Step~5 of this appendix the split $\mat{W}(t) = \mat{W}_+(t) + \mat{W}_-(t)$ along
the unstable and stable invariant subspaces, and $\eta$, the slowest stable rate of
Eq.~\eqref{eq:eta_slowest}, which is positive because no mode is critical. Let
$\mat{T}_{k\ell}$ be the matrix of unit-norm eigenvectors of $\mat{J}_{k\ell}$ in
Eq.~\eqref{eq:scalar-mode}, which exists because $\Delta_{k\ell}\ne 0$, and set
$\kappa_\star := \max_{k,\ell}\kappa(\mat{T}_{k\ell})$. Let $\lambda^+_{\max} := \max_{\mathcal{U}}\lambda^+_{k\ell}$,
attained at $(k_\star,\ell_\star)$, and $\alpha_\star := |\alpha_{k_\star\ell_\star}|$ with
$\alpha_{k\ell}$ the amplitude above. Generic initialisation means
$\alpha_\star \ne 0$, which fails only on a hyperplane of initial states. Write
$\varsigma_i := \sigma_i(\mat{W}(t)\mat{\Sigma}^{1/2})$ and $c := \lambda^+_{\max}+\eta$.

First, the stable part. On a saddle block the stable component evolves as
$e^{\lambda^-_{k\ell}t}\mat{\Pi}^-_{k\ell}\vect{z}_{k\ell}(0)$, with $\mat{\Pi}^-_{k\ell}$ the spectral projector onto
the stable eigenvector and $\|\mat{\Pi}^-_{k\ell}\|_2 \le \kappa(\mat{T}_{k\ell})$, and on a
stable block $\|e^{\mat{J}_{k\ell}t}\|_2 \le \kappa(\mat{T}_{k\ell})\max_\pm e^{\mathrm{Re}\,\lambda^\pm_{k\ell}t}$.
Every stable rate is at most $-\eta$, and the rotation to $\widetilde{\mat{W}}$ is orthogonal, so
summing the blocks gives $\|\mat{W}_-(t)\|_2 \le \|\mat{W}_-(t)\|_F \le \kappa_\star e^{-\eta t}\|\vect{z}(0)\|$.

Second, the tail. By Corollary~\ref{cor:rank-control}, $\rank(\mat{W}_+(t)\mat{\Sigma}^{1/2}) \le r$, so
$\sigma_{r+1}(\mat{W}_+\mat{\Sigma}^{1/2}) = 0$. The additive inequality
$\sigma_{i+j-1}(\mat{A}+\mat{B}) \le \sigma_i(\mat{A}) + \sigma_j(\mat{B})$ at $i=r+1$, $j=1$ then
gives $\varsigma_{r+1} \le \sigma_{\max}(\mat{\Sigma})^{1/2}\|\mat{W}_-(t)\|_2 \le
\sigma_{\max}(\mat{\Sigma})^{1/2}\kappa_\star\|\vect{z}(0)\|e^{-\eta t}$.

Third, the head. Since $\{\vect{v}_\ell\vect{u}_k^\top\}$ is orthonormal and the $\mat{W}$-component of
$\vect{\xi}^+_{k\ell}$ is one, $\vect{v}_{\ell_\star}^\top\mat{W}_+(t)\vect{u}_{k_\star} =
\alpha_{k_\star\ell_\star}e^{\lambda^+_{\max}t}$, so $\sigma_1(\mat{W}_+(t)) \ge \alpha_\star e^{\lambda^+_{\max}t}$.
The same additive inequality and the lower bound of Lemma~\ref{lem:transfer} give
$\varsigma_1 \ge \sigma_{\min}(\mat{\Sigma})^{1/2}\bigl(\alpha_\star e^{\lambda^+_{\max}t} - \kappa_\star\|\vect{z}(0)\|e^{-\eta t}\bigr)$.

Finally, set $T_0 := \max\bigl(0, \log(2\kappa_\star\|\vect{z}(0)\|/\alpha_\star)/c\bigr)$. For
$t \ge T_0$ the stable term is at most half the head term, so
$\varsigma_1 \ge \sigma_{\min}(\mat{\Sigma})^{1/2}\tfrac{\alpha_\star}{2}e^{\lambda^+_{\max}t} > 0$. At most
$m-r$ tail values are non-zero and the total mass is at least $\varsigma_1$, so
$q(t) \le (m-r)\varsigma_{r+1}/\varsigma_1 \le D e^{-ct}$ with $D$ as stated.
For $t \ge T_1$ this gives $q(t) \le \tfrac12$, and Lemma~\ref{lem:entropy-grouping} yields
$\Erank \le r\,e^{\psi(q(t))}$. Because $\psi$ is increasing on $(0,1)$,
$\psi(q(t)) \le \psi(De^{-ct}) = De^{-ct}\bigl[\log(1/D) + ct + 1 + \log(m-r)\bigr] \le C'(1+t)e^{-ct}$
with $C' := D\bigl[1 + |\log D| + \log(m-r) + c\bigr]$, and the $\limsup$ follows.
\end{proof}

The additive form of Theorem~\ref{thm:erank_finite_time} needs the extra condition $\varphi(\bar\theta(t)) \le 1$;
the multiplicative form here does not.

\begin{corollary}[The quantity the experiments measure]
\label{cor:erank-readout}
Fix an evaluation set of $n$ inputs, the same at every $t$, stacked as the columns of
$\mat{X}\in\R^{N\times n}$, and let $\mat{Z}(t) = \mat{W}(t)\mat{X}\in\R^{d\times n}$ and
$\hat{\mat{\Sigma}} := \tfrac1n\mat{X}\mat{X}^\top$. Then
$\sigma_i(\mat{Z}) = \sqrt{n}\,\sigma_i(\mat{W}\hat{\mat{\Sigma}}^{1/2})$ and
$\Erank(\mat{Z}(t)) = \Erank(\mat{W}(t)\hat{\mat{\Sigma}}^{1/2})$. If $\mat{X}$ has rank $N$ (so $n \ge N$),
$\hat{\mat{\Sigma}}\succ 0$ and Theorem~\ref{thm:erank-budget} holds with $\hat{\mat{\Sigma}}$ in place of
$\mat{\Sigma}$; no compatibility of $\hat{\mat{\Sigma}}$ with Assumption~\ref{ass:spectral_decoupling} is
needed.
\end{corollary}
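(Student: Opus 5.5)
The corollary has three parts: two identities and a transfer of Theorem~\ref{thm:erank-budget}. I will prove them in that order.

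\emph{Step 1: the singular-value identity.} Write $\mat{Z} = \mat{W}\mat{X}$ and compute the Gram matrix $\mat{Z}\mat{Z}^\top = \mat{W}\mat{X}\mat{X}^\top\mat{W}^\top = n\,\mat{W}\hat{\mat{\Sigma}}\mat{W}^\top$. Since $\hat{\mat{\Sigma}}$ is positive semi-definite, it has a symmetric square root, and
\[
\mat{W}\hat{\mat{\Sigma}}\mat{W}^\top = (\mat{W}\hat{\mat{\Sigma}}^{1/2})(\mat{W}\hat{\mat{\Sigma}}^{1/2})^\top .
\]
The nonzero eigenvalues of $\mat{Z}\mat{Z}^\top$ are therefore $n$ times those of $(\mat{W}\hat{\mat{\Sigma}}^{1/2})(\mat{W}\hat{\mat{\Sigma}}^{1/2})^\top$. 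Taking square roots gives $\sigma_i(\mat{Z}) = \sqrt{n}\,\sigma_i(\mat{W}\hat{\mat{\Sigma}}^{1/2})$ for every $i$. Both matrices have $d$ rows, and padding with zero singular values does not affect either list, so the indexing agrees.

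\emph{Step 2: the effective-rank identity.} Effective rank depends only on the normalised spectrum $p_i = \sigma_i/\sum_j\sigma_j$, and zero singular values contribute nothing to the entropy. Multiplying every singular value by the common factor $\sqrt{n}$ leaves each $p_i$ unchanged. Hence $\Erank(\mat{Z}(t)) = \Erank(\mat{W}(t)\hat{\mat{\Sigma}}^{1/2})$ holds for every $t$.

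\emph{Step 3: transfer of Theorem~\ref{thm:erank-budget}.} If $\rank\mat{X} = N$, then $\mat{X}\mat{X}^\top$ is positive definite, so $\hat{\mat{\Sigma}} \succ 0$. I then check that the proof of Theorem~\ref{thm:erank-budget} uses its readout matrix $\mat{\Sigma}$ only in three ways:
\begin{itemize}
\item through Lemma~\ref{lem:transfer}, which holds for any positive definite matrix;
\item through the constant $\kappa(\mat{\Sigma})^{1/2}$ in $D$;
\item through the rank bound $\rank(\mat{W}_+\mat{\Sigma}^{1/2}) \le r$, which follows from Corollary~\ref{cor:rank-control}.
\end{itemize}
The decoupling assumption enters only through the flow: $\mat{\Sigma}_{11}$, $\mat{\Sigma}_{X1}$ and the predictor operators, which determine $\mathcal{U}$, $\eta$, $\kappa_\star$ and $\alpha_\star$. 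The readout matrix never has to share the data eigenbasis; the paper already notes that the per-mode computation needing that is not used in the transfer. Substituting $\hat{\mat{\Sigma}}$ for $\mat{\Sigma}$ therefore yields the same conclusions, with $D$ computed using $\kappa(\hat{\mat{\Sigma}})$. Combined with Step 2, this gives the bound on $\Erank(\mat{Z}(t))$.

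The main point requiring care is Step 3: auditing that no step of that proof secretly requires $\mat{\Sigma}$ to commute with $\mat{\Sigma}_{11}$. The fixed evaluation set matters for the same reason: $\hat{\mat{\Sigma}}$ must be a constant matrix, independent of $t$, so that the time-dependent bounds apply.
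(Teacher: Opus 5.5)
Your proposal is correct and follows the paper's proof. The Gram identity $\mat{Z}\mat{Z}^\top = n\,\mat{W}\hat{\mat{\Sigma}}\mat{W}^\top$ gives the singular values, invariance of $\Erank$ under the factor $\sqrt{n}$ and under zero padding gives the second identity, and the transfer rests on the proof of Theorem~\ref{thm:erank-budget} using the weighting matrix only as a fixed positive definite matrix; your three-item audit makes that explicit where the paper states it in one line. The one step the paper spells out that you only assert is $\hat{\mat{\Sigma}} \succ 0$. It follows from $\vect{v}^\top\hat{\mat{\Sigma}}\vect{v} = \tfrac1n\|\mat{X}^\top\vect{v}\|^2$, which vanishes for some $\vect{v}\neq\vect{0}$ exactly when $\rank(\mat{X}) < N$.
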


\begin{proof}
$\mat{Z}\mat{Z}^\top = n\,\mat{W}\hat{\mat{\Sigma}}\mat{W}^\top$ gives the
singular-value identity, and the effective rank is invariant under the factor $\sqrt{n}$ and under
zero padding of the spectrum. $\vect{v}^\top\hat{\mat{\Sigma}}\vect{v} = \tfrac1n\|\mat{X}^\top\vect{v}\|^2$
vanishes for some $\vect{v}\ne\vect{0}$ exactly when $\rank(\mat{X}) < N$. The proof of
Theorem~\ref{thm:erank-budget} uses only that the weighting matrix is fixed and positive definite,
so it applies with $\hat{\mat{\Sigma}}$.
\end{proof}

\section{Choice of the Diagonal-Attention Bias $\beta_{\text{init}}$}
\label{app:beta_choice}

\texttt{ResidualPred} (Section~\ref{sec:residual_predictor}) introduces a single new hyperparameter, the diagonal-attention bias $\beta_{\text{init}}$, applied at initialisation as
\begin{equation}
\mathrm{Attention}(\mat{Q}, \mat{K}, \mat{V}) \;=\; \mathrm{Softmax}\!\left( \frac{\mat{Q} \mat{K}^\top}{\sqrt{d}} \;+\; \beta \cdot \mat{B} \right) \mat{V}\,.
\label{eq:diagonal_bias_appendix}
\end{equation}

\begin{figure}[t]
    \centering
    \begin{subfigure}[b]{0.49\linewidth}
        \centering
        \includegraphics[width=\linewidth]{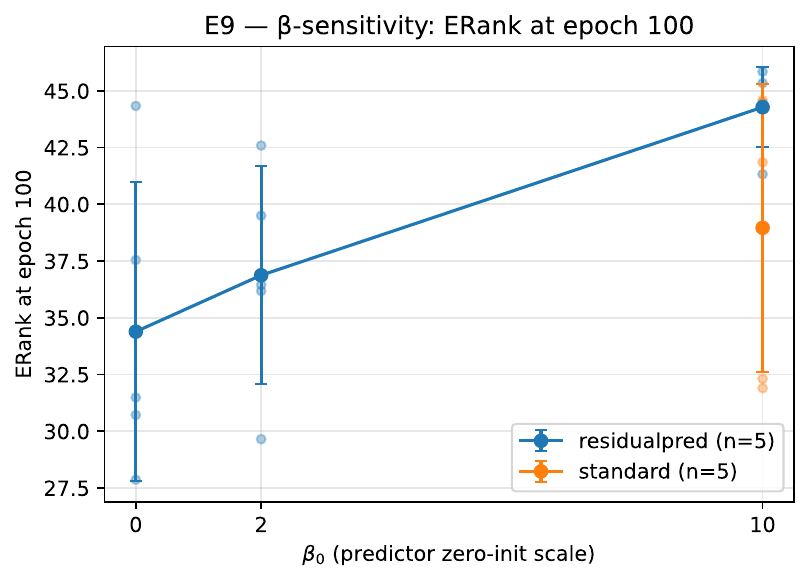}
        \caption{ERank vs.\ $\beta_{\text{init}}$.}
        \label{fig:beta_erank}
    \end{subfigure}\hfill
    \begin{subfigure}[b]{0.49\linewidth}
        \centering
        \includegraphics[width=\linewidth]{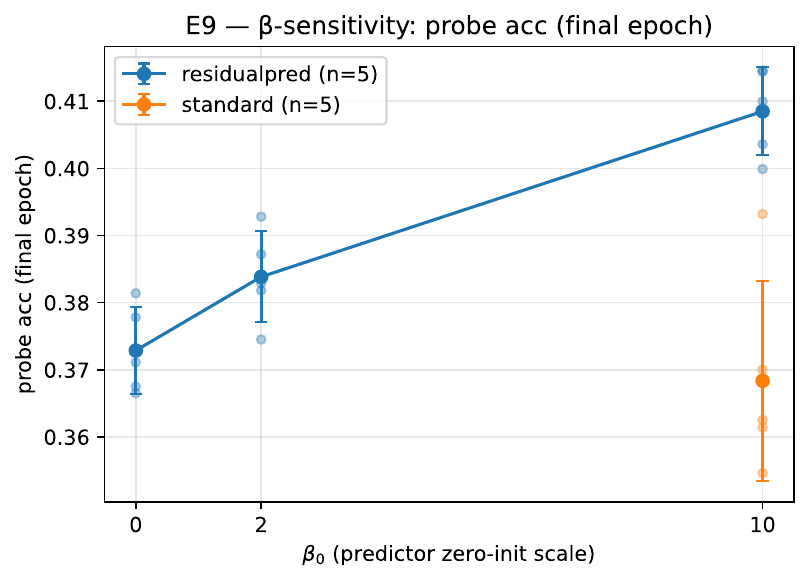}
        \caption{Linear-probe accuracy vs.\ $\beta_{\text{init}}$.}
        \label{fig:beta_probe}
    \end{subfigure}
    \caption{\small\textbf{Monotone effect of $\beta_{\text{init}}$ on ERank and downstream accuracy.}}
    \label{fig:beta_sensitivity}
\end{figure}

The image and tabular pipelines instantiate Eq.~\eqref{eq:diagonal_bias_appendix} differently due to their distinct sequence lengths and token structure.

\paragraph{Images: learnable per-head $\beta$, context-only identity.} Each head has a learnable $\beta_h$ initialised at $\beta_{\text{init}}$ (no weight decay), affecting only the initial bias. The matrix $\mat{B}$ is the identity on context tokens ($i = j \le n_{\text{ctx}}$) and zero elsewhere. Thus, context tokens are biased toward identity, while mask tokens use standard attention to read from context. Per-head $\beta$ captures ViT head specialisation, and restricting the bias to context ensures target tokens can be predicted from context at init.

\paragraph{Tabular: fixed shared $\beta$, full identity.} We use a fixed scalar $\beta = 10$ shared across heads. This simpler design is sufficient because: (i) small $N$ already yields near-saturated diagonals, leaving no benefit for learning $\beta$; (ii) tokens are homogeneous, so per-head variation is unnecessary. Both variants achieve the same objective (identity predictor, near-zero targets) via different mechanisms. The fixed $\beta$ also keeps the tabular sweep lightweight.

\subsection{Saturation criterion}
\label{app:beta_choice:saturation}

For a context-token row, the diagonal entry of the row-stochastic attention matrix at initialisation is
\begin{equation}
A_{ii}(\beta_{\text{init}}) \;=\; \frac{1}{1 + (n_{\text{ctx}}-1)\, e^{-\beta_{\text{init}}}}\,,
\label{eq:saturation}
\end{equation}
under the standard zero-mean assumption on the off-diagonal logits $\mat{Q}\mat{K}^\top / \sqrt{d}$ at initialisation. The threshold $\beta_{\text{init}} \approx \log n_{\text{ctx}} + 5$ guarantees $A_{ii} > 0.99$, i.e.\ a softmax operationally identical to the identity. With keep probability $p_{m_1} = 0.85$, the context-block size is $n_{\text{ctx}} \!\le\! 54$ on CIFAR-10/100 (ViT-S/4, $N = 64$) and $n_{\text{ctx}} \!\le\! 122$ on STL-10 (ViT-S/8, $N = 144$); for both, $\beta_{\text{init}} = 10$ saturates Eq.~\eqref{eq:saturation} above $0.997$.

\section{Zero-Init MLP Output: An Ablated Design Choice}
\label{app:zero_init}

The \texttt{ResidualPred} predictor of Section~\ref{sec:residual_predictor} adapts the identity-bias principle to the transformer architecture through a single structural ingredient: a diagonal-attention bias $\beta_{\text{init}}$ at initialisation. During the design of \texttt{ResidualPred} we considered a second, complementary candidate -- a zero-initialisation of the MLP output projection ($\mat{W}_{\text{fc2}} \!\leftarrow\! \mat{0}$) -- which extends the identity-bias intuition to the second residual sub-block of the transformer. This appendix documents the candidate design, the $2\times 2$ ablation that disentangles its effect from the diagonal-attention bias, and the reason why we ultimately retain only the diagonal bias in the main paper. The exposition is in two registers: the formula and theoretical motivation, and the empirical evidence that informs the design choice.

\subsection{Motivation: a natural extension of the identity-bias principle}
\label{app:zero_init:motivation}

In the linear analysis of Section~\ref{sec:theory}, the predictor enters the dynamics through $\mat{P} := \mat{W}_p^\top \mat{W}_p$ and $\mat{Q} := \mat{W}_p^\top \bar{\mat{M}}_2$, with the per-mode decay rate $\sigma_i$ controlled by the spectrum of $\mat{P}$. Initialising $\mat{W}_p \approx \mat{I} + \mat{\Delta}$ with $\|\mat{\Delta}\| \to 0$ places the predictor in the regime in which view alignment dominates the early dynamics.

A standard transformer-based predictor block (pre-LN convention) is composed of \emph{two} residual sub-blocks,
\begin{equation}
\mathrm{block}(\vect{x}) \;=\; \vect{x} \,+\, \mathrm{Attn}(\mathrm{LN}_1(\vect{x})) \,+\, \mathrm{MLP}\!\bigl(\mathrm{LN}_2(\vect{x} + \mathrm{Attn}(\mathrm{LN}_1(\vect{x})))\bigr)\,,
\label{eq:transformer_block_zi}
\end{equation}
so an exact $\mat{W}_p \approx \mat{I}$ at $t = 0$ would in principle require \emph{both} sub-blocks to behave as the identity. The diagonal-attention bias addresses the attention sub-block (Appendix~\ref{app:beta_choice}). Independently, one can address the MLP sub-block by zeroing its output projection at initialisation, which is the route followed by ReZero \citep{bachlechnerReZeroAllYou2021}, Fixup \citep{zhangFixupInitializationResidual2018}, SkipInit \citep{deBatchNormalizationBiases}, and IDInit \citep{panIDInitUniversalStable2025} for deep residual signal propagation. Whether this route adds measurable value when applied on top of the diagonal-attention bias \emph{specifically for collapse avoidance} is an empirical question we test below.

\subsection{The candidate formula}
\label{app:zero_init:formula}

In a standard transformer, the MLP sub-block is
\begin{equation}
\mathrm{MLP}(\vect{z}) \;=\; \mat{W}_{\text{fc2}}\, \phi\!\bigl(\mat{W}_{\text{fc1}}\, \vect{z} + \vect{b}_{\text{fc1}}\bigr) \,+\, \vect{b}_{\text{fc2}}\,,
\label{eq:mlp_definition_zi}
\end{equation}
with $\phi(\cdot) = \mathrm{GELU}(\cdot)$. The candidate `zero-init' modification is the structural choice
\begin{equation}
\boxed{\quad \mat{W}_{\text{fc2}} \;\leftarrow\; \mat{0}\,, \qquad \vect{b}_{\text{fc2}} \;\leftarrow\; \vect{0} \quad \text{at } t=0\,. \quad}
\label{eq:zero_init_formula}
\end{equation}
Substituting~\eqref{eq:zero_init_formula} into~\eqref{eq:mlp_definition_zi} gives $\mathrm{MLP}(\vect{z}) \equiv \mat{0}$ at $t = 0$ for every input $\vect{z}$, and therefore
\begin{equation}
\mathrm{block}(\vect{x}) \;=\; \vect{x} \,+\, \mathrm{Attn}(\mathrm{LN}_1(\vect{x})) \quad\text{at } t = 0\,.
\label{eq:zero_init_block}
\end{equation}

\paragraph{Scope.} Zero-init applies only to $(\mat{W}_{\text{fc2}}, \vect{b}_{\text{fc2}})$. The attention output projection $\mathrm{attn.proj}$ is left at its standard initialisation: zeroing it would sever the path that lets context tokens inform mask tokens through attention, which is essential for a sequence-to-sequence predictor. The candidate is therefore the appropriate analogue of the ReZero / Fixup recipe for an attention-based predictor.

\paragraph{Gradients are preserved.} Although the parameter is zero, $\partial \mathcal{L} / \partial \mat{W}_{\text{fc2}}$ is non-zero at the first optimisation step (it depends on the post-activation output and the upstream error signal, not on $\mat{W}_{\text{fc2}}$ itself), so the MLP becomes active after a single update. The candidate is a delayed-engagement scheme, not a frozen-parameter scheme.

\subsection{What zero-init does to the predictor block at \texorpdfstring{$t = 0$}{t=0}}
\label{app:zero_init:composition}

The remaining attention residual in~\eqref{eq:zero_init_block} depends on $\beta_{\text{init}}$. Let $\mat{A}(\beta_{\text{init}}) := \mathrm{Softmax}\bigl(\mat{Q}\mat{K}^\top / \sqrt{d} + \beta_{\text{init}} \mat{I}\bigr)$ denote the attention map. With per-token value projection $\mat{V}\,\mathrm{LN}_1(\vect{x})$ and output projection $\mat{O}$, the block under zero-init MLP becomes:
\begin{itemize}
\item \textbf{Diagonal-bias regime ($\beta_{\text{init}} \!\gg\! \log L$).} $\mat{A}(\beta_{\text{init}}) \to \mat{I}_{\text{token}}$ (Eq.~\eqref{eq:saturation} of Appendix~\ref{app:beta_choice}), so each token's output depends on its own input only:
\begin{equation}
\mathrm{block}(\vect{x})_i \;=\; \vect{x}_i \,+\, \mat{O}\,\mat{V}\,\mathrm{LN}_1(\vect{x})_i\,.
\label{eq:zi_diag_bias_block}
\end{equation}
A per-token linear transform of $\vect{x}_i$, full-rank in token space.
\item \textbf{Diffuse regime ($\beta_{\text{init}} = 0$).} $\mat{A}(\beta_{\text{init}}) \approx (1/L)\,\mat{1}\mat{1}^\top$ (uniform attention at random init), and each token's output depends on the average of all tokens' inputs only:
\begin{equation}
\mathrm{block}(\vect{x})_i \;=\; \vect{x}_i \,+\, \mat{O}\,\mat{V}\,\overline{\mathrm{LN}_1(\vect{x})}\,, \qquad \overline{\vect{z}} := \tfrac{1}{L} \sum_{j=1}^L \vect{z}_j\,.
\label{eq:zi_diffuse_block}
\end{equation}
The attention contribution is rank-one in token space: a single direction is broadcast to every token.
\end{itemize}

\paragraph{The decisive factor at $t = 0$ is the support of the predictor's deviation from identity.} View the predictor at $t=0$ as a linear map on the stacked token--feature space $\mathbb{R}^{LN}$. With zero-init MLP, the deviation from identity is the attention residual itself:
\begin{equation}
\mat{W}_p \;=\; \mat{I}_{LN} \,+\, \mat{T}\!\otimes\!(\mat{O}\mat{V})\,, \qquad
\mat{T} \;=\; \begin{cases}\mat{I}_L & \text{(diagonal-bias regime)} \\[2pt] (1/L)\,\mat{1}_L \mat{1}_L^\top & \text{(diffuse regime)} \end{cases}
\label{eq:zi_predictor_decomp}
\end{equation}
The two regimes differ in the \emph{token-space support} of $\mat T$: full rank in the diagonal-bias case, rank one in the diffuse case. Computing $\mat{P} = \mat{W}_p^\top \mat{W}_p = \mat{I}_{LN} + \mat{T} \otimes \bigl((\mat{O}\mat{V})^\top + \mat{O}\mat{V} + (\mat{O}\mat{V})^\top(\mat{O}\mat{V})\bigr)$ shows that $\mat P$ inherits this structure mode-by-mode:
\begin{itemize}
    \item In the diagonal-bias regime ($\mat T = \mat I_L$), every $(k,\ell)$ pair receives the predictor enhancement: $p_\ell$ takes the form $1 + \lambda_\ell(\mat S)$ with $\mat S = (\mat{O}\mat{V})^\top + \mat{O}\mat{V} + (\mat{O}\mat{V})^\top(\mat{O}\mat{V})$, on \emph{all} $LN$ modes. Likewise $q_\ell$ inherits a per-token enhancement on all modes from $\mat W_p^\top \bar{\mat M}_2$.
    \item In the diffuse regime ($\mat T = (1/L)\mat 1\mat 1^\top$, eigenvalues $\{1, 0, \dots, 0\}$), the predictor enhancement is supported only on the \emph{all-ones} direction in token space: $N$ of the $LN$ modes receive the full enhancement $1 + \lambda_\ell(\mat S)$, while the remaining $LN - N$ modes have $p_\ell = 1$ and $q_\ell$ at the trivial baseline given by $\bar{\mat M}_2$ alone.
\end{itemize}

The drive--decay ratio along these unaffected $LN - N$ modes therefore reduces to the data-side baseline $\mu_{k\ell} = (c_k/a_k)\cdot(q_\ell^{(0)}/p_\ell^{(0)})$ with $p_\ell^{(0)}=1$ and $q_\ell^{(0)}$ given by $\bar{\mat M}_2$. In other words, in the diffuse regime the predictor amplifies $\mu_{k\ell}$ on at most $N$ of the $LN$ token-feature modes; in the diagonal-bias regime it amplifies on all $LN$. Zero-init the MLP output suppresses an \emph{additional}, smaller deviation from identity on top of the attention residual~\eqref{eq:zi_predictor_decomp}, but does not change the support of the dominant attention deviation; in the diffuse regime, the attention contribution alone already determines which token-feature modes the predictor can lift above threshold. The theoretical prediction is therefore that zero-init alone (without the diagonal bias) leaves the bulk of the $LN$ modes at their unenhanced baseline, where the data-side ratio $c_k/a_k$ has to do all of the work; its expected effect on accuracy in the diffuse regime is small.

\subsection{Empirical results}
\label{app:zero_init:empirical}

We run the four cells of the grid on the I-JEPA CIFAR-100 benchmark with ViT-Small/4, $L = 64$, $p_{m_1} = 0.85$, at $100$ and $300$ epoch budgets.

\begin{table}[h]
\centering
\small
\setlength{\tabcolsep}{4pt}
\caption{\small\textbf{Linear-probe accuracy across the $2\times 2$ grid, CIFAR-100, ViT-Small/4, $p_{m_1}=0.85$.} Mean $\pm$ standard deviation over seeds. $\Delta$ and two-sided $p$-values are computed against the Standard cell. \texttt{ResidualPred} (the design retained in the main paper) and \texttt{ResidualPred + ZeroInit} are statistically indistinguishable in mean.}
\label{tab:zero_init_results}
\begin{tabular}{l | c c | c c}
\toprule
 & \multicolumn{2}{c|}{\textbf{100 epochs}} & \multicolumn{2}{c}{\textbf{300 epochs}} \\
Variant & Acc.\ (\%) & $\Delta$/$p$ vs Std & Acc.\ (\%) & $\Delta$/$p$ vs Std \\
\midrule
Standard                                        & $36.83 \pm 1.49$ & --            & $46.89 \pm 0.77$ & --            \\
ZeroInit only                                   & $37.29 \pm 0.65$ & $+0.46$ / $0.55$ & $47.20 \pm 0.80$ & $+0.32$ / $0.69$ \\
\rowcolor{rowgray}
\textbf{ResidualPred}\ (bias only) & $\mathbf{41.14 \pm 1.07}$ & $\mathbf{+4.31}$ / $\mathbf{0.011}$ & $\mathbf{50.36 \pm 0.53}$ & $\mathbf{+3.48}$ / $\mathbf{0.046}$ \\
ResidualPred $+$ ZeroInit                        & $40.85 \pm 0.66$ & $+4.02$ / $0.005$ & $49.62 \pm 0.13$ & $+2.73$ / $0.12$  \\
\bottomrule
\end{tabular}
\end{table}

\section{Adiabatic case}
\label{app:adiabatic}

\subsection{Derivation of Linear JEPA Dynamics (Expected Loss)}

We derive the gradients of the expected loss function $\mathcal{L}$ (Eq. \eqref{eq:loss_linear_expected}) with respect to $\mat{W}$ and $\mat{W}_p$.
\begin{equation*}
\mathcal{L} = \frac{1}{2} \mathbb{E} \left[ \| \mat{W}_p \mat{W} \vect{x}_1 - \texttt{sg}(\mat{M}_2 \mat{W}_a \vect{x}) \|_2^2 \right]
\end{equation*}

\textbf{Gradient with respect to $\mat{W}$:}

We need to compute $\nabla_{\mat{W}} \mathcal{L}$. We can move the gradient operator inside the expectation, as expectation is a linear operator:
$$ \nabla_{\mat{W}} \mathcal{L} = \mathbb{E} \left[ \nabla_{\mat{W}} \frac{1}{2} \| \mat{W}_p \mat{W} \vect{x}_1 - \vect{t} \|_2^2 \right] $$
Let the term inside the expectation be $L_s = \frac{1}{2} \| \mat{W}_p \mat{W} \vect{x}_1 - \vect{t} \|_2^2$. This is a standard least-squares type objective.
The gradient of $L_s$ with respect to $\mat{W}$ can be found using matrix calculus identities. Thus, the gradient is:
$$ \nabla_{\mat{W}} L_s = \mat{W}_p^\top (\mat{W}_p \mat{W} \vect{x}_1 - \vect{t}) \vect{x}_1^\top $$
Substituting this back into the expression for $\nabla_{\mat{W}} \mathcal{L}$ and replacing $\vect{t}$ with its definition:
\begin{align*}
\nabla_{\mat{W}} \mathcal{L} &= \mathbb{E} \left[ \mat{W}_p^\top (\mat{W}_p \mat{W} \vect{x}_1 - \mat{M}_2 \mat{W}_a \vect{x}) \vect{x}_1^\top \right] \\
&= \mathbb{E} \left[ \mat{W}_p^\top \mat{W}_p \mat{W} \vect{x}_1 \vect{x}_1^\top - \mat{W}_p^\top \mat{M}_2 \mat{W}_a \vect{x} \vect{x}_1^\top \right]
\end{align*}
Using the linearity of expectation and the definitions $\mat{\Sigma}_{11} = \mathbb{E}[\vect{x}_1 \vect{x}_1^\top]$ and $\mat{\Sigma}_{X1} = \mathbb{E}[\vect{x} \vect{x}_1^\top]$:
\begin{align*}
\nabla_{\mat{W}} \mathcal{L} &= \mat{W}_p^\top \mat{W}_p \mat{W} \mathbb{E}[\vect{x}_1 \vect{x}_1^\top] - \mathbb{E}[\mat{W}_p^\top \mat{M}_2 \mat{W}_a \vect{x} \vect{x}_1^\top] \\
&= \mat{W}_p^\top \mat{W}_p \mat{W} \mat{\Sigma}_{11} - \mathbb{E}[\mat{W}_p^\top \mat{M}_2 \mat{W}_a \vect{x} \vect{x}_1^\top] \quad (*)
\end{align*}

If we further assume that the masking strategy $\mat{M}_2$ is independent of $\vect{x}, \vect{x}_1$, let $\bar{\mat{M}}_2 = \mathbb{E}[\mat{M}_2]$, and assume $\mat{W}_a$ is deterministic (e.g., an Exponential Moving Average update, treated as constant within the expectation), the second term simplifies:
$$ \mathbb{E}[\mat{W}_p^\top \mat{M}_2 \mat{W}_a \vect{x} \vect{x}_1^\top] = \mat{W}_p^\top \mathbb{E}[\mat{M}_2] \mat{W}_a \mathbb{E}[\vect{x} \vect{x}_1^\top] = \mat{W}_p^\top \bar{\mat{M}}_2 \mat{W}_a \mat{\Sigma}_{X1} $$
Substituting this into $(*)$:
$$ \nabla_{\mat{W}} \mathcal{L} = \mat{W}_p^\top \mat{W}_p \mat{W} \mat{\Sigma}_{11} - \mat{W}_p^\top \bar{\mat{M}}_2 \mat{W}_a \mat{\Sigma}_{X1} $$

\textbf{Gradient with respect to $\mat{W}_p$:}
We find the gradient inside the expectation:
\begin{equation*}
\nabla_{\mat{W}_p} \| \mat{W}_p (\mat{W} \vect{x}_1) - (\mat{M}_2 \mat{W}_a \vect{x}) \|_2^2 = 2 (\mat{W}_p \mat{W} \vect{x}_1 - \mat{M}_2 \mat{W}_a \vect{x}) (\mat{W} \vect{x}_1)^\top
\end{equation*}
Taking the expectation:
\begin{align*}
\nabla_{\mat{W}_p} \mathcal{L} &= \frac{1}{2} \mathbb{E} \left[ 2 (\mat{W}_p \mat{W} \vect{x}_1 - \mat{M}_2 \mat{W}_a \vect{x}) (\mat{W} \vect{x}_1)^\top \right] \\
&= \mathbb{E} \left[ \mat{W}_p (\mat{W} \vect{x}_1) (\mat{W} \vect{x}_1)^\top - (\mat{M}_2 \mat{W}_a \vect{x}) (\mat{W} \vect{x}_1)^\top \right] \\
&= \mat{W}_p \mathbb{E} [ (\mat{W} \vect{x}_1) (\mat{W} \vect{x}_1)^\top ] - \mathbb{E} [ (\mat{M}_2 \mat{W}_a \vect{x}) (\mat{W} \vect{x}_1)^\top ] \\
&= \mat{W}_p (\mat{W} \mathbb{E}[\vect{x}_1 \vect{x}_1^\top] \mat{W}^\top) - \mathbb{E} [ \mat{M}_2 \mat{W}_a \vect{x} \vect{x}_1^\top \mat{W}^\top ] \\
&= \mat{W}_p (\mat{W} \mat{\Sigma}_{11} \mat{W}^\top) - \mathbb{E} [ \mat{M}_2 \mat{W}_a \mat{\Sigma}_{X1} \mat{W}^\top ]
\end{align*}
Assuming $\mat{M}_2$ is independent and $\mat{W}_a$ is fixed:
\begin{equation*}
\nabla_{\mat{W}_p} \mathcal{L} = \mat{W}_p (\mat{W} \mat{\Sigma}_{11} \mat{W}^\top) - \bar{\mat{M}}_2 \mat{W}_a \mat{\Sigma}_{X1} \mat{W}^\top
\end{equation*}

\subsection{Stationary Point Equation Derivation}
\label{app:stationary_derivation}

We derive the fixed-point equation for non-trivial stationary points $\mat{W}^* \neq \mat{0}$. Start from the stationary condition for $\mat{W}$, $\nabla_{\mat{W}} \mathcal{L} = \mat{0}$, using the simplified form:
\begin{equation*}
\mat{W}_p^\top \mat{W}_p \mat{W} \mat{\Sigma}_{11} - \mat{W}_p^\top \bar{\mat{M}}_2 \mat{W}_a \mat{\Sigma}_{X1} = \mat{0}
\end{equation*}
Substitute $\mat{W}_a = \tau \mat{W}$:
\begin{equation*}
\mat{W}_p^\top \mat{W}_p \mat{W} \mat{\Sigma}_{11} = \tau \mat{W}_p^\top \bar{\mat{M}}_2 \mat{W} \mat{\Sigma}_{X1}
\end{equation*}
Assume $\mat{W}_p$ is invertible. Left-multiply by $(\mat{W}_p^\top)^{-1} = (\mat{W}_p^{-1})^\top$:
\begin{equation*}
\mat{W}_p \mat{W} \mat{\Sigma}_{11} = \tau \bar{\mat{M}}_2 \mat{W} \mat{\Sigma}_{X1}
\end{equation*}
Assume $\mat{\Sigma}_{11}$ is invertible. Right-multiply by $\mat{\Sigma}_{11}^{-1}$:
\begin{equation*}
\mat{W}_p \mat{W} = \tau \bar{\mat{M}}_2 \mat{W} (\mat{\Sigma}_{X1} \mat{\Sigma}_{11}^{-1})
\end{equation*}
Define $\mat{A}_R = \mat{\Sigma}_{X1} \mat{\Sigma}_{11}^{-1} \in \mathbb{R}^{N \times N}$.
\begin{equation*}
\mat{W}_p \mat{W} = \tau \bar{\mat{M}}_2 \mat{W} \mat{A}_R
\end{equation*}
Assume $\mat{W}_p$ is invertible. Left-multiply by $\mat{W}_p^{-1}$:
\begin{equation*}
\mat{W} = \tau \mat{W}_p^{-1} \bar{\mat{M}}_2 \mat{W} \mat{A}_R
\end{equation*}
Define $\mat{A}_L = \mat{W}_p^{-1} \bar{\mat{M}}_2 \in \mathbb{R}^{d \times d}$.
\begin{equation} \label{eq:app_fixed_point}
\mat{W} = \tau \mat{A}_L \mat{W} \mat{A}_R
\end{equation}

To transform this into the eigenvalue problem, apply the vectorization operator $\vecop(\cdot)$:
\begin{equation*}
\vecop(\mat{W}) = \tau \vecop(\mat{A}_L \mat{W} \mat{A}_R)
\end{equation*}
Using the Kronecker product property $\vecop(\mat{A}\mat{X}\mat{B}) = (\mat{B}^\top \otimes \mat{A}) \vecop(\mat{X})$:
\begin{equation*}
\vecop(\mat{W}) = \tau (\mat{A}_R^\top \otimes \mat{A}_L) \vecop(\mat{W})
\end{equation*}
Rearranging yields the linear system for $\vecop(\mat{W})$:
\begin{equation*}
[\mat{I}_{dN} - \tau (\mat{A}_R^\top \otimes \mat{A}_L)] \vecop(\mat{W}) = \vect{0}
\end{equation*}
This system has non-trivial solutions $\vecop(\mat{W}) \neq \vect{0}$ if and only if the matrix $[\mat{I}_{dN} - \tau (\mat{A}_R^\top \otimes \mat{A}_L)]$ is singular. This occurs when $1$ is an eigenvalue of $\tau (\mat{A}_R^\top \otimes \mat{A}_L)$, or equivalently:
\begin{equation*}
\frac{1}{\tau} \in \Spec(\mat{A}_R^\top \otimes \mat{A}_L)
\end{equation*}
This confirms the condition stated in Theorem \ref{thm:stability} for the existence of non-trivial stationary points.

\textbf{Conditions for Real Eigenvalues:}
The eigenvalues $\lambda$ of $\mat{K} = \mat{A}_R^\top \otimes \mat{A}_L$ are, in general, complex numbers. They are the products of the eigenvalues of $\mat{A}_R^\top$ (which are the same as those of $\mat{A}_R$) and the eigenvalues of $\mat{A}_L$.
The eigenvalues of $\mat{K}$ are guaranteed to be real if both $\mat{A}_R$ and $\mat{A}_L$ are Hermitian (symmetric in the real case).
\begin{itemize}
    \item $\mat{A}_R = \mat{\Sigma}_{X1} \mat{\Sigma}_{11}^{-1} = \mathbb{E}[\vect{x} \vect{x}_1^\top] (\mathbb{E}[\vect{x}_1 \vect{x}_1^\top])^{-1}$. $\mat{\Sigma}_{11}$ is symmetric positive semi-definite (assumed positive definite). However, $\mat{\Sigma}_{X1}$ is generally not symmetric, as it encodes the cross-correlation between the full input and the masked input. Thus, $\mat{A}_R$ is generally not symmetric.
    \item $\mat{A}_L = \mat{W}_p^{-1} \bar{\mat{M}}_2$. $\bar{\mat{M}}_2$ is diagonal, hence symmetric. However, the predictor matrix $\mat{W}_p$ is generally not symmetric, nor is its inverse. Thus, $\mat{A}_L$ is generally not symmetric.
\end{itemize}
Therefore, we must generally consider complex eigenvalues $\lambda$. The stability analysis relies on the magnitude $|\lambda|$, and $\tau_{crit}$ is defined using the eigenvalue with the largest magnitude, $|\lambda_{\max}|$. If, under specific simplifying assumptions (e.g., particular data structures or predictor forms), both $\mat{A}_L$ and $\mat{A}_R$ were symmetric, then all eigenvalues would be real, and $\tau_{crit} = 1 / \max |\lambda_i| = 1 / |\lambda_{\text{dominant}}|$, where $\lambda_{\text{dominant}}$ is the eigenvalue with the largest absolute value.

\subsection{The adiabatic stability theorem}
\label{app:adiabatic_theorem}

Prior analyses, notably \cite{tianUnderstandingSelfsupervisedLearning2021}, often assume that the target weights $\mat{W}_a$ remain proportional to the encoder weights $\mat{W}$. This corresponds to an adiabatic limit of the Jacobian analysis of Section~\ref{sec:theory}, where EMA is much faster than encoder dynamics ($\beta\to\infty$). In this regime, illustrated in Figure~\ref{fig:adiabatic}, $\mat{W}_a$ tracks $\mat{W}$ almost instantaneously, yielding $\vect{vec}(\mat{W}_a)\leftarrow \tau\,\vect{vec}(\mat{W})$ along unstable modes. This approximation leads to an analogous stability threshold, formalized below and derived in Appendix~\ref{app:stationary_derivation}. This bridge to prior Siamese analyses sits in the appendix because it does not motivate the predictor of Section~\ref{sec:residual_predictor}.

\begin{figure}[htb]
    \centering
    \includegraphics[width=\linewidth]{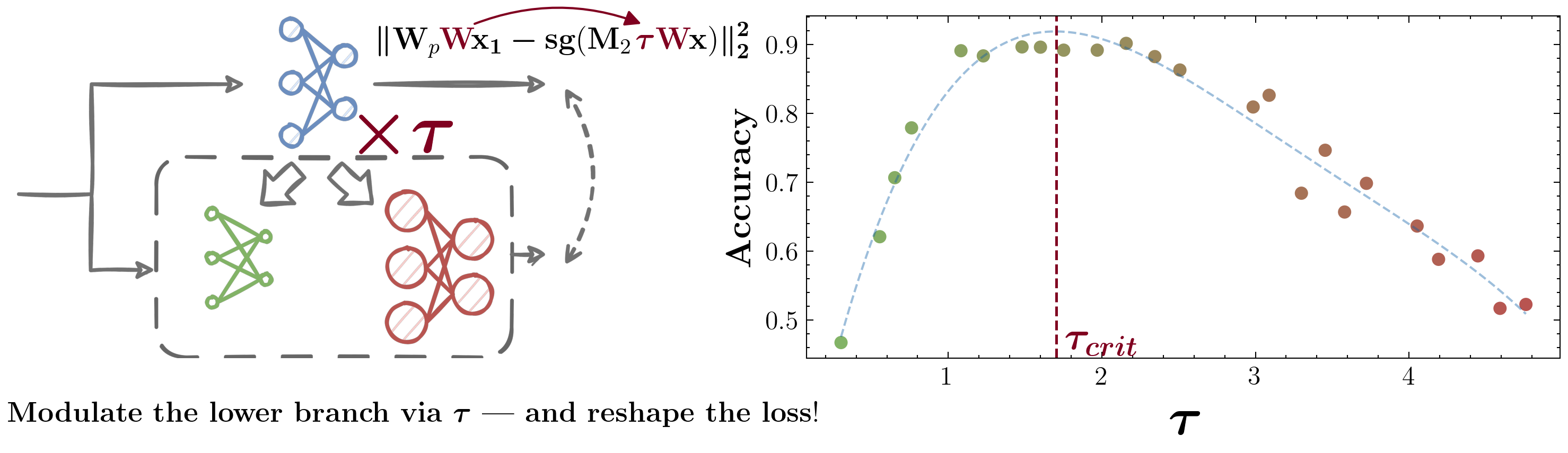}
    \caption{\textbf{Adiabatic stability.} Assuming the target encoder tracks the context encoder proportionally, the parameter $\tau$ can be used to avoid representational collapse also yielding better performance; see the analysis in Appendix~\ref{app:breaking_point}.}
    \label{fig:adiabatic}
\end{figure}

\begin{theorem}[Adiabatic Stability Condition] \label{thm:stability}
In the adiabatic limit, non-trivial stationary points $\mat{W}^* \neq \mat{0}$ exist if and only if the effective coupling $\tau$ satisfies:
\begin{equation} \label{eq:tau_condition}
\frac{1}{\tau} \in \Spec(\mat{A}_R^\top \otimes \mat{A}_L)
\end{equation}
\end{theorem}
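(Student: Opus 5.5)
The plan is to reduce the stationarity condition in the adiabatic limit to a linear eigenvalue problem for $\vecop(\mat{W})$. This follows the derivation already laid out in Appendix~\ref{app:stationary_derivation}.

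\textbf{Step 1: formalise the adiabatic limit.} In the adiabatic limit ($\beta\to\infty$) the target no longer carries independent dynamics. Along the relevant modes it is slaved to the encoder through $\mat{W}_a=\tau\mat{W}$ for a fixed scalar coupling $\tau$. A point $\mat{W}^*$ is therefore stationary exactly when $\dot{\mat{W}}=\mat 0$ in Eq.~\eqref{eq:dW_coupled} after this substitution, that is,
\[
\mat{W}_p^\top\mat{W}_p\mat{W}\mat{\Sigma}_{11}=\tau\,\mat{W}_p^\top\bar{\mat{M}}_2\mat{W}\mat{\Sigma}_{X1}.
\]
The EMA equation is then trivially satisfied in the limit. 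I will state explicitly the standing hypotheses already used in the derivation: $\mat{W}_p$ is invertible and $\mat{\Sigma}_{11}\succ 0$. These are needed for the equivalence, and without them only one direction survives.

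\textbf{Step 2: reduce to an eigenvalue problem.} Since $\mat{W}_p^\top$ and $\mat{\Sigma}_{11}$ are invertible, I multiply on the left by $(\mat{W}_p^\top)^{-1}$ and then by $\mat{W}_p^{-1}$, and on the right by $\mat{\Sigma}_{11}^{-1}$. This gives the equivalent fixed-point equation $\mat{W}=\tau\mat{A}_L\mat{W}\mat{A}_R$, with $\mat{A}_L=\mat{W}_p^{-1}\bar{\mat{M}}_2$ and $\mat{A}_R=\mat{\Sigma}_{X1}\mat{\Sigma}_{11}^{-1}$. Every operation is an invertible linear map, so each step is reversible, and this is what yields the ``if and only if''. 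Vectorising with $\vecop(\mat A\mat X\mat B)=(\mat B^\top\otimes\mat A)\vecop(\mat X)$ turns the equation into
\[
\bigl[\mat I_{dN}-\tau(\mat{A}_R^\top\otimes\mat{A}_L)\bigr]\vecop(\mat W)=\vect 0.
\]

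\textbf{Step 3: conclude.} A nonzero solution exists if and only if the bracketed matrix is singular. That happens exactly when $1\in\Spec\bigl(\tau\,\mat{A}_R^\top\otimes\mat{A}_L\bigr)$. Because $\tau\neq0$ (for $\tau=0$ the only solution is $\mat W=\mat 0$), this is equivalent to $1/\tau\in\Spec(\mat{A}_R^\top\otimes\mat{A}_L)$. As a remark, the spectrum consists of the products of the eigenvalues of $\mat{A}_R$ and $\mat{A}_L$, and these may be complex. Since $\tau$ is real, the condition can hold only on real eigenvalues.

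\textbf{Main obstacle.} I expect the main difficulty to be Step~1 rather than the algebra: justifying that the $\beta\to\infty$ limit of the coupled flow is captured by the substitution $\mat W_a=\tau\mat W$. I would treat this proportional-target relation as the definition of the adiabatic regime, as in the cited prior analyses. I would then point to Corollary~\ref{cor:pencil_reduction}(4), where $\lambda^+_{k\ell}\to\gamma_{k\ell}-\sigma_{k\ell}$ as $\beta\to\infty$, as consistency evidence that the EMA tracks the encoder in this limit.
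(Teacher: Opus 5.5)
Your proposal is correct and follows the paper's own derivation in Appendix~\ref{app:stationary_derivation} step for step. You substitute $\mat{W}_a=\tau\mat{W}$ into the $\mat{W}$-stationarity condition, remove $\mat{W}_p^\top$, $\mat{W}_p$ and $\mat{\Sigma}_{11}$ using their invertibility, vectorise, and read off the singularity of $\mat{I}_{dN}-\tau(\mat{A}_R^\top\otimes\mat{A}_L)$; your explicit reversibility and $\tau\neq0$ remarks state the ``if and only if'' more cleanly than the paper does.

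One side remark needs correcting: with $\mat{W}_a=\tau\mat{W}$ the EMA field is $\beta(1-\tau)\mat{W}$, which is nonzero for $\tau\neq1$, so the EMA equation is not satisfied. The condition concerns only the $\mat{W}$-equation under the imposed proportional closure, exactly as in the paper.
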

with $\mat{A}_R = \mat{\Sigma}_{X1} \mat{\Sigma}_{11}^{-1}$, $\mat{A}_L = \mat{W}_p^{-1} \bar{\mat{M}}_2$ and assuming $\mat{W_a} \leftarrow \tau \mat{W}$. This allows us to define the critical value $\tau_{\text{crit}} := \frac{1}{|\lambda_{\max}(\mat{A}_L)||\lambda_{\max}(\mat{A}_R)|}$ that also defines a region of better downstream performance, as the next subsection shows.

\subsection{The Breaking Point: When JEPA Escapes Collapse}
\label{app:breaking_point}

\subsubsection{Bifurcation Phenomenon}
\label{sec:loss_landscape_characterisation}

The bifurcation behavior in JEPA training dynamics stems from the spectral condition stated in Theorem~\ref{thm:stability}, which shows that non-trivial stationary points emerge if and only if $\frac{1}{\tau} \in \Spec(\mat{A}_R^\top \otimes \mat{A}_L)$. Since the spectrum of the Kronecker product satisfies $\Spec(\mat{A}_R^\top \otimes \mat{A}_L) = \{ \lambda_i \mu_j \}$, where $\lambda_i \in \Spec(\mat{A}_R^\top)$ and $\mu_j \in \Spec(\mat{A}_L)$, the threshold for bifurcation is governed by the largest eigenvalue of $\mat{A}_L$ and $\mat{A}_R$. We define the critical value:
\begin{equation}
\label{eq:tau_crit}
\tau_{\text{crit}} \;:=\; \frac{1}{|\lambda_{\max}(\mat{A}_L)||\lambda_{\max}(\mat{A}_R)|}\,,
\end{equation}
which marks the bifurcation point: for $\tau < \tau_{\text{crit}}$, the only stationary point is the trivial solution $\mat W = \mat 0$, while for $\tau > \tau_{\text{crit}}$, non-trivial solutions $\mat W \neq \mat 0$ emerge, indicating a qualitative change in the structure of the loss landscape.

\begin{figure}[htb]
    \centering
    \includegraphics[width=\textwidth]{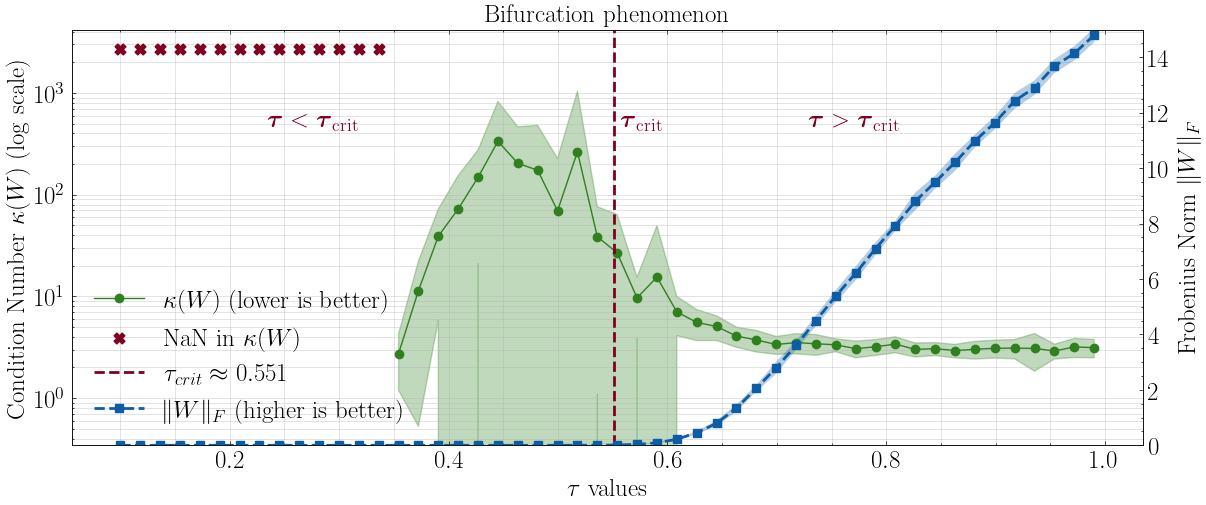} 
    \caption{\textbf{Bifurcation Phenomenon.} Empirical analysis of JEPA collapse metrics as a function of the parameter $\tau$. The plot displays the mean Frobenius Norm (right axis) and mean Condition Number (left axis, log scale) of the encoder weights $\mat{W}$, averaged over 20 experimental runs. The shaded area represents the standard deviation. A critical threshold, empirically determined at $\tau_{crit} \approx 0.551$, delineates the transition between the collapsed and non-collapsed operational regimes.}
    \label{fig:jepa_metrics_vs_tau}
\end{figure}

Empirical evidence supporting this theoretical framework is presented in Figure \ref{fig:jepa_metrics_vs_tau}. This figure illustrates the behavior of key collapse metrics, specifically, the Frobenius norm $||\mat{W}||_F$ and the condition number $\kappa(\mat{W})$ (the ratio of the largest to smallest singular value, $\sigma_{\text{max}}(\mat{W}^*) / \sigma_{\text{min}}(\mat{W}^*)$) of the encoder weights $\mat{W}$, as a function of the parameter $\tau$. 

The behavior of these metrics across different regimes of $\tau$ can be delineated as follows:

\begin{itemize}
  \item \textbf{Subcritical Regime ($\tau < \tau_{crit}$):}  
  The encoder converges to the trivial solution $\mat{W} \approx \mat{0}$.
  \begin{itemize}
    \item Frobenius norm remains close to zero $\big(||\mat{W}||_F \approx 0\big)$, indicating no meaningful transformation is learned.
    \item Condition number is extremely high or undefined, reflecting that $\mat{W}$ is ill-conditioned or nearly singular.
  \end{itemize}

  \item \textbf{Supercritical Regime ($\tau > \tau_{crit}$):}  
  Non-trivial solutions $\mat{W}^* \ne \mat{0}$ emerge and the encoder escapes collapse.
  \begin{itemize}
    \item Frobenius norm increases sharply, signaling effective learning of structured representations.
    \item Condition number stabilizes at moderate values, indicating a well-conditioned matrix that preserves feature diversity.
  \end{itemize}
\end{itemize}

\subsection{Loss Landscape Transformation}
\label{sec:loss_landscape_visualisation}

To further elucidate the structural changes in the JEPA loss landscape as a consequence of the bifurcation at $\tau_{\text{crit}}$ (Eq. \ref{eq:tau_crit}), Figure \ref{fig:loss_landscape_2d} provides a conceptual, two-dimensional projection. These visualizations illustrate the loss function $\mathcal{L}$ over a simplified parameter space of the context encoder matrix $\mat{W}$. This space is spanned by $r_1$ and $r_2$. These factors scale two distinct groups of eigenvalues of the encoder matrix $\mat{W}$: $r_1$ scales the group of eigenvalues that are inherently larger, while $r_2$ scales the group of eigenvalues that are inherently smaller. This parameterization allows for exploration of changes in $\mat{W}$'s overall norm and its spectral shape. The figure contrasts the landscape topology for parameter regimes below and above an empirically determined critical value $\tau_{\text{crit}} \approx 0.400$ for this specific JEPA-like setup.

\begin{figure}[htbp]
    \centering
    \includegraphics[width=0.95\textwidth]{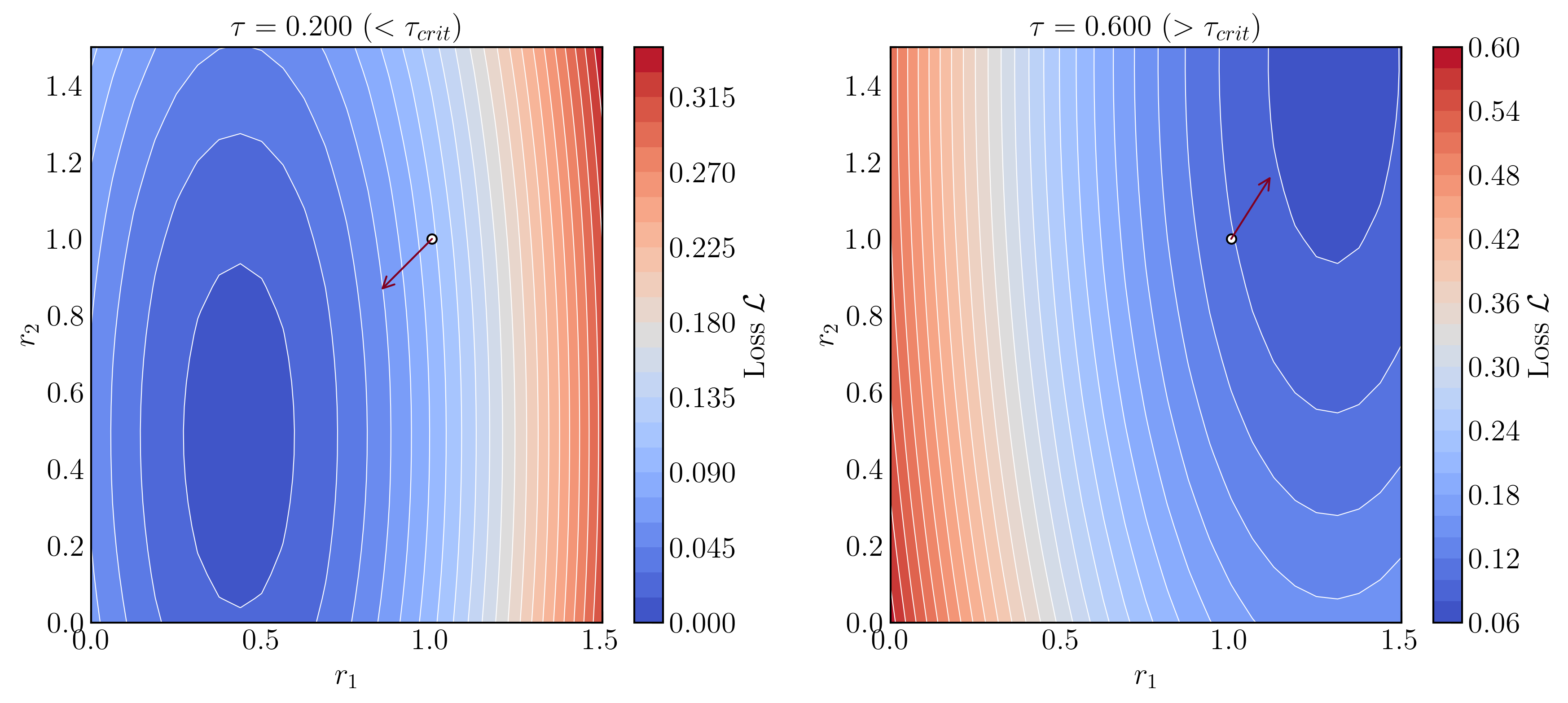} 
    \caption{\textbf{Loss landscape projection.} Two-dimensional projection of a JEPA-like loss landscape in the Jannis dataset. An empirical critical value of $\tau_{\text{crit}} \approx 0.400$ is noted for this illustration.
    (\textbf{Left}) Subcritical regime ($\tau = 0.200 < \tau_{\text{crit}}$): The loss landscape exhibits a global minimum closer to the origin ($r_1 \approx 0, r_2 \approx 0$), indicating the stability of the collapsed solution $\mat{W} \approx \mat{0}$. 
    (\textbf{Right}) Supercritical regime ($\tau = 0.600 > \tau_{\text{crit}}$): The origin has transformed into a region of higher loss, and new, distinct minima with $\mathcal{L} > 0$ have emerged away from the origin.}
    \label{fig:loss_landscape_2d}
\end{figure}

The key observations from these illustrative landscapes are:

\begin{itemize}
    \item \textbf{In the subcritical regime (Figure \ref{fig:loss_landscape_2d}, Left), where $\tau = 0.200 < \tau_{\text{crit}}$}:
    The loss function $\mathcal{L}$ displays a distinct global minimum at the origin ($r_1 \approx 0, r_2 \approx 0$). This topology signifies that any optimization trajectory, including one starting from the original $W$ ($r_1=1, r_2=1$), is naturally guided towards the collapsed state where the online encoder $\mat{W} \approx \mat{0}$. The landscape is characteristic of a system where this trivial solution is the most stable configuration. 

    \item \textbf{In the supercritical regime (Figure \ref{fig:loss_landscape_2d}, Right), where $\tau = 0.600 > \tau_{\text{crit}}$}:
    A qualitative transformation of the loss landscape is evident. The origin ($r_1 \approx 0, r_2 \approx 0$) no longer corresponds to a minimum loss value; instead, it has acquired characteristics of an unstable point, exhibiting a significantly higher loss compared to newly formed minima. These new minima, corresponding to non-trivial solutions $\mat{W}^* \neq \mat{0}$, are now located at finite distances from the origin.
\end{itemize}

These visualizations effectively demonstrate the bifurcation phenomenon's impact on the loss landscape. The transition from a landscape dominated by a single minimum at the origin to one featuring multiple, off-origin minima as $\tau$ crosses $\tau_{\text{crit}}$ is fundamental to understanding how JEPA and similar architectures can avoid representational collapse and learn useful features. 
\subsection{The criterion under joint predictor training}
\label{app:joint}

A JEPA trains the predictor together with the encoder. This section shows that the fixed-predictor
criterion remains the exact local criterion under joint training, and bounds the predictor's drift
until the encoder escapes.

\subsubsection{The collapsed set is a manifold of equilibria}

Write the full flow on $(\mat{W},\mat{W}_a,\mat{W}_p)$ with predictor learning rate $\eta_p$, in units
of the encoder learning rate. The predictor gradient of Appendix~\ref{app:adiabatic} is
\begin{equation}
\nabla_{\mat{W}_p}\mathcal{L}
 = \mat{W}_p\bigl(\mat{W}\mat{\Sigma}_{11}\mat{W}^\top\bigr)
 - \bar{\mat{M}}_2\mat{W}_a\mat{\Sigma}_{X1}\mat{W}^\top .
\label{eq:predgrad}
\end{equation}

\begin{lemma}[Equilibria, block structure, and speed]
\label{lem:equilibria}
Let $\mathcal{E} := \{(\mat{W},\mat{W}_a,\mat{W}_p) : \mat{W}=\mat{0},\ \mat{W}_a=\mat{0}\}$. Then
\begin{enumerate}[label=(\roman*),leftmargin=2em]
\item $\mathcal{E}$ is a $d^2$-dimensional manifold of equilibria of the full flow;
\item the linearisation at any point of $\mathcal{E}$ is block diagonal,
$\begin{psmallmatrix}\mathcal{J}(\mat{W}_p^0) & 0\\ 0 & \mat{0}_{d^2\times d^2}\end{psmallmatrix}$,
for every $\eta_p$, so Eq.~\eqref{eq:full_jacobian} is the exact linearisation of the coupled system and the
$d^2$ zero eigenvalues are the neutral directions tangent to $\mathcal{E}$;
\item at any state with $\|\mat{W}_p\|_2 \le w$ and $\max(\|\mat{W}\|_F,\|\mat{W}_a\|_F) \le \varepsilon$
the predictor's speed obeys
$\|\dot{\mat{W}}_p\|_F \le \eta_p\,(w\,s_\Sigma + p_{tgt}s_X)\,\varepsilon^2$, against
$\|\dot{\mat{W}}\|_F \le w(w s_\Sigma + p_{tgt}s_X)\varepsilon$, where
$s_\Sigma = \|\mat{\Sigma}_{11}\|_2$, $s_X = \|\mat{\Sigma}_{X1}\|_2$ and
$p_{tgt} = \|\bar{\mat{M}}_2\|_2$.
\end{enumerate}
\end{lemma}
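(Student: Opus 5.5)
The plan is to write the full vector field explicitly and read off all three claims from its polynomial structure in $(\mat W,\mat W_a)$. The field is
\[
F_W = -\mat W_p^\top\mat W_p\mat W\mat\Sigma_{11} + \mat W_p^\top\bar{\mat M}_2\mat W_a\mat\Sigma_{X1},\qquad
F_{W_a} = \beta(\mat W-\mat W_a),\qquad
F_{W_p} = -\eta_p\nabla_{\mat W_p}\mathcal L.
\]
Here $\nabla_{\mat W_p}\mathcal L$ is given by Eq.~\eqref{eq:predgrad}.

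For (i), I would substitute $\mat W=\mat W_a=\mat 0$ into each component. $F_W$ is linear in $(\mat W,\mat W_a)$, so it vanishes. $F_{W_a}$ vanishes trivially. Both terms of Eq.~\eqref{eq:predgrad} carry a right factor $\mat W^\top$, so $F_{W_p}$ vanishes as well. Hence every point of $\mathcal E$ is an equilibrium. $\mathcal E$ is the linear subspace parametrised by $\mat W_p\in\R^{d\times d}$, so it is a $d^2$-dimensional manifold.

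For (ii), I would differentiate at a point $(\mat 0,\mat 0,\mat W_p^0)$. For $\partial F_W/\partial(\mat W,\mat W_a)$, freezing $\mat W_p=\mat W_p^0$ and vectorising exactly as in Appendix~\ref{app:detailed_derivation} reproduces the upper-left block $\mathcal J(\mat W_p^0)$ of Eq.~\eqref{eq:full_jacobian}. For $\partial F_W/\partial\mat W_p$, every term of $F_W$ contains $\mat W$ or $\mat W_a$ linearly, so this derivative evaluated at $\mat W=\mat W_a=\mat 0$ is zero. For $F_{W_p}$, it is a sum of terms bilinear in $(\mat W,\mat W_a)$, namely $\mat W(\cdot)\mat W^\top$ and $\mat W_a(\cdot)\mat W^\top$, with coefficients smooth in $\mat W_p$. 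Its derivative in any direction is therefore again linear in $(\mat W,\mat W_a)$ and vanishes on $\mathcal E$, which gives a zero bottom row. The factor $\eta_p$ multiplies only this zero block, so the structure holds for every $\eta_p$. The $d^2$ zero eigenvalues have eigenvectors $(0,0,\Delta)$, which are tangent to $\mathcal E$ by (i).

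For (iii), I would bound each term with $\|\mat A\mat B\|_F\le\|\mat A\|_2\|\mat B\|_F$ and $\|\mat A\mat B\|_F\le\|\mat A\|_F\|\mat B\|_2$. The predictor speed then satisfies
\[
\|\dot{\mat W}_p\|_F\le \eta_p\bigl(\|\mat W_p\|_2\|\mat W\|_F\|\mat\Sigma_{11}\|_2\|\mat W\|_F + \|\bar{\mat M}_2\|_2\|\mat W_a\|_F\|\mat\Sigma_{X1}\|_2\|\mat W\|_F\bigr)\le \eta_p(w s_\Sigma+p_{tgt}s_X)\varepsilon^2 .
\]
Here I use $\|\mat W^\top\|_2\le\|\mat W\|_F$. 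For the encoder speed, $\|\mat W_p^\top\mat W_p\|_2\le w^2$ and $\|\mat W_p^\top\bar{\mat M}_2\|_2\le w\,p_{tgt}$ give
\[
\|\dot{\mat W}\|_F\le (w^2 s_\Sigma + w p_{tgt}s_X)\varepsilon = w(ws_\Sigma+p_{tgt}s_X)\varepsilon .
\]

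The computations are routine; the one place needing care is (ii). There I must confirm that no term of $F_{W_p}$ is linear in $(\mat W,\mat W_a)$ alone, so that the cross block truly vanishes rather than merely being small. This follows from the quadratic structure of Eq.~\eqref{eq:predgrad}, so it is the step I would state most explicitly.
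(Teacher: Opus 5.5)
Your proposal is correct and follows the paper's proof essentially step for step. In both, (i) holds because every term of the three fields carries a factor $\mat{W}$ or $\mat{W}_a$, (ii) follows from the degree-two homogeneity of Eq.~\eqref{eq:predgrad} in $(\mat{W},\mat{W}_a)$, and (iii) follows from $\|\mat{A}\mat{B}\mat{C}\|_F \le \|\mat{A}\|_2\|\mat{B}\|_F\|\mat{C}\|_2$ with $\|\mat{W}\|_2\le\|\mat{W}\|_F$. The one line to add in (ii), which the paper states explicitly, is that $\beta(\mat{W}-\mat{W}_a)$ supplies the second block row $[\beta\mat{I}_{dN},\,-\beta\mat{I}_{dN}]$ of $\mathcal{J}(\mat{W}_p^0)$ and has no $\mat{W}_p$-dependence; differentiating the encoder field alone gives only the first block row, not all of $\mathcal{J}(\mat{W}_p^0)$.
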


\begin{proof}
(i) Every monomial of the three fields contains at least one factor $\mat{W}$ or $\mat{W}_a$, so all
three vanish at $\mat{W}=\mat{W}_a=\mat{0}$ for every $\mat{W}_p$.

(ii) Differentiate the three components at $(\mat{0},\mat{0},\mat{W}_p^0)$. The partial of the encoder
field in $(\mat{W},\mat{W}_a)$ vectorises to $\mathcal{J}(\mat{W}_p^0)$; its partial in $\mat{W}_p$ is
$-(\delta\mat{W}_p^\top\mat{W}_p^0 + (\mat{W}_p^0)^\top\delta\mat{W}_p)\mat{W}\mat{\Sigma}_{11}
+ \delta\mat{W}_p^\top\bar{\mat{M}}_2\mat{W}_a\mat{\Sigma}_{X1}$, which vanishes at the base point,
killing the upper-right block. The target field is $\beta(\delta\mat{W}-\delta\mat{W}_a)$ and carries no
$\mat{W}_p$ dependence. Every term of the predictor field is at least bilinear in
$(\mat{W},\mat{W}_a)$ by Eq.~\eqref{eq:predgrad}, so every term of its derivative keeps a factor $\mat{W}$ or
$\mat{W}_a$ and vanishes; both of its partials are zero, for every $\eta_p$, which kills the bottom row.

(iii) Both summands of Eq.~\eqref{eq:predgrad} are bilinear in $(\mat{W},\mat{W}_a)$, and bounding each in
the spectral norm gives the stated inequality.
\end{proof}

\subsubsection{How far the predictor drifts before the encoder escapes}

Write $\vect{z} = [\mathrm{vec}(\mat{W})^\top, \mathrm{vec}(\mat{W}_a)^\top]^\top$ with
$\|\vect{z}\|^2 = \|\mat{W}\|_F^2 + \|\mat{W}_a\|_F^2$, $\varepsilon := \|\vect{z}(0)\|$,
$\mathcal{J}_0 := \mathcal{J}(\mat{W}_p^0)$, $\mat{D}(t) := \mat{W}_p(t) - \mat{W}_p^0$ and
$\rho(t) := \|\mat{D}(t)\|_F$. Then
$\dot{\vect{z}} = \mathcal{J}(\mat{W}_p(t))\,\vect{z}$ exactly. The result below assumes the exact form of
Assumption~\ref{ass:spectral_decoupling} with a real drive, at initialisation only: orthogonal $\mat{U}$ and $\mat{V}$ with
$\mat{\Sigma}_{11} = \mat{U}\mathrm{diag}(a_k)\mat{U}^\top$,
$\mat{\Sigma}_{X1} = \mat{U}\mathrm{diag}(c_k)\mat{U}^\top$,
$\mat{W}_p^{0\top}\mat{W}_p^0 = \mat{V}\mathrm{diag}(p_\ell)\mat{V}^\top$ and
$\mat{W}_p^{0\top}\bar{\mat{M}}_2 = \mat{V}\mathrm{diag}(q_\ell)\mat{V}^\top$, where $a_k > 0$,
$p_\ell > 0$ and $c_k, q_\ell$ are real. Then $\mathcal{J}_0$ splits into the $2\times 2$ blocks
$\begin{psmallmatrix}-\sigma_{k\ell} & \gamma_{k\ell}\\ \beta & -\beta\end{psmallmatrix}$ with
$\sigma_{k\ell} = a_k p_\ell$, $\gamma_{k\ell} = c_k q_\ell$ and discriminant
$\Delta_{k\ell} = (\beta-\sigma_{k\ell})^2 + 4\beta\gamma_{k\ell}$. Decoupling is not imposed at
$t > 0$, so the statements below concern the spectrum of $\mathcal{J}(\mat{W}_p(t))$.

\begin{theorem}[Drift bound]
\label{thm:drift}
Assume exact decoupling at initialisation as above, $\Delta_{k\ell} \neq 0$ for every mode, and
$\mathcal{U} \neq \emptyset$. Let $\lambda_* > 0$ be the spectral abscissa of $\mathcal{J}_0$. Let
$\bar{\mat{O}}$ be the orthogonal change of basis that block-diagonalises $\mathcal{J}_0$, let
$\widehat{\mat{X}}$ be the block-diagonal similarity that brings each block to diagonal form with
unit-norm columns (real blocks) or to real rotation form (complex blocks), and set
$\kappa := \|\widehat{\mat{X}}\|_2\|\widehat{\mat{X}}^{-1}\|_2$. With
$\vect{y}(0) := \widehat{\mat{X}}^{-1}\bar{\mat{O}}^\top\vect{z}(0)$ and $\mat{\Pi}_*$ the coordinate
projection onto the eigenvalue $\lambda_*$, let $\alpha := \|\mat{\Pi}_*\vect{y}(0)\|/\|\vect{y}(0)\| > 0$
be the alignment of the initialisation. Set $C_p := 2w_0 s_\Sigma + p_{tgt}s_X$ and
$L_E := 3w_0 s_\Sigma + p_{tgt}s_X$ with $w_0 = \|\mat{W}_p^0\|_2$ and $p_{tgt} = \|\bar{\mat{M}}_2\|_2$,
$L_\alpha := \log(4\kappa^2/\alpha)$, and
\begin{equation}
\rho_\infty := \frac{\eta_p\,C_p\,(1 + L_\alpha)}{\lambda_*}\,\delta^2, \qquad
\bar T := \frac{1}{\lambda_*}\log\frac{4\kappa\delta}{\alpha\varepsilon}.
\end{equation}
If (C0) $\kappa\varepsilon \le \delta$, (C1) $2\rho_\infty \le w_0$ and
(C2) $2\kappa L_E\rho_\infty\bar T \le \log(1+\alpha/2)$, then the first time $T_\delta$ at which
$\|\vect{z}\|$ exceeds $\delta$ satisfies $T_\delta \le \bar T$, and
\begin{equation}
\bigl\|\mat{W}_p(t) - \mat{W}_p^0\bigr\|_F \;\le\; \rho_\infty \qquad \text{for all } t \le T_\delta .
\end{equation}
The bound does not depend on the initialisation scale $\varepsilon$.
\end{theorem}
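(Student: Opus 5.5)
The plan is a bootstrap (continuity) argument on the interval $[0,T_\delta]$. The encoder state evolves exactly as $\dot{\vect z}=\mathcal J(\mat W_p(t))\vect z$, and $\mathcal J(\mat W_p(t))$ is a perturbation of $\mathcal J_0$ whose size is controlled by $\rho(t)$. The predictor drift $\rho$ is in turn controlled by $\int\|\vect z\|^2$, by Lemma~\ref{lem:equilibria}(iii).

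Let $T^\ast\le T_\delta$ be the supremum of times up to which $\rho\le\rho_\infty$; it is positive because $\rho(0)=0$. Suppose $T^\ast<T_\delta$. I will derive a strict bound $\rho(T^\ast)<\rho_\infty$, which contradicts continuity of $\rho$ and so gives $T^\ast=T_\delta$. The same estimates also give $T_\delta\le\bar T$; this needs some care if $T_\delta=\infty$, in which case I apply the estimates on $[0,\bar T]$ and show $\|\vect z\|$ exceeds $\delta$ there.

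\textbf{Step 1 (perturbed linear flow).} On $[0,T^\ast]$, (C1) gives $\|\mat W_p\|_2\le w_0+\rho_\infty\le \tfrac32 w_0$. Expanding $\mathcal J(\mat W_p^0+\mat D)-\mathcal J_0$ in $\mat D$ gives a perturbation of operator norm at most $L_E\rho$. The terms are $\mat D^\top\mat W_p^0+\mat W_p^{0\top}\mat D+\mat D^\top\mat D$ against $s_\Sigma$, which contributes at most $3w_0 s_\Sigma\rho$ using $\rho\le w_0/2$, and $\mat D^\top\bar{\mat M}_2$ against $s_X$.

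Pass to the coordinates $\vect y=\widehat{\mat X}^{-1}\bar{\mat O}^\top\vect z$. In these coordinates the unperturbed generator is block-diagonal with blocks that are diagonal or rotation-form, and the perturbation has norm at most $\kappa L_E\rho$. Grönwall then gives an upper bound
\[
\|\vect y(t)\|\le\|\vect y(0)\|\,e^{\lambda_\ast t+\kappa L_E\int_0^t\rho}.
\]
For the lower bound, I use Duhamel's formula on the $\lambda_\ast$ component:
\[
\|\mat\Pi_\ast\vect y(t)\|\ge e^{\lambda_\ast t}\Bigl(\alpha\|\vect y(0)\|-\bigl(e^{\kappa L_E\int\rho}-1\bigr)\|\vect y(0)\|\Bigr).
\]
On $[0,\bar T]$, (C2) gives $\kappa L_E\int\rho\le\log(1+\alpha/2)$. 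Hence in $\vect z$ coordinates
\[
\tfrac{\alpha\varepsilon}{2\kappa}e^{\lambda_\ast t}\le\|\vect z(t)\|\le 2\kappa\varepsilon e^{\lambda_\ast t},
\]
using (C0), $\alpha\le1$ and $\|\vect y(0)\|\le\kappa\varepsilon$. The lower bound reaches $\delta$ by $\bar T$, which proves $T_\delta\le\bar T$.

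\textbf{Step 2 (integral of $\|\vect z\|^2$).} By Lemma~\ref{lem:equilibria}(iii) with $\|\mat W_p\|\le \tfrac32w_0$ (hence the constant $C_p$), $\rho(t)\le\eta_pC_p\int_0^t\|\vect z\|^2$. I split $[0,T_\delta]$ at the time $t_1$ where the upper envelope $2\kappa\varepsilon e^{\lambda_\ast t}$ reaches $\delta$.

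Before $t_1$, the exponential bound gives $\int_0^{t_1}\|\vect z\|^2\le\delta^2/(2\lambda_\ast)$. After $t_1$, use $\|\vect z\|\le\delta$ on an interval of length at most $\bar T-t_1=\lambda_\ast^{-1}\log(4\kappa^2/\alpha)=L_\alpha/\lambda_\ast$, which is the ratio of the two envelopes. Together,
\[
\int_0^{T_\delta}\|\vect z\|^2\le\frac{\delta^2(1+L_\alpha)}{\lambda_\ast},
\]
so $\rho\le\rho_\infty$, with strict inequality available through the factor $\tfrac12$ in the first piece. This closes the bootstrap and yields the $\varepsilon$-free bound.

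\textbf{Main obstacle.} The obstacle I expect is making the lower envelope in Step 1 rigorous under a non-normal, time-dependent perturbation. Leakage from the stable and other unstable components into $\mat\Pi_\ast\vect y$ has to be controlled uniformly, and the exact constants must be matched to (C2) and to the $4\kappa^2$ inside $L_\alpha$. Complex subcritical blocks in rotation form must also be handled with the correct growth rate, which is their real part, below $\lambda_\ast$. I would first try the variation-of-constants estimate with the perturbation measured in the $\widehat{\mat X}$ basis, accepting a loose factor of $2$ wherever it is absorbed by (C2).
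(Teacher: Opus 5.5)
You follow the paper's proof in all essentials. The ingredients match: a continuity argument on the drift $\rho$, the bound $\|\mathcal{J}(\mat{W}_p)-\mathcal{J}_0\|_2\le L_E\rho$, a Gr\"onwall upper envelope and a Duhamel lower bound on $\mat{\Pi}_*\vect{y}$ in the $\widehat{\mat{X}}$ coordinates, a split of $\int\|\vect{z}\|^2$ where the upper envelope reaches $\delta$, and a contradiction at $\bar T$. The paper bootstraps at threshold $2\rho_\infty$ with the constant rate $\lambda_*+2\kappa L_E\rho_\infty$. You bootstrap at $\rho_\infty$ with the integrated exponent $\kappa L_E\int_0^t\rho\le\tfrac12\log(1+\alpha/2)$, and both work. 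The non-normality obstacle you anticipate is already handled by your own displays. Every block in the $\widehat{\mat{X}}$ coordinates is normal, so the unperturbed propagator has norm exactly $e^{\lambda_* t}$, and the leakage into $\mat{\Pi}_*\vect{y}$ is bounded by the perturbation norm times $\|\vect{y}\|$.

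One step is miscomputed, and as written it stops the bootstrap from closing. With your envelope $2\kappa\varepsilon e^{\lambda_* t}$ you get $t_1=\lambda_*^{-1}\log\bigl(\delta/(2\kappa\varepsilon)\bigr)$. Hence $\bar T-t_1=\lambda_*^{-1}\log(8\kappa^2/\alpha)=(L_\alpha+\log 2)/\lambda_*$, not $L_\alpha/\lambda_*$, and since $\tfrac12+\log 2>1$ the resulting bound exceeds $\rho_\infty$.

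Your ``ratio of the two envelopes'' reasoning is right, but its endpoint is $t_2:=\bar T-\lambda_*^{-1}\log 2$, where the lower envelope equals $\delta$; at $\bar T$ it equals $2\delta$. Inside the bootstrap, $T^*<T_\delta$ together with the lower envelope on $[0,\min(T^*,\bar T)]$ forces $T^*\le t_2$. So the post-$t_1$ length is at most $L_\alpha/\lambda_*$, and $\rho(T^*)\le\eta_pC_p\delta^2(\tfrac12+L_\alpha)/\lambda_*<\rho_\infty$. Keeping the sharper envelope factor $(1+\alpha/2)^{1/2}\le\sqrt{3/2}$ instead of $2$ also suffices.

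The same observation fixes your ordering. The restriction to $t\le\bar T$ matters whenever $T_\delta>\bar T$, not only when $T_\delta=\infty$. Either the in-bootstrap bound $T^*\le t_2$, or running the bootstrap on $[0,\min(T_\delta,\bar T)]$ as the paper does, removes the circularity between closing the bootstrap and proving $T_\delta\le\bar T$.
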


\begin{proof}
Write $\bar\rho := 2\rho_\infty$ and $c := \kappa L_E\bar\rho$. We use
$\|\mat{A}\mat{B}\mat{C}\|_F \le \|\mat{A}\|_2\|\mat{B}\|_F\|\mat{C}\|_2$ and
$\|\mat{A}\otimes\mat{B}\|_2 = \|\mat{A}\|_2\|\mat{B}\|_2$.

Step 0, coordinates. Let $\mat{\Lambda} := \widehat{\mat{X}}^{-1}\bar{\mat{O}}^\top\mathcal{J}_0\bar{\mat{O}}\widehat{\mat{X}}$.
Every block of $\mat{\Lambda}$ is normal and the largest real part of its spectrum is $\lambda_*$, so
$\|e^{\mat{\Lambda}t}\|_2 = e^{\lambda_* t}$ for $t \ge 0$. With
$\vect{y}(t) := \widehat{\mat{X}}^{-1}\bar{\mat{O}}^\top\vect{z}(t)$ and
$\mat{E}(t) := \mathcal{J}(\mat{W}_p(t)) - \mathcal{J}_0$ and $\widetilde{\mat{E}}(t) := \widehat{\mat{X}}^{-1}\bar{\mat{O}}^\top\mat{E}(t)\bar{\mat{O}}\widehat{\mat{X}}$,
\begin{equation}
\dot{\vect{y}} = \mat{\Lambda}\vect{y} + \widetilde{\mat{E}}(t)\vect{y}, \qquad
\|\widetilde{\mat{E}}(t)\|_2 \le \kappa\|\mat{E}(t)\|_2, \qquad
\varepsilon/\|\widehat{\mat{X}}\|_2 \le \|\vect{y}(0)\| \le \|\widehat{\mat{X}}^{-1}\|_2\,\varepsilon .
\end{equation}

Step 1, perturbation of the Jacobian. If $\rho(t) \le \bar\rho \le w_0$, then
$\|\mat{W}_p(t)\|_2 \le 2w_0$, and since
$\mat{W}_p^\top\mat{W}_p - \mat{W}_p^{0\top}\mat{W}_p^0 = \mat{W}_p^\top\mat{D} + \mat{D}^\top\mat{W}_p^0$,
\begin{equation}
\|\mat{E}(t)\|_2 \le s_\Sigma\,\|\mat{W}_p^\top\mat{W}_p - \mat{W}_p^{0\top}\mat{W}_p^0\|_2
 + s_X\,p_{tgt}\,\|\mat{D}\|_2 \le L_E\,\rho(t).
\end{equation}

Step 2, drift generator. Under the same condition, Lemma~\ref{lem:equilibria}(iii) with
$w = 2w_0$ and $\varepsilon$ replaced by $\|\vect{z}(t)\|$, which bounds both $\|\mat{W}\|_F$ and
$\|\mat{W}_a\|_F$, gives $\|\dot{\mat{W}}_p\|_F \le \eta_p C_p\|\vect{z}(t)\|^2$, hence
$\rho(t) \le \eta_p C_p\int_0^t\|\vect{z}(s)\|^2\,ds$.

Step 3, bootstrap. Let $\tau := \min(T_\delta, \bar T)$ and let $\mathcal{B}$ be the set of
$T \in [0,\tau]$ with $\sup_{[0,T]}\rho \le \bar\rho$. It contains $0$ and is closed by continuity.
Fix $T \in \mathcal{B}$. On $[0,T]$, Steps 1 and 2 apply and $\|\widetilde{\mat{E}}\|_2 \le c$. By
Duhamel's formula, $\phi(t) := e^{-\lambda_* t}\|\vect{y}(t)\|$ obeys
$\phi(t) \le \phi(0) + c\int_0^t\phi(s)\,ds$, and Gr\"onwall's inequality gives
$\|\vect{y}(t)\| \le \|\vect{y}(0)\|e^{\tilde\lambda t}$ with $\tilde\lambda := \lambda_* + c$, so
$\|\vect{z}(t)\| \le \kappa\varepsilon e^{\tilde\lambda t}$. Also $\|\vect{z}\| \le \delta$ on
$[0,T]$. Put $L := \log(\delta/(\kappa\varepsilon)) \ge 0$ by (C0) and $s^* := L/\tilde\lambda$, and
split the drift integral at $s^*$:
\begin{equation}
\int_0^t\|\vect{z}\|^2\,ds \le \int_0^{s^*}\kappa^2\varepsilon^2e^{2\tilde\lambda s}\,ds + \delta^2(t-s^*)_+
\le \frac{\delta^2}{2\tilde\lambda} + \delta^2(t-s^*)_+ .
\end{equation}
Since $\bar T = (L + L_\alpha)/\lambda_*$ and (C2) gives $c \le \lambda_*\log(1+\alpha/2)/(L+L_\alpha)$,
\begin{equation}
(t-s^*)_+ \le \bar T - s^* = \frac{L\,c}{\lambda_*\tilde\lambda} + \frac{L_\alpha}{\lambda_*}
\le \frac{\log(1+\alpha/2)}{\tilde\lambda} + \frac{L_\alpha}{\lambda_*}
\le \frac{1/2 + L_\alpha}{\lambda_*},
\end{equation}
using $\tilde\lambda \ge \lambda_*$ and $\log(1+\alpha/2) \le \log\tfrac32 < \tfrac12$ for
$\alpha \le 1$. Therefore $\rho(t) \le \eta_p C_p\delta^2(1+L_\alpha)/\lambda_* = \rho_\infty < \bar\rho$
on $[0,T]$, and by continuity the strict inequality persists slightly beyond $T$. So $\mathcal{B}$ is
open in $[0,\tau]$, hence equal to it, and the drift bound holds on $[0,\tau]$.

Step 4, the encoder escapes. Suppose $T_\delta > \bar T$, so that Step 3 holds on
$[0,\bar T]$. The coordinates selected by $\mat{\Pi}_*$ are real eigendirections with
$\mat{\Lambda}\mat{\Pi}_* = \lambda_*\mat{\Pi}_*$, because $\mu_{k\ell} > 1$ forces
$\Delta_{k\ell} > 0$. By Duhamel's formula and the growth bound of Step 3,
\begin{equation}
\|\mat{\Pi}_*\vect{y}(t)\| \ge e^{\lambda_* t}\alpha\|\vect{y}(0)\|
 - c\,\|\vect{y}(0)\|\int_0^t e^{\lambda_*(t-s)}e^{\tilde\lambda s}\,ds
 = e^{\lambda_* t}\|\vect{y}(0)\|\bigl[\alpha - (e^{ct} - 1)\bigr].
\end{equation}
By (C2), $e^{ct} - 1 \le \alpha/2$ for $t \le \bar T$, so
$\|\mat{\Pi}_*\vect{y}(t)\| \ge (\alpha/2)\|\vect{y}(0)\|e^{\lambda_* t}$. Combining this with
$\|\vect{z}(t)\| \ge \|\vect{y}(t)\|/\|\widehat{\mat{X}}^{-1}\|_2$ and the lower bound on
$\|\vect{y}(0)\|$ from Step 0 gives $\|\vect{z}(t)\| \ge (\alpha\varepsilon/2\kappa)\,e^{\lambda_* t}$.
At $t = \bar T$ the right-hand side equals $2\delta$, which contradicts $T_\delta > \bar T$. Hence
$T_\delta \le \bar T$, $\tau = T_\delta$, and the drift bound holds up to $T_\delta$.
\end{proof}

\begin{corollary}[The classification survives]
\label{cor:margin}
Under the hypotheses of Theorem~\ref{thm:drift}, let
$g_0 := \min\{|\mathrm{Re}\,\lambda| : \lambda \in \Spec(\mathcal{J}_0)\}$ be the distance of the
initial spectrum to the imaginary axis, and let $\lambda^+_{k\ell}$ denote the larger root of mode
$(k,\ell)$. If in addition $\kappa L_E\rho_\infty < g_0$, then for all $t \le T_\delta$ the Jacobian
$\mathcal{J}(\mat{W}_p(t))$ has no eigenvalue on the imaginary axis, has exactly $|\mathcal{U}|$
eigenvalues in the open right half plane, and each of them lies within $\kappa L_E\rho_\infty$ of some
$\lambda^+_{k\ell}$ with $(k,\ell) \in \mathcal{U}$. The condition $\kappa L_E\rho_\infty < g_0$ holds
whenever $\kappa L_E\rho_\infty < (\beta+\sigma_{k\ell})/2$ on every mode with $\Delta_{k\ell} < 0$ and
$|\mu_{k\ell} - 1| > C'\delta^2$ on every mode with $\Delta_{k\ell} > 0$, where
\begin{equation}
C' := \frac{\eta_p\,\kappa\,L_E\,C_p\,(1 + L_\alpha)}{\lambda_*}\,
\max_{k\ell}\frac{\beta + \sigma_{k\ell} + \sqrt{\beta\sigma_{k\ell}(\mu_{k\ell})_+}}{\beta\sigma_{k\ell}} .
\end{equation}
\end{corollary}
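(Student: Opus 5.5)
We treat the joint-training Jacobian as a perturbation of the initial one and count eigenvalues by continuity. Theorem~\ref{thm:drift} gives, for all $t \le T_\delta$, the drift bound $\|\mat{W}_p(t)-\mat{W}_p^0\|_F \le \rho_\infty$. Step~1 of its proof then gives $\|\mathcal{J}(\mat{W}_p(t)) - \mathcal{J}_0\|_2 \le L_E\rho_\infty$. We pass to the coordinates $\bar{\mat O}\widehat{\mat X}$ of that proof, in which $\mathcal{J}_0$ becomes $\mat\Lambda$, a block-diagonal matrix of normal blocks (diagonal real blocks and rotation-form complex blocks). There the perturbation has norm at most $\kappa L_E\rho_\infty =: r$. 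Because $\mat\Lambda$ is normal, the Bauer--Fike theorem applies with constant one: every eigenvalue of $\mathcal{J}(\mat{W}_p(t))$ lies within $r$ of some eigenvalue of $\mathcal{J}_0$.

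\textbf{Counting unstable eigenvalues.} Suppose $r < g_0$. The $r$-disks around eigenvalues of $\mathcal{J}_0$ in the left half plane then stay in the open left half plane, and those around right-half-plane eigenvalues stay in the open right half plane. So no eigenvalue of the perturbed Jacobian sits on the imaginary axis. To count eigenvalues, consider the homotopy $\mathcal{J}_0 + s(\mathcal{J}(\mat{W}_p(t))-\mathcal{J}_0)$ for $s \in [0,1]$. Every intermediate matrix satisfies the same Bauer--Fike bound, so its eigenvalues never cross the imaginary axis. By continuity of eigenvalues, the number in the open right half plane is constant along the homotopy. At $s=0$ that number is $|\mathcal{U}|$: Lemma~\ref{lem:mode_classification} (equivalently Corollary~\ref{cor:pencil_reduction}) shows each supercritical block contributes exactly one positive root $\lambda^+_{k\ell}$ and each subcritical block none. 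Each perturbed right-half-plane eigenvalue lies within $r$ of a right-half-plane eigenvalue of $\mathcal{J}_0$, which is some $\lambda^+_{k\ell}$ with $(k,\ell)\in\mathcal{U}$. This proves the first part.

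\textbf{Sufficient condition for $r<g_0$.} We bound $g_0$ mode by mode.
\begin{itemize}
\item A complex pair ($\Delta_{k\ell}<0$) has real part $-(\beta+\sigma_{k\ell})/2$, so the hypothesis $r < (\beta+\sigma_{k\ell})/2$ handles these modes directly.
\item A real block ($\Delta_{k\ell}>0$) has $|\lambda^-_{k\ell}| \ge (\beta+\sigma_{k\ell})/2$, so only the root $\lambda^+_{k\ell}$ can approach zero, and we need a lower bound on $|\lambda^+_{k\ell}|$.
\end{itemize}
For the real case, write the root as a product over a sum, $\lambda^+ = \tfrac{4\beta\sigma(\mu-1)}{2(\sqrt{\Delta}+\beta+\sigma)}$. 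Then bound the denominator using $\sqrt\Delta \le \beta+\sigma + 2\sqrt{\beta\sigma(\mu)_+}$. This gives $|\lambda^+_{k\ell}| \ge \beta\sigma_{k\ell}|\mu_{k\ell}-1| / (\beta+\sigma_{k\ell}+\sqrt{\beta\sigma_{k\ell}(\mu_{k\ell})_+})$, valid on both sides of $\mu=1$; for $\mu<1$ use $\sqrt\Delta\le\beta+\sigma$.

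Substituting $\rho_\infty = \eta_p C_p(1+L_\alpha)\delta^2/\lambda_*$, the condition $r < |\lambda^+_{k\ell}|$ becomes exactly $|\mu_{k\ell}-1| > C'\delta^2$ with the stated $C'$, the maximum over modes making it uniform. The main obstacle is this last inequality: getting the $\sqrt\Delta$ bound with constants matching $C'$ exactly on both sides of criticality. If the constants do not line up, I would fall back to the cruder bound $\sqrt\Delta\le \beta+\sigma+2\sqrt{\beta\gamma}$ and adjust.
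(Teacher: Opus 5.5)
Your proposal is correct and follows the paper's proof essentially step for step. Both use Bauer--Fike at radius $\kappa L_E\rho_\infty$: you get it by working with the normal matrix $\mat{\Lambda}$, the paper by noting that $\kappa$ is the condition number of an eigenvector matrix of $\mathcal{J}_0$. Both count the right-half-plane eigenvalues by continuity, with your homotopy in $s$ replacing the paper's continuity in $t$. Both use the same lower bound $\beta\sigma_{k\ell}|1-\mu_{k\ell}|/(\beta+\sigma_{k\ell}+\sqrt{\beta\sigma_{k\ell}(\mu_{k\ell})_+})$ on the smaller root; this reproduces $C'$ exactly, so your fallback is unnecessary.

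One step needs tightening. On real modes, deduce $|\lambda^-_{k\ell}| > \kappa L_E\rho_\infty$ from $|\lambda^-_{k\ell}| \ge |\lambda^+_{k\ell}|$, not from $|\lambda^-_{k\ell}| \ge (\beta+\sigma_{k\ell})/2$. The condition $(\beta+\sigma_{k\ell})/2 > \kappa L_E\rho_\infty$ is assumed only on complex modes.
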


\begin{proof}
By Theorem~\ref{thm:drift} and Step 1 of its proof, $\|\mat{E}(t)\|_2 \le L_E\rho_\infty =: e_{\max}$ for
$t \le T_\delta$, and the hypothesis reads $\kappa e_{\max} < g_0$. The matrix $\mathcal{J}_0$ is
diagonalisable, and on complex blocks the real rotation form differs from a complex eigenbasis by
$1/\sqrt2$ times a unitary matrix, so $\kappa$ is the condition number of an eigenvector matrix of
$\mathcal{J}_0$. The Bauer--Fike theorem \citep{bauerNormsExclusionTheorems1960} then places every eigenvalue $\hat\lambda$ of
$\mathcal{J}(\mat{W}_p(t)) = \mathcal{J}_0 + \mat{E}(t)$ within $\kappa e_{\max}$ of some eigenvalue
$\lambda_j$ of $\mathcal{J}_0$.

First, no eigenvalue lies on the axis: if $\hat\lambda$ were purely imaginary, then
$|\hat\lambda - \lambda_j| \ge |\mathrm{Re}\,\lambda_j| \ge g_0 > \kappa e_{\max}$ for every $j$, a
contradiction.

Second, the count in the right half plane is constant. The eigenvalues of $\mathcal{J}(\mat{W}_p(t))$ depend continuously on $t$ and
stay in the disc of radius $R := \sup_{t \le T_\delta}\|\mathcal{J}(\mat{W}_p(t))\|_2$. None of them
meets the boundary of $\{\mathrm{Re}\,\lambda > 0\} \cap \{|\lambda| < R+1\}$, so the number inside
is locally constant in $t$ and hence constant on $[0,T_\delta]$. At $t = 0$ a mode with
$\mu_{k\ell} > 1$ has roots of product $\beta\sigma_{k\ell}(1-\mu_{k\ell}) < 0$ and contributes one
positive root; a real mode with $\mu_{k\ell} < 1$ has positive product and negative sum
$-(\beta+\sigma_{k\ell})$ and contributes none; a complex mode has real part $-(\beta+\sigma_{k\ell})/2$
and contributes none. The count is therefore $|\mathcal{U}|$.

Third, localisation. If $\mathrm{Re}\,\hat\lambda > 0$, its Bauer--Fike partner satisfies
$\mathrm{Re}\,\lambda_j > -\kappa e_{\max} > -g_0$, hence $\mathrm{Re}\,\lambda_j \ge g_0 > 0$, and the
only eigenvalues of $\mathcal{J}_0$ in the right half plane are the $\lambda^+_{k\ell}$ with
$(k,\ell) \in \mathcal{U}$.

Finally, the margin form. Fix a mode. If $\Delta_{k\ell} < 0$, both roots have real part
$-(\beta+\sigma_{k\ell})/2$. If $\Delta_{k\ell} > 0$, both roots are real with product
$\beta\sigma_{k\ell}(1-\mu_{k\ell})$. Since
$\sqrt{\Delta_{k\ell}} \le |\beta-\sigma_{k\ell}| + 2\sqrt{\beta\sigma_{k\ell}(\mu_{k\ell})_+}$, the
larger modulus is at most $\beta + \sigma_{k\ell} + \sqrt{\beta\sigma_{k\ell}(\mu_{k\ell})_+}$, and the
smaller one is at least
\begin{equation}
g_{k\ell} := \frac{\beta\sigma_{k\ell}\,|1-\mu_{k\ell}|}{\beta + \sigma_{k\ell} + \sqrt{\beta\sigma_{k\ell}(\mu_{k\ell})_+}} .
\end{equation}
So $g_0$ is at least the minimum of $g_{k\ell}$ over real modes and of $(\beta+\sigma_{k\ell})/2$ over
complex ones. On a real mode, $|\mu_{k\ell} - 1| > C'\delta^2$ and the definition of $C'$ give
$g_{k\ell} > \kappa L_E\rho_\infty$, because
$\rho_\infty = \eta_p C_p(1+L_\alpha)\delta^2/\lambda_*$. Together with the condition on complex modes
this yields $g_0 > \kappa L_E\rho_\infty$.
\end{proof}

Condition (C2) holds when $\varepsilon \ge (4\kappa\delta/\alpha)\,e^{-K_2/\delta^2}$ with
$K_2 := \lambda_*^2\log(1+\alpha/2)/(2\kappa L_E\eta_p C_p(1+L_\alpha))$: $\varepsilon$ may be
exponentially small in $1/\delta^2$ but not zero at fixed $\delta$.

\subsubsection{Numerical check with the predictor trained concurrently}

With the predictor trained concurrently from its initialisation, the sign of the leading eigenvalue
of the frozen-predictor Jacobian of Eq.~\eqref{eq:full_jacobian} at initialisation classified escape
against collapse in $104$ of $104$ runs at equal learning rates, $104$ of $104$ at three times the
predictor rate, and $102$ of $104$ at ten times; both disagreements lie at $|\lambda_{\mathrm{init}}| \le 0.004$.

\section{Training Details}
\label{app:training_details}

This section presents the implementation details of the project.

\subsection{Programming environment}

The experimental codebase, available at \url{https://github.com/jose-melo/drive-vs-decay}, was implemented in Python and relies on several widely used scientific computing and deep learning libraries. Table~\ref{tab:env} summarizes the main dependencies and their corresponding versions. All experiments were conducted on a single NVIDIA HGX A100 GPU with 40\,GB of memory.

\begin{table}[htb]
    \centering
    \caption{Main libraries used in the project.}
    \label{tab:env}
    \scalebox{0.85}{
        \begin{tabular}{ll}
            \toprule
            \textbf{Library} & \textbf{Description} \\
            \midrule
            \texttt{\textbf{Python} v3.12} & Core programming language used for all experiments \\
            \texttt{\textbf{pytorch}} $\geq$ 2.0.0 & Deep learning framework used to implement all models \\
            \texttt{\textbf{numpy}} $\geq$ 1.24.0 & Numerical computing and array operations \\
            \texttt{\textbf{sciPy}} $\geq$ 1.10.0 & Scientific computing utilities and linear algebra \\
            \texttt{\textbf{pandas}} $\geq$ 2.0.0 & Tabular data processing and experiment management \\
            \texttt{\textbf{scikit-learn}} $\geq$ 1.2.0 & Baseline models and evaluation utilities \\
            \texttt{\textbf{pytorch-lightning}} $\geq$ 2.0.0 & Training loops and experiment orchestration \\
            \texttt{\textbf{matplotlib}} $\geq$ 3.7.0 & Visualization and figure generation \\
            \texttt{\textbf{wandb}} $\geq$ 0.15.0 & Experiment tracking and logging \\
            \texttt{\textbf{tqdm}} $\geq$ 4.65.0 & Progress bars for long-running experiments \\
            \bottomrule
        \end{tabular}
    }
\end{table}

\subsection{Tabular-domain experiments}

We conduct experiments on three standard tabular benchmarks: \texttt{Jannis}, \texttt{Helena}, and \texttt{ALOI}, which cover a broad range of dataset scales, feature dimensionalities, and classification complexities. \texttt{Helena} is a medium-scale, high-dimensional dataset with 100 classes and purely numerical features, testing multi-class discrimination under moderate sample sizes. \texttt{ALOI} is a large-scale, high-cardinality classification benchmark with 1{,}000 classes and 128 numerical features, posing a challenging regime for representation learning and scalability. Together, these datasets enable a systematic evaluation across increasing dimensionality, class cardinality, and computational cost.

\begin{table}[h!]
    \centering
    \caption{Dataset characteristics}
    \label{tab:data_characteristics}
    \scalebox{0.9}{
    \begin{tabular}{cccc}
    \toprule
        & \texttt{Jannis} & \texttt{Helena} & \texttt{ALOI} \\
    \midrule
        Samples & 83,733 & 65,196 & 108,000 \\
        Numerical & $54$ & $27$ & $128$ \\
        Classes & $4$ & $100$ & $1{,}000$ \\
        Metric & Accuracy & Accuracy & Accuracy \\
    \bottomrule
    \end{tabular}
    }
\end{table}

\subsection{Image-domain experiments}
\label{app:image-experiments}

For the image experiments, we pretrain I-JEPA \citep{assranSelfSupervisedLearningImages2023} on three image benchmarks at modest resolution:
\begin{itemize}
    \item \textbf{CIFAR-10} ($32\!\times\!32$, 10 classes, 50k train images);
    \item \textbf{CIFAR-100} ($32\!\times\!32$, 100 classes, 50k train images);
    \item \textbf{STL-10} ($96\!\times\!96$, 10 classes, 5k labeled + 100k unlabeled).
\end{itemize}

\section{Evaluation Metrics}
\label{app:evaluation_metrics}

\subsection{Quantifying Representational Collapse.}

A central failure mode of Joint Embedding Predictive Architectures and related self-supervised systems is \emph{representational collapse}, where learned embeddings concentrate in a low-dimensional subspace or degenerate toward rank-deficient solutions. Such collapse is not necessarily reflected by the training loss alone, as non-contrastive objectives may admit low-loss solutions with severely impoverished geometry. Consequently, evaluation must explicitly probe the \emph{intrinsic dimensionality} and spectral diversity of the learned representations. To this end, we adopt a spectral measure that captures how information is distributed across orthogonal directions in the embedding space, thereby providing a principled and architecture-agnostic proxy for collapse.

\paragraph{Effective Rank.}
Let $\mat{Z} \in \mathbb{R}^{n \times d}$ denote a matrix of embeddings, obtained by stacking $n$ representation vectors of dimension $d$. Let
\begin{equation}
\mat{Z} = \mat{U} \, \diag(\sigma_1,\ldots,\sigma_r)\, \mat{V}^\top
\end{equation}
be the singular value decomposition of $\mat{Z}$, where $r = \operatorname{rank}(\mat{Z})$ and $\sigma_1 \ge \cdots \ge \sigma_r \ge 0$ are the non-zero singular values. We define a normalized spectral distribution
\begin{equation}
p_i \;\coloneqq\; \frac{\sigma_i}{\sum_{j=1}^r \sigma_j}, 
\qquad i = 1,\ldots,r,
\end{equation}
which can be interpreted as a probability mass function over singular modes, reflecting how representation energy is allocated across orthogonal directions.

The \emph{effective rank} is then defined as the exponential of the Shannon entropy of this distribution:
\begin{equation}
\label{eq:effective_rank}
\mathrm{ERank}(\mat{Z})
\;\coloneqq\;
\exp\!\left( 
- \sum_{i=1}^r p_i \log p_i
\right).
\end{equation}

\paragraph{Interpretation and Theoretical Properties.}
The effective rank admits several desirable properties for diagnosing collapse:
\begin{itemize}
    \item \textbf{Upper and lower bounds.} One has $1 \le \mathrm{ERank}(\mat{Z}) \le r$. The lower bound is achieved when all mass concentrates on a single singular value (complete collapse to a one-dimensional subspace), while the upper bound is achieved when the spectrum is uniform, $p_i = 1/r$, corresponding to maximally spread representations.
    \item \textbf{Sensitivity to spectral concentration.} Unlike the algebraic rank, which is binary and insensitive to magnitude, $\mathrm{ERank}$ continuously penalizes dominance by a small number of singular modes. This makes it well-suited to detect \emph{soft collapse}, where representations remain formally full-rank but are effectively low-dimensional.
    \item \textbf{Scale invariance.} Since $\{p_i\}$ is normalized, $\mathrm{ERank}$ is invariant to global rescaling of representations, isolating geometric collapse from trivial norm effects.
\end{itemize}

\paragraph{Relation to Collapse Dynamics.}
From a dynamical perspective, collapse corresponds to a regime in which training amplifies only a small subset of eigendirections of the representation covariance while suppressing others. In this regime, the spectral distribution $\{p_i\}$ becomes sharply peaked, reducing entropy and driving $\mathrm{ERank}$ toward its minimum. Conversely, stable learning that preserves multiple informative modes yields a flatter spectrum and higher effective rank. As such, $\mathrm{ERank}$ provides a direct quantitative probe of whether training dynamics maintain a rich, multi-dimensional representation geometry or drift toward degenerate fixed points.

Throughout our experiments, we report effective rank as a primary diagnostic for representational collapse, complementing downstream performance metrics and enabling a fine-grained analysis of how different stabilization strategies affect the intrinsic dimensionality of learned embeddings.
\section{Full Results}
\label{app:full_results}

The aggregated grid over context ratio and predictor scale is Table~\ref{tab:jepa_grid_results_merged} in the main text; the per-dataset tables follow.

\subsection{Tabular-JEPA Variants}
\begin{table*}[t]
\centering
\tiny
\setlength{\tabcolsep}{2pt}
\renewcommand{\arraystretch}{1.1}

\caption{Detailed results for Tabular--JEPA variants across ALOI, Helena, and Jannis datasets. Each row shows results for a specific context ratio and predictor scale configuration, with means and standard errors (SEM) aggregated over seeds. Collapse statistics ($\mu$ quartiles), effective rank (Erank), final training loss, and test accuracy are reported.}
\label{tab:detailed_tabular_jepa_a}

\resizebox{\linewidth}{!}{%
\begin{tabular}{llcc||cc cc||cc cc||c c}
\toprule
\multirow{2}{*}{Dataset} &
\multirow{2}{*}{Context} &
\multirow{2}{*}{\shortstack{Predictor\\scale}} &
\multirow{2}{*}{\shortstack{Config\\ID}} &
\multicolumn{2}{c}{\shortstack{Epoch 0\\$\mu_{50}$}} &
\multicolumn{2}{c||}{\shortstack{Epoch 0\\Erank}} &
\multicolumn{2}{c}{\shortstack{Last epoch\\$\mu_{50}$}} &
\multicolumn{2}{c||}{\shortstack{Last epoch\\Erank}} &
\multirow{2}{*}{\shortstack{Final\\loss}} &
\multirow{2}{*}{\shortstack{Test\\Acc.}} \\
\cmidrule(lr){5-6}\cmidrule(lr){7-8}\cmidrule(lr){9-10}\cmidrule(lr){11-12}
& & & & Mean & SEM & Mean & SEM & Mean & SEM & Mean & SEM & & \\
\midrule
ALOI & 0.2 & 0.01 & 0 & $1.46$ & $0.029$ & $19.44$ & $0.17$ & $0.64$ & $0.006$ & $9.45$ & $0.06$ & $0.0350$ & $0.699$ \\
Helena & 0.2 & 0.01 & 0 & $20.06$ & $1.520$ & $1.40$ & $0.00$ & $4.84$ & $0.361$ & $2.01$ & $0.07$ & $0.0106$ & $0.229$ \\
Jannis & 0.2 & 0.01 & 0 & $2.01$ & $0.039$ & $11.41$ & $0.22$ & $1.92$ & $0.015$ & $6.12$ & $0.19$ & $0.0425$ & $0.658$ \\
ALOI & 0.2 & 0.05 & 1 & $0.68$ & $0.007$ & $20.50$ & $0.30$ & $0.64$ & $0.003$ & $7.65$ & $0.08$ & $0.0219$ & $0.668$ \\
Helena & 0.2 & 0.05 & 1 & $5.83$ & $0.737$ & $1.52$ & $0.02$ & $6.48$ & $1.336$ & $1.51$ & $0.03$ & $0.0041$ & $0.179$ \\
Jannis & 0.2 & 0.05 & 1 & $0.86$ & $0.009$ & $11.12$ & $0.31$ & $1.00$ & $0.007$ & $4.52$ & $0.08$ & $0.0280$ & $0.639$ \\
ALOI & 0.2 & 0.10 & 2 & $0.41$ & $0.006$ & $18.01$ & $0.09$ & $0.41$ & $0.007$ & $6.66$ & $0.04$ & $0.0153$ & $0.650$ \\
Helena & 0.2 & 0.10 & 2 & $1.79$ & $0.044$ & $1.60$ & $0.03$ & $1.45$ & $0.300$ & $1.27$ & $0.04$ & $0.0013$ & $0.164$ \\
Jannis & 0.2 & 0.10 & 2 & $0.45$ & $0.007$ & $10.19$ & $0.21$ & $0.49$ & $0.006$ & $3.81$ & $0.08$ & $0.0281$ & $0.623$ \\
ALOI & 0.2 & 0.20 & 3 & $0.21$ & $0.003$ & $14.50$ & $0.09$ & $0.18$ & $9.21\cdot 10^{-4}$ & $7.28$ & $0.04$ & $0.0212$ & $0.716$ \\
Helena & 0.2 & 0.20 & 3 & $0.94$ & $0.120$ & $1.52$ & $0.04$ & $2.49$ & $1.587$ & $1.14$ & $0.00$ & $2.95\cdot 10^{-4}$ & $0.162$ \\
Jannis & 0.2 & 0.20 & 3 & $0.22$ & $0.002$ & $9.73$ & $0.22$ & $0.22$ & $0.002$ & $3.76$ & $0.07$ & $0.0304$ & $0.608$ \\
ALOI & 0.2 & 0.30 & 4 & $0.15$ & $0.003$ & $13.02$ & $0.20$ & $0.11$ & $3.74\cdot 10^{-4}$ & $6.85$ & $0.06$ & $0.0232$ & $0.723$ \\
Helena & 0.2 & 0.30 & 4 & $0.69$ & $0.058$ & $1.43$ & $0.03$ & $2.98$ & $1.225$ & $1.10$ & $0.01$ & $2.39\cdot 10^{-4}$ & $0.154$ \\
Jannis & 0.2 & 0.30 & 4 & $0.15$ & $0.003$ & $8.82$ & $0.27$ & $0.14$ & $0.001$ & $3.53$ & $0.10$ & $0.0282$ & $0.607$ \\
\bottomrule
\end{tabular}%
}
\end{table*}

\begin{table*}[t]
\centering
\tiny
\setlength{\tabcolsep}{2pt}
\renewcommand{\arraystretch}{1.1}

\caption{Detailed results for Tabular--JEPA variants across ALOI, Helena, and Jannis datasets (continued).}
\label{tab:detailed_tabular_jepa_b}

\resizebox{\linewidth}{!}{%
\begin{tabular}{llcc||cc cc||cc cc||c c}
\toprule
\multirow{2}{*}{Dataset} &
\multirow{2}{*}{Context} &
\multirow{2}{*}{\shortstack{Predictor\\scale}} &
\multirow{2}{*}{\shortstack{Config\\ID}} &
\multicolumn{2}{c}{\shortstack{Epoch 0\\$\mu_{50}$}} &
\multicolumn{2}{c||}{\shortstack{Epoch 0\\Erank}} &
\multicolumn{2}{c}{\shortstack{Last epoch\\$\mu_{50}$}} &
\multicolumn{2}{c||}{\shortstack{Last epoch\\Erank}} &
\multirow{2}{*}{\shortstack{Final\\loss}} &
\multirow{2}{*}{\shortstack{Test\\Acc.}} \\
\cmidrule(lr){5-6}\cmidrule(lr){7-8}\cmidrule(lr){9-10}\cmidrule(lr){11-12}
& & & & Mean & SEM & Mean & SEM & Mean & SEM & Mean & SEM & & \\
\midrule
ALOI & 0.2 & 0.50 & 5 & $0.09$ & $0.002$ & $11.11$ & $0.04$ & $0.06$ & $9.17\cdot 10^{-4}$ & $5.47$ & $0.05$ & $0.0198$ & $0.689$ \\
Helena & 0.2 & 0.50 & 5 & $1.14$ & $0.091$ & $1.37$ & $0.01$ & $13.63$ & $3.757$ & $1.06$ & $0.00$ & $1.73\cdot 10^{-4}$ & $0.128$ \\
Jannis & 0.2 & 0.50 & 5 & $0.09$ & $0.002$ & $8.26$ & $0.13$ & $0.08$ & $0.001$ & $3.09$ & $0.06$ & $0.0203$ & $0.585$ \\
ALOI & 0.2 & 1.00 & 6 & $0.05$ & $9.99\cdot 10^{-4}$ & $9.52$ & $0.22$ & $0.03$ & $3.51\cdot 10^{-4}$ & $3.96$ & $0.15$ & $0.0157$ & $0.625$ \\
Helena & 0.2 & 1.00 & 6 & $0.67$ & $0.153$ & $1.36$ & $0.02$ & $24.84$ & $1.856$ & $1.04$ & $0.00$ & $1.34\cdot 10^{-4}$ & $0.119$ \\
Jannis & 0.2 & 1.00 & 6 & $0.04$ & $9.46\cdot 10^{-4}$ & $7.87$ & $0.07$ & $0.04$ & $0.001$ & $2.27$ & $0.04$ & $0.0100$ & $0.572$ \\
ALOI & 0.3 & 0.01 & 7 & $1.23$ & $0.041$ & $20.53$ & $0.11$ & $0.58$ & $0.002$ & $12.02$ & $0.07$ & $0.0404$ & $0.744$ \\
Helena & 0.3 & 0.01 & 7 & $28.36$ & $3.318$ & $1.60$ & $0.07$ & $2.93$ & $0.178$ & $1.95$ & $0.01$ & $0.0090$ & $0.239$ \\
Jannis & 0.3 & 0.01 & 7 & $1.40$ & $0.057$ & $14.65$ & $0.35$ & $0.54$ & $0.089$ & $25.92$ & $4.06$ & $0.5441$ & $0.703$ \\
ALOI & 0.3 & 0.05 & 8 & $0.66$ & $0.001$ & $20.57$ & $0.20$ & $0.58$ & $0.001$ & $9.99$ & $0.13$ & $0.0246$ & $0.715$ \\
Helena & 0.3 & 0.05 & 8 & $6.51$ & $0.751$ & $1.63$ & $0.02$ & $4.64$ & $1.023$ & $1.52$ & $0.00$ & $0.0039$ & $0.174$ \\
Jannis & 0.3 & 0.05 & 8 & $0.82$ & $0.004$ & $11.67$ & $0.36$ & $0.84$ & $0.005$ & $6.93$ & $0.32$ & $0.0635$ & $0.658$ \\
ALOI & 0.3 & 0.10 & 9 & $0.39$ & $0.007$ & $17.79$ & $0.04$ & $0.41$ & $0.004$ & $8.14$ & $0.19$ & $0.0158$ & $0.690$ \\
Helena & 0.3 & 0.10 & 9 & $2.48$ & $0.253$ & $1.61$ & $0.01$ & $2.15$ & $0.191$ & $1.34$ & $0.03$ & $0.0022$ & $0.149$ \\
Jannis & 0.3 & 0.10 & 9 & $0.45$ & $0.009$ & $10.54$ & $0.12$ & $0.46$ & $0.007$ & $4.63$ & $0.05$ & $0.0363$ & $0.637$ \\
\bottomrule
\end{tabular}%
}
\end{table*}

\begin{table*}[t]
\centering
\tiny
\setlength{\tabcolsep}{2pt}
\renewcommand{\arraystretch}{1.1}

\caption{Detailed results for Tabular--JEPA variants across ALOI, Helena, and Jannis datasets (continued).}
\label{tab:detailed_tabular_jepa_c}

\resizebox{\linewidth}{!}{%
\begin{tabular}{llcc||cc cc||cc cc||c c}
\toprule
\multirow{2}{*}{Dataset} &
\multirow{2}{*}{Context} &
\multirow{2}{*}{\shortstack{Predictor\\scale}} &
\multirow{2}{*}{\shortstack{Config\\ID}} &
\multicolumn{2}{c}{\shortstack{Epoch 0\\$\mu_{50}$}} &
\multicolumn{2}{c||}{\shortstack{Epoch 0\\Erank}} &
\multicolumn{2}{c}{\shortstack{Last epoch\\$\mu_{50}$}} &
\multicolumn{2}{c||}{\shortstack{Last epoch\\Erank}} &
\multirow{2}{*}{\shortstack{Final\\loss}} &
\multirow{2}{*}{\shortstack{Test\\Acc.}} \\
\cmidrule(lr){5-6}\cmidrule(lr){7-8}\cmidrule(lr){9-10}\cmidrule(lr){11-12}
& & & & Mean & SEM & Mean & SEM & Mean & SEM & Mean & SEM & & \\
\midrule
ALOI & 0.3 & 0.20 & 10 & $0.23$ & $0.007$ & $13.72$ & $0.27$ & $0.18$ & $7.53\cdot 10^{-4}$ & $7.90$ & $0.08$ & $0.0228$ & $0.738$ \\
Helena & 0.3 & 0.20 & 10 & $1.08$ & $0.019$ & $1.51$ & $0.02$ & $4.30$ & $1.980$ & $1.15$ & $0.00$ & $3.37\cdot 10^{-4}$ & $0.153$ \\
Jannis & 0.3 & 0.20 & 10 & $0.23$ & $9.50\cdot 10^{-4}$ & $9.38$ & $0.11$ & $0.22$ & $0.001$ & $4.01$ & $0.02$ & $0.0283$ & $0.625$ \\
ALOI & 0.3 & 0.30 & 11 & $0.15$ & $0.001$ & $11.82$ & $0.05$ & $0.11$ & $0.001$ & $6.66$ & $0.04$ & $0.0213$ & $0.706$ \\
Helena & 0.3 & 0.30 & 11 & $0.78$ & $0.063$ & $1.38$ & $0.01$ & $5.41$ & $2.414$ & $1.11$ & $0.00$ & $2.12\cdot 10^{-4}$ & $0.142$ \\
Jannis & 0.3 & 0.30 & 11 & $0.15$ & $9.40\cdot 10^{-4}$ & $8.67$ & $0.10$ & $0.14$ & $0.001$ & $3.55$ & $0.01$ & $0.0228$ & $0.612$ \\
ALOI & 0.3 & 0.50 & 12 & $0.10$ & $9.09\cdot 10^{-4}$ & $10.19$ & $0.20$ & $0.06$ & $3.84\cdot 10^{-4}$ & $5.18$ & $0.04$ & $0.0175$ & $0.686$ \\
Helena & 0.3 & 0.50 & 12 & $0.59$ & $0.016$ & $1.33$ & $0.02$ & $9.66$ & $2.677$ & $1.04$ & $0.00$ & $1.30\cdot 10^{-4}$ & $0.098$ \\
Jannis & 0.3 & 0.50 & 12 & $0.09$ & $0.002$ & $8.32$ & $0.28$ & $0.08$ & $0.002$ & $3.04$ & $0.04$ & $0.0157$ & $0.589$ \\
ALOI & 0.3 & 1.00 & 13 & $0.05$ & $8.16\cdot 10^{-4}$ & $8.73$ & $0.09$ & $0.03$ & $3.37\cdot 10^{-4}$ & $3.77$ & $0.04$ & $0.0131$ & $0.612$ \\
Helena & 0.3 & 1.00 & 13 & $0.43$ & $0.093$ & $1.27$ & $0.01$ & $8.89$ & $1.223$ & $1.02$ & $0.00$ & $4.76\cdot 10^{-5}$ & $0.085$ \\
Jannis & 0.3 & 1.00 & 13 & $0.04$ & $7.46\cdot 10^{-4}$ & $7.66$ & $0.25$ & $0.04$ & $7.88\cdot 10^{-4}$ & $2.17$ & $0.06$ & $0.0065$ & $0.561$ \\
ALOI & 0.4 & 0.01 & 14 & $1.10$ & $0.009$ & $22.17$ & $0.32$ & $0.53$ & $0.004$ & $15.86$ & $0.37$ & $0.0600$ & $0.782$ \\
Helena & 0.4 & 0.01 & 14 & $27.94$ & $2.370$ & $1.63$ & $0.03$ & $2.59$ & $0.167$ & $1.96$ & $0.02$ & $0.0087$ & $0.243$ \\
Jannis & 0.4 & 0.01 & 14 & $0.97$ & $0.058$ & $21.86$ & $0.42$ & $0.32$ & $0.004$ & $29.17$ & $0.35$ & $0.6053$ & $0.695$ \\
ALOI & 0.4 & 0.05 & 15 & $0.57$ & $0.024$ & $22.14$ & $0.21$ & $0.52$ & $0.002$ & $12.88$ & $0.12$ & $0.0348$ & $0.759$ \\
Helena & 0.4 & 0.05 & 15 & $6.85$ & $0.522$ & $1.74$ & $0.03$ & $4.10$ & $0.613$ & $1.58$ & $0.01$ & $0.0045$ & $0.183$ \\
Jannis & 0.4 & 0.05 & 15 & $0.72$ & $0.012$ & $14.18$ & $0.69$ & $0.30$ & $0.004$ & $29.00$ & $0.06$ & $0.5001$ & $0.697$ \\
ALOI & 0.4 & 0.10 & 16 & $0.39$ & $0.003$ & $17.52$ & $0.24$ & $0.40$ & $0.007$ & $10.21$ & $0.11$ & $0.0211$ & $0.736$ \\
Helena & 0.4 & 0.10 & 16 & $2.60$ & $0.304$ & $1.66$ & $0.02$ & $2.53$ & $0.463$ & $1.44$ & $0.00$ & $0.0032$ & $0.171$ \\
Jannis & 0.4 & 0.10 & 16 & $0.44$ & $0.003$ & $10.73$ & $0.44$ & $0.43$ & $0.002$ & $6.12$ & $0.06$ & $0.0502$ & $0.655$ \\
\bottomrule
\end{tabular}%
}
\end{table*}

\begin{table*}[t]
\centering
\tiny
\setlength{\tabcolsep}{2pt}
\renewcommand{\arraystretch}{1.1}

\caption{Detailed results for Tabular--JEPA variants across ALOI, Helena, and Jannis datasets (continued).}
\label{tab:detailed_tabular_jepa_d}

\resizebox{\linewidth}{!}{%
\begin{tabular}{llcc||cc cc||cc cc||c c}
\toprule
\multirow{2}{*}{Dataset} &
\multirow{2}{*}{Context} &
\multirow{2}{*}{\shortstack{Predictor\\scale}} &
\multirow{2}{*}{\shortstack{Config\\ID}} &
\multicolumn{2}{c}{\shortstack{Epoch 0\\$\mu_{50}$}} &
\multicolumn{2}{c||}{\shortstack{Epoch 0\\Erank}} &
\multicolumn{2}{c}{\shortstack{Last epoch\\$\mu_{50}$}} &
\multicolumn{2}{c||}{\shortstack{Last epoch\\Erank}} &
\multirow{2}{*}{\shortstack{Final\\loss}} &
\multirow{2}{*}{\shortstack{Test\\Acc.}} \\
\cmidrule(lr){5-6}\cmidrule(lr){7-8}\cmidrule(lr){9-10}\cmidrule(lr){11-12}
& & & & Mean & SEM & Mean & SEM & Mean & SEM & Mean & SEM & & \\
\midrule
ALOI & 0.4 & 0.20 & 17 & $0.22$ & $0.007$ & $13.60$ & $0.18$ & $0.18$ & $1.10\cdot 10^{-4}$ & $8.57$ & $0.15$ & $0.0252$ & $0.747$ \\
Helena & 0.4 & 0.20 & 17 & $1.18$ & $0.078$ & $1.46$ & $0.01$ & $4.61$ & $0.305$ & $1.16$ & $0.02$ & $3.61\cdot 10^{-4}$ & $0.148$ \\
Jannis & 0.4 & 0.20 & 17 & $0.22$ & $0.001$ & $9.57$ & $0.12$ & $0.22$ & $4.25\cdot 10^{-4}$ & $4.19$ & $0.04$ & $0.0263$ & $0.623$ \\
ALOI & 0.4 & 0.30 & 18 & $0.16$ & $5.16\cdot 10^{-4}$ & $11.63$ & $0.17$ & $0.11$ & $0.002$ & $7.11$ & $0.08$ & $0.0225$ & $0.736$ \\
Helena & 0.4 & 0.30 & 18 & $0.89$ & $0.151$ & $1.40$ & $0.01$ & $11.67$ & $4.567$ & $1.10$ & $0.02$ & $1.98\cdot 10^{-4}$ & $0.127$ \\
Jannis & 0.4 & 0.30 & 18 & $0.15$ & $0.003$ & $8.18$ & $0.06$ & $0.14$ & $0.001$ & $3.45$ & $0.04$ & $0.0169$ & $0.614$ \\
ALOI & 0.4 & 0.50 & 19 & $0.10$ & $0.001$ & $9.94$ & $0.13$ & $0.06$ & $5.98\cdot 10^{-4}$ & $5.25$ & $0.09$ & $0.0170$ & $0.693$ \\
Helena & 0.4 & 0.50 & 19 & $0.55$ & $0.054$ & $1.32$ & $0.01$ & $8.82$ & $1.423$ & $1.05$ & $0.00$ & $8.27\cdot 10^{-5}$ & $0.104$ \\
Jannis & 0.4 & 0.50 & 19 & $0.09$ & $0.002$ & $8.26$ & $0.27$ & $0.08$ & $0.002$ & $2.88$ & $0.02$ & $0.0113$ & $0.588$ \\
ALOI & 0.4 & 1.00 & 20 & $0.05$ & $5.20\cdot 10^{-4}$ & $8.38$ & $0.11$ & $0.03$ & $1.76\cdot 10^{-4}$ & $3.57$ & $0.09$ & $0.0123$ & $0.604$ \\
Helena & 0.4 & 1.00 & 20 & $0.28$ & $0.030$ & $1.26$ & $0.01$ & $2.79$ & $0.698$ & $1.01$ & $0.00$ & $2.74\cdot 10^{-5}$ & $0.069$ \\
Jannis & 0.4 & 1.00 & 20 & $0.04$ & $3.73\cdot 10^{-4}$ & $7.52$ & $0.35$ & $0.04$ & $1.67\cdot 10^{-4}$ & $2.10$ & $0.06$ & $0.0048$ & $0.562$ \\
ALOI & 0.5 & 0.01 & 21 & $0.97$ & $0.003$ & $23.35$ & $0.28$ & $0.50$ & $0.001$ & $19.37$ & $0.22$ & $0.0871$ & $0.812$ \\
Helena & 0.5 & 0.01 & 21 & $22.38$ & $2.732$ & $1.82$ & $0.06$ & $1.98$ & $0.158$ & $2.29$ & $0.06$ & $0.0150$ & $0.256$ \\
Jannis & 0.5 & 0.01 & 21 & $0.73$ & $0.026$ & $25.78$ & $0.16$ & $0.22$ & $0.013$ & $26.55$ & $0.96$ & $0.1944$ & $0.689$ \\
ALOI & 0.5 & 0.05 & 22 & $0.58$ & $0.005$ & $22.09$ & $0.23$ & $0.50$ & $0.004$ & $17.31$ & $0.34$ & $0.0563$ & $0.804$ \\
Helena & 0.5 & 0.05 & 22 & $5.37$ & $0.207$ & $1.90$ & $0.02$ & $3.94$ & $0.885$ & $1.80$ & $0.02$ & $0.0076$ & $0.203$ \\
Jannis & 0.5 & 0.05 & 22 & $0.50$ & $0.016$ & $20.61$ & $0.42$ & $0.21$ & $0.007$ & $28.71$ & $0.36$ & $0.1863$ & $0.693$ \\
ALOI & 0.5 & 0.10 & 23 & $0.39$ & $0.009$ & $17.08$ & $0.04$ & $0.38$ & $0.005$ & $12.62$ & $0.16$ & $0.0310$ & $0.776$ \\
Helena & 0.5 & 0.10 & 23 & $2.71$ & $0.194$ & $1.75$ & $0.04$ & $2.97$ & $0.584$ & $1.61$ & $0.02$ & $0.0052$ & $0.180$ \\
Jannis & 0.5 & 0.10 & 23 & $0.42$ & $0.008$ & $11.37$ & $0.58$ & $0.16$ & $0.006$ & $28.73$ & $0.55$ & $0.1113$ & $0.696$ \\
ALOI & 0.5 & 0.20 & 24 & $0.22$ & $0.010$ & $13.74$ & $0.20$ & $0.18$ & $0.001$ & $9.09$ & $0.14$ & $0.0287$ & $0.762$ \\
Helena & 0.5 & 0.20 & 24 & $1.18$ & $0.117$ & $1.45$ & $0.02$ & $14.86$ & $4.664$ & $1.15$ & $0.03$ & $2.90\cdot 10^{-4}$ & $0.148$ \\
Jannis & 0.5 & 0.20 & 24 & $0.22$ & $0.003$ & $9.42$ & $0.09$ & $0.21$ & $0.002$ & $5.04$ & $0.16$ & $0.0173$ & $0.640$ \\
ALOI & 0.5 & 0.30 & 25 & $0.16$ & $9.55\cdot 10^{-4}$ & $11.45$ & $0.10$ & $0.11$ & $7.21\cdot 10^{-4}$ & $7.04$ & $0.05$ & $0.0213$ & $0.739$ \\
Helena & 0.5 & 0.30 & 25 & $0.83$ & $0.076$ & $1.41$ & $0.01$ & $5.56$ & $0.694$ & $1.11$ & $0.01$ & $2.08\cdot 10^{-4}$ & $0.144$ \\
Jannis & 0.5 & 0.30 & 25 & $0.16$ & $0.002$ & $8.62$ & $0.24$ & $0.15$ & $5.17\cdot 10^{-4}$ & $3.59$ & $0.12$ & $0.0094$ & $0.619$ \\
ALOI & 0.5 & 0.50 & 26 & $0.10$ & $0.003$ & $9.80$ & $0.00$ & $0.07$ & $4.76\cdot 10^{-4}$ & $5.29$ & $0.05$ & $0.0163$ & $0.697$ \\
Helena & 0.5 & 0.50 & 26 & $0.56$ & $0.029$ & $1.31$ & $0.00$ & $16.82$ & $2.302$ & $1.04$ & $0.01$ & $6.46\cdot 10^{-5}$ & $0.106$ \\
Jannis & 0.5 & 0.50 & 26 & $0.09$ & $0.001$ & $7.68$ & $0.13$ & $0.09$ & $0.002$ & $2.74$ & $0.04$ & $0.0050$ & $0.599$ \\
ALOI & 0.5 & 1.00 & 27 & $0.05$ & $2.31\cdot 10^{-4}$ & $8.41$ & $0.05$ & $0.03$ & $2.82\cdot 10^{-4}$ & $3.65$ & $0.07$ & $0.0126$ & $0.605$ \\
Helena & 0.5 & 1.00 & 27 & $0.30$ & $0.024$ & $1.26$ & $0.02$ & $2.90$ & $1.130$ & $1.00$ & $7.17\cdot 10^{-4}$ & $3.89\cdot 10^{-7}$ & $0.062$ \\
Jannis & 0.5 & 1.00 & 27 & $0.04$ & $4.25\cdot 10^{-4}$ & $7.34$ & $0.15$ & $0.04$ & $1.98\cdot 10^{-4}$ & $2.04$ & $0.04$ & $0.0023$ & $0.565$ \\
ALOI & 0.6 & 0.01 & 28 & $0.87$ & $0.005$ & $25.14$ & $0.20$ & $0.44$ & $0.003$ & $24.83$ & $0.17$ & $0.1736$ & $0.864$ \\
Helena & 0.6 & 0.01 & 28 & $19.18$ & $2.844$ & $2.09$ & $0.05$ & $1.94$ & $0.166$ & $2.68$ & $0.11$ & $0.0182$ & $0.271$ \\
Jannis & 0.6 & 0.01 & 28 & $0.58$ & $0.017$ & $29.33$ & $0.18$ & $0.19$ & $0.002$ & $28.97$ & $0.16$ & $0.2704$ & $0.688$ \\
ALOI & 0.6 & 0.05 & 29 & $0.54$ & $0.007$ & $22.48$ & $0.33$ & $0.41$ & $8.99\cdot 10^{-4}$ & $16.44$ & $0.16$ & $0.0366$ & $0.833$ \\
Helena & 0.6 & 0.05 & 29 & $4.34$ & $0.246$ & $2.12$ & $0.05$ & $4.30$ & $0.441$ & $2.05$ & $0.02$ & $0.0098$ & $0.224$ \\
Jannis & 0.6 & 0.05 & 29 & $0.41$ & $0.002$ & $23.94$ & $0.02$ & $0.18$ & $0.003$ & $30.27$ & $0.26$ & $0.2489$ & $0.695$ \\
ALOI & 0.6 & 0.10 & 30 & $0.38$ & $0.005$ & $17.33$ & $0.24$ & $0.31$ & $0.002$ & $12.44$ & $0.13$ & $0.0322$ & $0.811$ \\
Helena & 0.6 & 0.10 & 30 & $2.11$ & $0.224$ & $1.77$ & $0.03$ & $11.73$ & $4.667$ & $1.77$ & $0.01$ & $0.0089$ & $0.189$ \\
Jannis & 0.6 & 0.10 & 30 & $0.40$ & $0.002$ & $12.78$ & $0.07$ & $0.14$ & $0.002$ & $28.69$ & $0.19$ & $0.1660$ & $0.692$ \\
ALOI & 0.6 & 0.20 & 31 & $0.21$ & $0.001$ & $13.47$ & $0.09$ & $0.16$ & $0.002$ & $8.39$ & $0.06$ & $0.0277$ & $0.753$ \\
Helena & 0.6 & 0.20 & 31 & $1.15$ & $0.158$ & $1.52$ & $0.01$ & $16.53$ & $6.522$ & $1.24$ & $0.01$ & $9.71\cdot 10^{-4}$ & $0.154$ \\
Jannis & 0.6 & 0.20 & 31 & $0.23$ & $0.006$ & $9.74$ & $0.25$ & $0.19$ & $0.003$ & $8.14$ & $0.13$ & $0.0634$ & $0.652$ \\
ALOI & 0.6 & 0.30 & 32 & $0.15$ & $0.003$ & $11.79$ & $0.16$ & $0.10$ & $0.001$ & $6.75$ & $0.12$ & $0.0235$ & $0.722$ \\
Helena & 0.6 & 0.30 & 32 & $0.76$ & $0.034$ & $1.39$ & $0.00$ & $10.88$ & $1.892$ & $1.13$ & $0.01$ & $3.04\cdot 10^{-4}$ & $0.144$ \\
Jannis & 0.6 & 0.30 & 32 & $0.15$ & $7.04\cdot 10^{-4}$ & $8.74$ & $0.02$ & $0.15$ & $0.001$ & $3.89$ & $0.07$ & $0.0192$ & $0.629$ \\
ALOI & 0.6 & 0.50 & 33 & $0.09$ & $5.30\cdot 10^{-4}$ & $10.24$ & $0.05$ & $0.06$ & $4.26\cdot 10^{-4}$ & $5.13$ & $0.06$ & $0.0192$ & $0.690$ \\
Helena & 0.6 & 0.50 & 33 & $0.44$ & $0.059$ & $1.31$ & $0.01$ & $6.11$ & $3.534$ & $1.02$ & $0.01$ & $3.22\cdot 10^{-5}$ & $0.071$ \\
Jannis & 0.6 & 0.50 & 33 & $0.09$ & $0.002$ & $8.06$ & $0.17$ & $0.09$ & $0.001$ & $2.70$ & $0.05$ & $0.0081$ & $0.586$ \\
ALOI & 0.6 & 1.00 & 34 & $0.05$ & $1.10\cdot 10^{-4}$ & $8.43$ & $0.21$ & $0.03$ & $3.95\cdot 10^{-4}$ & $3.40$ & $0.15$ & $0.0134$ & $0.597$ \\
Helena & 0.6 & 1.00 & 34 & $0.24$ & $0.016$ & $1.25$ & $0.01$ & $5.90$ & $0.950$ & $1.01$ & $0.00$ & $2.03\cdot 10^{-5}$ & $0.074$ \\
Jannis & 0.6 & 1.00 & 34 & $0.05$ & $4.99\cdot 10^{-4}$ & $7.40$ & $0.17$ & $0.04$ & $0.001$ & $1.98$ & $0.06$ & $0.0033$ & $0.558$ \\
\bottomrule
\end{tabular}%
}
\end{table*}

\begin{table*}[t]
\centering
\tiny
\setlength{\tabcolsep}{2pt}
\renewcommand{\arraystretch}{1.1}

\caption{Detailed results for Tabular--JEPA variants across ALOI, Helena, and Jannis datasets (continued).}
\label{tab:detailed_tabular_jepa_e}

\resizebox{\linewidth}{!}{%
\begin{tabular}{llcc||cc cc||cc cc||c c}
\toprule
\multirow{2}{*}{Dataset} &
\multirow{2}{*}{Context} &
\multirow{2}{*}{\shortstack{Predictor\\scale}} &
\multirow{2}{*}{\shortstack{Config\\ID}} &
\multicolumn{2}{c}{\shortstack{Epoch 0\\$\mu_{50}$}} &
\multicolumn{2}{c||}{\shortstack{Epoch 0\\Erank}} &
\multicolumn{2}{c}{\shortstack{Last epoch\\$\mu_{50}$}} &
\multicolumn{2}{c||}{\shortstack{Last epoch\\Erank}} &
\multirow{2}{*}{\shortstack{Final\\loss}} &
\multirow{2}{*}{\shortstack{Test\\Acc.}} \\
\cmidrule(lr){5-6}\cmidrule(lr){7-8}\cmidrule(lr){9-10}\cmidrule(lr){11-12}
& & & & Mean & SEM & Mean & SEM & Mean & SEM & Mean & SEM & & \\
\midrule
ALOI & 0.7 & 0.01 & 35 & $0.79$ & $0.018$ & $26.39$ & $0.09$ & $0.40$ & $0.005$ & $16.88$ & $0.21$ & $0.0309$ & $0.850$ \\
Helena & 0.7 & 0.01 & 35 & $12.34$ & $0.464$ & $2.67$ & $0.06$ & $1.62$ & $0.188$ & $2.83$ & $0.07$ & $0.0233$ & $0.272$ \\
Jannis & 0.7 & 0.01 & 35 & $0.47$ & $0.021$ & $30.97$ & $0.02$ & $0.17$ & $0.003$ & $29.25$ & $0.50$ & $0.2604$ & $0.690$ \\
ALOI & 0.7 & 0.05 & 36 & $0.53$ & $0.003$ & $22.37$ & $0.01$ & $0.37$ & $0.006$ & $13.36$ & $0.23$ & $0.0217$ & $0.821$ \\
Helena & 0.7 & 0.05 & 36 & $3.48$ & $0.272$ & $2.23$ & $0.06$ & $4.95$ & $0.874$ & $2.11$ & $0.03$ & $0.0106$ & $0.217$ \\
Jannis & 0.7 & 0.05 & 36 & $0.30$ & $0.003$ & $27.97$ & $0.15$ & $0.17$ & $9.88\cdot 10^{-4}$ & $30.87$ & $0.16$ & $0.2361$ & $0.695$ \\
ALOI & 0.7 & 0.10 & 37 & $0.34$ & $0.002$ & $17.35$ & $0.11$ & $0.25$ & $0.003$ & $10.09$ & $0.21$ & $0.0245$ & $0.798$ \\
Helena & 0.7 & 0.10 & 37 & $1.87$ & $0.177$ & $1.77$ & $0.02$ & $9.93$ & $1.704$ & $1.72$ & $0.01$ & $0.0084$ & $0.187$ \\
Jannis & 0.7 & 0.10 & 37 & $0.35$ & $0.009$ & $15.30$ & $0.46$ & $0.13$ & $0.003$ & $30.03$ & $0.74$ & $0.1651$ & $0.701$ \\
ALOI & 0.7 & 0.20 & 38 & $0.20$ & $0.003$ & $13.23$ & $0.12$ & $0.14$ & $0.002$ & $7.43$ & $0.10$ & $0.0252$ & $0.744$ \\
Helena & 0.7 & 0.20 & 38 & $0.87$ & $0.172$ & $1.48$ & $0.02$ & $22.80$ & $5.967$ & $1.26$ & $0.02$ & $0.0014$ & $0.165$ \\
Jannis & 0.7 & 0.20 & 38 & $0.23$ & $0.006$ & $9.57$ & $0.20$ & $0.12$ & $0.006$ & $13.90$ & $0.62$ & $0.1009$ & $0.669$ \\
ALOI & 0.7 & 0.30 & 39 & $0.14$ & $0.002$ & $11.55$ & $0.10$ & $0.09$ & $0.001$ & $6.02$ & $0.09$ & $0.0230$ & $0.716$ \\
Helena & 0.7 & 0.30 & 39 & $0.69$ & $0.100$ & $1.40$ & $0.01$ & $17.11$ & $5.718$ & $1.14$ & $0.00$ & $4.09\cdot 10^{-4}$ & $0.146$ \\
Jannis & 0.7 & 0.30 & 39 & $0.16$ & $0.003$ & $9.06$ & $0.20$ & $0.15$ & $0.005$ & $4.76$ & $0.10$ & $0.0389$ & $0.638$ \\
ALOI & 0.7 & 0.50 & 40 & $0.08$ & $0.002$ & $10.23$ & $0.07$ & $0.05$ & $2.85\cdot 10^{-4}$ & $4.75$ & $0.08$ & $0.0210$ & $0.672$ \\
Helena & 0.7 & 0.50 & 40 & $0.44$ & $0.020$ & $1.32$ & $0.01$ & $7.22$ & $1.065$ & $1.03$ & $0.01$ & $9.93\cdot 10^{-5}$ & $0.088$ \\
Jannis & 0.7 & 0.50 & 40 & $0.09$ & $0.002$ & $8.16$ & $0.21$ & $0.10$ & $0.003$ & $2.73$ & $0.03$ & $0.0119$ & $0.591$ \\
ALOI & 0.7 & 1.00 & 41 & $0.04$ & $4.09\cdot 10^{-4}$ & $8.52$ & $0.13$ & $0.03$ & $3.64\cdot 10^{-4}$ & $3.14$ & $0.07$ & $0.0158$ & $0.540$ \\
Helena & 0.7 & 1.00 & 41 & $0.20$ & $0.009$ & $1.25$ & $0.01$ & $4.48$ & $1.336$ & $1.01$ & $0.00$ & $1.24\cdot 10^{-5}$ & $0.067$ \\
Jannis & 0.7 & 1.00 & 41 & $0.05$ & $3.09\cdot 10^{-4}$ & $7.69$ & $0.35$ & $0.04$ & $9.66\cdot 10^{-4}$ & $1.88$ & $0.03$ & $0.0036$ & $0.551$ \\
\bottomrule
\end{tabular}%
}
\end{table*}

\subsection{T-JEPA}

\begin{table*}[t]
\centering
\small
\renewcommand{\arraystretch}{1.25}

\caption{Detailed results for T-JEPA on Jannis dataset. Each row shows results for a specific context ratio and predictor scale configuration, with means aggregated over seeds.}
\label{tab:detailed_tjepa}

\resizebox{\linewidth}{!}{%
\begin{tabular}{ccc||c c||c c||c c}
\toprule
\multirow{2}{*}{Context} &
\multirow{2}{*}{\shortstack{Predictor\\scale}} &
\multirow{2}{*}{\shortstack{Config\\ID}} &
\multicolumn{1}{c}{\shortstack{Epoch 0\\$\mu_{50}$}} &
\multicolumn{1}{c||}{\shortstack{Epoch 0\\Erank}} &
\multicolumn{1}{c}{\shortstack{Last epoch\\$\mu_{50}$}} &
\multicolumn{1}{c||}{\shortstack{Last epoch\\Erank}} &
\multirow{2}{*}{\shortstack{Final\\loss}} &
\multirow{2}{*}{\shortstack{Test\\Acc.}} \\
\addlinespace[8pt] 
\midrule
$0.1$ & $0.01$ & $0$  & $0.48$ & $6.54$ & $0.48$ & $5.53$ & $5.74\times10^{-5}$ & $0.464$ \\
$0.1$ & $0.05$ & $1$  & $0.09$ & $7.43$ & $0.08$ & $15.09$ & $5.185\times10^{-1}$ & $0.654$ \\
$0.1$ & $0.10$ & $2$  & $0.03$ & $7.02$ & $0.04$ & $3.93$ & $1.34\times10^{-6}$ & $0.464$ \\
$0.1$ & $0.20$ & $3$  & $0.02$ & $7.00$ & $0.02$ & $5.18$ & $3.93\times10^{-8}$ & $0.464$ \\
$0.2$ & $0.01$ & $4$  & $1.11$ & $5.74$ & $0.96$ & $4.61$ & $8.87\times10^{-5}$ & $0.464$ \\
$0.2$ & $0.05$ & $5$  & $0.21$ & $7.67$ & $0.20$ & $17.10$ & $2.556\times10^{-1}$ & $0.673$ \\
$0.2$ & $0.10$ & $6$  & $0.11$ & $6.89$ & $0.10$ & $4.65$ & $5.07\times10^{-7}$ & $0.464$ \\
$0.2$ & $0.20$ & $7$  & $0.05$ & $7.30$ & $0.05$ & $4.90$ & $6.52\times10^{-6}$ & $0.464$ \\
$0.3$ & $0.01$ & $8$  & $1.82$ & $6.25$ & $1.47$ & $11.30$ & $3.017\times10^{-1}$ & $0.639$ \\
$0.3$ & $0.05$ & $9$  & $0.35$ & $7.71$ & $0.33$ & $25.40$ & $3.077\times10^{-1}$ & $0.718$ \\
$0.3$ & $0.10$ & $10$ & $0.13$ & $6.54$ & $0.15$ & $4.70$ & $1.80\times10^{-7}$ & $0.464$ \\
$0.3$ & $0.20$ & $11$ & $0.07$ & $7.35$ & $0.08$ & $4.97$ & $3.89\times10^{-4}$ & $0.464$ \\
$0.4$ & $0.01$ & $12$ & $1.85$ & $6.11$ & $2.10$ & $4.25$ & $3.74\times10^{-5}$ & $0.464$ \\
$0.4$ & $0.05$ & $13$ & $0.46$ & $6.94$ & $0.44$ & $25.27$ & $1.691\times10^{-1}$ & $0.729$ \\
$0.4$ & $0.10$ & $14$ & $0.20$ & $7.35$ & $0.25$ & $4.53$ & $6.32\times10^{-7}$ & $0.464$ \\
$0.4$ & $0.20$ & $15$ & $0.12$ & $6.65$ & $0.12$ & $4.67$ & $5.54\times10^{-7}$ & $0.464$ \\
$0.5$ & $0.01$ & $16$ & $2.67$ & $6.41$ & $2.25$ & $4.74$ & $6.56\times10^{-5}$ & $0.464$ \\
$0.5$ & $0.05$ & $17$ & $0.71$ & $7.10$ & $0.59$ & $28.29$ & $1.451\times10^{-1}$ & $0.704$ \\
$0.5$ & $0.10$ & $18$ & $0.32$ & $7.15$ & $0.29$ & $4.32$ & $7.92\times10^{-7}$ & $0.464$ \\
$0.5$ & $0.20$ & $19$ & $0.17$ & $7.07$ & $0.19$ & $4.78$ & $4.96\times10^{-6}$ & $0.464$ \\
$0.6$ & $0.01$ & $20$ & $4.22$ & $6.23$ & $4.27$ & $4.65$ & $2.30\times10^{-5}$ & $0.464$ \\
$0.6$ & $0.05$ & $21$ & $0.89$ & $7.39$ & $0.95$ & $27.48$ & $1.266\times10^{-1}$ & $0.725$ \\
$0.6$ & $0.10$ & $22$ & $0.49$ & $6.90$ & $0.58$ & $4.94$ & $1.84\times10^{-4}$ & $0.464$ \\
\bottomrule
\end{tabular}%
}
\end{table*}

\subsection{T-JEPA with ResidualPred -- per-run details}
\label{app:full_results_tjepa_respred}

Tables~\ref{tab:tjepa_respred_perseed} and~\ref{tab:tjepa_respred_aggregate} expand the T-JEPA $\times$ ResidualPred ablation that backs the tabular columns of Table~\ref{tab:respred-tabular-image} in the main text. Table~\ref{tab:tjepa_respred_perseed} is deliberately the subcritical stress grid: its five configurations sit where the criterion places both predictors below threshold, so no per-seed prevention claim is made there; the criterion widens the escape basin without guaranteeing escape. The recurring probe values are class priors. On \texttt{jannis}, $46.01\%$ is the frequency of class $3$ and $34.38\%$ that of class $1$ in the training split, so a collapsed representation yields a near-constant probe and the same value recurs to two decimals across unrelated runs. Escapes are discontinuous, at $65.36\%$ on \texttt{jannis} and $32.58\%$ on \texttt{helena}; probe accuracy and effective rank co-sign on every row; no Standard run escapes; and on \texttt{helena} the separation is complete, six against six, with a ResidualPred minimum of $15.39$ against a Standard maximum of $11.58$ on the probe.

\begin{figure}[htb]
\centering
\includegraphics[width=0.5\linewidth]{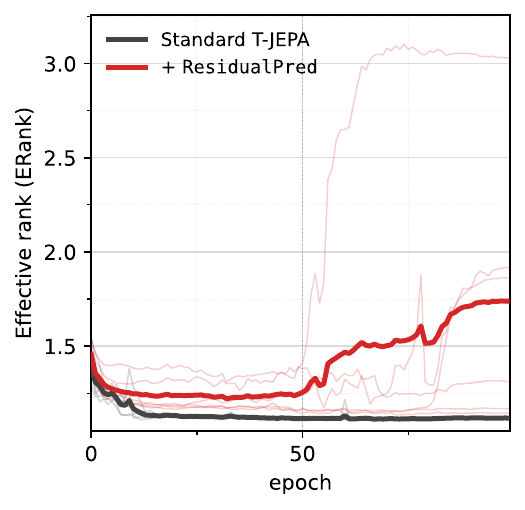}
\caption{\small\textbf{Helena ERank trajectories} at deliberately brittle configurations, where the criterion predicts collapse (mean across six runs; per-seed thin lines). The observable range is about $[1.1, 9]$, since min--max preprocessing leaves a large common mean (collapse attractor near $1.1$, fresh encoder $1.3$ to $1.9$, raw features near $8.8$); Standard lands at $1.11$ to $1.12$ on all six seeds, every \code{ResidualPred} seed above, at $1.14$ to $3.03$, and the probe moves from $8.6\%$ to $21.3\%$.}
\label{fig:erank_trajectories_helena_avg}
\end{figure}

\begin{table*}[t]
\centering
\small
\setlength{\tabcolsep}{4pt}
\renewcommand{\arraystretch}{1.05}
\caption{Per-seed T-JEPA + ResidualPred results on \texttt{jannis} and \texttt{helena}, on the subcritical stress grid, where the criterion places both predictors below threshold: no per-seed prevention is claimed, and the recurring probe values are class priors ($46.01\%$ and $34.38\%$ on \texttt{jannis}). ERank values are reported at epoch $0$ and at the last training epoch; final loss is the training loss at the last epoch; probe accuracy is the linear-probe test score evaluated on the frozen encoder at the last epoch.}
\label{tab:tjepa_respred_perseed}
\resizebox{0.95\linewidth}{!}{%
\begin{tabular}{llccc||cc||c|c}
\toprule
Variant & Dataset & $p_{m_1}$ & $\alpha$ & Seed & ERank$_0$ & ERank$_{\mathrm{last}}$ & Final loss & Probe acc.\ (\%) \\
\midrule
Standard & jannis & $0.075$ & $0.20$ & $0$ & $1.71$ & $1.27$ & $4.78\!\times\!10^{-7}$ & $46.01$ \\
Standard & jannis & $0.075$ & $0.20$ & $1$ & $1.79$ & $1.29$ & $2.54\!\times\!10^{-5}$ & $34.38$ \\
Standard & jannis & $0.075$ & $0.20$ & $2$ & $1.79$ & $1.29$ & $6.33\!\times\!10^{-9}$ & $46.01$ \\
Standard & jannis & $0.150$ & $0.50$ & $0$ & $1.65$ & $1.28$ & $2.37\!\times\!10^{-9}$ & $46.01$ \\
Standard & jannis & $0.150$ & $0.50$ & $1$ & $1.71$ & $1.28$ & $1.09\!\times\!10^{-8}$ & $34.38$ \\
Standard & jannis & $0.150$ & $0.50$ & $2$ & $1.66$ & $1.27$ & $5.32\!\times\!10^{-9}$ & $46.01$ \\
Standard & jannis & $0.150$ & $1.00$ & $0$ & $1.78$ & $1.27$ & $2.12\!\times\!10^{-12}$ & $46.01$ \\
Standard & jannis & $0.150$ & $1.00$ & $1$ & $1.68$ & $1.28$ & $3.18\!\times\!10^{-9}$ & $46.01$ \\
Standard & jannis & $0.150$ & $1.00$ & $2$ & $1.76$ & $1.28$ & $1.18\!\times\!10^{-8}$ & $46.01$ \\
\midrule
+ResPred & jannis & $0.075$ & $0.20$ & $0$ & $1.70$ & $1.25$ & $4.75\!\times\!10^{-8}$ & $34.38$ \\
+ResPred & jannis & $0.075$ & $0.20$ & $1$ & $1.69$ & $2.90$ & $6.27\!\times\!10^{-2}$ & $\mathbf{65.36}$ \\
+ResPred & jannis & $0.075$ & $0.20$ & $2$ & $1.69$ & $1.33$ & $6.13\!\times\!10^{-9}$ & $34.38$ \\
+ResPred & jannis & $0.150$ & $0.50$ & $0$ & $1.79$ & $1.27$ & $1.38\!\times\!10^{-6}$ & $46.01$ \\
+ResPred & jannis & $0.150$ & $0.50$ & $1$ & $1.81$ & $1.26$ & $2.14\!\times\!10^{-8}$ & $34.38$ \\
+ResPred & jannis & $0.150$ & $0.50$ & $2$ & $1.90$ & $4.03$ & $1.40\!\times\!10^{-1}$ & $\mathbf{61.65}$ \\
+ResPred & jannis & $0.150$ & $1.00$ & $0$ & $1.77$ & $1.27$ & $2.49\!\times\!10^{-5}$ & $46.01$ \\
+ResPred & jannis & $0.150$ & $1.00$ & $1$ & $1.79$ & $1.24$ & $1.29\!\times\!10^{-4}$ & $46.46$ \\
+ResPred & jannis & $0.150$ & $1.00$ & $2$ & $1.85$ & $1.27$ & $3.83\!\times\!10^{-8}$ & $46.01$ \\
\midrule
Standard & helena & $0.150$ & $0.50$ & $0$ & $1.54$ & $1.11$ & $2.05\!\times\!10^{-4}$ & $\phantom{0}6.14$ \\
Standard & helena & $0.150$ & $0.50$ & $1$ & $1.37$ & $1.12$ & $7.56\!\times\!10^{-3}$ & $10.05$ \\
Standard & helena & $0.150$ & $0.50$ & $2$ & $1.34$ & $1.12$ & $2.18\!\times\!10^{-3}$ & $11.58$ \\
Standard & helena & $0.150$ & $1.00$ & $0$ & $1.37$ & $1.12$ & $3.23\!\times\!10^{-4}$ & $\phantom{0}6.14$ \\
Standard & helena & $0.150$ & $1.00$ & $1$ & $1.29$ & $1.11$ & $2.08\!\times\!10^{-3}$ & $11.27$ \\
Standard & helena & $0.150$ & $1.00$ & $2$ & $1.37$ & $1.12$ & $2.31\!\times\!10^{-4}$ & $\phantom{0}6.14$ \\
\midrule
+ResPred & helena & $0.150$ & $0.50$ & $0$ & $1.53$ & $3.03$ & $1.62\!\times\!10^{-1}$ & $\mathbf{32.58}$ \\
+ResPred & helena & $0.150$ & $0.50$ & $1$ & $1.45$ & $1.86$ & $1.67\!\times\!10^{-2}$ & $\mathbf{27.63}$ \\
+ResPred & helena & $0.150$ & $0.50$ & $2$ & $1.38$ & $1.31$ & $1.19\!\times\!10^{-2}$ & $15.48$ \\
+ResPred & helena & $0.150$ & $1.00$ & $0$ & $1.54$ & $1.91$ & $1.07\!\times\!10^{-2}$ & $21.06$ \\
+ResPred & helena & $0.150$ & $1.00$ & $1$ & $1.47$ & $1.14$ & $2.03\!\times\!10^{-4}$ & $15.48$ \\
+ResPred & helena & $0.150$ & $1.00$ & $2$ & $1.38$ & $1.17$ & $4.24\!\times\!10^{-4}$ & $15.39$ \\
\bottomrule
\end{tabular}%
}
\end{table*}

\begin{table}[t]
\centering
\small
\setlength{\tabcolsep}{5pt}
\renewcommand{\arraystretch}{1.1}
\caption{Per-configuration aggregates (mean $\pm$ standard deviation across $3$ seeds) for the runs of Table~\ref{tab:tjepa_respred_perseed}.}
\label{tab:tjepa_respred_aggregate}
\resizebox{\linewidth}{!}{%
\begin{tabular}{llcc||cc|cc|cc}
\toprule
Variant & Dataset & $p_{m_1}$ & $\alpha$ & \multicolumn{2}{c|}{ERank$_{\mathrm{last}}$} & \multicolumn{2}{c|}{Final loss} & \multicolumn{2}{c}{Probe acc.\ (\%)} \\
\cmidrule(lr){5-6}\cmidrule(lr){7-8}\cmidrule(lr){9-10}
& & & & Mean & Std & Mean & Std & Mean & Std \\
\midrule
Standard & jannis & $0.075$ & $0.20$ & $1.28$ & $0.01$ & $8.61\!\times\!10^{-6}$ & $1.45\!\times\!10^{-5}$ & $42.13$ & $6.71$ \\
Standard & jannis & $0.150$ & $0.50$ & $1.28$ & $0.01$ & $6.21\!\times\!10^{-9}$ & $4.35\!\times\!10^{-9}$ & $42.13$ & $6.71$ \\
Standard & jannis & $0.150$ & $1.00$ & $1.27$ & $0.00$ & $5.00\!\times\!10^{-9}$ & $6.11\!\times\!10^{-9}$ & $46.01$ & $0.00$ \\
Standard & helena & $0.150$ & $0.50$ & $1.12$ & $0.00$ & $3.31\!\times\!10^{-3}$ & $3.81\!\times\!10^{-3}$ & $\phantom{0}9.25$ & $2.81$ \\
Standard & helena & $0.150$ & $1.00$ & $1.12$ & $0.01$ & $8.79\!\times\!10^{-4}$ & $1.04\!\times\!10^{-3}$ & $\phantom{0}7.85$ & $2.97$ \\
\midrule
+ResPred & jannis & $0.075$ & $0.20$ & $1.83$ & $0.93$ & $2.09\!\times\!10^{-2}$ & $3.62\!\times\!10^{-2}$ & $44.71$ & $17.89$ \\
+ResPred & jannis & $0.150$ & $0.50$ & $2.18$ & $1.59$ & $4.68\!\times\!10^{-2}$ & $8.11\!\times\!10^{-2}$ & $47.35$ & $13.68$ \\
+ResPred & jannis & $0.150$ & $1.00$ & $1.26$ & $0.02$ & $5.12\!\times\!10^{-5}$ & $6.83\!\times\!10^{-5}$ & $46.16$ & $\phantom{0}0.26$ \\
+ResPred & helena & $0.150$ & $0.50$ & $2.07$ & $0.88$ & $6.34\!\times\!10^{-2}$ & $8.52\!\times\!10^{-2}$ & $25.23$ & $\phantom{0}8.80$ \\
+ResPred & helena & $0.150$ & $1.00$ & $1.41$ & $0.44$ & $3.79\!\times\!10^{-3}$ & $6.02\!\times\!10^{-3}$ & $17.31$ & $\phantom{0}3.25$ \\
\bottomrule
\end{tabular}%
}
\end{table}

\subsection{I-JEPA Detailed Results on CIFAR-10, CIFAR-100, and STL-10}
\label{app:full_results_image}

Tables~\ref{tab:ijepa_perseed_image} and~\ref{tab:ijepa_aggregate_image} expand the image columns of Table~\ref{tab:respred-tabular-image} in the main text. All runs use the I-JEPA configuration of Section~\ref{sec:methodology} with ViT-Small/$4$ on CIFAR-10/CIFAR-100 ($L = 64$ tokens) and ViT-Small/$8$ on STL-10 ($L = 144$ tokens), context retention $p_{m_1} = 0.85$, AdamW optimiser, and $100$-epoch cosine schedule.

\begin{table*}[t]
\centering
\small
\setlength{\tabcolsep}{4pt}
\renewcommand{\arraystretch}{1.05}
\caption{Per-seed I-JEPA results on CIFAR-10, CIFAR-100, and STL-10 at $p_{m_1} = 0.85$, $100$ epochs. $\mu_{50}^{(0)}$ is the median per-mode initialisation stability ratio; $\mathrm{ERank}_0$, $\mathrm{ERank}_{100}$ are the effective rank at epoch $0$ and epoch $100$; Probe is the linear-probe test accuracy at epoch $100$.}
\label{tab:ijepa_perseed_image}
\resizebox{0.92\linewidth}{!}{%
\begin{tabular}{llc||cccc}
\toprule
Dataset & Variant & Seed & $\mu_{50}^{(0)}$ & ERank$_0$ & ERank$_{100}$ & Probe (\%) \\
\midrule
CIFAR-10  & Standard & $0$ & $0.0076$ & $4.14$ & $34.16$ & $62.89$ \\
CIFAR-10  & Standard & $1$ & $0.0083$ & $4.29$ & $40.99$ & $64.44$ \\
CIFAR-10  & Standard & $2$ & $0.0093$ & $4.16$ & $36.93$ & $66.31$ \\
CIFAR-10  & +ResPred & $0$ & $0.0076$ & $4.14$ & $50.93$ & $68.78$ \\
CIFAR-10  & +ResPred & $1$ & $0.0083$ & $4.29$ & $55.90$ & $69.10$ \\
CIFAR-10  & +ResPred & $2$ & $0.0093$ & $4.16$ & $63.09$ & $67.95$ \\
\midrule
CIFAR-100 & Standard & $\phantom{00}0$  & $0.0077$ & $3.59$ & $41.85$ & $39.32$ \\
CIFAR-100 & Standard & $\phantom{0}07$  & $0.0085$ & $3.68$ & $31.90$ & $35.46$ \\
CIFAR-100 & Standard & $\phantom{0}42$  & $0.0089$ & $3.61$ & $44.17$ & $36.14$ \\
CIFAR-100 & Standard & $123$ & $0.0078$ & $3.62$ & $32.31$ & $36.25$ \\
CIFAR-100 & Standard & $314$ & $0.0085$ & $3.71$ & $44.58$ & $37.00$ \\
CIFAR-100 & +ResPred & $\phantom{00}0$  & $0.0080$ & $3.58$ & $45.84$ & $41.46$ \\
CIFAR-100 & +ResPred & $\phantom{0}07$  & $0.0077$ & $3.65$ & $45.34$ & $40.36$ \\
CIFAR-100 & +ResPred & $\phantom{0}42$  & $0.0072$ & $3.60$ & $44.41$ & $41.44$ \\
CIFAR-100 & +ResPred & $123$ & $0.0078$ & $3.62$ & $41.32$ & $39.99$ \\
CIFAR-100 & +ResPred & $314$ & $0.0078$ & $3.69$ & $44.46$ & $41.00$ \\
\midrule
STL-10    & Standard & $0$ & $0.0259$ & $7.38$ & $131.75$ & $71.49$ \\
STL-10    & Standard & $1$ & $0.0286$ & $6.89$ & $118.46$ & $70.01$ \\
STL-10    & Standard & $2$ & $0.0274$ & $6.98$ & $115.58$ & $72.25$ \\
STL-10    & +ResPred & $0$ & $0.0250$ & $7.34$ & $163.39$ & $74.78$ \\
STL-10    & +ResPred & $1$ & $0.0286$ & $6.89$ & $163.66$ & $76.20$ \\
STL-10    & +ResPred & $2$ & $0.0271$ & $6.99$ & $162.93$ & $74.95$ \\
\bottomrule
\end{tabular}%
}
\end{table*}

\begin{table}[t]
\centering
\small
\setlength{\tabcolsep}{6pt}
\renewcommand{\arraystretch}{1.1}
\caption{Per-cell aggregates of Table~\ref{tab:ijepa_perseed_image} (mean $\pm$ standard deviation across seeds). The Probe column reproduces the image rows of Table~\ref{tab:respred-tabular-image} of the main text exactly to two decimal places.}
\label{tab:ijepa_aggregate_image}
\begin{tabular}{ll|c|cc|cc|cc}
\toprule
Dataset & Variant & $n$ & \multicolumn{2}{c|}{ERank$_0$} & \multicolumn{2}{c|}{ERank$_{100}$} & \multicolumn{2}{c}{Probe (\%)} \\
\cmidrule(lr){4-5}\cmidrule(lr){6-7}\cmidrule(lr){8-9}
& & & Mean & Std & Mean & Std & Mean & Std \\
\midrule
CIFAR-10  & Standard & $3$ & $4.20$ & $0.08$ & $\phantom{0}37.36$ & $\phantom{0}3.44$ & $64.55$ & $1.71$ \\
CIFAR-10  & +ResPred & $3$ & $4.20$ & $0.08$ & $\phantom{0}56.64$ & $\phantom{0}6.12$ & $68.61$ & $0.59$ \\
CIFAR-100 & Standard & $5$ & $3.64$ & $0.05$ & $\phantom{0}38.96$ & $\phantom{0}6.35$ & $36.83$ & $1.49$ \\
CIFAR-100 & +ResPred & $5$ & $3.63$ & $0.04$ & $\phantom{0}44.28$ & $\phantom{0}1.76$ & $40.85$ & $0.66$ \\
STL-10    & Standard & $3$ & $7.08$ & $0.26$ & $121.93$ & $\phantom{0}8.63$ & $71.25$ & $1.14$ \\
STL-10    & +ResPred & $3$ & $7.07$ & $0.23$ & $163.33$ & $\phantom{0}0.37$ & $75.31$ & $0.78$ \\
\bottomrule
\end{tabular}
\end{table}

\subsection{Image experiments beyond CIFAR: protocol}
\label{app:image_scale}

All runs use one I-JEPA pipeline \citep{assranSelfSupervisedLearningImages2023} with identical loss,
optimiser, EMA teacher and masking, a ViT-S/16 encoder at $128$ pixels unless stated otherwise, and
differ only in the predictor; the loss-side arms of Appendix~\ref{app:image_scale:comparison} add one loss term. ImageNet-100 \citep{tianContrastiveMultiviewCoding2020}: $150$ epochs,
predictor depths $2$ and $6$, three seeds per predictor; the backbone check compares ViT-S/16 and
ViT-B/16 at epoch $100$, depth $6$. ImageNet-1k \citep{dengImageNetLargeScaleHierarchical2009}: a
matched-budget pilot of $30$ epochs at depth $6$ with five seeds per predictor, and two stresses that
change one variable each, $224$ pixels at $30$ epochs (five seeds) and $90$ epochs at $128$ pixels
(three seeds).

Readouts on the frozen encoder are a linear probe, a parameter-free nearest-neighbour classifier
($k = 20$) and an attentive probe; effective rank is RankMe \citep{garridoRankMeAssessingDownstream2023},
logged every five epochs. On ImageNet-1k the nearest-neighbour and attentive readouts use a
class-balanced bank of $200$ images per class, identical for every arm, a deviation from the
full-split protocol.

\subsection{ImageNet-100 at two depths and three probes}
\label{app:image_scale:in100}

\begin{table}[htb]
\centering\small
\setlength{\tabcolsep}{6pt}
\caption{ImageNet-100, I-JEPA ViT-S/16, $128$ pixels, $150$ epochs, three seeds per predictor:
\texttt{ResidualPred} minus the standard predictor. Every \texttt{ResidualPred} seed exceeds every
standard seed on all three probes at both depths; exact paired permutation $p = 0.031$ per probe.}
\label{tab:in100}
\begin{tabular}{@{}lcc@{}}
\toprule
Readout & Depth 2 & Depth 6 \\
\midrule
Linear probe & $+3.6$ pp & $+5.6$ pp \\
Nearest neighbour, $k = 20$ & $+8.3$ pp & $+5.5$ pp \\
Attentive probe & $+2.8$ pp & $+5.5$ pp \\
Effective rank & $+49.1$ ($90.7 \to 139.8$) & $+26.9$ \\
\bottomrule
\end{tabular}
\end{table}

The two predictors start matched in effective rank, and \texttt{ResidualPred}'s is higher at every
logged epoch from the fifth onward in every matched seed. At the matched $100$-epoch, depth-$6$ setting
the effect holds on a backbone with four times the parameters: ViT-B/16 goes from $0.339$ to $0.398$
on the linear probe, $+6.0$ points with no seed overlap, and effective rank rises by $23.9$; ViT-S/16
gains $4.4$ points at the same setting. Across the $18$ ImageNet-100 and ViT-B runs, no
effective-rank trajectory falls more than $0.3\%$ below its running maximum after escape.

\subsection{A thousand-class pilot and its two stresses}
\label{app:image_scale:in1k}

\begin{table}[htb]
\centering\small
\setlength{\tabcolsep}{5pt}
\caption{The thousand-class pilot (ImageNet-1k, ViT-S/16, $128$ pixels, depth $6$, $30$ epochs,
matched budget), seed means over five seeds per predictor. On the linear probe and effective rank
every \texttt{ResidualPred} seed exceeds every standard seed ($p = 0.008$); the nearest-neighbour and
attentive readouts use the class-balanced bank.}
\label{tab:in1k}
\begin{tabular}{@{}lccc@{}}
\toprule
Readout at epoch 30 & Standard & \texttt{ResidualPred} & $\Delta$ \\
\midrule
Linear probe & $0.2522$ & $\mathbf{0.3169}$ & $+6.48$ pp \\
Nearest neighbour, $k = 20$ & $0.0916$ & $\mathbf{0.1774}$ & $+8.58$ pp \\
Attentive probe & $0.2847$ & $\mathbf{0.3292}$ & $+4.45$ pp \\
Effective rank & $107.2$ & $\mathbf{188.5}$ & $+81.3$ \\
\bottomrule
\end{tabular}
\end{table}

The predictors start rank-matched, separate by epoch five, and the gap widens in seed means. The
parameter-free margin is larger at a thousand classes than at a hundred ($+8.58$ against $+5.5$
points at depth $6$). This is a matched-budget ablation at $128$ pixels; nothing is claimed about the
official full-resolution, full-schedule recipe.

\begin{table}[htb]
\centering\small
\setlength{\tabcolsep}{5pt}
\caption{The two stresses, recipe otherwise unchanged. Linear-probe top-1, seed mean and range.
$224$ pixels: five seeds per predictor. $90$ epochs: three seeds per predictor, complete
separation.}
\label{tab:in1k_stress}
\begin{tabular}{@{}lccc@{}}
\toprule
Setting & Standard & \texttt{ResidualPred} & $\Delta$ \\
\midrule
$224$ pixels, $30$ epochs & $0.3398$ ($0.3366$ to $0.3419$) & $\mathbf{0.4014}$ ($0.3834$ to $0.4091$) & $+6.16$ pp \\
$128$ pixels, $90$ epochs & $0.3279$ ($0.3208$ to $0.3348$) & $\mathbf{0.3624}$ ($0.3507$ to $0.3727$) & $+3.45$ pp \\
\bottomrule
\end{tabular}
\end{table}

At $224$ pixels the margin is kept, with the \texttt{ResidualPred} seed spread widening to $2.6$
points. Over $90$ epochs the margin attenuates to $+3.45$ points; the standard predictor's effective
rank rises from $111$ at epoch $30$ to $179$ at epoch $90$ in seed means, against $200$ to $251$ for \texttt{ResidualPred},
and matching \texttt{ResidualPred}'s $30$-epoch accuracy costs the standard predictor up to three
times the compute.

\subsection{Controls: frozen and skip-only predictors}
\label{app:image_scale:controls}

Two controls at the ImageNet-100 depth-$6$ setting, three seeds each. A skip-only predictor is
degenerate in a JEPA: with no path from context to target the encoder gradient is identically zero,
so it serves only as a null anchor. Freezing the exact \texttt{ResidualPred} predictor at its
initialisation, with everything else trained, collapses the representation instead of reverting to the
standard predictor: the linear probe falls from the trained \texttt{ResidualPred}'s $0.427$ to $0.099$ and
effective rank from $82.1$ to $1.67$, every frozen seed below every standard seed. On two seeds with identical data order the frozen
and trained runs start at the same effective rank ($24.7444$ and $25.2536$) and are separated by epoch
five ($1.15$ to $2.73$ against $25.9$ to $30.6$ across the three seeds). The \texttt{ResidualPred}
predictor denied learning loses the entire gain.

In the linear model with $\mat{W}_p$ fixed, the regime of Lemma~\ref{lemma:coupled_dynamics}, the
fastest super-critical mode dominates and effective rank relaxes to $1.00$ to $1.04$ in all $17$
settings screened; in the trained system, $\lambda_{\max}(\mat{W}_p^\top\mat{W}_p)$
re-equilibrates to $0.80$ to $1.10$ from initialisations spanning a $500\times$ range. In the public
solo-learn BYOL implementation \citep{costaSololearnLibrarySelfsupervised2022} (CIFAR-100, ResNet-18,
no moving-average teacher), effective rank and accuracy increase with the ratio of predictor to
encoder learning rate over $\{0, 0.1, 1, 10\}$ in both seeds, the frozen predictor being the worst.

\subsection{Comparison with loss-side mechanisms at the pilot setting}
\label{app:image_scale:comparison}

SIGReg is implemented as the Epps--Pulley statistic of LeJEPA
\citep{balestrieroLeJEPAProvableScalable2025}, with $48$ resampled slices per step computed per rank,
and added to the I-JEPA loss with weight $\lambda$. This isolates SIGReg as an anti-collapse term and
is not a reproduction of LeJEPA, which removes the EMA teacher and the predictor. $\lambda = 0.05$ is
the authors' recommended default, bracketed by $\lambda \in \{0.02, 0.15\}$. The weighting is
additive; the convex form of the released LeJEPA algorithm differs from it by a rescaling of the
prediction term by about $5\%$ at this dose.

\begin{table}[htb]
\centering\small
\setlength{\tabcolsep}{5pt}
\caption{Loss-side comparison at the pilot setting (ImageNet-1k, $128$ pixels, $30$ epochs, depth
$6$); $n$ is the number of seeds. SIGReg beats the standard predictor by $3.98$ points and
\texttt{ResidualPred} beats SIGReg by $2.50$ points, each with complete separation; all $35$ pairwise seed differences between \texttt{ResidualPred} and SIGReg
are positive, from $+1.42$ to $+3.73$ points.}
\label{tab:sigreg}
\resizebox{\linewidth}{!}{%
\begin{tabular}{@{}lcccccc@{}}
\toprule
Predictor and loss & $n$ & Linear & Range & Nearest neighbour & Attentive & Effective rank \\
\midrule
Standard & 5 & $0.2522$ & $0.2409$ to $0.2628$ & $0.0916$ & $0.2847$ & $107.2$ \\
Standard $+$ SIGReg, $\lambda = 0.02$ & 2 & $0.2821$ & $0.2789$ to $0.2853$ & & & $154$ to $180$ \\
Standard $+$ SIGReg, $\lambda = 0.05$ (default) & 7 & $0.2919$ & $0.2830$ to $0.3001$ & $0.1600$ & $0.3100$ & $179$ to $196$ \\
Standard $+$ SIGReg, $\lambda = 0.15$ & 2 & $0.2865$ & $0.2861$ to $0.2870$ & & & $190$ to $201$ \\
\texttt{ResidualPred}, $\beta_{init} = 10$ & 5 & $\mathbf{0.3169}$ & $0.3143$ to $0.3203$ & $0.1774$ & $0.3292$ & $188.5$ \\
\texttt{ResidualPred} $+$ SIGReg, $\lambda = 0.05$ & 5 & $\mathbf{0.3223}$ & $0.3159$ to $0.3304$ & $0.1903$ & $0.3341$ & $205$ to $208$ \\
\bottomrule
\end{tabular}}
\end{table}

SIGReg is active in this pipeline: its penalty falls from $0.033$ at initialisation to $0.0011$ by the
final epoch, and it raises effective rank from $107$ to $186$. \texttt{ResidualPred} beats it in this
setting, and the two compose: combining them adds $18.2$ to the effective rank of
\texttt{ResidualPred} alone ($90\%$ interval $+15.0$ to $+21.4$). Mimetic attention initialisation
\citep{trockmanMimeticInitializationSelfAttention2023} improves on the standard predictor by $1.6$ points
with seed overlap and trails \texttt{ResidualPred} by $4.9$ points. A VICReg arm at the coefficient
ratio of the original paper destabilised in both seeds (linear probe $0.026$ and $0.062$) and is
excluded; the C-JEPA integration \citep{moConnectingJointEmbeddingPredictive} differs from it, so these
runs say nothing about VICReg itself. \texttt{ResidualPred} adds one hyperparameter,
$\beta_{init} = 10$, unchanged on every dataset, and no loss term or per-step computation.

\section{Seven anti-collapse mechanisms as doses on one ratio: verification on the linear model}
\label{app:mechanisms}

For each mechanism below we derive its effect on $\mu = \gamma/\sigma$ under
Assumption~\ref{ass:spectral_decoupling}, predict from the initialisation how many modes escape, and
count the modes that escape when the linear model, Eq.~\eqref{eq:dW_coupled} with the mechanism added,
is integrated. Across $152$ (mechanism, dose, seed) runs the predicted count equals the measured count
in all $152$. All runs are simulations of the linear model: the table verifies the derivations against
the model they concern and is not a test of the criterion against JEPA training, which is
Section~\ref{sec:residual_predictor} and Appendix~\ref{app:image_scale}.

\begin{table}[htb]
\centering\small
\setlength{\tabcolsep}{4pt}
\renewcommand{\arraystretch}{1.15}
\caption{Seven anti-collapse mechanisms as doses on one ratio, verified on the linear model: number of
escaping modes predicted from the derived $\gamma$ and $\sigma$ at initialisation against the number
that grows when Eq.~\eqref{eq:dW_coupled} is integrated with the mechanism added. All $152$ runs are
simulations of the linear model, not deep networks.}
\label{tab:unification}
\resizebox{\linewidth}{!}{%
\begin{tabular}{@{}p{3.3cm}p{7.6cm}p{5.6cm}@{}}
\toprule
\textbf{Mechanism} & \textbf{Derived effect on $\mu = \gamma/\sigma$} & \textbf{Predicted vs measured} \\
\midrule
Weight decay on the encoder &
Raises every mode's decay by $\eta_{wd}$: escape iff $\gamma_i - \sigma_i > \eta_{wd}$. &
Five doses, three seeds ($15$ runs); predicted counts $24/19/8/2/0$, matched in every run. \\
\addlinespace
Predictor scaling (Mechanism~1) &
$\mu(\alpha) = \mu(1)/\alpha$; nothing escapes above $\alpha_{ext} = \max_i \mu_i(1)$. &
$\alpha$ ladder bracketing $\alpha_{ext} = 4.16$, three seeds ($30$ runs); count exact in every run. \\
\addlinespace
SIGReg \citep{balestrieroLeJEPAProvableScalable2025} &
Lowers the decay's predictor factor $p_\ell$ by $\rho$: escape iff $\mu > 1 - \rho/p_\ell$. &
Fifteen doses, three seeds ($45$ runs), plus six teacher-free doses ($p_{tgt} = 0$); counts $224$ to $512$ and
$[0, 64, 160, 288, 384, 512]$, exact in every run. \\
\addlinespace
VICReg hinge \citep{bardesVICRegVarianceInvarianceCovarianceRegularization2022} &
Lowers the decay's predictor factor by $\lambda_v/s$ for a mode of standard deviation $s$; rescue dose
$\propto p_\ell(1-\mu)$. &
Four doses, three seeds ($12$ runs); $12$ of $12$ exact. \\
\addlinespace
Mimetic initialisation \citep{trockmanMimeticInitializationSelfAttention2023} &
Doses the alignment ratio $q_\ell/p_\ell$ once, washed out at rate $\zeta$. &
Eight wash-out rates, three seeds ($24$ runs); $24$ of $24$ exact. \\
\addlinespace
Identity predictor &
With $\mat{W}_p = \mat{I}$ frozen, mode $k$ escapes iff $s_k > (1-p)/(p_{tgt} - p)$. &
Four configurations plus a BYOL-style setting, three seeds ($15$ runs); $5/8/4/8$ and $0$, exact. \\
\addlinespace
EMA momentum \citep{grillBootstrapYourOwn2020} &
$\beta$ cannot flip the sign of the constant term: for a real drive, momentum sets escape speed, not
membership. &
Five rates, $\beta = 0.01$ to $50$ ($5$ runs); count $8$ at all five. \\
\midrule
\textbf{All seven} & & \textbf{$152$ runs; predicted $=$ measured in all $152$.} \\
\bottomrule
\end{tabular}}
\end{table}

\subsection{The seven doses}
\label{app:mechanisms:doses}

Throughout, $a_k, c_k$ are the eigenvalues of $\mat{\Sigma}_{11}$ and $\mat{\Sigma}_{X1}$ along data
mode $k$, and $p_\ell, q_\ell$ those of $\mat{P} = \mat{W}_p^\top\mat{W}_p$ and
$\mat{Q} = \mat{W}_p^\top\bar{\mat{M}}_2$ along predictor mode $\ell$, so that
$\gamma_{k\ell} = c_k q_\ell$, $\sigma_{k\ell} = a_k p_\ell$ and $\mu_{k\ell} = \gamma_{k\ell}/\sigma_{k\ell}$.

\begin{proposition}[Encoder weight decay is an additive floor]
\label{prop:wd}
Adding $-\eta_{wd}\mat{W}$ to Eq.~\eqref{eq:dW_coupled} replaces the decay operator by
$\mat{\Sigma}_{11}\otimes\mat{P} + \eta_{wd}\mat{I}$, so $\sigma'_{k\ell} = \sigma_{k\ell} + \eta_{wd}$
with $\gamma_{k\ell}$ unchanged, and mode $(k,\ell)$ escapes if and only if
$\gamma_{k\ell} - \sigma_{k\ell} > \eta_{wd}$. This assumes the decay acts on the context encoder only.
\end{proposition}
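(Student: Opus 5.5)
The plan is to redo the vectorisation of Appendix~\ref{app:detailed_derivation} with the extra term and then read off the per-mode criterion from Corollary~\ref{cor:pencil_reduction}.

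First, modify the encoder field of Lemma~\ref{lemma:coupled_dynamics} to $\dot{\mat W} = -\mat P\mat W\mat\Sigma_{11} + \mat Q\mat W_a\mat\Sigma_{X1} - \eta_{wd}\mat W$, leaving the EMA equation $\dot{\mat W}_a = \beta(\mat W - \mat W_a)$ untouched. The restriction of the decay to the context encoder is exactly what keeps the second row unchanged. Using $\vecop(\mat A\mat X\mat B) = (\mat B^\top\otimes\mat A)\vecop(\mat X)$ and $\vecop(\eta_{wd}\mat W) = \eta_{wd}\mat I_{dN}\vecop(\mat W)$, the Jacobian of Eq.~\eqref{eq:full_jacobian} keeps its block form. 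The only change is that the upper-left block becomes $-(\mat S + \eta_{wd}\mat I)$ with $\mat S = \mat\Sigma_{11}\otimes\mat P$, while the drive $\mat G$ is unchanged. So the mechanism is exactly the substitution $\mat S \mapsto \mat S' := \mat S + \eta_{wd}\mat I$.

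Second, check that all hypotheses used downstream survive this substitution:
\begin{itemize}
\item $\mat S'$ is still real symmetric positive semi-definite, in fact positive definite when $\eta_{wd} > 0$.
\item In the joint basis $\mat\Pi = \mat U\otimes\mat V$ of Corollary~\ref{cor:pencil_reduction}, $\mat\Pi^\top\mat S'\mat\Pi = \diag(\sigma_{k\ell} + \eta_{wd})$, because the identity is diagonal in every orthonormal basis.
\end{itemize}
Hence Assumption~\ref{ass:spectral_decoupling} holds with the same eigenvectors and $[\mat S',\mat G] = 0$. By part~1 of Corollary~\ref{cor:pencil_reduction}, each mode now has characteristic polynomial
\[
\lambda^2 + (\beta + \sigma_{k\ell} + \eta_{wd})\lambda + \beta(\sigma_{k\ell} + \eta_{wd} - \gamma_{k\ell}),
\]
that is, $\sigma'_{k\ell} = \sigma_{k\ell} + \eta_{wd}$ with $\gamma_{k\ell}$ unchanged.

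Third, derive the escape criterion. The linear coefficient $\beta + \sigma'_{k\ell}$ is positive, so the mode has a root with positive real part if and only if the constant term is negative, i.e.\ $\gamma_{k\ell} > \sigma_{k\ell} + \eta_{wd}$. This is the division-free form of part~2 of Corollary~\ref{cor:pencil_reduction}, equivalently Theorem~\ref{thm:escape_certificate}(ii) applied to $\mat S' - \mat G$. Dividing by $\sigma_{k\ell} > 0$ recovers the table's form $\mu_{k\ell} > 1 + \eta_{wd}/\sigma_{k\ell}$.

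The main obstacle is the boundary case $\gamma_{k\ell} - \sigma_{k\ell} = \eta_{wd}$, where the constant term vanishes and one root sits at zero. This is non-escape in the strict sense; parts (b) and (c) of Theorem~\ref{thm:escape_certificate} confirm the abscissa is $0$ there, so the ``if and only if'' holds with strict inequality. The other point needing care is the case $\gamma_{k\ell} < 0$, which the same quadratic argument covers without dividing by $\gamma_{k\ell}$.
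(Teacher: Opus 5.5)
Your proof is correct and is the argument the paper relies on; the paper states this proposition without a separate proof. The extra term shifts only the upper-left Jacobian block by $-\eta_{wd}\mat I$, which stays diagonal in the joint basis $\mat\Pi = \mat U\otimes\mat V$, so each mode's quadratic has constant term $\beta(\sigma_{k\ell}+\eta_{wd}-\gamma_{k\ell})$, and escape is exactly the negativity of that term.

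One small precision concerns how you cite Corollary~\ref{cor:pencil_reduction}. The new decay block $\mat S' = \mat\Sigma_{11}\otimes\mat P + \eta_{wd}\mat I$ is no longer a single Kronecker product, so the factorisation $\mu_{k\ell}=(c_k/a_k)(q_\ell/p_\ell)$ is lost. You are therefore reusing the corollary's proof rather than its statement. That is harmless: your explicit check $\mat\Pi^\top\mat S'\mat\Pi = \diag(\sigma_{k\ell}+\eta_{wd})$, together with Theorem~\ref{thm:pencil}, whose Schur-complement proof uses no structure of $\mat S$, is all that step needs.
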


\begin{proposition}[Predictor scaling divides the ratio]
\label{prop:scale}
Under $\mat{W}_p \mapsto \alpha\mat{W}_p$, $p_\ell$ scales as $\alpha^2$ and $q_\ell$ as $\alpha$, so
$\mu_{k\ell}(\alpha) = \mu_{k\ell}(1)/\alpha$ and nothing escapes above
$\alpha_{ext} = \max_{k\ell}\mu_{k\ell}(1)$.
\end{proposition}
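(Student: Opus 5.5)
The statement to prove is Proposition~\ref{prop:scale}. The plan is a direct substitution into the definitions of Step~1 of Appendix~\ref{app:rank-control-proof}, followed by the per-mode criterion of Corollary~\ref{cor:pencil_reduction}.

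\textbf{Step 1: how the predictor operators transform.} Under $\mat{W}_p \mapsto \alpha\mat{W}_p$ with $\alpha>0$, we get $\mat{P}=\mat{W}_p^\top\mat{W}_p \mapsto \alpha^2\mat{P}$ and $\mat{Q}=\mat{W}_p^\top\bar{\mat{M}}_2 \mapsto \alpha\mat{Q}$. Scalar multiples keep their eigenvectors, so the shared orthonormal basis $\mat{V}$ of Assumption~\ref{ass:spectral_decoupling} is unchanged. The eigenvalues become $p_\ell(\alpha)=\alpha^2 p_\ell$ and $q_\ell(\alpha)=\alpha q_\ell$. The data-side quantities $a_k,c_k$ do not depend on $\mat{W}_p$.

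\textbf{Step 2: how the ratio transforms.} Substituting into Eq.~\eqref{eq:mu-factor} gives
\[
\mu_{k\ell}(\alpha)=\frac{c_k}{a_k}\cdot\frac{\alpha q_\ell}{\alpha^2 p_\ell}=\frac{\mu_{k\ell}(1)}{\alpha},
\]
which is well defined because $a_k p_\ell>0$. Equivalently, $\gamma_{k\ell}$ scales as $\alpha$ and $\sigma_{k\ell}$ as $\alpha^2$.

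\textbf{Step 3: the extinction threshold.} By Corollary~\ref{cor:pencil_reduction}(2), mode $(k,\ell)$ escapes if and only if $\mu_{k\ell}(\alpha)>1$, that is, $\alpha<\mu_{k\ell}(1)$. If $\alpha>\alpha_{ext}=\max_{k\ell}\mu_{k\ell}(1)$, then every mode has $\mu_{k\ell}(\alpha)<1$, so $\mathcal U=\emptyset$ and $\mathcal J$ is Hurwitz. Thus nothing escapes. At $\alpha=\alpha_{ext}$ itself the maximising mode is critical, and Theorem~\ref{thm:escape_certificate}(b) gives $\alpha(\mathcal J)=0$, which is still not escape.

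There are two points to watch. First, for $\alpha_{ext}$ to be positive and meaningful, at least one $\mu_{k\ell}(1)$ must be positive. If all of them are $\le 0$, nothing escapes for any $\alpha$ and the claim holds vacuously. Second, if the drive is not real, the same scaling still holds for the complex $\mu$ or for $\tilde\mu_{k\ell}$, because $|\alpha q_\ell|/(\alpha^2 p_\ell)$ scales as $1/\alpha$ as well. The main obstacle is only stating the hypotheses precisely: positive $\alpha$, exact decoupling, and real spectra, as in Corollary~\ref{cor:pencil_reduction}.
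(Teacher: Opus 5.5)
Your proof is correct and follows the paper's own argument. The paper justifies the proposition (Mechanism~1 in Section~\ref{sec:practical_control}) by the same substitution: $\gamma$ scales as $\alpha$ and $\sigma$ as $\alpha^2$, so $\mu_{k\ell}(\alpha)=\mu_{k\ell}(1)/\alpha$, and this is then read against the per-mode criterion $\mu>1$. Your handling of the critical value $\alpha=\alpha_{ext}$, and your explicit statement of the hypotheses (positive $\alpha$, exact decoupling, real spectra), are a useful tightening of points the paper leaves implicit.
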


\begin{proposition}[An isotropy penalty is negative weight decay in embedding space]
\label{prop:sigreg}
Let the auxiliary term contribute $+\lambda\kappa_0\mat{W}\mat{\Sigma}_{11}$ to the flow as
$\mat{W}\to\mat{0}$, with $\kappa_0$ twice the magnitude of the estimator's slope at zero embedding
variance divided by the embedding dimension. The decay block becomes
$-\mat{\Sigma}_{11}\otimes[\mat{P} - \rho\mat{I}]$ with $\rho := \lambda\kappa_0/(1-\lambda)$. For
$p_\ell > \rho$ escape holds if and only if $\mu_{k\ell} > 1 - \rho/p_\ell$; the mode also escapes for
$p_\ell \le \rho$ with $\gamma_{k\ell} > 0$, and for $p_\ell < \rho$ with $\gamma_{k\ell} = 0$. The linearisation uses only the
second moment of a centred distribution, and $\kappa_0$ is calibrated per setup.
\end{proposition}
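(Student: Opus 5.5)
The plan is to reduce the statement to the per-mode analysis already carried out for the fixed-predictor flow (Corollary~\ref{cor:pencil_reduction} and Lemma~\ref{lem:mode_classification}), after showing that the auxiliary term only shifts the predictor factor of the decay operator.

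\textbf{Step 1: rewrite the decay block.} Start from the linearised flow of Eq.~\eqref{eq:dW_coupled} and assume the loss is the convex combination $(1-\lambda)\mathcal{L}+\lambda\,\mathcal{R}$. Dividing the flow by $(1-\lambda)$ is a positive time rescaling, so it changes no signs. The JEPA part keeps the form of Eq.~\eqref{eq:dW_coupled}. By hypothesis, the auxiliary term contributes $+\lambda\kappa_0\mat{W}\mat{\Sigma}_{11}/(1-\lambda)=\rho\,\mat{W}\mat{\Sigma}_{11}$ near the origin. Writing this as $\rho\,\mat{I}_d\mat{W}\mat{\Sigma}_{11}$ and vectorising as in Step~1 of Appendix~\ref{app:detailed_derivation}, it adds $\rho(\mat{\Sigma}_{11}\otimes\mat{I}_d)$ to the upper-left block. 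That block then becomes $-\mat{\Sigma}_{11}\otimes(\mat{P}-\rho\mat{I})$, while the drive block and the EMA row are unchanged.

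The linearisation of $\mathcal{R}$ must be justified from the isotropy estimator's behaviour at zero embedding variance. It is a derivative of a function of the embedding second moment $\mat{W}\mat{\Sigma}_{11}\mat{W}^\top$. Its gradient is therefore $2\,\partial\mathcal{R}/\partial\mat{C}\cdot\mat{W}\mat{\Sigma}_{11}$, and at $\mat{C}\to 0$ the matrix $\partial\mathcal{R}/\partial\mat{C}$ is isotropic with slope $-\kappa_0 d/2$ in trace normalisation. This is the step that is assumed, with $\kappa_0$ calibrated per setup, so I take it from the hypothesis rather than prove it.

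\textbf{Step 2: decouple.} Since $\mat{I}$ commutes with $\mat{P}$, the shifted operator $\mat{P}-\rho\mat{I}$ keeps the eigenbasis $\mat{V}$ with eigenvalues $p_\ell-\rho$. Assumption~\ref{ass:spectral_decoupling} therefore still holds. Rerunning Step~1 of Appendix~\ref{app:rank-control-proof} gives planar blocks identical to Eq.~\eqref{eq:scalar-mode}, with $\sigma'_{k\ell}=a_k(p_\ell-\rho)$ in place of $\sigma_{k\ell}$ and the same $\gamma_{k\ell}=c_kq_\ell$. Each block has characteristic polynomial $\lambda^2+(\beta+\sigma')\lambda+\beta(\sigma'-\gamma)$.

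\textbf{Step 3: case analysis on the sign of $\sigma'$.} This is the only real content beyond Lemma~\ref{lem:mode_classification}. If $\sigma'\ge 0$, I reuse the root-sum/root-product argument: a positive root exists iff the constant term is negative, i.e.\ $\gamma>\sigma'$. For $p_\ell>\rho$ and $a_k>0$, dividing by $a_kp_\ell$ turns this into $\mu_{k\ell}>1-\rho/p_\ell$.

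If $\sigma'<0$ (i.e.\ $p_\ell<\rho$), there are two sub-cases. When $\gamma\ge 0$, the constant term $\beta(\sigma'-\gamma)$ is negative, so there is a positive real root. Even when $\gamma=0$, the root $-\sigma'>0$ escapes.

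At $p_\ell=\rho$ we have $\sigma'=0$, and the constant term is $-\beta\gamma$. This is negative iff $\gamma>0$, which gives escape. If $\gamma=0$ the mode is critical, which explains why it is excluded in the statement.

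\textbf{Main obstacle.} The main obstacle is the borderline case $\sigma'<0$ with $\sigma'+\beta<0$ and $\gamma<0$, where the sum of the roots is positive. There I would simply note that a positive sum of roots already forces a root with positive real part. The statement only asserts escape when $\gamma\ge0$ (or $\gamma>0$ at equality), so this case needs no further work.
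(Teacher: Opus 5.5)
Your proposal is correct and follows the derivation the paper has in mind; the proposition carries no separate written proof there. The derivation is: vectorise $\rho\,\mat{W}\mat{\Sigma}_{11}$ into the shift $\mat{\Sigma}_{11}\otimes(\mat{P}-\rho\mat{I})$ of the decay block, keep the joint basis, and read escape off the per-mode quadratic as in Lemma~\ref{lem:mode_classification}. Your split over $p_\ell>\rho$, $p_\ell=\rho$, $p_\ell<\rho$ covers exactly the claimed cases, and your $a_k>0$ is the non-degeneracy needed for $\sigma'_{k\ell}<0\iff p_\ell<\rho$.

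One small precision concerns the $1-\lambda$ normalisation. Dividing only the encoder row by $1-\lambda$ is not a time rescaling, while rescaling the whole system turns $\beta$ into $\beta/(1-\lambda)$, so the EMA row is not literally unchanged. In either normalisation, however, the constant term is a positive multiple of $\sigma'_{k\ell}-\gamma_{k\ell}$ and the linear coefficient is positive whenever $\sigma'_{k\ell}\ge 0$, so none of your sign conclusions change.
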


\begin{proposition}[The variance hinge mirrors weight decay]
\label{prop:vicreg}
With a hinge $\lambda_v\sum_i\max(0, g-s_i)$ on embedding standard deviations $s_i$, the drive is
unchanged and, while the hinge is active, the decay becomes $\sigma_{k\ell}(s) = a_k(p_\ell - \lambda_v/s_\ell)$.
In the small-weight limit this is $a_k(p_\ell - \tilde\lambda)$ with
$\tilde\lambda = \lambda_v/\sqrt{\varepsilon_v}$, so for $p_\ell > \tilde\lambda$ escape holds if and only if
$\mu_{k\ell} > 1 - \tilde\lambda/p_\ell$ ($\varepsilon_v$ is the variance offset inside the hinge's square root). The covariance term is cubic in the weights and does not
enter the linearisation.
\end{proposition}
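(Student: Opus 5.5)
The plan is to treat the hinge as an extra term in the encoder gradient flow of Lemma~\ref{lemma:coupled_dynamics}, linearise it at the collapsed point, and show that it only modifies the predictor factor of the decay block. The per-mode analysis of Corollary~\ref{cor:pencil_reduction} (equivalently Steps~1--2 of Appendix~\ref{app:rank-control-proof}) then applies verbatim with $p_\ell$ replaced by $p_\ell-\lambda_v/s_\ell$.

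\emph{Step 1: the hinge gradient.} I take the hinge to act on the context embeddings $\vect z=\mat W\vect x_1$ and write $s_i=\sqrt{\vect e_i^\top\mat W\mat\Sigma_{11}\mat W^\top\vect e_i+\varepsilon_v}$. While the hinge is active ($s_i<g$), the term $-\lambda_v s_i$ has $\mat W$-gradient
\[
-\frac{\lambda_v}{s_i}\,\vect e_i\vect e_i^\top\mat W\mat\Sigma_{11}.
\]
The hinge therefore adds $+\lambda_v\,\mathrm{diag}(1/s)\,\mat W\mat\Sigma_{11}$ to $\dot{\mat W}$. Combined with the existing decay term, this gives
\[
\dot{\mat W}=-\bigl(\mat P-\lambda_v\,\mathrm{diag}(1/s)\bigr)\mat W\mat\Sigma_{11}+\mat Q\mat W_a\mat\Sigma_{X1}.
\]
The hinge involves neither $\mat W_a$ nor the stop-gradient target, so the drive block $\mat\Sigma_{X1}^\top\otimes\mat Q$ and the EMA row are unchanged.

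\emph{Step 2: the linearisation.} As $\mat W\to\mat 0$, each $s_i\to\sqrt{\varepsilon_v}$. This limit is isotropic, so $\mathrm{diag}(1/s)\to\varepsilon_v^{-1/2}\mat I$. Choosing $g>\sqrt{\varepsilon_v}$ (the standard VICReg regime) keeps the hinge active in a whole neighbourhood of the origin, so the non-smooth kink is never touched and the linearisation is legitimate. The decay block becomes $\mat\Sigma_{11}\otimes(\mat P-\tilde\lambda\mat I)$ with $\tilde\lambda=\lambda_v/\sqrt{\varepsilon_v}$. Because $\tilde\lambda\mat I$ commutes with everything, the exact form of Assumption~\ref{ass:spectral_decoupling} is preserved in the same bases $\mat U,\mat V$. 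The eigenvalues are $\sigma'_{k\ell}=a_k(p_\ell-\tilde\lambda)$, which gives the small-weight form of the statement.

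For the state-dependent form $\sigma_{k\ell}(s)=a_k(p_\ell-\lambda_v/s_\ell)$, I would note that $\mathrm{diag}(1/s)$ is diagonal in the embedding coordinates. It therefore commutes with $\mat P,\mat Q$ either when the predictor basis is the canonical one or when the $s_\ell$ coincide. I would state this identification of $s_\ell$ with a predictor mode as the operative assumption.

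\emph{Step 3: the per-mode criterion.} Each $(k,\ell)$ block is again the $2\times 2$ matrix of Eq.~\eqref{eq:scalar-mode}, now with $\sigma'_{k\ell}$ in place of $a_kp_\ell$. Its constant term is $\beta(\sigma'_{k\ell}-\gamma_{k\ell})$. When $p_\ell>\tilde\lambda$, $\sigma'_{k\ell}>0$ and the trace stays negative, so Step~2 of Appendix~\ref{app:rank-control-proof} applies unchanged: the mode escapes if and only if $c_kq_\ell>a_k(p_\ell-\tilde\lambda)$. Dividing by $a_kp_\ell>0$ turns this into $\mu_{k\ell}>1-\tilde\lambda/p_\ell$.

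\emph{Step 4: the covariance term.} The VICReg covariance penalty is $\sum_{i\ne j}C_{ij}^2$, with $C=\mat W\mat\Sigma_{11}\mat W^\top$. It is quartic in $\mat W$, so its gradient is cubic and vanishes to first order at the origin; it therefore does not enter the Jacobian.

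\emph{Expected obstacle.} I expect the main difficulty to be Step~2. The state-dependent $1/s_\ell$ is neither diagonal in $\mat V$ in general nor smooth at the hinge. The clean statement needs both the small-weight isotropic limit, where the decoupled structure survives exactly, and an explicit choice of $g$ so that the active set is fixed near the origin.
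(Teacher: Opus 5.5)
Your proposal is correct and follows the route the paper itself indicates; the paper states this proposition with only its inline justification and the numerical check of Table~\ref{tab:unification}. That route is: the active hinge adds $\lambda_v\,\mathrm{diag}(1/s)\,\mat{W}\mat{\Sigma}_{11}$ to $\dot{\mat{W}}$ without touching the drive or EMA rows; at the origin $s_i\to\sqrt{\varepsilon_v}$, so the shift is isotropic and the decoupled per-mode criterion holds with $a_k(p_\ell-\tilde\lambda)$ in place of $a_kp_\ell$; and the covariance penalty drops out as higher order. Your caveats make explicit what the paper leaves implicit: the state-dependent form $a_k(p_\ell-\lambda_v/s_\ell)$ needs $\mathrm{diag}(1/s)$ to commute with $\mat{P}$ and $\mat{Q}$, and $g>\sqrt{\varepsilon_v}$ is needed to keep the hinge active near the origin.
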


\begin{proposition}[Mimetic initialisation is a decaying dose on the predictor factor]
\label{prop:mimetic}
For a normal predictor with $\ell$-th eigenvalue $|\lambda_\ell|e^{\mathrm{i}\phi_\ell}$,
$\mathrm{Re}\,\mu_{k\ell} = (c_k/a_k)(p_{tgt}/|\lambda_\ell|)\cos\phi_\ell$. For
$\mat{W}_p(t) \propto m(t)\mat{I} + \mat{O}$ at fixed Frobenius norm, with $\mat{O}$ orthogonal with
rotation angles $\theta_\ell$ and identity dose $m(t) = m_0e^{-\zeta t}$,
$\cos\phi_\ell(t) = \bigl(m(t) + \cos\theta_\ell\bigr)/\sqrt{m(t)^2 + 2m(t)\cos\theta_\ell + 1}$. In the
adiabatic approximation a mode escapes when $\int_0^T\lambda^+_{k}(t)\,dt > \log(10^3)$.
\end{proposition}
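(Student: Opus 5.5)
The proof is a direct spectral computation in the shared eigenbasis of the normal predictor, followed by a frozen-coefficient argument for the time-dependent dose. I take $\bar{\mat M}_2 = p_{tgt}\mat I$, as in the masking model of Appendix~\ref{app:assumptions}.

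\textbf{Step 1: the ratio for a normal predictor.} Since $\mat W_p$ is normal, write $\mat W_p = \mat V\,\diag(\lambda_\ell)\,\mat V^\ast$ with $\mat V$ unitary. Then $\mat P = \mat W_p^\ast\mat W_p = \mat V\,\diag(|\lambda_\ell|^2)\,\mat V^\ast$ and $\mat Q = p_{tgt}\mat W_p^\ast = \mat V\,\diag(p_{tgt}\bar\lambda_\ell)\,\mat V^\ast$. So $\mat P$ and $\mat Q$ share an eigenbasis exactly, and the feature side of Assumption~\ref{ass:spectral_decoupling} holds with no approximation. Substituting $p_\ell = |\lambda_\ell|^2$ and $q_\ell = p_{tgt}|\lambda_\ell|e^{-\mathrm i\phi_\ell}$ into the factorisation~\eqref{eq:mu-factor} gives
\[
\mu_{k\ell} = \frac{c_k}{a_k}\cdot\frac{p_{tgt}}{|\lambda_\ell|}\,e^{-\mathrm i\phi_\ell},
\]
and taking real parts yields the first claim.

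\textbf{Step 2: the phase of $m\mat I + \mat O$.} $\mat O$ is real orthogonal, hence normal, with eigenvalues $e^{\pm\mathrm i\theta_\ell}$ (and $\pm1$ when $\theta_\ell\in\{0,\pi\}$). Since $m\mat I$ commutes with $\mat O$, the matrix $m\mat I + \mat O$ is normal with the same eigenvectors and eigenvalues $m + e^{\mathrm i\theta_\ell}$. The modulus of each eigenvalue is $\sqrt{m^2 + 2m\cos\theta_\ell + 1}$, and its real part is $m + \cos\theta_\ell$, which gives the stated formula for $\cos\phi_\ell(t)$. The Frobenius-norm normalisation multiplies $\mat W_p(t)$ by a positive scalar. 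That rescales $|\lambda_\ell|$, which enters through Step~1, but leaves $\phi_\ell$ unchanged. Because the eigenbasis $\mat V$ does not depend on $t$, the decoupling holds at every $t$, not only at initialisation. Each mode $(k,\ell)$ is therefore governed exactly by a non-autonomous $2\times2$ system whose drive is $\gamma_{k\ell}(t) = c_k q_\ell(t)$ (complex in general) and whose decay is $\sigma_{k\ell}(t) = a_k p_\ell(t)$.

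\textbf{Step 3: the adiabatic escape criterion.} At each frozen $t$, the mode's instantaneous growth rate $\lambda^+_k(t)$ is the root with largest real part of the characteristic equation. Its positivity is governed by Eq.~\eqref{eq:complex-drive}, applied to the drive and decay above. In the adiabatic approximation the amplitude of mode $(k,\ell)$ is multiplied by $\exp\int_0^T \Real\lambda^+_k(t)\,dt$. Declaring escape at a thousandfold amplification then gives the criterion $\int_0^T\lambda^+_k\,dt > \log 10^3$.

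I expect this last step to be the main obstacle, since it is only heuristically justified. My plan is to state the threshold as the definition of escape used in the simulations. I would then note the standard WKB/frozen-coefficient condition for the approximation to hold. This condition is $\zeta$ small relative to the instantaneous spectral gap $|\lambda^+_k - \lambda^-_k| = \sqrt{|\Delta|}$. Under it, the error in the integrated exponent is $O(\zeta/\text{gap})$. To bound that error I would first try a Gr\"onwall bound on the coordinates in the instantaneous eigenbasis, as in Step~3 of the proof of Theorem~\ref{thm:drift}, with the perturbation term now given by $\dot m(t) = -\zeta m(t)$.
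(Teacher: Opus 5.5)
Your Steps~1 and~2 are correct and cover everything the first two claims need. For a real normal $\mat W_p$ you get $p_\ell = |\lambda_\ell|^2$ and $q_\ell = p_{tgt}\bar\lambda_\ell$; this requires $\bar{\mat M}_2 = p_{tgt}\mat I$, which you rightly make explicit. Since $m(t)\mat I + \mat O$ keeps the eigenvectors of $\mat O$, decoupling holds at every $t$, and the Frobenius normalisation only rescales $|\lambda_\ell|$. The paper gives no written proof of this proposition; it only checks it by integrating the linear model. So there is nothing further to match for these two claims.

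Where you depart is in what ``adiabatic'' means. Everywhere else in the paper (Appendix~\ref{app:adiabatic}, Corollary~\ref{cor:pencil_reduction}(4), Appendix~\ref{app:mechanisms:recoveries}) the word denotes the EMA limit $\beta\to\infty$, in which the target is slaved to the encoder. It does not refer to slow variation of $m(t)$.

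Under that reading your ``main obstacle'' disappears. With $\mat W_a = \mat W$ and the time-independent eigenbasis of your Step~2, each mode obeys the scalar equation $\dot w_{k\ell} = (\gamma_{k\ell}(t) - \sigma_{k\ell}(t))\,w_{k\ell}$. Hence $|w_{k\ell}(T)| = |w_{k\ell}(0)|\exp\int_0^T(\Real\gamma_{k\ell} - \sigma_{k\ell})\,dt$ holds exactly, for every wash-out rate $\zeta$. Thousandfold growth is then precisely the stated inequality, with rate $\Real\lambda^+ = \sigma_{k\ell}(\Real\mu_{k\ell} - 1)$.

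This reading also explains why the proposition computes $\Real\mu$. The imaginary-part term of Eq.~\eqref{eq:complex-drive} behaves like $(\operatorname{Im}\gamma)^2/\beta \to 0$, so your Step~1 feeds Step~3 directly. Your finite-$\beta$ root, by contrast, is not a function of $\Real\mu$ alone.

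If you keep the frozen-coefficient reading at finite $\beta$, your error estimate needs correcting. The moving, non-orthogonal instantaneous eigenbasis generates a coupling $\mat T_{k\ell}^{-1}\dot{\mat T}_{k\ell}$. Because $\dot m\,dt = dm$, this coupling integrates to a quantity of order $\int_{m(T)}^{m_0}\|\mat T_{k\ell}^{-1}\partial_m\mat T_{k\ell}\|\,dm$. That quantity depends on how far $m$ has travelled, not on how fast.

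The log-amplitude therefore matches $\int_0^T\Real\lambda^+\,dt$ only up to a $\zeta$-independent $O(1)$ geometric term, plus the initial projection onto $\vect e^+$. Only the leakage between the two eigendirections shrinks with $\zeta$. A Gr\"onwall argument as in Theorem~\ref{thm:drift} would thus give a bounded shift of the $\log 10^3$ threshold, not an $O(\zeta/\sqrt{|\Delta|})$ error.
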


\begin{proposition}[A frozen identity predictor is a condition on the data spectrum]
\label{prop:identity}
With $\mat{W}_p = \mat{I}$ fixed, $\mat{P} = \mat{I}$ and $\mat{Q} = p_{tgt}\mat{I}$. Under Bernoulli
feature masking with keep probability $p$ on a unit-diagonal data covariance with spectrum $\{s_k\}$,
$\mu_k = p_{tgt}s_k/\bigl((1-p) + p\,s_k\bigr)$, so mode $k$ escapes if and only if
$s_k > (1-p)/(p_{tgt}-p)$, which requires $p_{tgt} > p$. Under symmetric additive view noise instead of
masking (the BYOL-style setting), $c_k/a_k < 1$ for every mode and nothing escapes.
\end{proposition}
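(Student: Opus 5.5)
With $\mat{W}_p = \mat{I}$ frozen, $\mat{P} = \mat{W}_p^\top\mat{W}_p = \mat{I}$ and $\mat{Q} = \mat{W}_p^\top\bar{\mat{M}}_2 = \bar{\mat{M}}_2 = p_{tgt}\mat{I}$, taking every target coordinate kept with probability $p_{tgt}$. Both predictor-side operators are scalar, so Assumption~\ref{ass:spectral_decoupling} holds exactly on the predictor side. Every predictor mode then has $p_\ell = 1$ and $q_\ell = p_{tgt}$, and the factorisation~\eqref{eq:mu-factor} reduces to $\mu_k = p_{tgt}\,c_k/a_k$. The index $\ell$ plays no further role, so the proof becomes a statement about the data side alone.

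For the data side I would use the masking decomposition~\eqref{eq:masking_cov}: $\mat{\Sigma}_{X1} = p\,\mat{\Sigma}_{XX}$ and $\mat{\Sigma}_{11} = p\,\mathrm{diag}(\mat{\Sigma}_{XX}) + p^2\,\mathrm{offdiag}(\mat{\Sigma}_{XX})$. Unit diagonal is the key hypothesis here. It gives $\mathrm{diag}(\mat{\Sigma}_{XX}) = \mat{I}$, hence $\mat{\Sigma}_{11} = p\mat{I} + p^2(\mat{\Sigma}_{XX} - \mat{I}) = p(1-p)\mat{I} + p^2\mat{\Sigma}_{XX}$. Consistently with the closed-form commutator identity~\eqref{eq:closed_form_commutator}, $\mat{\Sigma}_{11}$ and $\mat{\Sigma}_{X1}$ are both polynomials in $\mat{\Sigma}_{XX}$. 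They therefore share its orthonormal eigenbasis, which makes the data side of the assumption exact as well, with $c_k = p\,s_k$ and $a_k = p(1-p) + p^2 s_k$. Dividing gives $\mu_k = p_{tgt}s_k/\bigl((1-p)+p\,s_k\bigr)$. Here $a_k > 0$ because $p\in(0,1)$ and $s_k \ge 0$.

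The escape condition comes from Corollary~\ref{cor:pencil_reduction}, which now applies exactly since the drive is symmetric and the modes decouple: mode $k$ escapes iff $\gamma_k > \sigma_k$. That inequality reads $p_{tgt}s_k > (1-p) + p\,s_k$, i.e.\ $(p_{tgt}-p)s_k > 1-p$. Since $1-p>0$ and $s_k \ge 0$, it can hold only if $p_{tgt} > p$, and then it is equivalent to $s_k > (1-p)/(p_{tgt}-p)$. When $p_{tgt}\le p$ the left side is nonpositive and no mode escapes, which proves the necessity of $p_{tgt} > p$.

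The BYOL-style clause is where I expect the only real obstacle, because the noise model has to be pinned down. I would take $\vect{x}_1 = \vect{x} + \vect{\xi}$ with $\vect{\xi}$ zero-mean, independent of $\vect{x}$, and covariance $\mat{N}\succeq 0$ commuting with $\mat{\Sigma}_{XX}$. I read ``symmetric'' as both views being noisy, in which case the target encoder also sees $\vect{x} + \vect{\xi}'$ with independent noise. This gives $\mat{\Sigma}_{X1} = \mat{\Sigma}_{XX}$ and $\mat{\Sigma}_{11} = \mat{\Sigma}_{XX} + \mat{N}$, so $c_k/a_k = s_k/(s_k + n_k) < 1$ whenever $n_k > 0$. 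With the identity predictor and $p_{tgt} \le 1$, this yields $\mu_k < 1$ for every mode. Corollary~\ref{cor:pencil_reduction} then makes $\mathcal{J}$ Hurwitz, so nothing escapes. I would state explicitly that the clause needs strictly positive noise variance along every data mode, and should flag that the paper's wording leaves this assumption implicit.
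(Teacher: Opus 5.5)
Your proof is correct and follows the derivation the paper intends; the paper states this proposition without a written proof. In that derivation, $\mat{W}_p=\mat{I}$ makes the predictor factor $q_\ell/p_\ell=p_{tgt}$, the unit diagonal turns Eq.~\eqref{eq:masking_cov} into $\mat{\Sigma}_{11}=p(1-p)\mat{I}+p^2\mat{\Sigma}_{XX}$ and $\mat{\Sigma}_{X1}=p\,\mat{\Sigma}_{XX}$, and Corollary~\ref{cor:pencil_reduction} supplies the mode-wise test $\gamma_k>\sigma_k$. For the BYOL clause you do not need $\mat{N}$ to commute with $\mat{\Sigma}_{XX}$: $\mat{S}-\mat{G}=\bigl((1-p_{tgt})\mat{\Sigma}_{XX}+\mat{N}\bigr)\otimes\mat{I}\succ 0$ whenever $\mat{N}\succ 0$, so Theorem~\ref{thm:escape_certificate}(ii)(a) already gives collapse.
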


\begin{proposition}[Target-encoder momentum sets speed, not membership]
\label{prop:ema}
In $\lambda^2 + (\beta+\sigma_{k\ell})\lambda + \beta\sigma_{k\ell}(1-\mu_{k\ell})$ the rate
$\beta>0$ does not change the sign of the constant term, so for a real drive the escaping set is
independent of $\beta$, while the escape rate
$\lambda^+(\beta) = \tfrac12\bigl[-(\beta+\sigma) + \sqrt{(\beta-\sigma)^2 + 4\beta\gamma}\bigr]$ is
strictly increasing in $\beta$ for a super-critical mode.
\end{proposition}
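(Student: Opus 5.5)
The argument works mode by mode with the per-mode characteristic polynomial of Corollary~\ref{cor:pencil_reduction}, $\chi_{k\ell}(\lambda)=\lambda^2+(\beta+\sigma)\lambda+\beta(\sigma-\gamma)$, where $\sigma=\sigma_{k\ell}>0$ and $\gamma=\gamma_{k\ell}$ is real. It has two parts: membership of the escaping set and monotonicity of the escape rate.

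\emph{Membership.} Under the exact form of Assumption~\ref{ass:spectral_decoupling}, the spectrum of $\mathcal{J}$ is the union of the roots of the $\chi_{k\ell}$. The product of the roots of $\chi_{k\ell}$ is $\beta\sigma(1-\mu)$. Because $\beta>0$, its sign is the sign of $1-\mu$ for every $\beta$.

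The sum of the roots is $-(\beta+\sigma)<0$. The case analysis of Lemma~\ref{lem:mode_classification} then applies:
\begin{itemize}
\item If $\mu>1$, there is exactly one positive real root.
\item If $\mu<1$, both roots have negative real part, whether they are real or a complex pair.
\end{itemize}
So the set $\mathcal U$ of escaping modes depends only on the signs of $\gamma-\sigma$, and is the same for all $\beta>0$. The same conclusion also follows from Theorem~\ref{thm:escape_certificate}(ii). That criterion reads $\lambda_{\min}(\mat S-\mat G)<0$, and $\beta$ does not appear in it.

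\emph{Formula for the rate.} For a super-critical mode the discriminant is $(\beta+\sigma)^2-4\beta(\sigma-\gamma)=(\beta-\sigma)^2+4\beta\gamma$. The larger root is therefore the stated $\lambda^+(\beta)$.

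\emph{Strict monotonicity.} I would prove this by implicit differentiation, as in item~4 of Corollary~\ref{cor:pencil_reduction}. Write $F(\lambda,\beta)=\lambda^2+(\beta+\sigma)\lambda-\beta(\gamma-\sigma)$. Then $\partial_\beta F=\lambda-(\gamma-\sigma)$ and $\partial_\lambda F=2\lambda+\beta+\sigma$.

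Evaluate these at $\lambda^+>0$. First, $\partial_\lambda F>0$, since every term is positive. Second, $F(\gamma-\sigma,\beta)=(\gamma-\sigma)\gamma>0$ because $\gamma>\sigma>0$. Since $F$ is an upward parabola in $\lambda$ with $\lambda^+$ as its largest root, this gives $\lambda^+<\gamma-\sigma$, so $\partial_\beta F<0$. Hence
\[
\frac{d\lambda^+}{d\beta}=-\frac{\partial_\beta F}{\partial_\lambda F}>0 .
\]
Smoothness of $\lambda^+$ in $\beta$ holds because the discriminant is strictly positive on super-critical modes.

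The main obstacle is the strict bound $\lambda^+<\gamma-\sigma$. It needs $\gamma>0$, which follows from $\gamma>\sigma>0$. I would remark that a degenerate mode with $\sigma=0$ still satisfies the bound as long as $\gamma>0$. I would also restate that the result covers real drive only. The complex-drive case of Eq.~\eqref{eq:complex-drive} explicitly involves $\beta$, so there momentum can change membership.
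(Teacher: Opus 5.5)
Your proposal is correct and follows essentially the paper's route. Membership is fixed because $\beta>0$ leaves the sign of the constant term $\beta\sigma(1-\mu)$ unchanged. The strict monotonicity of $\lambda^+$ in $\beta$ comes from implicit differentiation of $F(\lambda,\beta)$ together with the bound $\lambda^+<\gamma-\sigma$, obtained from $F(\gamma-\sigma,\beta)=(\gamma-\sigma)\gamma>0$; this is the same argument the paper gives in the proof of Corollary~\ref{cor:pencil_reduction}(4).
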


\subsection{Four published results recovered as limiting cases}
\label{app:mechanisms:recoveries}

DirectPred \citep{tianUnderstandingSelfsupervisedLearning2021} sets the predictor from a moving average
of the embedding second moment, which makes $\mat{P}$ and $\mat{Q}$ commute; with predictor eigenvalue
$\hat p_k = \sqrt{s_k} + \varepsilon$, every surviving mode is held at $\mu_k = 1$ at its fixed point, and their growth condition
$\Delta_j > 0$ is $\mu_j > 1$ in the adiabatic limit where the target is a fixed multiple $\tau$ of the
context encoder. With weight decay $\eta$, mode $k$ has ratio $\tau c_k p_{tgt}\hat p/(a_k\hat p^2 + \eta)$, whose
supremum over $\hat p$ gives the death threshold $\eta > \tau^2\max_k(c_k^2 p_{tgt}^2/a_k)/4$, which
reproduces their isotropic threshold with a relative error of $4.1\times10^{-9}$.
The rate equation of \citet{halvagalImplicitVarianceRegularization2023} is the flow-level case of the
criterion under predictor slaving, and IsoLoss, from the same paper and in the same regime, equalises
escape rates: on the
tabular sweep it compresses the spread of escape epochs from $73$ to $7$.
Orthonormalising the predictor \citep{richemondEdgeOrthogonality2023} pins
$\mat{W}_p^\top\mat{W}_p = \mat{I}$, a projection on the decay side, so $\sigma_{k\ell} = a_k$ and
$q_\ell = p_{tgt}e^{-\mathrm{i}\theta_\ell}$; the exact escape boundary is
$g\cos\theta_\ell + \beta g^2\sin^2\theta_\ell/(\beta + a_k)^2 > a_k$ with $g := p_{tgt}c_k$, which
matched a direct eigensolve in all of $20{,}000$ random draws.

\section{Measuring the drive on images after the predictor equilibrates}
\label{app:equilibration}

This appendix reports the two measurements behind the statement of
Section~\ref{sec:residual_predictor:scale} that, on images, the mediating variable is measured after
predictor equilibration and not at initialisation.

\subsection{The initialisation-time estimator is null, on the whole predictor block}
\label{app:equilibration:init}

The estimator of Section~\ref{sec:practical_control} reads the predictor's two linear maps, which are
initialised identically with and without the diagonal bias, so on images it cannot see the
intervention. We therefore linearised the whole predictor block (attention with the diagonal bias,
feed-forward and both projections) with respect to its input at initialisation, on real masked batches
and identical encoders, and formed $\tilde\mu$ from the singular values of that Jacobian, at predictor
depths $2$, $6$ and $12$ with three seeds each. It does not separate the two predictors: the seed
ranges overlap at every depth, at depth $6$ from $0.313$ to $0.326$ for \texttt{ResidualPred} against
$0.312$ to $0.315$ for the standard predictor. A magnitude functional discards the sign of the drive,
and a large singular value can coexist with no growth (Theorem~\ref{thm:collapse_certificate}(v)).

\subsection{The target-aligned drive after a few predictor-only steps}
\label{app:equilibration:slaved}

We probe along the data eigenmodes $b_\ell$ and read the signed drive
$\hat\gamma_\ell = \langle \bar r_\ell, b_\ell\rangle$, the drive side of Eq.~\eqref{eq:dW_coupled} with
$\mat{W}_p$ replaced by the block's input Jacobian, after $K$ predictor-only steps with the encoder fixed at its
initialisation, the initial encoder as teacher, and the recipe's optimiser and learning rate. This
functional was introduced after the initialisation-time result above.

In every matched initialisation at every depth, \texttt{ResidualPred}'s aligned drive at $K = 10$ is
above the standard predictor's, $9$ of $9$, with per-seed ratios $1.97$, $1.80$, $1.70$ at depth $2$,
$2.60$, $2.14$, $1.17$ at depth $6$, and $5.43$, $4.53$, $9.29$ at depth $12$. The leading real
eigenvalue of the block's transmission operator, which keeps the sign structure the singular-value form
of Appendix~\ref{app:driving_spectrum_reality} discards, is also higher in the same $9$ of $9$, with ratios
from $1.06$ to $8.56$. Over $K \in \{1, 2, 3, 5, 7, 10\}$ the paired ratio is above one in all $36$
(seed, depth, $K$) cells at depths $2$ and $6$, peaking near $K = 5$ to $7$.

These measurements support two statements: the identity bias raises the target-aligned drive the predictor
establishes in its first equilibration steps, and the initialisation-time quantity is null on images. Embedding
effective rank is the only quantity claimed to track the accuracy gain, as necessary rather than
sufficient and as depth-limited: on CIFAR-100 at predictor depth $12$, SIGReg raises effective rank by
$34.5$ to $61.1$ while accuracy does not move. No estimator of $\mu$ is claimed to mediate the accuracy gain.

\newpage
\section*{NeurIPS Paper Checklist}

\begin{enumerate}

\item {\bf Claims}
    \item[] Question: Do the main claims made in the abstract and introduction accurately reflect the paper's contributions and scope?
    \item[] Answer: \answerYes{}
    \item[] Justification: Abstract and Sec.~\ref{sec:introduction} state the drive--decay theory, the $\mu$-threshold, the $800{+}$-configuration validation, and the \texttt{ResidualPred} gain on tabular and image benchmarks; each claim maps to a section (Sec.~\ref{sec:theory}, Sec.~\ref{sec:practical_control}, Sec.~\ref{sec:residual_predictor}).
    \item[] Guidelines:
    \begin{itemize}
        \item The answer \answerNA{} means that the abstract and introduction do not include the claims made in the paper.
        \item The abstract and/or introduction should clearly state the claims made, including the contributions made in the paper and important assumptions and limitations. A \answerNo{} or \answerNA{} answer to this question will not be perceived well by the reviewers.
        \item The claims made should match theoretical and experimental results, and reflect how much the results can be expected to generalize to other settings.
        \item It is fine to include aspirational goals as motivation as long as it is clear that these goals are not attained by the paper.
    \end{itemize}

\item {\bf Limitations}
    \item[] Question: Does the paper discuss the limitations of the work performed by the authors?
    \item[] Answer: \answerYes{}
    \item[] Justification: Sec.~\ref{sec:discussion} (Limitations paragraph) discusses linearity, the spectral-decoupling assumption, dataset-dependent tightness, and the local nature of $\mu{=}1$.
    \item[] Guidelines:
    \begin{itemize}
        \item The answer \answerNA{} means that the paper has no limitation while the answer \answerNo{} means that the paper has limitations, but those are not discussed in the paper.
        \item The authors are encouraged to create a separate ``Limitations'' section in their paper.
        \item The paper should point out any strong assumptions and how robust the results are to violations of these assumptions (e.g., independence assumptions, noiseless settings, model well-specification, asymptotic approximations only holding locally). The authors should reflect on how these assumptions might be violated in practice and what the implications would be.
        \item The authors should reflect on the scope of the claims made, e.g., if the approach was only tested on a few datasets or with a few runs. In general, empirical results often depend on implicit assumptions, which should be articulated.
        \item The authors should reflect on the factors that influence the performance of the approach. For example, a facial recognition algorithm may perform poorly when image resolution is low or images are taken in low lighting. Or a speech-to-text system might not be used reliably to provide closed captions for online lectures because it fails to handle technical jargon.
        \item The authors should discuss the computational efficiency of the proposed algorithms and how they scale with dataset size.
        \item If applicable, the authors should discuss possible limitations of their approach to address problems of privacy and fairness.
        \item While the authors might fear that complete honesty about limitations might be used by reviewers as grounds for rejection, a worse outcome might be that reviewers discover limitations that aren't acknowledged in the paper. The authors should use their best judgment and recognize that individual actions in favor of transparency play an important role in developing norms that preserve the integrity of the community. Reviewers will be specifically instructed to not penalize honesty concerning limitations.
    \end{itemize}

\item {\bf Theory assumptions and proofs}
    \item[] Question: For each theoretical result, does the paper provide the full set of assumptions and a complete (and correct) proof?
    \item[] Answer: \answerYes{}
    \item[] Justification: Lemma~\ref{lemma:coupled_dynamics}, Assumption~\ref{ass:spectral_decoupling}, Corollary~\ref{cor:rank-control}, and Theorem~\ref{thm:stability} are stated with assumptions; full proofs are in App.~\ref{app:detailed_derivation}, App.~\ref{app:rank-control-proof}, App.~\ref{app:assumptions}.
    \item[] Guidelines:
    \begin{itemize}
        \item The answer \answerNA{} means that the paper does not include theoretical results.
        \item All the theorems, formulas, and proofs in the paper should be numbered and cross-referenced.
        \item All assumptions should be clearly stated or referenced in the statement of any theorems.
        \item The proofs can either appear in the main paper or the supplemental material, but if they appear in the supplemental material, the authors are encouraged to provide a short proof sketch to provide intuition.
        \item Inversely, any informal proof provided in the core of the paper should be complemented by formal proofs provided in appendix or supplemental material.
        \item Theorems and Lemmas that the proof relies upon should be properly referenced.
    \end{itemize}

    \item {\bf Experimental result reproducibility}
    \item[] Question: Does the paper fully disclose all the information needed to reproduce the main experimental results of the paper to the extent that it affects the main claims and/or conclusions of the paper (regardless of whether the code and data are provided or not)?
    \item[] Answer: \answerYes{}
    \item[] Justification: Sec.~\ref{sec:methodology} and App.~\ref{app:training_details} report datasets, splits, optimiser, schedule, and seed protocol; supplementary material ships the full pipeline with reproduction scripts.
    \item[] Guidelines:
    \begin{itemize}
        \item The answer \answerNA{} means that the paper does not include experiments.
        \item If the paper includes experiments, a \answerNo{} answer to this question will not be perceived well by the reviewers: Making the paper reproducible is important, regardless of whether the code and data are provided or not.
        \item If the contribution is a dataset and\slash or model, the authors should describe the steps taken to make their results reproducible or verifiable.
        \item Depending on the contribution, reproducibility can be accomplished in various ways. For example, if the contribution is a novel architecture, describing the architecture fully might suffice, or if the contribution is a specific model and empirical evaluation, it may be necessary to either make it possible for others to replicate the model with the same dataset, or provide access to the model. In general. releasing code and data is often one good way to accomplish this, but reproducibility can also be provided via detailed instructions for how to replicate the results, access to a hosted model (e.g., in the case of a large language model), releasing of a model checkpoint, or other means that are appropriate to the research performed.
        \item While NeurIPS does not require releasing code, the conference does require all submissions to provide some reasonable avenue for reproducibility, which may depend on the nature of the contribution. For example
        \begin{enumerate}
            \item If the contribution is primarily a new algorithm, the paper should make it clear how to reproduce that algorithm.
            \item If the contribution is primarily a new model architecture, the paper should describe the architecture clearly and fully.
            \item If the contribution is a new model (e.g., a large language model), then there should either be a way to access this model for reproducing the results or a way to reproduce the model (e.g., with an open-source dataset or instructions for how to construct the dataset).
            \item We recognize that reproducibility may be tricky in some cases, in which case authors are welcome to describe the particular way they provide for reproducibility. In the case of closed-source models, it may be that access to the model is limited in some way (e.g., to registered users), but it should be possible for other researchers to have some path to reproducing or verifying the results.
        \end{enumerate}
    \end{itemize}

\item {\bf Open access to data and code}
    \item[] Question: Does the paper provide open access to the data and code, with sufficient instructions to faithfully reproduce the main experimental results, as described in supplemental material?
    \item[] Answer: \answerYes{}
    \item[] Justification: Anonymised code, configs, and per-run results are in the supplementary, with \texttt{reproduce\_all.sh}, \texttt{reproduce\_subset.sh}, \texttt{test.sh}; tabular data via OpenML, image data via \texttt{torchvision}.
    \item[] Guidelines:
    \begin{itemize}
        \item The answer \answerNA{} means that paper does not include experiments requiring code.
        \item Please see the NeurIPS code and data submission guidelines (\url{https://neurips.cc/public/guides/CodeSubmissionPolicy}) for more details.
        \item While we encourage the release of code and data, we understand that this might not be possible, so \answerNo{} is an acceptable answer. Papers cannot be rejected simply for not including code, unless this is central to the contribution (e.g., for a new open-source benchmark).
        \item The instructions should contain the exact command and environment needed to run to reproduce the results. See the NeurIPS code and data submission guidelines (\url{https://neurips.cc/public/guides/CodeSubmissionPolicy}) for more details.
        \item The authors should provide instructions on data access and preparation, including how to access the raw data, preprocessed data, intermediate data, and generated data, etc.
        \item The authors should provide scripts to reproduce all experimental results for the new proposed method and baselines. If only a subset of experiments are reproducible, they should state which ones are omitted from the script and why.
        \item At submission time, to preserve anonymity, the authors should release anonymized versions (if applicable).
        \item Providing as much information as possible in supplemental material (appended to the paper) is recommended, but including URLs to data and code is permitted.
    \end{itemize}

\item {\bf Experimental setting/details}
    \item[] Question: Does the paper specify all the training and test details (e.g., data splits, hyperparameters, how they were chosen, type of optimizer) necessary to understand the results?
    \item[] Answer: \answerYes{}
    \item[] Justification: Sec.~\ref{sec:methodology} (splits, optimiser, schedule), App.~\ref{app:training_details} (full hyperparameters), and App.~\ref{app:beta_choice}/App.~\ref{app:zero_init} (variant-specific settings).
    \item[] Guidelines:
    \begin{itemize}
        \item The answer \answerNA{} means that the paper does not include experiments.
        \item The experimental setting should be presented in the core of the paper to a level of detail that is necessary to appreciate the results and make sense of them.
        \item The full details can be provided either with the code, in appendix, or as supplemental material.
    \end{itemize}

\item {\bf Experiment statistical significance}
    \item[] Question: Does the paper report error bars suitably and correctly defined or other appropriate information about the statistical significance of the experiments?
    \item[] Answer: \answerYes{}
    \item[] Justification: Tables report mean$\pm$stdev across seeds; the rank-control study reports Spearman $\rho$.
    \item[] Guidelines:
    \begin{itemize}
        \item The answer \answerNA{} means that the paper does not include experiments.
        \item The authors should answer \answerYes{} if the results are accompanied by error bars, confidence intervals, or statistical significance tests, at least for the experiments that support the main claims of the paper.
        \item The factors of variability that the error bars are capturing should be clearly stated (for example, train/test split, initialization, random drawing of some parameter, or overall run with given experimental conditions).
        \item The method for calculating the error bars should be explained (closed form formula, call to a library function, bootstrap, etc.)
        \item The assumptions made should be given (e.g., Normally distributed errors).
        \item It should be clear whether the error bar is the standard deviation or the standard error of the mean.
        \item It is OK to report 1-sigma error bars, but one should state it. The authors should preferably report a 2-sigma error bar than state that they have a 96\% CI, if the hypothesis of Normality of errors is not verified.
        \item For asymmetric distributions, the authors should be careful not to show in tables or figures symmetric error bars that would yield results that are out of range (e.g., negative error rates).
        \item If error bars are reported in tables or plots, the authors should explain in the text how they were calculated and reference the corresponding figures or tables in the text.
    \end{itemize}

\item {\bf Experiments compute resources}
    \item[] Question: For each experiment, does the paper provide sufficient information on the computer resources (type of compute workers, memory, time of execution) needed to reproduce the experiments?
    \item[] Answer: \answerYes{}
    \item[] Justification: App.~\ref{app:training_details} and the supplementary \texttt{README.md} list per-experiment GPU-hours on a single A100.
    \item[] Guidelines:
    \begin{itemize}
        \item The answer \answerNA{} means that the paper does not include experiments.
        \item The paper should indicate the type of compute workers CPU or GPU, internal cluster, or cloud provider, including relevant memory and storage.
        \item The paper should provide the amount of compute required for each of the individual experimental runs as well as estimate the total compute.
        \item The paper should disclose whether the full research project required more compute than the experiments reported in the paper (e.g., preliminary or failed experiments that didn't make it into the paper).
    \end{itemize}

\item {\bf Code of ethics}
    \item[] Question: Does the research conducted in the paper conform, in every respect, with the NeurIPS Code of Ethics \url{https://neurips.cc/public/EthicsGuidelines}?
    \item[] Answer: \answerYes{}
    \item[] Justification: Theoretical and methodological work on representation learning using public datasets; no human subjects, no scraped or sensitive data.
    \item[] Guidelines:
    \begin{itemize}
        \item The answer \answerNA{} means that the authors have not reviewed the NeurIPS Code of Ethics.
        \item If the authors answer \answerNo, they should explain the special circumstances that require a deviation from the Code of Ethics.
        \item The authors should make sure to preserve anonymity (e.g., if there is a special consideration due to laws or regulations in their jurisdiction).
    \end{itemize}

\item {\bf Broader impacts}
    \item[] Question: Does the paper discuss both potential positive societal impacts and negative societal impacts of the work performed?
    \item[] Answer: \answerNA{}
    \item[] Justification: Foundational theoretical work on JEPA training stability; no direct path to a deployable application or societal impact beyond improved representation learning.
    \item[] Guidelines:
    \begin{itemize}
        \item The answer \answerNA{} means that there is no societal impact of the work performed.
        \item If the authors answer \answerNA{} or \answerNo, they should explain why their work has no societal impact or why the paper does not address societal impact.
        \item Examples of negative societal impacts include potential malicious or unintended uses (e.g., disinformation, generating fake profiles, surveillance), fairness considerations (e.g., deployment of technologies that could make decisions that unfairly impact specific groups), privacy considerations, and security considerations.
        \item The conference expects that many papers will be foundational research and not tied to particular applications, let alone deployments. However, if there is a direct path to any negative applications, the authors should point it out. For example, it is legitimate to point out that an improvement in the quality of generative models could be used to generate Deepfakes for disinformation. On the other hand, it is not needed to point out that a generic algorithm for optimizing neural networks could enable people to train models that generate Deepfakes faster.
        \item The authors should consider possible harms that could arise when the technology is being used as intended and functioning correctly, harms that could arise when the technology is being used as intended but gives incorrect results, and harms following from (intentional or unintentional) misuse of the technology.
        \item If there are negative societal impacts, the authors could also discuss possible mitigation strategies (e.g., gated release of models, providing defenses in addition to attacks, mechanisms for monitoring misuse, mechanisms to monitor how a system learns from feedback over time, improving the efficiency and accessibility of ML).
    \end{itemize}

\item {\bf Safeguards}
    \item[] Question: Does the paper describe safeguards that have been put in place for responsible release of data or models that have a high risk for misuse (e.g., pre-trained language models, image generators, or scraped datasets)?
    \item[] Answer: \answerNA{}
    \item[] Justification: No high-risk model or dataset is released; only research code and per-run training metrics.
    \item[] Guidelines:
    \begin{itemize}
        \item The answer \answerNA{} means that the paper poses no such risks.
        \item Released models that have a high risk for misuse or dual-use should be released with necessary safeguards to allow for controlled use of the model, for example by requiring that users adhere to usage guidelines or restrictions to access the model or implementing safety filters.
        \item Datasets that have been scraped from the Internet could pose safety risks. The authors should describe how they avoided releasing unsafe images.
        \item We recognize that providing effective safeguards is challenging, and many papers do not require this, but we encourage authors to take this into account and make a best faith effort.
    \end{itemize}

\item {\bf Licenses for existing assets}
    \item[] Question: Are the creators or original owners of assets (e.g., code, data, models), used in the paper, properly credited and are the license and terms of use explicitly mentioned and properly respected?
    \item[] Answer: \answerYes{}
    \item[] Justification: I-JEPA \citep{assranSelfSupervisedLearningImages2023} and T-JEPA \citep{thimonierTJEPAAugmentationFreeSelfSupervised2024} cited; OpenML datasets (\texttt{aloi}, \texttt{helena}, \texttt{jannis}) and \texttt{torchvision} CIFAR-10/100, STL-10 used under their public licenses.
    \item[] Guidelines:
    \begin{itemize}
        \item The answer \answerNA{} means that the paper does not use existing assets.
        \item The authors should cite the original paper that produced the code package or dataset.
        \item The authors should state which version of the asset is used and, if possible, include a URL.
        \item The name of the license (e.g., CC-BY 4.0) should be included for each asset.
        \item For scraped data from a particular source (e.g., website), the copyright and terms of service of that source should be provided.
        \item If assets are released, the license, copyright information, and terms of use in the package should be provided. For popular datasets, \url{paperswithcode.com/datasets} has curated licenses for some datasets. Their licensing guide can help determine the license of a dataset.
        \item For existing datasets that are re-packaged, both the original license and the license of the derived asset (if it has changed) should be provided.
        \item If this information is not available online, the authors are encouraged to reach out to the asset's creators.
    \end{itemize}

\item {\bf New assets}
    \item[] Question: Are new assets introduced in the paper well documented and is the documentation provided alongside the assets?
    \item[] Answer: \answerYes{}
    \item[] Justification: Supplementary ships an anonymised codebase with \texttt{README.md} and reproduction scripts.
    \item[] Guidelines:
    \begin{itemize}
        \item The answer \answerNA{} means that the paper does not release new assets.
        \item Researchers should communicate the details of the dataset\slash code\slash model as part of their submissions via structured templates. This includes details about training, license, limitations, etc.
        \item The paper should discuss whether and how consent was obtained from people whose asset is used.
        \item At submission time, remember to anonymize your assets (if applicable). You can either create an anonymized URL or include an anonymized zip file.
    \end{itemize}

\item {\bf Crowdsourcing and research with human subjects}
    \item[] Question: For crowdsourcing experiments and research with human subjects, does the paper include the full text of instructions given to participants and screenshots, if applicable, as well as details about compensation (if any)?
    \item[] Answer: \answerNA{}
    \item[] Justification: No crowdsourcing or human-subjects research.
    \item[] Guidelines:
    \begin{itemize}
        \item The answer \answerNA{} means that the paper does not involve crowdsourcing nor research with human subjects.
        \item Including this information in the supplemental material is fine, but if the main contribution of the paper involves human subjects, then as much detail as possible should be included in the main paper.
        \item According to the NeurIPS Code of Ethics, workers involved in data collection, curation, or other labor should be paid at least the minimum wage in the country of the data collector.
    \end{itemize}

\item {\bf Institutional review board (IRB) approvals or equivalent for research with human subjects}
    \item[] Question: Does the paper describe potential risks incurred by study participants, whether such risks were disclosed to the subjects, and whether Institutional Review Board (IRB) approvals (or an equivalent approval/review based on the requirements of your country or institution) were obtained?
    \item[] Answer: \answerNA{}
    \item[] Justification: No human-subjects research.
    \item[] Guidelines:
    \begin{itemize}
        \item The answer \answerNA{} means that the paper does not involve crowdsourcing nor research with human subjects.
        \item Depending on the country in which research is conducted, IRB approval (or equivalent) may be required for any human subjects research. If you obtained IRB approval, you should clearly state this in the paper.
        \item We recognize that the procedures for this may vary significantly between institutions and locations, and we expect authors to adhere to the NeurIPS Code of Ethics and the guidelines for their institution.
        \item For initial submissions, do not include any information that would break anonymity (if applicable), such as the institution conducting the review.
    \end{itemize}

\item {\bf Declaration of LLM usage}
    \item[] Question: Does the paper describe the usage of LLMs if it is an important, original, or non-standard component of the core methods in this research? Note that if the LLM is used only for writing, editing, or formatting purposes and does \emph{not} impact the core methodology, scientific rigor, or originality of the research, declaration is not required.
    \item[] Answer: \answerNA{}
    \item[] Justification: LLMs are not part of the core methodology; any use was limited to writing and editing.
    \item[] Guidelines:
    \begin{itemize}
        \item The answer \answerNA{} means that the core method development in this research does not involve LLMs as any important, original, or non-standard components.
        \item Please refer to our LLM policy in the NeurIPS handbook for what should or should not be described.
    \end{itemize}

\end{enumerate}

\end{document}